\theoremstyle{plain}
\newtheorem{theorem}{Theorem}[section]
\newtheorem{proposition}[theorem]{Proposition}
\newtheorem{lemma}[theorem]{Lemma}
\theoremstyle{definition}
\newtheorem{definition}[theorem]{Definition}
\newtheorem{assumption}[theorem]{Assumption}
\newtheorem{condition}[theorem]{Condition}
\theoremstyle{remark}
\newtheorem{remark}[theorem]{Remark}
\newcommand{\algorithmicreturn}{\textbf{return}}
\newcommand{\Return}{\item[\algorithmicreturn]}
\newcommand{\Input}{\INPUT}
\newcommand{\Init}{\item[\textbf{init}]}
\newcommand{\State}{\STATE}
\newcommand{\For}{\FOR}
\newcommand{\EndFor}{\ENDFOR}
\def\eqref#1{equation~\ref{#1}}
\def\1{\bm{1}}
\def\eps{{\epsilon}}
\DeclareMathAlphabet{\mathsfit}{\encodingdefault}{\sfdefault}{m}{sl}
\SetMathAlphabet{\mathsfit}{bold}{\encodingdefault}{\sfdefault}{bx}{n}
\DeclareMathOperator*{\argmax}{arg\,max}
\DeclareMathOperator*{\argmin}{arg\,min}
\icmltitlerunning{Neural Logistic Bandits}
\begin{document}

\twocolumn[
  \icmltitle{Neural Logistic Bandits}



  \icmlsetsymbol{equal}{*}

  \begin{icmlauthorlist}
    \icmlauthor{Seoungbin Bae}{kaist}
    \icmlauthor{Dabeen Lee}{snu1,snu2}
  \end{icmlauthorlist}

  \icmlaffiliation{kaist}{Department of Industrial \& Systems Engineering, KAIST}
  \icmlaffiliation{snu1}{Department of Mathematical Sciences \& Research Institute of Mathematics, Seoul National University}
  \icmlaffiliation{snu2}{ Interdisciplinary Program in Artificial Intelligence, Seoul National University}

  \icmlcorrespondingauthor{Dabeen Lee}{dabeenl@snu.ac.kr}

  \icmlkeywords{Machine Learning, ICML}

  \vskip 0.3in
]



\printAffiliationsAndNotice{}  

\begin{abstract}
We study the problem of \emph{neural logistic bandits}, where the main task is to learn an unknown reward function within a logistic link function using a neural network. Existing approaches either exhibit unfavorable dependencies on $\kappa$, where $1/\kappa$ represents the minimum variance of reward distributions, or suffer from direct dependence on the feature dimension $d$, which can be huge in neural network–based settings. In this work, we introduce a novel Bernstein-type inequality for self-normalized vector-valued martingales that is designed to bypass a direct dependence on the ambient dimension. This lets us deduce a regret upper bound that grows with the \emph{effective dimension} $\widetilde{d}$, not the feature dimension, while keeping a minimal dependence on $\kappa$. Based on the concentration inequality, we propose two algorithms, NeuralLog-UCB-1 and NeuralLog-UCB-2, that guarantee regret upper bounds of order $\widetilde{O}(\widetilde{d}\sqrt{\kappa T})$ and $\widetilde{O}(\widetilde{d}\sqrt{T/\kappa})$, respectively, improving on the existing results. Lastly, we report numerical results on both synthetic and real datasets to validate our theoretical findings.
\end{abstract}

\section{Introduction}


Contextual bandits form the foundation of modern sequential decision-making problems, driving applications such as recommendation systems, advertising, and interactive information retrieval \cite{li2010contextual}. Although upper confidence bound (UCB)–based linear contextual bandit algorithms achieve near‑optimal guarantees when rewards are linear in the feature vector \cite{abbasi2011improved}, many real‑world scenarios exhibit nonlinear reward structures that demand more expressive models. Motivated by this, several approaches have been developed to capture complex reward functions that go beyond the linear case, such as those based on generalized linear models \cite{filippi2010parametric,li2017provably}, reproducing kernel Hilbert space \cite{srinivas2009gaussian,valko2013finite}, and deep neural networks \cite{riquelme2018deep,zhou2020neural}.

Among these settings, \emph{logistic bandits} are particularly relevant when the reward is binary (e.g., click vs. no-click); the random reward in each round follows a Bernoulli distribution, whose parameter is determined by the chosen action. Extending logistic bandits via a neural network-based approximation framework, we consider \emph{neural logistic bandits} and address two significant challenges: (i) handling the nonlinearity of the reward function, characterized by the worst-case variance of a reward distribution \(1/\kappa\) where $\kappa$ scales exponentially with the size of the decision set, and (ii) controlling the dependence on the feature dimension \(d\), which can be extremely large due to the substantial number of parameters in deep neural networks.

For logistic bandits, \cite{faury2020improved} introduced a variance-adaptive analysis by incorporating the true reward variance of each action into the design matrix. This avoids using a uniform worst-case variance bound of \(1/\kappa\) for all actions, thus reducing the dependency of the final regret on \(\kappa\). Building on this, \cite{abeille2021instance} achieved the best-known \(\kappa\) dependence. However, both algorithms explicitly rely on the ambient feature dimension \(d\), so their direct extensions to the neural bandit setting induce poor regret performance. On the other hand, \cite{verma2025neural} derived a regret upper bound for the neural logistic bandit that scales with a data-adaptive effective dimension $\widetilde{d}$ rather than the full ambient dimension \(d\). This approach offers an improved performance measure as \(d\) increases with the number of parameters in the neural network, often deliberately overparameterized to avoid strong assumptions about the reward function. However, their method still relies on a pessimistic variance estimate, and integrating the variance-aware analysis of \cite{faury2020improved} into a data-adaptive regret framework remains challenging, resulting in a suboptimal dependence on \(\kappa\).


 

\begin{table}[t]
\centering
\caption{Comparison of algorithms for (neural) logistic bandits. $d$ denotes the dimension of the feature vector, and $T$ represents the total number of rounds. $p$ denotes the total number of parameters of the underlying neural network, and $\widetilde{d}$ denotes the effective dimension.}
\label{tab:my-table}
\resizebox{\columnwidth}{!}{%
\begin{tabular}{lcc}
\toprule
\multirow{2}{*}{Algorithm} & \multicolumn{2}{c}{Regret $\widetilde{\mathcal{O}}(\cdot)$} \\ \cmidrule(lr){2-3}
 & Logistic Bandits & Neural Bandits \\ \midrule
NCBF-UCB \cite{verma2025neural} & $\kappa d\sqrt{T}$ & $\kappa \widetilde{d}\sqrt{T}$ \\ 
Logistic-UCB-1 \cite{faury2020improved} & $d\sqrt{\kappa T}$ & $p\sqrt{\kappa T}$ \\ 
\textbf{NeuralLog-UCB-1} (\Cref{alg:1}) & $d\sqrt{\kappa T}$ &
\cellcolor{cyan!15}$\widetilde{d}\sqrt{\kappa T}$ \\ \midrule
ada-OFU-ECOLog \cite{faury2022jointly} & $d\sqrt{{T}/{\kappa^*}}$ & $p\sqrt{{T}/{\kappa^*}}$ \\ 
\textbf{NeuralLog-UCB-2} (\Cref{alg:2}) & $d\sqrt{{T}/{\kappa^*}}$ &
\cellcolor{cyan!15}$\widetilde{d}\sqrt{{T}/{\kappa^*}}$ \\ \bottomrule
\end{tabular}%
}
\end{table}

Motivated by these limitations, we propose algorithms that do not require worst-case estimates in both the variance of the reward distribution and the feature dimension, thus achieving the most favorable regret bound for neural logistic bandits. Central to this approach is our new Bernstein-type self-normalized inequality for vector-valued martingales, which allows us to derive a regret upper bound that scales with the effective dimension $\widetilde{d}$, and at the same time, matches the best-known dependency on $\kappa$. Our main contributions are summarized below:
\begin{itemize}
    \item We tackle the two main challenges in \emph{neural logistic bandits}: (i) a practical regret upper bound should avoid a direct dependence on $d$, the ambient dimension of the feature vector, and (ii) it needs to minimize the factor of $\kappa$, a problem-dependent constant that increases exponentially with the size of the decision set. To address these challenges, we propose a new Bernstein-type tail inequality for self-normalized vector-valued martingales that yields a bound of order \(\widetilde{\mathcal{O}}(\sqrt{\widetilde{d}})\), where $\widetilde{d}$ is a data-adaptive effective dimension. This is the first tail inequality that achieves favorable results in both respects, while the previous bound of \cite{faury2020improved} is \(\widetilde{\mathcal{O}}(\sqrt{d})\) which directly depends on $d$, and that of \cite{verma2025neural} is \(\widetilde{\mathcal{O}}(\sqrt{\kappa \widetilde{d}})\) that includes an additional factor of \(\sqrt{\kappa}\).
    
    \item Based on our tail inequality, we develop our first algorithm, NeuralLog-UCB-1 which guarantees a regret upper bound of order \(\widetilde{O}(\widetilde{d}\sqrt{\kappa T})\). This improves upon the regret upper bound of order \(\widetilde{\mathcal{O}}(\kappa \widetilde{d}\sqrt{T})\) due to \cite{verma2025neural}. Furthermore, we provide a fully data-adaptive UCB on $\widetilde{d}$ by adaptively controlling the regularization term of our loss function according to previous observations. Our choice of UCB also avoids the projection step required in the previous approach of \cite{faury2020improved}, which was to constrain the parameters to a certain set during training.

    \item We propose our second algorithm, NeuralLog-UCB-2, as a refined variant of NeuralLog-UCB-1. We show that NeuralLog-UCB-2 achieves a regret upper bound of \(\widetilde{O}(\widetilde{d}\sqrt{T/\kappa^*})\). This result matches the best-known dependency on $\kappa$ while avoiding the direct dependence on $d$ seen in \(\widetilde{O}(d\sqrt{T/\kappa^*})\) given by \cite{abeille2021instance}. The improvement comes from the fact that NeuralLog-UCB-2 replaces the true reward variance within the design matrix with a neural network estimated variance, thereby maintaining sufficient statistics for our variance-adaptive UCB in each round and completely removing the worst-case estimate of variance $\kappa$. Our numerical results show that NeuralLog-UCB-2 outperforms all baselines, thus validating our theoretical framework.

\end{itemize}

\section{Preliminaries} \label{sec:pre}

\textbf{Logistic bandits.\:} We consider the contextual logistic bandit problem. Let $T$ be the total number of rounds. In each round $t \in [T]$, the agent observes an action set $\mathcal{X}_t$, consisting of $K$ contexts drawn from a feasible set $\mathcal{X}\subset \mathbb{R}^d$. The agent then selects an action $x_t \in \mathcal{X}_t$ and observes a binary (random) reward $r_t \in \{0,1\}$. This reward is generated by the logistic model governed by the unknown latent reward function $h:\mathbb{R}^d \to \mathbb{R}$. Specifically, we define a sigmoid function $\mu(x) = (1+\exp(-x))^{-1}$ and denote its first and second derivatives as $\dot{\mu}$ and $\ddot{\mu}$.
Then, the probability distribution of the reward $r_t$ under action $x$ is given by $r_t\sim\text{Bern}(\mu(h(x_t))$. Let $x_t^*$ be an optimal action in round $t$, i.e., $x_t^*=\argmax_{x\in\mathcal{X}_t}\mu(h(x))$. Then the agent's goal is to minimize the cumulative regret, defined as
$\text{Regret}(T) = \sum_{t=1}^T\mu(h(x_t^*)) - \sum_{t=1}^T\mu(h(x_t))$.
Finally, we introduce the standard assumption on the problem-dependent parameters $\kappa$ and $R$ \cite{faury2020improved,verma2025neural}:

\begin{assumption} [Informal] \label{ass:kappa}
    There exist constants $\kappa,R >0$, such that $1/\kappa \leq \dot{\mu}(\cdot)\leq R$.
\end{assumption}
The formal definition of $\kappa$ and $R$ for the arm set $\mathcal{X}$ and the parameter set $\Theta$ is deferred to Assumption \ref{ass:kappa_formal}. Notice that for the sigmoid link function, we have $\mu(\cdot)$, $R\leq1/4$.

\textbf{Neural bandits.\:} Neural contextual bandit methods address the limitation of traditional (generalized) linear reward models \cite{filippi2010parametric,faury2020improved} by approximating $h(\cdot)$ with a fully connected deep neural network $f(x;\theta)$, which allows them to capture complex, possibly nonlinear, reward structures. In this work, we consider a neural network given by
$f(x;\theta) = \sqrt{m} W_L \text{ReLU}\left(W_{L-1} \text{ReLU}\left(\cdots \text{ReLU}\left( W_1 x \right)\right)\right)$,
where $L \geq 2$ is the depth of the neural network, $\text{ReLU}(x)=\max\{x,0\}$, $W_1\in\mathbb{R}^{m\times d}$, $W_i\in\mathbb{R}^{m\times m}$ for $2\leq i\leq L-1$, and $W_L\in\mathbb{R}^{1\times m}$. The flattened parameter vector is given by $\theta=[\text{vec}(W_1)^\top,\dots, \text{vec}(W_L)^\top]^\top\in\mathbb{R}^p$, where $p$ is the total number of parameters, i.e., $p=m+md+m^2(L-1)$. We denote the gradient of the neural network by $g(x;\theta)=\nabla_\theta f(x;\theta)\in\mathbb{R}^{p}$.


\textbf{Notation.\:} For a positive integer \(n\), let \([n] = \{1, \dots, n\}\). For any \(x \in \mathbb{R}^d\), \(\|x\|_2\) denotes the \(\ell_2\) norm, and $[x]_i$ denotes its $i$-th coordinate. Given \(x \in \mathbb{R}^d\) and a positive-definite matrix \(A \in \mathbb{R}^{d \times d}\), we define $\|x\|_A = \sqrt{x^\top A x}$. We use $\widetilde{O}(\cdot)$ to hide the logarithmic factors.

\section{Variance- and Data-Adaptive Self-Normalized Martingale Tail Inequality}

In this section, we first introduce our new Bernstein-type tail inequality for self-normalized martingales, which leads to a regret analysis that is variance- and data-adaptive. Then we compare it with some existing tail inequalities from prior works.

\begin{theorem} \label{thm:main}
    Let $\{\mathcal{G}_t\}_{t=1}^\infty$ be a filtration, and $\{x_t, \eta_t\}_{t\geq 1}$ be a stochastic process where $x_t \in \mathbb{R}^d$ is $\mathcal{G}_{t}$-measurable and $\eta_t \in \mathbb{R}$ is $\mathcal{G}_{t+1}$-measurable. Suppose there exist constants $M,R,N, \lambda> 0$ and the parameter $\theta^* \in\mathbb{R}^d$ such that for all $t\geq1$, $|\eta_t| \leq M$, $\mathbb{E}[\eta_t| \mathcal{G}_t] = 0$, $\mathbb{E}[\eta_t^2|\mathcal{G}_t] \leq \dot{\mu}(x_t^\top\theta^*)$, and $\|x_t\|_2 \leq N$. Define $H_t$ and $s_t$ as follows:
    \begin{align*}
    H_t = \sum_{i=1}^t\dot{\mu}(x_i^\top\theta^*)x_i x_i^\top + \lambda \mathbf{I},\quad\quad s_t = \sum_{i=1}^t x_i\eta_i.
    \end{align*}
    Then, for any $0< \delta < 1$ and any $t>0$, with probability at least $1-\delta$:
    \begin{align*}
        &\big\| s_t \big\|_{H_t^{-1}} \leq 8\sqrt{\log \frac{\det H_t}{\det\lambda \mathbf{I}} \log\bigg(\frac{4t^2}{\delta}\bigg)} + \frac{4MN}{\sqrt{\lambda}} \log\frac{4t^2}{\delta} \\
        & \leq 8\sqrt{\log\det \bigg(\sum_{i=1}^t \frac{R}{\lambda}x_i x_i^\top + \mathbf{I}\bigg) \log\frac{4t^2}{\delta}} + \frac{4MN}{\sqrt{\lambda}}\log\frac{4t^2}{\delta}.
    \end{align*}
\end{theorem}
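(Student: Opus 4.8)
The plan is to control the squared self-normalized quantity $y_t := \|s_t\|_{H_t^{-1}}^2$ by a direct recursive analysis of its increments, rather than through the classical method of mixtures. The latter would produce a confidence width of the additive form $\sqrt{\log\det + \log(1/\delta)}$, whereas the \emph{multiplicative} structure $\sqrt{\log\det \cdot \log(1/\delta)}$ in the statement is the fingerprint of a Freedman-type martingale bound in which $\log(\det H_t / \det \lambda\mathbf{I})$ plays the role of the predictable quadratic variation. Throughout, write $\nu_i := \dot\mu(x_i^\top \theta^*)$ for the weight in $H_t = H_{t-1} + \nu_t x_t x_t^\top$, and recall the hypothesis $\mathbb{E}[\eta_i^2 \mid \mathcal{G}_i] \le \nu_i$.

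First I would expand $y_t - y_{t-1}$ in closed form using the Sherman--Morrison identity for $H_t^{-1}$ together with $s_t = s_{t-1} + \eta_t x_t$. Setting $b_t = x_t^\top H_{t-1}^{-1} s_{t-1}$, $c_t = \|x_t\|_{H_{t-1}^{-1}}^2$ and $\beta_t = 1 + \nu_t c_t$, this gives $y_t - y_{t-1} = 2\eta_t b_t + \eta_t^2 c_t - (\nu_t/\beta_t)(b_t + \eta_t c_t)^2$, where $b_t, c_t, \beta_t, \nu_t$ are all $\mathcal{G}_t$-measurable. Taking the conditional mean and discarding a nonpositive term via $\mathbb{E}[\eta_t^2 \mid \mathcal{G}_t] \le \nu_t$ yields $\mathbb{E}[y_t - y_{t-1} \mid \mathcal{G}_t] \le \nu_t c_t / \beta_t = 1 - \det H_{t-1}/\det H_t$, the last equality being the matrix determinant lemma. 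Summing and using $1 - a \le \log(1/a)$ telescopes the compensator into $\sum_{i \le t} \mathbb{E}[y_i - y_{i-1} \mid \mathcal{G}_i] \le \log(\det H_t / \det \lambda\mathbf{I})$, so that $y_t \le \mathcal{M}_t + \log(\det H_t / \det \lambda\mathbf{I})$, where $\mathcal{M}_t$ is the associated martingale.

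It then remains to bound $\mathcal{M}_t$ by Freedman's inequality. A short computation shows that the linear-in-$\eta_t$ part of the fluctuation $(y_t - y_{t-1}) - \mathbb{E}[y_t-y_{t-1} \mid \mathcal{G}_t]$ is exactly $2\eta_t\, x_t^\top H_t^{-1} s_{t-1}$ (the factor $\beta_t$ cancels once the cross-term of the quadratic is folded in, which is what keeps the constants dimension- and $\lambda$-free). Its conditional variance is at most $4\nu_t \|x_t\|_{H_t^{-1}}^2\, y_{t-1}$, and since $\sum_{i \le t} \nu_i \|x_i\|_{H_i^{-1}}^2 \le \log(\det H_t / \det \lambda\mathbf{I})$ by the same determinant telescoping, the total predictable variation is at most $4\,(\max_{j \le t} y_j)\,\log(\det H_t / \det \lambda\mathbf{I})$; the increment range is at most $(2MN/\sqrt\lambda)\sqrt{\max_{j\le t} y_j}$, using $\|x_t\|_{H_t^{-1}} \le N/\sqrt\lambda$. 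Abbreviating $D = \log(\det H_t/\det\lambda\mathbf{I})$, $\ell = \log(4t^2/\delta)$ and $Y = \max_{j\le t} y_j$, Freedman's inequality gives, with high probability, a bound of the shape $Y \le D + c_1\sqrt{Y D \ell} + c_2 (MN/\sqrt\lambda)\sqrt{Y}\,\ell$ for absolute constants $c_1, c_2$; reading this as a quadratic inequality in $\sqrt{Y}$ and solving produces $\|s_t\|_{H_t^{-1}} = \sqrt{y_t} \le 8\sqrt{D\ell} + (4MN/\sqrt\lambda)\ell$, which is the first asserted bound once the constants are tracked through the quadratic. The second inequality is immediate: $\dot\mu \le R$ makes $\lambda\mathbf{I} + R\sum_{i\le t} x_i x_i^\top - H_t$ positive semidefinite, and monotonicity of the determinant then gives $\det H_t/\det\lambda\mathbf{I} \le \det(\sum_{i\le t} (R/\lambda) x_i x_i^\top + \mathbf{I})$.

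The hard part will be making that last step honest. The variance budget and the increment range fed into Freedman both involve $\max_{j\le t} y_j$ — the very quantity being bounded — so the inequality is self-referential and cannot be invoked with a pre-specified variance proxy. I expect the crux to be discharging this through a peeling / stopping-time argument that simultaneously controls all intermediate times $j \le t$ and dyadically discretizes the random variation $D$, so that a single failure probability $\delta$ is split over $O(t^2)$ events; this union bound is precisely what turns $\log(1/\delta)$ into $\log(4t^2/\delta)$ and pins down the constants $8$ and $4$.
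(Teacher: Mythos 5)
Your proposal follows essentially the same route as the paper's proof: a Sherman--Morrison recursion on $\|s_t\|_{H_t^{-1}}^2$, Freedman's inequality applied to the linear and quadratic fluctuation terms, determinant telescoping for the compensator and the predictable variance, and a union bound over $t$ producing the $\log(4t^2/\delta)$ factor. The self-referential step you flag as the crux is resolved in the paper exactly as you anticipate, by truncating the martingale increments with the indicator of the event $\{Z_s\le\beta_s \text{ for all } s\le i-1\}$ and closing the loop by induction on $t$; no dyadic peeling of the random quantity $D$ is needed.
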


Our proof of Theorem \ref{thm:main} is given in Section \ref{sec:bern}. The second inequality in Theorem \ref{thm:main} follows from Assumption \ref{ass:kappa} which states that \(\dot\mu(\cdot)\leq R\leq 1/4\). 
Notice that the tail inequality is data‑adaptive, as it does not explicitly depend on \(d\). Moreover, the term $\log\frac{\det H_t}{\det\lambda\mathbf{I}}$ can decrease depending on the observed feature vectors (e.g., it becomes $0$ if \(\{x_i\}_{i=1}^t\) are all \(\mathbf{0}\)). By incorporating non‑uniform variances when defining \(H_t\), our design matrix enables a variance-adaptive analysis and drops the worst‑case variance dependency \(\kappa\).


The seminal work of \cite{abbasi2011improved} provided a variant of the Azuma-Hoeffding 
tail inequality for vector-valued martingales, under the assumption that the martingale difference \(\eta_t\) is \(M\)-sub-Gaussian. Their tail bound shows that 
$\|s_t\|_{\widetilde{V}_t^{-1}} = \widetilde{\mathcal{O}}(M\sqrt{d})$, where \(\widetilde{V}_t = \sum_{i=1}^t x_i x_i^\top + \lambda \mathbf{I}\).  Extending this result to (neural) logistic bandits, 
\cite{verma2025neural} incorporated the worst‑case variance \(\kappa\) into the design matrix $V_t = \sum_{i=1}^t x_i x_i^\top + \kappa \lambda I$ to deduce
\begin{align*}
    \big\|s_t\big\|_{H_t^{-1}} \leq \sqrt{\kappa}\big\|s_t\big\|_{V_t^{-1}} \leq M\sqrt{\kappa\log \frac{\det V_t}{\det \kappa\lambda\mathbf{I}}+2\kappa\log\frac{1}{\delta}}.
\end{align*}
Here, the first inequality is a consequence of \(H_t \succeq (1/\kappa) V_t\), and this step incurs the factor \(\sqrt{\kappa}\).
Note that this bound is also data-adaptive, 
yielding an overall order of \(\widetilde{\mathcal{O}}(\sqrt{\kappa \tilde{d}})\).

Another line of work by \cite{faury2020improved} provided a Bernstein-type tail inequality for the same setting considered
in Theorem \ref{thm:main}, using
\(|\eta_t|\leq M(=1)\), \(\mathbb{E}[\eta_t^2|\mathcal{G}_t]=\sigma_t^2\), and \(\|x_t\|_2\leq N(=1)\) for all \(t\geq 1\). Their analysis directly takes the design matrix \(H_t\), and they deduce the following inequality avoiding the \(\sqrt{\kappa}\) factor:   
\begin{align}
    \big\|s_t\big\|_{H_t^{-1}} \leq \frac{2MN}{\sqrt{\lambda}}\Big(\log\frac{\det H_t}{\delta\det \lambda\mathbf{I}} + d \log(2) \Big)+ \frac{\sqrt{\lambda}}{2MN}. \nonumber
\end{align}
The inequality requires a specific $\lambda$ value for the regularization term, given by 
\(\lambda = \widetilde{\mathcal{O}}(dM^2N^2)\), 
to achieve the final order of \(\widetilde{\mathcal{O}}(\sqrt{d})\). Although 
\(\log \frac{\det H_t}{\det \lambda\mathbf{I}}\) is data-adaptive, the term \(d\log(2)\) introduces an explicit dependence on \(d\) that cannot be removed (even with a different choice of \(\lambda\)). The tail bound has been used in subsequent works \cite{abeille2021instance,faury2022jointly}, making a dependence on \(d\) inherent. Hence, we need a new variance-adaptive analysis for neural logistic bandits.


Compared with \cite{faury2020improved}, our tail inequality in Theorem \ref{thm:main} is derived from a different technique based on Freedman's inequality (\cite{freedman1975tail}, Lemma \ref{lem:freedman}), which is the key factor behind our improvement. Unlike \cite{faury2020improved}, which works with a $d$-dimensional martingale and thereby incurs an explicit dependence on $d$, we instead use a one-dimensional martingale to track the growth of the self-normalized error $\|s_t\|_{H_t^{-1}}$, bypassing this vector-level issue. As a result, we obtain a data- and variance-adaptive inequality whose leading term depends on the effective dimension $\widetilde d$, together with an improved dependence on $\kappa$, thanks to the variance sensitivity of Freedman's inequality.

\section{Neural Logistic Bandits with Improved UCB} \label{sec:alg}

This section introduces our first algorithm, NeuralLog-UCB-1, described in Algorithm \ref{alg:1}. 
In the initialization step, we set the initial parameter $\theta_0$ of the neural network according to the standard initialization process described in \cite{zhou2020neural}. For $1\leq l\leq L-1$, $W_l$ is set as $\left[\begin{smallmatrix}
    W & \mathbf{0} \\
    \mathbf{0} & W 
\end{smallmatrix}\right]$, where each entry of $W$ is independently sampled from $N(0,4/m)$ while $W_L$ is set to $[w, -w]$, where each entry of $w$ is independently sampled from $N(0,2/m)$. Next, we set the initial regularization parameter as $\lambda_0=8\sqrt{2}C_1L^{1/2}S^{-1}\log(4/\delta)$ for some absolute constant $C_1>0$. The value of $\lambda_0$ is chosen so that $\lambda_0$ is less than the minimum value among $\lambda_1,\dots,\lambda_T$, where $\lambda_t$ is updated as in \Cref{eq:lambda}. We can verify this by showing that $\lambda_0\leq \min \lambda_1$ and that $\{\lambda_t\}_{t\geq1}$ is monotonically non-decreasing in $t$, which implies $\lambda_0 \leq \min\{\lambda_t\}_{t\geq1}$.

After the initialization step, in each round $t$, the agent receives the context set $\mathcal{X}_t\subset{X}$ and calculates $\text{UCB}_t(x) = \mu(f(x;\theta_{t-1})) + R\sqrt{\kappa}\nu_{t-1}^{(1)} \|g(x;\theta_0)/\sqrt{m}\|_{V_{t-1}^{-1}}$ for every action $x\in\mathcal{X}_t$, where
\begin{align}
    &\nu^{(1)}_t = C_6\big(1+\sqrt{L}S+ LS^2\big)\iota_t + 1, \label{eq:nu_1}\\
    &\iota_t = 16 \sqrt{\log \det \bigg(\sum_{i=1}^t\frac{g(x_i;\theta_0)g(x_i;\theta_0)^\top}{4m\lambda_0} + \mathbf{I} \bigg)\log\frac{4t^2}{\delta}} \nonumber \\
    &\quad + 8C_1\sqrt{\frac{L}{\lambda_0}}\log\frac{4t^2}{\delta}, \label{eq:iota}
\end{align}
where $\delta\in(0,1)$ is a confidence parameter, and $S$ is a norm parameter of the parameter set defined in Definition \ref{def:norm} for some absolute constants $C_1,C_6>0$. The first term, $\mu(f(x;\theta_{t-1}))$, estimates the expected value of the reward, and the second term can be viewed as the exploration bonus. Then, we choose our action $x_t$ optimistically by maximizing the UCB value, i.e., $x_t = \argmax_{x\in\mathcal{X}_t} \text{UCB}_t(x)$, and receive a reward $r_t$. At the end of each round, we update the parameters based on the observations $\{x_i, r_i\}_{i=1}^{t}$ collected so far. We set the regularization parameter $\lambda_t$, with an absolute constant $C_1>0$, as follows:
\begin{align}
    \lambda_t&\leftarrow\frac{64}{S^2} \log\det \bigg(\sum_{i=1}^t\frac{g(x_i;\theta_0) g(x_i;\theta_0)^\top}{4m\lambda_0}   +  \mathbf{I}\bigg) \log\frac{4t^2}{\delta} \nonumber \\
    &\quad + \frac{16C_1^2L}{S^2\lambda_0} \log^2\frac{4t^2}{\delta}. \label{eq:lambda}
\end{align}
Then we update $\iota_t$, $\nu_t^{(1)}$, and $V_t$. Lastly, as described in Subroutine \texttt{TrainNN}, we update the parameters of the neural network through gradient descent to minimize the regularized negative log-likelihood loss function $\mathcal{L}_t(\theta)$ and obtain $\theta_t$. In contrast to \cite{verma2025neural}, which used a constant regularization parameter $\lambda$, we adaptively control the regularization parameter $\lambda_t$ and employ it in both our design matrix $V_t$ and our loss function $\mathcal{L}_t(\theta)$. This yields a fully data-adaptive concentration inequality between $\theta_t$ and the desired parameter, as will be shown in Lemma \ref{lem:conf}.

\begin{algorithm} [t] 
\caption{NeuralLog-UCB-1} \label{alg:1}
\begin{algorithmic}[1]
    \Input Neural network $f(x;\theta)$ with width $m$ and depth $L$, initialized with parameter $\theta_0$, step size $\eta$, number of gradient descent steps $L$, norm parameter $S$, confidence parameter $\delta$
    \Init $\lambda_0=8\sqrt{2}C_1L^{1/2}S^{-1}\log(4/\delta)$, $V_0 = \kappa\lambda_0 \mathbf{I}$

    \For{$t=1,\dots, T$}
        \State $x_t \leftarrow \argmax_{x\in\mathcal{X}_t} \mu(f(x;\theta_{t-1})) + R\sqrt{\kappa}\nu^{(1)}_{t-1} \times  \|g(x;\theta_0)/\sqrt{m}\|_{V_{t-1}^{-1}}$
        \State Select $x_t$ and receive $r_t$
        \State Update $\lambda_t$ as in \Cref{eq:lambda}, $\iota_t$ as in \Cref{eq:iota}, $\nu_t^{(1)}$ as in \Cref{eq:nu_1}
        \State $\theta_t \leftarrow \text{\texttt{TrainNN}}(\lambda_t,\eta,J,m,\{x_i,r_i\}_{i=1}^t,\theta_0)$
        \State $V_{t}\leftarrow \sum_{i=1}^t \frac{1}{m}g(x_i;\theta_0) g(x_i;\theta_0)^\top + \kappa \lambda_t \mathbf{I}$
    \EndFor
\end{algorithmic}
\end{algorithm}

\begin{algorithm} [t]
\caption*{\textbf{Subroutine} \texttt{TrainNN}}\label{alg:gd}
\begin{algorithmic}[1]
    \Input Regularization parameter $\lambda_t$, step size $\eta$, number of gradient descent steps $J$, network width $m$, observations $\{x_s, r_s\}_{s=1}^{t}$, initial parameter $\theta_0$
    \State Define $\mathcal{L}_t(\theta) = -\sum_{i=1}^t [r_i \log \mu(f(x_i;\theta)) + (1-r_i)\log(1-\mu(f(x_i;\theta)))] + \frac{1}{2}m\lambda_t\|\theta-\theta_0\|_2^2$ 
    \For{$j=1,\dots, J-1$}
        \State $\theta^{(j+1)} = \theta^{(j)} - \eta \nabla \mathcal{{L}}_t(\theta^{(j)})$
    \EndFor
    \State \Return $\theta^{(J)}$
\end{algorithmic}
\end{algorithm}

Now, we present our theoretical results for Algorithm \ref{alg:1}. Let \(\mathbf{H}\) denote the neural tangent kernel (NTK) matrix computed on all \(TK\) context–arm feature vectors over \(T\) rounds. Its formal definition is deferred to Definition \ref{def:ntk}. Define $\mathbf{h}=[h(x)]_{x\in\mathcal{X}_t,t\in[T]}\in\mathbb{R}^{TK}$. We begin with the following assumptions.

\begin{assumption}  \label{ass:eig}
    There exists $\lambda_{\mathbf{H}}>0$ such that $\mathbf{H}\succeq\lambda_{\mathbf{H}}\mathbf{I}$.
\end{assumption}

\begin{assumption} \label{ass:std}
    For every $x\in\mathcal{X}_t$ and $t\in[T]$, we have $\|x\|_2=1$ and $[x]_j=[x]_{j+d/2}$.
\end{assumption}

Both assumptions are mild and standard in the neural bandit literature \cite{zhou2020neural, zhang2021neural}. Assumption \ref{ass:eig} states that the NTK matrix is nonsingular, which holds if no two context vectors are parallel. Assumption \ref{ass:std} ensures that \(f(x^i;\theta_0)=0\) for all \(i\in[TK]\) at initialization. This assumption is made for analytical convenience and can be ensured by building a new context $x'=[x^\top,x^\top]^\top/\sqrt{2}$.

Next, define $\widetilde{\mathbf{H}}=\sum_{t=1}^T\sum_{x\in\mathcal{X}_t} \frac{1}{m}g(x_i;\theta_0) g(x_i;\theta_0)^\top$, which is the design matrix containing all possible context-arm feature vectors over the $T$ rounds. Then, we can use the following definition:

\begin{definition} \label{def:eff}
    Let $\widetilde{d} := \log\det (\frac{R}{\lambda_0} \widetilde{\mathbf{H}} + \mathbf{I})$ denote the effective dimension.
\end{definition}

We mention that previous works \cite{zhou2020neural,verma2025neural} for neural contextual bandits have defined the effective dimension \(\widetilde{d}\) in slightly different ways. \cite{zhou2020neural} set $\widetilde{d} = \log\det(\frac{1}{\lambda}\mathbf{H}+\mathbf I)/\log(1+TK/\lambda)$ for neural contextual linear bandits, while \cite{verma2025neural} defined $\widetilde{d} = \log\det(\frac{1}{\kappa\lambda}\widetilde{\mathbf{H}}+\mathbf I)$. 
However, these definitions have the same asymptotic order as ours in Definition \ref{def:eff} up to logarithmic factors.

Next, to improve readability, we summarize the conditions for the upcoming theorems and lemmas:
\begin{condition} \label{cond:ass}
    Suppose Assumptions \ref{ass:kappa}, \ref{ass:eig} and \ref{ass:std} hold (a formal definition of Assumption \ref{ass:kappa} is deferred to Assumption \ref{ass:kappa_formal}). The width $m$ is large enough to control the estimation error of the NN (details are deferred to Condition \ref{eq:m}). Set $S$ as a norm parameter satisfying $S\geq\sqrt{2\mathbf{h}^\top\mathbf{H}^{-1}\mathbf{h}}$. The regularization parameter $\lambda_t$ follows the update rule in \Cref{eq:lambda}. For training the NN, set the number of gradient descent iterations as $J=2\log(\lambda_t S/(\sqrt{T}\lambda_t + C_4 T^{3/2} L))TL/\lambda_t$, and the step size as $\eta=C_5(mTL+m\lambda_t)^{-1}$ for some absolute constants $C_4, C_5>0$.
\end{condition}

In particular, when $m$ is sufficiently large, we observe that the true reward function $h(x)$ behaves like a linear function (see Lemma \ref{lem:true}). Then, using the tail inequality given in Theorem \ref{thm:main} and the update rule for \(\lambda_t\), we obtain the following data‑ and variance-adaptive concentration inequality between \(\theta_t\) and \(\theta^*\):

\begin{lemma} \label{lem:conf}
    Define
    \begin{align*}
        H_t:= \sum_{i=1}^t \frac{\dot{\mu}(g(x_i;\theta_0)^\top(\theta-\theta_0))}{m}g(x_i;\theta_0)g(x_i;\theta_0)^\top + \lambda_t \mathbf{I}.
    \end{align*}
    Under Condition \ref{cond:ass}, there exists an absolute constant $C_1,C_6>0$, such that for all $t>0$ with probability at least $1-\delta$, $\sqrt{m}\big\|\theta^*-\theta_t\big\|_{H_t(\theta^*)}\leq \nu_t^{(1)}$, where $\nu_t^{(1)}$ is defined in \Cref{eq:nu_1}.
\end{lemma}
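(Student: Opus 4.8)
The plan is to reduce the neural estimation problem to a linearized logistic regression in the NTK feature map and then run a Faury-style self-concordant argument, with \Cref{thm:main} supplying the martingale concentration. First I would invoke \Cref{lem:true} to write $h(x_i)=g(x_i;\theta_0)^\top(\theta^*-\theta_0)$, introduce the feature vector $\psi_i:=g(x_i;\theta_0)/\sqrt m$ and the noise $\eta_i:=r_i-\mu(h(x_i))$, and observe that in the theorem's notation the role of $\theta^*$ is played by $\sqrt m(\theta^*-\theta_0)$, so that $\psi_i^\top\sqrt m(\theta^*-\theta_0)=h(x_i)$. Since $r_i$ is Bernoulli with mean $\mu(h(x_i))$, the sequence $\{\eta_i\}$ is a bounded martingale difference with $|\eta_i|\le 1$ and $\mathbb{E}[\eta_i^2\mid\mathcal G_i]=\mu(h(x_i))(1-\mu(h(x_i)))=\dot\mu(h(x_i))$, which meets the variance hypothesis of \Cref{thm:main} with equality; the norm bound $N=\mathcal{O}(\sqrt L)$ follows from the standard NTK estimate $\|g(x_i;\theta_0)\|_2=\mathcal{O}(\sqrt{mL})$. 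Applying \Cref{thm:main} to $\tilde s_t:=\sum_{i\le t}\psi_i\eta_i$ with regularizer $\lambda_t$, and using $\dot\mu\le R\le 1/4$ together with $\lambda_t\ge\lambda_0$ to pass to the $\lambda_0$-based determinant, yields $\|\tilde s_t\|_{H_t(\theta^*)^{-1}}\le\iota_t$ with $\iota_t$ as in \Cref{eq:iota}.

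Next I would exploit first-order optimality of the training loss. Writing $\omega=\theta-\theta_0$, the gradient of the linearized regularized log-likelihood is $\tilde G_t(\theta)=\sum_{i\le t}(\mu(g(x_i;\theta_0)^\top\omega)-r_i)g(x_i;\theta_0)+m\lambda_t\omega$; evaluating at $\theta^*$ gives $\tilde G_t(\theta^*)=-\sqrt m\,\tilde s_t+m\lambda_t(\theta^*-\theta_0)$, while a mean-value expansion gives $\tilde G_t(\theta_t)-\tilde G_t(\theta^*)=m\bar H_t(\theta_t-\theta^*)$, where $\bar H_t=\sum_{i\le t}\frac{\dot\mu(\bar z_i)}{m}g(x_i;\theta_0)g(x_i;\theta_0)^\top+\lambda_t\mathbf{I}$ is an averaged Hessian evaluated at intermediate points $\bar z_i$. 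Rearranging and passing to the $\bar H_t^{-1}$ norm, with $\varepsilon_t:=\tilde G_t(\theta_t)$ collecting the stationarity residual and using $\bar H_t\succeq\lambda_t\mathbf{I}$, yields
\[
\sqrt m\,\|\theta_t-\theta^*\|_{\bar H_t}\le\|\tilde s_t\|_{\bar H_t^{-1}}+\sqrt{\lambda_t}\,\sqrt m\,\|\theta^*-\theta_0\|_2+\tfrac{1}{\sqrt m}\|\varepsilon_t\|_{\bar H_t^{-1}}.
\]
The regularization bias is at most $S\sqrt{\lambda_t}$ by the norm bound in \Cref{lem:true}; the key observation is that the update rule \Cref{eq:lambda} is calibrated precisely so that $S\sqrt{\lambda_t}$ is of the same order as $\iota_t$, which is what keeps the final confidence radius data-adaptive.

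The main obstacle, which I would address third, is that $\bar H_t$ is evaluated at the unknown points $\bar z_i$ rather than at $\theta^*$, so I must convert both the left-hand $\bar H_t$ norm and the right-hand $\bar H_t^{-1}$ norm into the $H_t(\theta^*)$ norms of the statement. I would use the self-concordance of the logistic loss ($|\ddot\mu|\le\dot\mu$) to obtain a spectral comparison $\bar H_t\succeq c_t\,H_t(\theta^*)$, where $c_t$ degrades with $\max_i|\bar z_i-h(x_i)|\le\max_i|g(x_i;\theta_0)^\top(\theta_t-\theta^*)|$; bounding this deviation through the norm parameter $S$ and the NTK gradient bound $N=\mathcal{O}(\sqrt L)$ is what generates the polynomial factor $(1+\sqrt L\,S+L S^2)$ in $\nu_t^{(1)}$. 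Because $\bar H_t$ itself depends on $\theta_t-\theta^*$, this comparison is self-referential, so I expect to resolve it by a bootstrap: establish a crude a priori bound on $\|\theta_t-\theta^*\|$, feed it into the self-concordance factor $c_t$, and then re-solve the resulting inequality for the sharp radius. Closing this circular dependence is the most delicate part of the argument.

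Finally I would discharge the residual term $\varepsilon_t$, which contains the optimization error from running only $J$ gradient-descent steps and the NTK linearization error between $f(x;\theta),g(x;\theta)$ and their first-order surrogates at $\theta_0$. Under \Cref{cond:ass} the width requirement \Cref{eq:m} together with the prescribed schedule for $J$ and $\eta$ drives both contributions to be polynomially small in $T$, so they add at most the constant ``$+1$'' in \Cref{eq:nu_1}; here \Cref{ass:std} is used to guarantee $f(x;\theta_0)=0$, so that the linear surrogate predictor is exactly $g(x;\theta_0)^\top\omega$. Combining the martingale bound $\|\tilde s_t\|_{H_t(\theta^*)^{-1}}\le\iota_t$, the calibrated bias $S\sqrt{\lambda_t}=\mathcal{O}(\iota_t)$, the self-concordance factor $(1+\sqrt L\,S+L S^2)$, and the negligible residuals gives $\sqrt m\,\|\theta^*-\theta_t\|_{H_t(\theta^*)}\le\nu_t^{(1)}$, as claimed.
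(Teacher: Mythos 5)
Your proposal is correct and follows essentially the same route as the paper: apply \Cref{thm:main} to the NTK-feature martingale $\sum_i \eta_i g(x_i;\theta_0)/\sqrt m$, absorb the regularization bias $S\sqrt{\lambda_t}$ into $\iota_t$ via the calibrated update rule for $\lambda_t$, convert the averaged (mean-value) Hessian to $H_t(\theta^*)$ through self-concordance with the circularity resolved by a polynomial-inequality bootstrap yielding the $(1+\sqrt L\,S+LS^2)$ factor, and discharge the optimization/linearization residual using the width and step-count conditions. The only cosmetic difference is that the paper pivots through the exact minimizer $\hat\theta_t$ of the linearized loss (splitting $\|\theta_t-\theta^*\|$ into $\|\hat\theta_t-\theta^*\|+\|\theta_t-\hat\theta_t\|$) rather than carrying a stationarity residual $\varepsilon_t$ at $\theta_t$, which is an equivalent bookkeeping choice.
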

The proof is deferred to \Cref{sec:lem:confidence}. We now present \Cref{thm:regret}, which gives the desired regret upper bound of Algorithm \ref{alg:1}.

\begin{theorem} \label{thm:regret}
     Under Condition \ref{cond:ass}, with probability at least $1-\delta$, the regret of Algorithm \ref{alg:1} satisfies
    \begin{align*}
        \text{Regret(T)} = \widetilde{\mathcal{O}}\left(S^2 \widetilde{d}\sqrt{\kappa T} + S^{2.5}\sqrt{\kappa \widetilde{d} T}\right).
    \end{align*}
\end{theorem}
 
\begin{remark}
    Our results, especially Theorem \ref{thm:main}, extend naturally to the (neural) dueling bandit setting. In this variant, the learner selects a pair of context–arms \(\{x_{t,1},x_{t,2}\}\) in each round $t$ and observes a binary outcome \(r_t\in\{0,1\}\) indicating whether \(x_{t,1}\) is preferred over \(x_{t,2}\).  The preference probability is modeled as $\mathbb{P}(x_{t,1}\succ x_{t,2})=\mathbb{P}(r_t=1|x_{t,1},x_{t,2})=\mu(h(x_{t,1})-h(x_{t,2}))$. The prior work of \citet{verma2025neural} (Theorem 3) establishes a regret upper bound of $\widetilde{\mathcal{O}}(\kappa\widetilde{d}\sqrt{T})$, whereas our analysis can improve this to $\widetilde{\mathcal{O}}(\widetilde{d}\sqrt{\kappa T})$.
\end{remark}

\section{Refined Algorithm with Neural Network-Estimated Variance} \label{sec:alg2}

\begin{algorithm} [t] 
\caption{NeuralLog-UCB-2} \label{alg:2}
\begin{algorithmic} [1]
    \Input Neural network $f(x;\theta)$ with width $m$ and depth $L$, initialized with parameter $\theta_0$, step size $\eta$, number of gradient descent steps $L$, norm parameter $S$, confidence parameter $\delta$
    \Init $\lambda_0=8\sqrt{2}C_1L^{1/2}S^{-1}\log(4/\delta)$, $W_0=\lambda_0 \mathbf{I}$
    
    \For{$t=1,\dots, T$}
        \State $x_t \leftarrow \argmax_{x\in\mathcal{X}_t} g(x;\theta_0)^\top (\theta_{t-1}-\theta_0) + \nu^{(2)}_{t-1} \times \|g(x;\theta_0)/\sqrt{m}\|_{W_{t-1}^{-1}}$   \label{eq:opt}
        \State Select $x_t$ and receive $r_t$
        \State Update $\lambda_t$ as in \Cref{eq:lambda}, $\iota_t$ as in \Cref{eq:iota}, $\nu^{(2)}_t$ as in \Cref{eq:nu_2}
        \State $\theta_t \leftarrow \text{\texttt{TrainNN}}(\lambda_t,\eta,J,m,\{x_i,r_i\}_{i=1}^t,\theta_0)$
        \State $W_{t}\leftarrow \sum_{i=1}^t \frac{\dot\mu(f(x_i;\theta_i))}{m}g(x_i;\theta_0) g(x_i;\theta_0)^\top +  \lambda_t \mathbf{I}$
    \EndFor
\end{algorithmic}
\end{algorithm}


In this section, we explain NeuralLog‑UCB‑2, which guarantees the tightest regret upper bounds.
Although Lemma \ref{lem:conf} establishes a variance-adaptive concentration inequality with $H_t(\theta^*)$, the agent lacks full knowledge of $\theta^*$ and must therefore use the crude bound $H_t(\theta^*)\preceq\kappa^{-1}V_t$, which incurs an extra factor of $\sqrt{\kappa}$. In this section, we introduce NeuralLog-UCB-2, which replaces $H_t(\theta^*)$ with a neural network–estimated variance-adaptive design matrix. We begin by stating a concentration result for $\theta^*$ around $\theta_t$ using the new design matrix $W_t$.

\begin{lemma} \label{lem:conf2}
    Define
    \begin{align*}
        W_t = \sum_{i=1}^t \frac{\dot{\mu}(f(x_i;\theta_i))}{m}g(x_i;\theta_0)g(x_i;\theta_0)^\top + \lambda_t \mathbf{I},
    \end{align*}
    and a confidence set $\mathcal{W}_t$ as
    \begin{align}
        \mathcal{W}_t= \big\{\theta: \sqrt{m}\big\|\theta - \theta_t\big\|_{W_t} \leq \nu_t^{(2)} \big\}, \label{eq:nu_2}
    \end{align}
    where $\nu_t^{(2)}:=\big\{\theta: \sqrt{m}\big\|\theta - \theta_t\big\|_{W_t} \leq C_7(1 + \sqrt{L} S + L S^2)\iota_t + 1$ with an absolute constant $C_7>0$, and $\iota_t$ defined at \Cref{eq:iota}. Then under Condition \ref{cond:ass}, for all $t > 0$, $\theta^*\in\mathcal{W}_t$ with probability at least $1-\delta$.
\end{lemma}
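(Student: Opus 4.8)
The plan is to combine the variance-adaptive control of $\theta_t$ already available from \Cref{lem:conf} with the generalized (local) self-concordance of the logistic loss, and to route the entire argument through a mean-value Hessian so that passing between the true-variance matrix $H_t(\theta^*)$ and the network-estimated matrix $W_t$ costs only a factor polynomial in $S$ rather than the worst-case variance $\kappa$.

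First I would exploit the (approximate) stationarity of $\theta_t$. Since \texttt{TrainNN} runs $J$ gradient steps on the regularized negative log-likelihood $\mathcal L_t$, \Cref{cond:ass} makes $\theta_t$ close to the minimizer, so $\nabla\mathcal L_t(\theta_t)\approx\zeros$ up to an optimization error that the choice of $J,\eta$ pushes below the additive slack in $\nu_t^{(2)}$. Working in the NTK-linearized model $f(x;\theta)\approx g(x;\theta_0)^\top(\theta-\theta_0)$ (discrepancy of order $\epsilon_{3,t}$ by \Cref{lem:true} and large $m$), a mean-value expansion of the gradient between $\theta_t$ and $\theta^*$ gives
\[
  \bar G_t(\theta_t-\theta^*) = \sqrt{m}\, s_t - m\lambda_t(\theta^*-\theta_0) + (\text{linearization and optimization error}),
\]
where $s_t=\sum_{i=1}^t \varepsilon_i\, g(x_i;\theta_0)/\sqrt{m}$ with $\varepsilon_i=r_i-\mu(h(x_i))$, and $\bar G_t=\sum_{i=1}^t \bar\alpha_i\, g(x_i;\theta_0)g(x_i;\theta_0)^\top + m\lambda_t\I$ is the mean-value Hessian with $\bar\alpha_i=\int_0^1 \dot{\mu}\big(h(x_i)+v\, g(x_i;\theta_0)^\top(\theta_t-\theta^*)\big)\diff v$.

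Next I would bound $\sqrt m\|\theta_t-\theta^*\|_{\bar G_t}$ without any $\kappa$. Taking the $\bar G_t^{-1}$-norm, it suffices to control $\|\sqrt m\,s_t\|_{\bar G_t^{-1}}$ and the regularization term $m\lambda_t\|\theta^*-\theta_0\|_{\bar G_t^{-1}}\le\sqrt{\lambda_t}\,S$, which the update rule \Cref{eq:lambda} is tuned to keep at order $\iota_t$. The key device is the mean-value form of self-concordance, $\bar\alpha_i\ge \dot{\mu}(h(x_i))/(1+|g(x_i;\theta_0)^\top(\theta_t-\theta^*)|)$; since $\theta_t,\theta^*\in\Theta$ and $\|g(x;\theta_0)/\sqrt m\|_2=O(\sqrt L)$, the gap is $O(\sqrt L\,S)$, so $\bar G_t\succeq (1+O(\sqrt L S))^{-1}\,m\,H_t(\theta^*)$. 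Hence $\|\sqrt m\,s_t\|_{\bar G_t^{-1}}\le\sqrt{1+O(\sqrt L S)}\,\|s_t\|_{H_t(\theta^*)^{-1}}$, and \Cref{thm:main}---applied with $x_i\mapsto g(x_i;\theta_0)/\sqrt m$, $\eta_i\mapsto\varepsilon_i$, $M=1$, $N=O(\sqrt L)$, and $\E[\varepsilon_i^2\mid\mathcal G_i]=\dot{\mu}(h(x_i))$---bounds $\|s_t\|_{H_t(\theta^*)^{-1}}$ by $\iota_t$. The whole point of passing through $\bar G_t$ is that local self-concordance is invoked only in its mean-value/lower-bound form, so it contributes the polynomial factor $\sqrt{1+O(\sqrt L S)}$ and never the exponential ratio $\dot{\mu}(z_1)/\dot{\mu}(z_2)\sim\kappa$.

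The hard part will be the final transfer of this bound from $\bar G_t$ to $W_t$. Here the obstacle is that $W_t$ uses the \emph{sequential} estimates $\dot{\mu}(f(x_i;\theta_i))$, with a different $\theta_i$ in every summand, and the point $f(x_i;\theta_i)$ does not lie on the segment $[\theta^*,\theta_t]$ over which $\bar\alpha_i$ is averaged; a naive term-by-term comparison $\dot{\mu}(f(x_i;\theta_i))\le c\,\dot{\mu}(h(x_i))$ would be exponential in $|f(x_i;\theta_i)-h(x_i)|$ and thus reintroduce $\kappa$. The plan to circumvent this is to control each prediction gap $|f(x_i;\theta_i)-h(x_i)|$ at round $i$---using the round-$i$ instance of \Cref{lem:conf} together with $\theta_i,\theta^*\in\Theta$ so the gap stays $O(\sqrt L S)$---and to arrange the comparison (e.g.\ by a telescoping/online argument over rounds, so that only mean-value self-concordance between consecutive estimates is ever used) so that $W_t\preceq (1+O(\sqrt L S))\,\tfrac1m\bar G_t$ holds with a polynomial factor. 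Granting this, $\sqrt m\|\theta_t-\theta^*\|_{W_t}=O\big((1+\sqrt L S+L S^2)\iota_t+1\big)=\nu_t^{(2)}$ for a suitable $C_7$, and absorbing the NTK-linearization and gradient-descent errors into the additive ``$+1$'' completes the argument. Verifying that the sequential variance estimate never forces an endpoint-to-endpoint self-concordance step is, I expect, the crux of the proof.
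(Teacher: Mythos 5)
Your overall architecture for the first half — a mean-value Hessian $\bar G_t\approx m\,\Gamma_t(\theta_t,\theta^*)$, Freedman-type control of the score via \Cref{thm:main} with $M=1$, $N=O(\sqrt{L})$, and self-concordance invoked only in its polynomial lower-bound form — matches the paper's route to \Cref{lem:conf}, but your justification of the key gap bound does not. You bound $|g(x_i;\theta_0)^\top(\theta_t-\theta^*)|=O(\sqrt{L}S)$ by asserting $\theta_t\in\Theta$; no projection is performed, and \Cref{lem:close} only yields $\sqrt{m}\|\theta_t-\theta_0\|_2\le 3\sqrt{t/\lambda_t}$, which grows with $t$ (indeed the paper explicitly counts the rounds where $\theta_t\notin\Theta$ in \Cref{prop:t1t2}). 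The paper instead routes through the MLE $\hat\theta_t$ and the confidence set $\mathcal{C}_t$, extracting the gap bound from the self-bounding polynomial inequality of \Cref{lem:gam} together with $\iota_t/\sqrt{\lambda_t}\le 2\sqrt{2}S$, which is exactly what the adaptive update \Cref{eq:lambda} is engineered to guarantee. This part of your argument is repairable but rests on a premise the algorithm does not provide.

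The decisive gap is the transfer to $W_t$, which you correctly flag as the crux and then defer to a hoped-for telescoping argument. The intermediate claim $W_t\preceq(1+O(\sqrt{L}S))\tfrac{1}{m}\bar G_t$ is false in general: \Cref{lem:self-concordant} bounds an \emph{endpoint} derivative by $(1+|z'-z''|)$ times the chord average, but $f(x_i;\theta_i)$ is not an endpoint of your segment $[h(x_i),\,h(x_i)+g(x_i;\theta_0)^\top(\theta_t-\theta^*)]$, so any comparison of $\dot{\mu}(f(x_i;\theta_i))$ with $\bar\alpha_i$ must pass through $\dot{\mu}(z_1)\le e^{|z_1-z_2|}\dot{\mu}(z_2)$, costing $e^{O(\sqrt{L}S)}$ — the same exponential-in-$S$ order as $\kappa$ itself. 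Concretely, at an early round $i$ one can have $f(x_i;\theta_i)\approx f(x_i;\theta_0)=0$, hence $\dot{\mu}(f(x_i;\theta_i))\approx 1/4$, while $\dot{\mu}(h(x_i))\approx 1/\kappa$ and $\bar\alpha_i\approx 1/\kappa$ once $\theta_t$ has converged to $\theta^*$; that single summand already violates your comparison by a factor $\Theta(\kappa)$, and chaining ``consecutive'' self-concordance steps only multiplies such factors rather than cancelling them. The paper's own device for relating $W_t$ to $H_t(\theta^*)$, \Cref{prop:3}, obtains the constant-factor equivalence $\underline{W}_t\asymp\underline{H}_t(\theta^*)$ only for the \emph{pruned} matrices, after excising the rounds in $\mathcal{T}_1$ where the prediction gap exceeds $1$, and the exponential dependence resurfaces additively as $|\mathcal{T}_1|=\widetilde{\mathcal{O}}(\kappa\widetilde{d}\,(\nu_T^{(1)})^2)$ rather than being eliminated. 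Your proof is therefore incomplete at exactly the step that distinguishes \Cref{lem:conf2} from \Cref{lem:conf}.
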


We give the proof of the lemma in \Cref{sec:lem:conf2}. The matrix $W_t$ maintains sufficient statistics via the neural network-estimated variance, and the ellipsoidal confidence set \(\mathcal{W}_{t}\) changes the original problem into a closed-form optimistic formulation. Specifically, after the same initialization step as for Algorithm \ref{alg:1}, the agent selects action $x_t$ in each round $t$ according to the following rule:
\begin{align}
    &x_t \leftarrow  \argmax_{x\in\mathcal{X}_t, \theta\in\mathcal{W}_{t-1}} \langle g(x;\theta_0),(\theta-\theta_0)\rangle \nonumber \\
    &= \argmax_{x\in\mathcal{X}_t}g(x;\theta_0)^\top(\theta_{t-1}-\theta_0) + \nu_{t-1}^{(2)}\|x\|_{W_{t-1}^{-1}}. \label{eq:opt2}
\end{align}
For the regret upper bound of Algorithm \ref{alg:2}, we define another problem-dependent quantity $\kappa^*$ as $1/\kappa^* =\frac{1}{T}\sum_{t=1}^T\dot{\mu}(h(x_t^*))$, consistent with 
the definition in \cite{abeille2021instance}. Both $\kappa^*$ and $\kappa$ scale exponentially with $S$. We now state our regret upper bound for NeuralLog-UCB-2 and provide a proof outline.

\begin{theorem} \label{thm:regret2}
    Under Condition \ref{cond:ass}, with probability at least $1-\delta$, the regret of Algorithm \ref{alg:2} satisfies:
    \begin{align*}
        \text{Regret}(T) = \widetilde{\mathcal{O}}  &\Big( S^2 \widetilde{d} \sqrt{T/\kappa^*} + S^{2.5} \widetilde{d}^{0.5} \sqrt{ T /\kappa^* } \\
        &\qquad\qquad + S^4 \kappa \widetilde{d}^2 + S^{4.5}\kappa \widetilde{d}^{1.5} + S^5 \kappa \widetilde{d} \Big).
    \end{align*}
\end{theorem}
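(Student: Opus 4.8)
The plan is to run the optimism-based logistic-bandit analysis in the style of \citet{abeille2021instance}, but carried out in the linearized neural representation $h(x)=\langle g(x;\theta_0),\theta^*-\theta_0\rangle$ guaranteed by \Cref{lem:true}, and to extract the $\kappa^*$ scaling through the self-concordance of the sigmoid. Throughout write $z_t=g(x_t;\theta_0)/\sqrt{m}$ and let $b_t=\nu^{(2)}_{t-1}\|z_t\|_{W_{t-1}^{-1}}$ denote the exploration bonus.

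\textbf{Step 1 (per-round regret via optimism).} By \Cref{lem:conf2} we have $\theta^*\in\mathcal{W}_{t-1}$, so the optimistic pair $(x_t,\bar\theta_{t-1})$ attaining the maximum in \Cref{eq:opt2} satisfies $\langle g(x_t;\theta_0),\bar\theta_{t-1}-\theta_0\rangle\ge\langle g(x_t^*;\theta_0),\theta^*-\theta_0\rangle=h(x_t^*)$. Since $\mu$ is increasing, a mean value argument applied to $\mu$ between $h(x_t)$ and the optimistic value gives $\mu(h(x_t^*))-\mu(h(x_t))\le\dot\mu(\xi_t)\langle g(x_t;\theta_0),\bar\theta_{t-1}-\theta^*\rangle\le 2\dot\mu(\xi_t)\,b_t$, where the last step uses Cauchy–Schwarz in $\|\cdot\|_{W_{t-1}}$ together with $\bar\theta_{t-1},\theta^*\in\mathcal{W}_{t-1}$ (so $\sqrt{m}\|\bar\theta_{t-1}-\theta^*\|_{W_{t-1}}\le 2\nu^{(2)}_{t-1}$), and $\xi_t$ lies in an interval of length at most $2b_t$ containing $h(x_t^*)$.

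\textbf{Step 2 (self-concordance to introduce $\kappa^*$).} Using the self-concordant inequality $\dot\mu(a)\le\dot\mu(b)\exp(|a-b|)$ and linearizing over the bounded range of the bonus, I would bound $\dot\mu(\xi_t)\le\dot\mu(h(x_t^*))(1+cb_t)$, splitting the per-round regret into a \emph{leading} term $2\dot\mu(h(x_t^*))b_t$ and a \emph{quadratic} correction $\mathcal{O}(\dot\mu(h(x_t^*))b_t^2)$. The same inequality is then used to relate the curvature weights actually appearing in $W_t$, namely $\dot\mu(f(x_i;\theta_i))$, to $\dot\mu(h(x_t^*))$, at the cost of extra $\mathcal{O}(b_t)$ factors; tracking these swaps is precisely what generates the lower-order terms.

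\textbf{Step 3 (summation).} Summing the leading contribution and applying Cauchy–Schwarz yields $\sum_t\dot\mu(h(x_t^*))b_t\le\nu^{(2)}_T\sqrt{\sum_t\dot\mu(h(x_t^*))}\,\sqrt{\sum_t\dot\mu(h(x_t^*))\|z_t\|^2_{W_{t-1}^{-1}}}$. The first factor equals $\sqrt{T/\kappa^*}$ by definition of $\kappa^*$, while the second is controlled by the weighted elliptical potential lemma (\Cref{lem:yadkori}), $\sum_t\dot\mu(f(x_t;\theta_t))\|z_t\|^2_{W_{t-1}^{-1}}=\widetilde{\mathcal{O}}(\log(\det W_T/\det\lambda_T\mathbf{I}))=\widetilde{\mathcal{O}}(\widetilde d)$, after swapping $\dot\mu(h(x_t^*))$ for $\dot\mu(f(x_t;\theta_t))$ via Step 2. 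With $\nu^{(2)}_T=\widetilde{\mathcal{O}}(S^2\sqrt{\widetilde d})$ this produces the leading $S^2\widetilde d\sqrt{T/\kappa^*}$ term, and the residual of the weight-swap produces $S^{2.5}\widetilde d^{0.5}\sqrt{T/\kappa^*}$. For the quadratic correction, $\sum_t\dot\mu(h(x_t^*))b_t^2\le R(\nu^{(2)}_T)^2\sum_t\|z_t\|^2_{W_{t-1}^{-1}}$; since $\dot\mu\ge 1/\kappa$, the unweighted potential satisfies $\sum_t\|z_t\|^2_{W_{t-1}^{-1}}\le\kappa\sum_t\dot\mu(f(x_t;\theta_t))\|z_t\|^2_{W_{t-1}^{-1}}=\widetilde{\mathcal{O}}(\kappa\widetilde d)$, giving $\widetilde{\mathcal{O}}(S^4\kappa\widetilde d^2)$, while the cross terms between this correction and the variance-estimation error contribute $S^{4.5}\kappa\widetilde d^{1.5}$ and $S^5\kappa\widetilde d$ (note $S^{4.5}\kappa\widetilde d^{1.5}$ is the geometric mean of the other two, as expected from an AM–GM/Cauchy–Schwarz split of the two error sources).

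\textbf{Main obstacle.} The delicate part is the self-concordance bookkeeping in Step 2: the weights in $W_t$ are the \emph{network-estimated} curvatures $\dot\mu(f(x_i;\theta_i))$, whereas the regret, the quantity $\kappa^*$, and the elliptical potential lemma each naturally live at a different evaluation point ($\xi_t$, $h(x_t^*)$, and the true $h(x_i)$, respectively). Converting consistently among these points — while guaranteeing that every conversion error is absorbed either into the leading $\sqrt{T/\kappa^*}$ term or into a $T$-independent, $\kappa$-dependent correction, and while keeping the neural-network approximation error $|f(x;\theta_t)-h(x)|$ summable to $\mathcal{O}(1)$ for $m$ large (as in \Cref{lem:inst_reg})—is where the $S^{4.5}$ and $S^5$ terms arise, and is the technically heaviest step of the argument.
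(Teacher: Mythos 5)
Your high-level strategy -- optimism, self-concordance to control the curvature, Cauchy--Schwarz plus the elliptical potential lemma, and extracting $\kappa^*$ through $\sum_t\dot\mu(h(x_t^*))=T/\kappa^*$ -- matches the paper's in spirit. One structural difference: the paper keeps the first-order weight at the \emph{played} action, bounds $\sum_t\dot\mu(h(x_t))\le T/\kappa^*+\text{Regret}(T)$ via a Taylor expansion against the optimal action, and then solves the resulting self-bounding quadratic inequality in $\sqrt{\text{Regret}(T)}$; you instead compare directly to $\dot\mu(h(x_t^*))$. Both routes can work, and the leading terms $S^2\widetilde{d}\sqrt{T/\kappa^*}+S^{2.5}\widetilde{d}^{0.5}\sqrt{T/\kappa^*}$ come out the same way (they are just the two pieces of $\nu_T^{(2)}=\widetilde{\mathcal{O}}(S^2\sqrt{\widetilde{d}}+S^{2.5})$ times $\sqrt{\widetilde{d}\,T/\kappa^*}$, not a ``residual of the weight swap'' as you suggest).

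The genuine gap is that every conversion in your Steps 2--3 is valid only on ``good'' rounds, and you never isolate or count the bad ones. Concretely: (i) the linearization $\exp(2b_t)\le 1+cb_t$ requires $b_t\le 1$; (ii) the swaps $\dot\mu(h(x_t^*))\lesssim\dot\mu(f(x_t;\theta_t))$ and the lower bound $\dot\mu(f(x_t;\theta_t))\ge 1/\kappa$ require $f(x_t;\theta_t)$ to be within $O(1)$ of $h(x_t)$, which in turn requires $\theta_t\in\Theta$ -- but \Cref{alg:2} performs no projection, so $\theta_t$ can leave $\Theta$, and on those rounds the network-estimated variance in $W_t$ can be arbitrarily smaller than the true variance, breaking the equivalence between $W_t$ and $H_t(\theta^*)$; and (iii) the elliptical potential lemma (\Cref{lem:yadkori}) controls $\sum_t\min\{1,\cdot\}$, so rounds where the (weighted) potential exceeds $1$ must be removed first. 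The paper devotes most of its proof to exactly this bookkeeping: it defines the bad-round sets $\mathcal{T}_1,\dots,\mathcal{T}_4$, prunes the design matrices accordingly, proves $3\underline{H}_t(\theta^*)\succeq\underline{W}_t\succeq\tfrac13\underline{H}_t(\theta^*)$ only for the pruned matrices (\Cref{prop:3}), and bounds $|\mathcal{T}_1|,|\mathcal{T}_2|=\widetilde{\mathcal{O}}(\kappa\widetilde{d}\,\nu_T^2)$ and $|\mathcal{T}_3|,|\mathcal{T}_4|\le 2\widetilde{d}$ (\Cref{prop:t1t2,prop:t3t4}). These counts -- together with, not instead of, the quadratic Taylor correction -- are the source of the additive $S^4\kappa\widetilde{d}^2+S^{4.5}\kappa\widetilde{d}^{1.5}+S^5\kappa\widetilde{d}$ terms, which are precisely $\kappa\widetilde{d}\,(S^2\sqrt{\widetilde{d}}+S^{2.5})^2$ expanded. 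Without this machinery, your Steps 2 and 3 do not go through as stated.
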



\begin{remark}
It is possible to further reduce the regret bound in \Cref{thm:regret2} to \(\widetilde{\mathcal{O}}(S \widetilde{d} \sqrt{T/\kappa^*})\) by combining Theorem \ref{thm:main} and the logistic bandit analysis of \cite{faury2022jointly}, which achieved \(\widetilde{\mathcal{O}}(S d \sqrt{T/\kappa^*})\). 
However, this approach requires a projection step for \(\theta_t\), incurring an additional \(\mathcal{O}(d^2\log(1/\epsilon))\) computational cost for \(\epsilon\)-accuracy. A couple of recent works eliminated the dependence on \(S\) in the leading term, achieving \(\widetilde{\mathcal{O}}(d \sqrt{T/\kappa^*})\). Nonetheless, \cite{sawarni2024generalized} relied on a nonconvex optimization subroutine, while the PAC‑Bayes analysis in \cite{lee2024a} with a uniform prior does not yield data‑adaptive regret.
\end{remark}

\section{Regret Analyses} \label{sec:ra1}

This section outlines the regret analysis for \Cref{alg:1,alg:2} and provides proof sketch for Theorems \ref{thm:regret} and \ref{thm:regret2}. Let us start by stating some basic results on the NTK analysis and logistic bandits. 
The following lemma shows that for all \(x\in\mathcal{X}_t\) and \(t\in[T]\), the true reward function \(h(x)\) can be expressed as a linear function.

\begin{lemma} [Lemma 5.1, \cite{zhou2020neural}] \label{lem:true}
    If $m\geq C_0T^4K^4L^6\log(T^2K^2L/\delta)/\lambda_{\mathbf{H}}^4$ for some absolute constant $C_0>0$, then with probability at least $1-\delta$, there exists $\theta^*\in\mathbb{R}^p$ such that
    \begin{align*}
    &h(x)=g(x;\theta_0)^\top(\theta^*-\theta_0), \\
        &\sqrt{m}\|\theta^*-\theta_0\|_2\leq\sqrt{2\mathbf{h}^\top \mathbf{H}^{-1} \mathbf{h}}\leq S,
    \end{align*}
    for all $x\in\mathcal{X}_t$, $t\in[T]$.
\end{lemma}

Based on Lemma \ref{lem:true}, we define the parameter set $\Theta$ and the parameters $\kappa$ and $R$, which is consistent with the standard logistic bandits literature \cite{faury2020improved}:

\begin{definition} \label{def:norm}
    Let $\Theta:=\{\theta\in\mathbb{R}^p:\sqrt{m}\|\theta-\theta_0\|_2\leq S\}$ denote the parameter set.
\end{definition}

\begin{assumption} [Formal] \label{ass:kappa_formal}
    There exist constants $\kappa,R >0$ such that for all $x\in\mathcal{X}$, $\theta\in\Theta$,
    \begin{align*}
        1/\kappa \leq \dot{\mu}(g(x;\theta_0)^\top(\theta-\theta_0))\leq R.
    \end{align*}
\end{assumption}

\subsection{Proof sketch of \Cref{thm:regret}}

Let $|\mu(h(x)) - \mu(f(x;\theta_{t-1}))|$ denote the \emph{prediction error} of $x$ in round $t$, which is the estimation error between the true reward and our trained neural network. We show that with Lemma \ref{lem:conf} and large enough $m$, the prediction error is upper bounded as follows:


\begin{lemma} \label{lem:inst_reg}
    Under Condition \ref{cond:ass}, for all $x\in\mathcal{X}_t$, $t\in[T]$, with probability at least $1-\delta$,
    \begin{align*}
        &|\mu(h(x)) - \mu(f(x;\theta_{t-1}))| \\
        &\qquad \leq R\sqrt{\kappa}\nu^{(1)}_t \|g(x;\theta_0)/\sqrt{m}\|_{V^{-1}_{t-1}}  + \epsilon_{3,t-1},
    \end{align*}
    where $\epsilon_{3,t}=C_3R m^{-1/6}\sqrt{\log m} L^3 t^{2/3} \lambda_0^{-2/3}$ for some absolute constant $C_3>0$.
\end{lemma}

Based on the results so far, we can upper bound the \emph{per-round regret} in round $t$ as follows:
    \begin{align*}
        &\mu(h(x_t^*))-\mu(h(x_t)) \\
        &\leq \mu(f(x_t^*;\theta_{t-1})) + R\sqrt{\kappa}\nu^{(1)}_t \|g(x_t^*;\theta_0)/\sqrt{m}\|_{V^{-1}_{t-1}}  \\
        &\quad + \epsilon_{3,t-1}  -\mu(h(x_t)) \\
        &\leq \mu(f(x_t;\theta_{t-1})) + R\sqrt{\kappa}\nu^{(1)}_t \|g(x_t;\theta_0)/\sqrt{m}\|_{V^{-1}_{t-1}} \\
        &\quad+ \epsilon_{3,t-1}  -\mu(h(x_t)) \\
        &\leq 2R\sqrt{\kappa}\nu^{(1)}_t \|g(x_t;\theta_0)/\sqrt{m}\|_{V^{-1}_{t-1}}  + 2\epsilon_{3,t-1},
    \end{align*}
    where the first and last inequalities follow from Lemma \ref{lem:inst_reg}, and the second inequality holds due to the optimistic rule of Algorithm \ref{alg:1}. The cumulative regret can be decomposed as $ \text{Regret}(T) =\sum_{t=1}^T \mu(h(x_t^*))-\mu(h(x_t)) \leq 2 R \sqrt{\kappa} \nu_T^{(1)}\sqrt{T\sum_{t=1}^T \big\| g(x_t;\theta_0)/\sqrt{m}\big\|_{V_{t-1}^{-1}}^2} + 2T\epsilon_{3,T}$, for which we use the Cauchy-Schwarz inequality. We have $\nu_T^{(1)}=\widetilde{O}(\sqrt{\widetilde{d}})$, and using the elliptical potential lemma (Lemma \ref{lem:yadkori}) on $\sum_{t=1}^T\|g(x_t;\theta_0)/\sqrt{m}\|^2_{V_{t-1}^{-1}}$ gives $\widetilde{O}(\widetilde{d})$. Finally, setting $m$ large enough under Condition \ref{cond:ass}, the approximation error term gives $T\epsilon_{3,T}=\mathcal{O}(1)$. See \Cref{sec:regret1} for details.

\begin{figure*}[ht]
  \centering
  \begin{subfigure}[b]{0.32\textwidth}
    \centering
    \includegraphics[width=\linewidth]{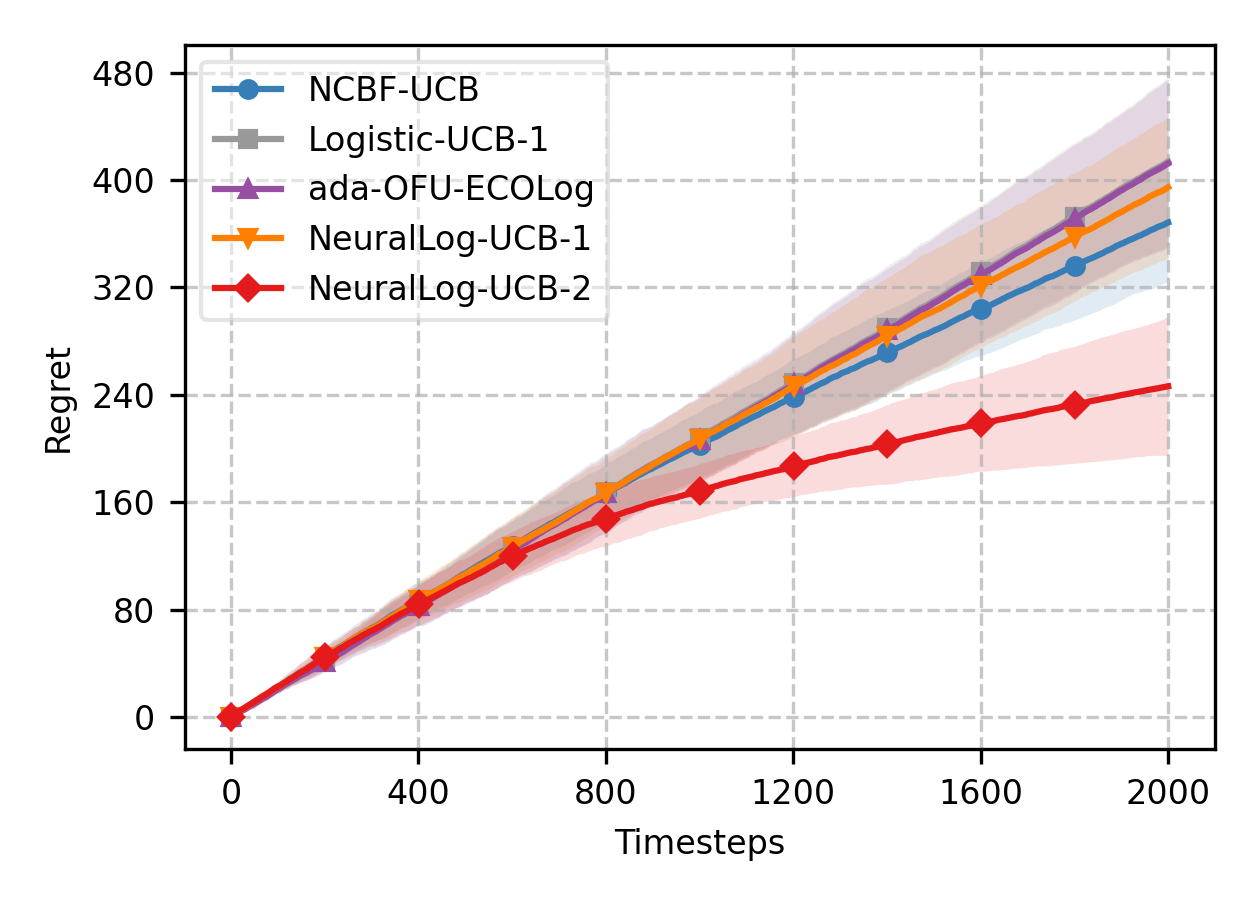}
    \vspace*{-5.5mm}
    \caption{$h_1(x)=0.2(x^\top\theta)^4$}
    \label{fig:1-1}
  \end{subfigure}\hfill
  \begin{subfigure}[b]{0.32\textwidth}
    \centering
    \includegraphics[width=\linewidth]{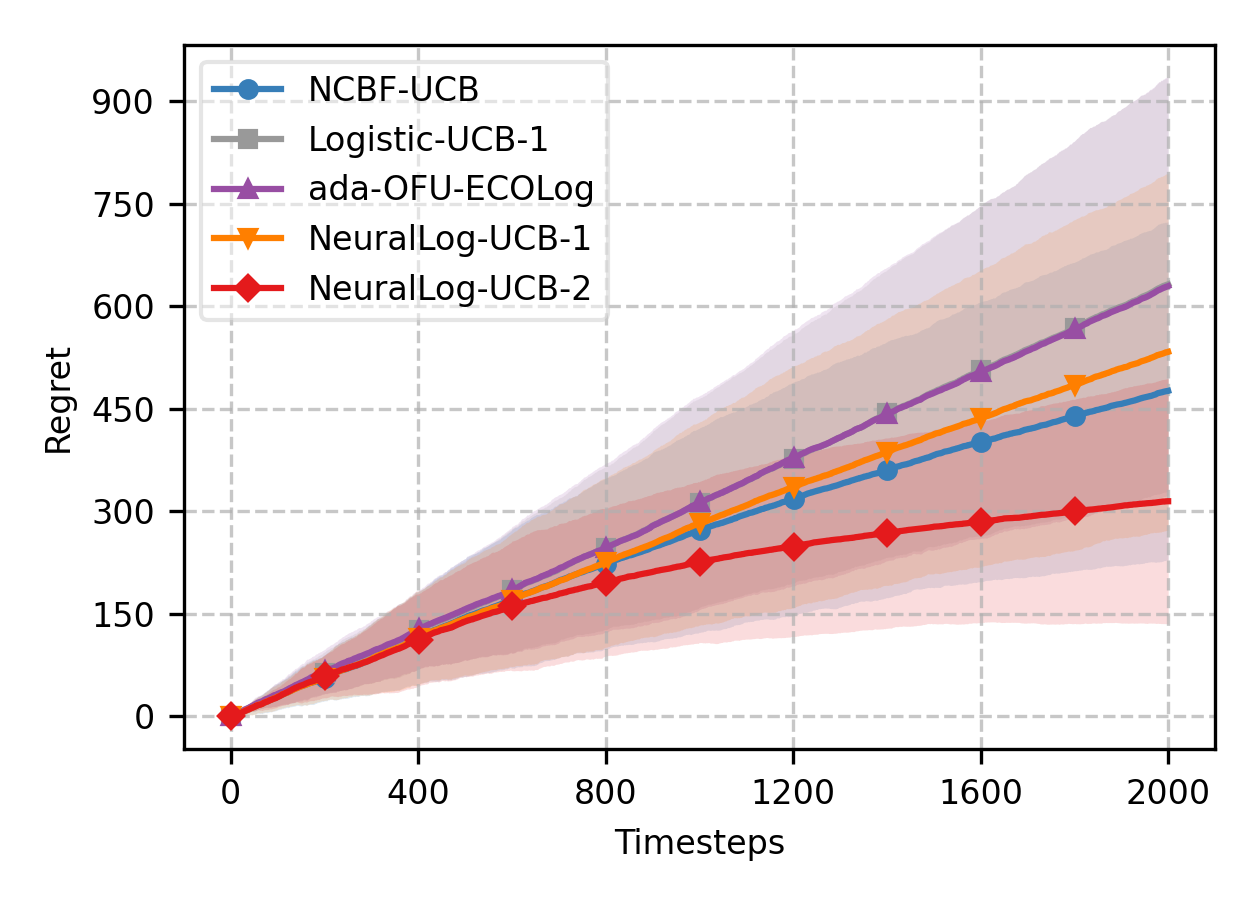}
    \vspace*{-5.5mm}
    \caption{$h_2(x)=20\cos (x^\top \theta)$}
    \label{fig:1-2}
  \end{subfigure}\hfill
  \begin{subfigure}[b]{0.32\textwidth}
    \centering
    \includegraphics[width=\linewidth]{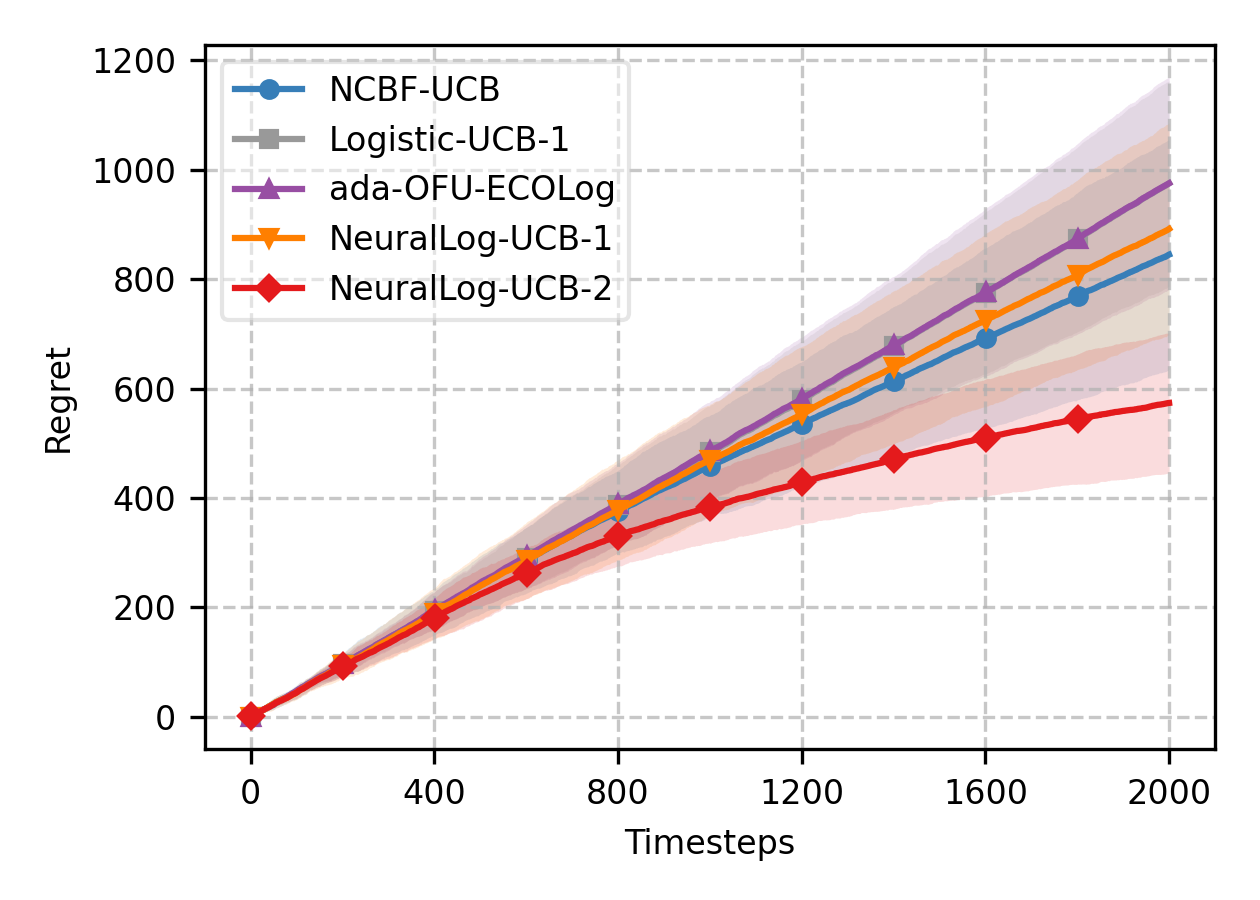}
    \vspace*{-5.5mm}
    \caption{$h_3(x)=5x^\top\bm{\theta} x$}
    \label{fig:1-3}
  \end{subfigure}
  \caption{Comparison of cumulative regret of baseline algorithms for nonlinear reward functions.}
  \label{fig:1}
\end{figure*}

\subsection{Proof sketch of \Cref{thm:regret2}} Let $(x_t,\widetilde{\theta}_{t-1})\in \mathcal{X}_t\times\mathcal{W}_{t-1}$ be selected by the optimistic rule at time $t$. The per-round regret can be decomposed with a second-order Taylor expansion as follows: 
\begin{align*}
    &\mu(h(x_t^*))-\mu(h(x_t)) \\
    &\leq \mu(g(x_t;\theta_0)^\top (\widetilde\theta_{t-1}-\theta_0))-\mu(g(x_t;\theta_0)^\top(\theta^*-\theta_0)) \\
    &\leq  \dot{\mu}(h(x_t)) g(x_t;\theta_0)^\top (\widetilde\theta_{t-1} - \theta^*) \\
    &\quad + 1\cdot\big[g(x_t;\theta_0)^\top(\widetilde\theta_{t-1}-\theta^*)\big]^2, 
\end{align*}
where the first inequality follows from the optimistic rule of Algorithm \ref{alg:2}, and we use $\ddot{\mu}(\cdot)\leq 1$ for the second one. To analyze the first term on the right-hand side of the second inequality, we compare \(\dot\mu(h(x_t))\) and \(\dot\mu(f(x_t;\theta_t))\) and rewrite the term as
$\sqrt{\dot{\mu}(h(x_t))}\|\sqrt{\dot{\mu}(f(x_t;\theta_t)}g(x_t;\theta_0)\|_{W_{t-1}} \|\widetilde{\theta}_{t-1}-\theta^*\|_{W_{t-1}^{-1}}$.
Summing this for \(t=1,\dots,T\), we apply the elliptical potential lemma (Lemma~\ref{lem:yadkori}) and Lemma~\ref{lem:conf2}. For the second term, since we do not enforce any projection or constraint during training, \(\theta_{t-1}\) may stay outside \(\Theta\). We show that the number of such rounds is \(\widetilde{\mathcal O}(\kappa\,\widetilde d^2)\). Applying Assumption~\ref{ass:kappa_formal} then yields a crude bound of $\kappa\|g(x_t;\theta_0)\|_{V_{t-1}}^2 \|\widetilde{\theta}_{t-1}-\theta^*\|_{W_t}^2$. Based on this, the second term can be bounded from above in a similar way. Details are covered in Section~\ref{sec:regret2}.


    
    


    

\section{Experiments} \label{sec:exp}

In this section, we empirically evaluate the performance of our algorithms. Additional results along with further details are deferred to \Cref{sec:exp2} due to space constraints.

\textbf{Synthetic dataset.\:} We begin our experiments with a synthetic dataset. We use three nonlinear synthetic latent reward functions: $h_1(x) = 0.2(x^\top\theta)^4$, $h_2(x) = 20\cos (x^\top \theta)$, $h_3(x) = 5x^\top\bm{\theta} x$, where \(x\) represents the features of a context-arm pair, and \(\theta\in\mathbb{R}^d\) and \(\bm{\theta}\in\mathbb{R}^{d\times d}\) are parameters whose elements are independently sampled from \(\text{Unif}(-1,1)\). Subsequently, the agent receives a reward generated by $r_t \sim \text{Bern}(\mu(h_i(x)))$, for                         \(i\in\{1,2,3\}\). We set the feature vector dimension to \(d=20\) and the number of arms to \(K=5\). We compare our method against five baseline algorithms in \Cref{tab:my-table}: (1) NCBF-UCB \cite{verma2025neural}; (2) Logistic-UCB-1 \cite{faury2020improved}; (3) ada-OFU-ECOLog \cite{faury2022jointly}; (4) NeuralLog-UCB-1; and (5) NeuralLog-UCB-2. For brevity, we will denote algorithms by their number (e.g. algorithm (1)). 

Following practical adjustments from previous neural bandits experiments \cite{zhou2020neural,zhang2021neural,verma2025neural}, for algorithms (1,4,5), we use the gradient of the current neural network \(g(x;\theta_t)\) instead of \(g(x;\theta_0)\). We replace \(g(x;\theta_t)/\sqrt{m}\) with \(g(x;\theta_t)\) and \(m\lambda\|\theta-\theta_0\|_2^2/2\) with \(\lambda\|\theta\|_2^2\). Previous works simplify the UCB estimation process by fixing parameters for the exploration bonus for practical reasons. In this work, however, we consider the time-varying data-adaptive values of the exploration bonus, characterized by $\text{UCB}_t(x) = \bm{\mu}(x;\theta_{t-1}) +  \bm{\sigma}(x;\nu, \{x_i,\theta_{i-1}\}_{i=1}^{t-1},\lambda,S,\kappa)$. Here, \(\bm{\mu}\) is the mean estimate and $\bm{\sigma}$ is the exploration bonus, parameterized by an exploration parameter \(\nu\), previous observations $\{x_i,\theta_{i-1}\}_{i=1}^{t-1}$, $\lambda$, $S$, and $\kappa$. Details of UCB for each algorithm are deferred to \Cref{sec:alg2}. We use $S=1$, $\kappa=10$ and fixed values of \(\nu\) and \(\lambda\) with the best parameter values using grid search over \(\{0.01, 0.1, 1,10,100\}\). 

We use a two-layer neural network \(f(x;\theta)\) with a width of \(m=20\). As in \cite{zhou2020neural}, to reduce the computational burden of the high-dimensional design matrices \(V_t\) and \(W_t\), we approximated these matrices with diagonal matrices. We update the parameters every 50 rounds, using 100 gradient descent steps per update with a learning rate of 0.01. For each algorithm, we repeat the experiments 10 times over \(T=2000\) timesteps and compare the average cumulative regret with a 96\% confidence interval.

\begin{figure*}[ht]
  \centering
  \begin{subfigure}[b]{0.32\textwidth}
    \centering
    \includegraphics[width=\linewidth]{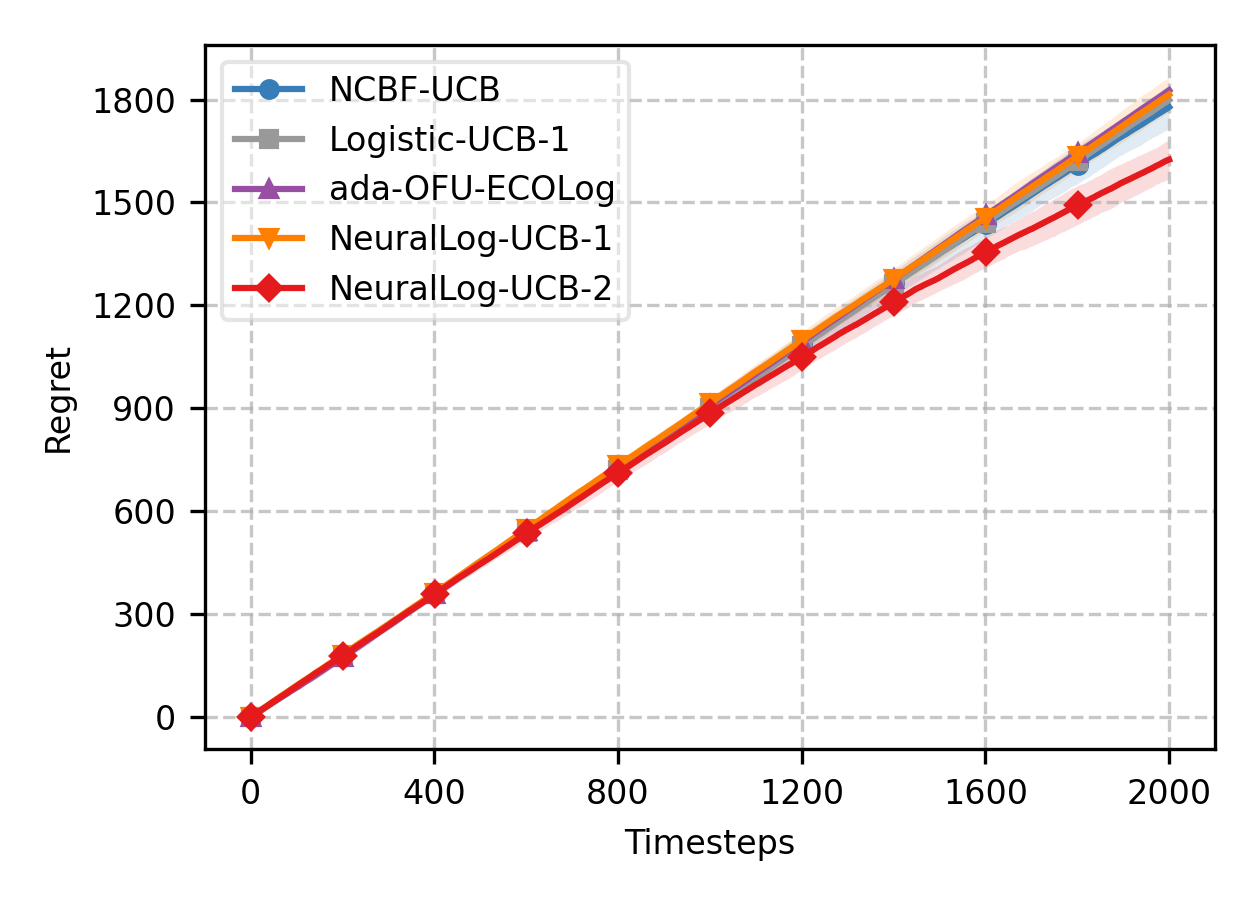}
    \vspace*{-5.5mm}
    \caption{\texttt{mnist}}
    \label{fig:2-1}
  \end{subfigure}\hfill
  \begin{subfigure}[b]{0.32\textwidth}
    \centering
    \includegraphics[width=\linewidth]{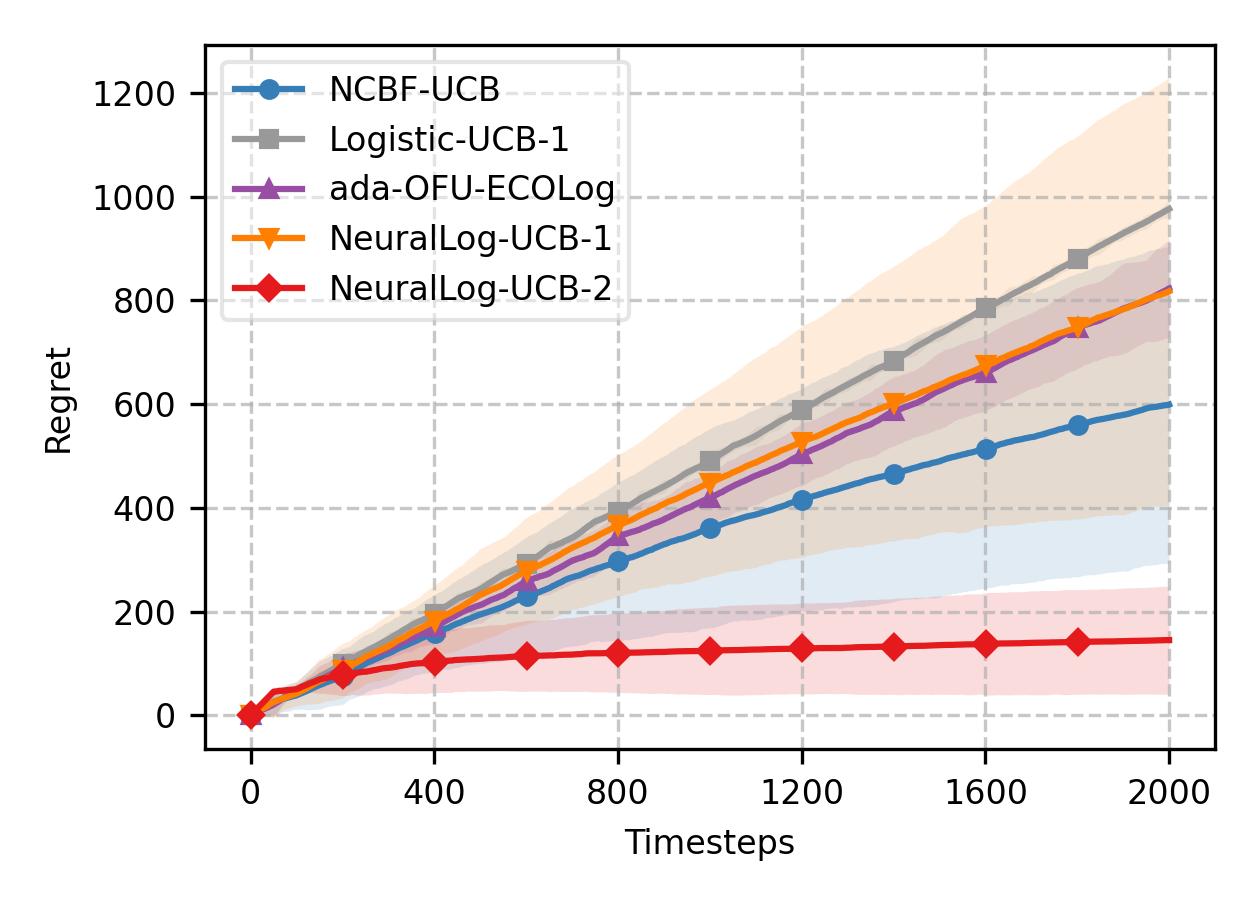}
    \vspace*{-5.5mm}
    \caption{\texttt{mushroom}}
    \label{fig:2-2}
  \end{subfigure}\hfill
  \begin{subfigure}[b]{0.32\textwidth}
    \centering
    \includegraphics[width=\linewidth]{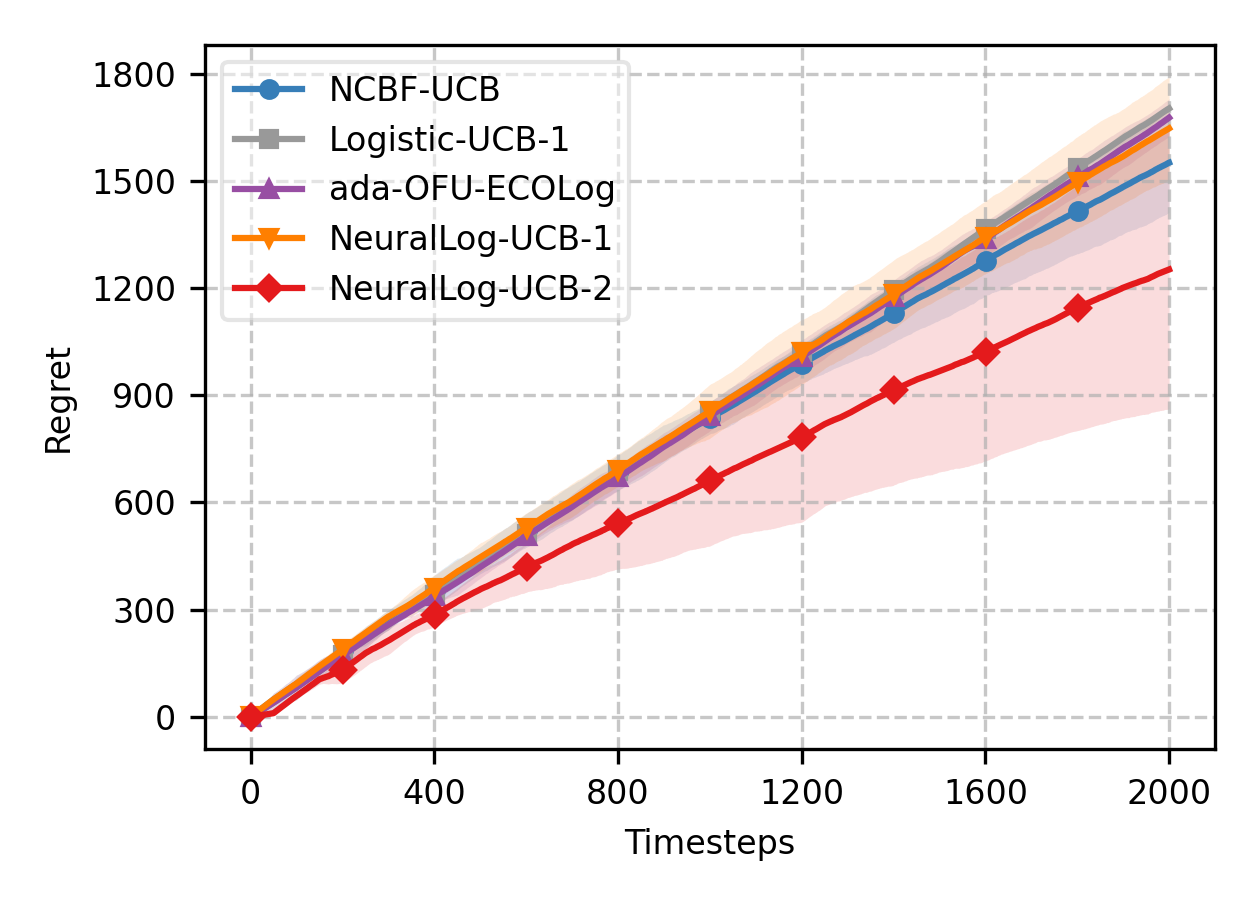}
    \vspace*{-5.5mm}
    \caption{\texttt{shuttle}}
    \label{fig:2-3}
  \end{subfigure}
  
  \caption{Comparison of cumulative regret of baseline algorithms for real-world dataset.}
  \label{fig:2}
\end{figure*}

\textbf{Real-world dataset.\:} In the real-world experiments, we use three datasets from \(K\)-class classification tasks: \texttt{mnist} \cite{lecun1998gradient}, \texttt{mushroom}, and \texttt{shuttle} from the UCI Machine Learning Repository \cite{DuaGraff2019}. To adapt these datasets to the $K$-armed logistic bandit setting, we construct $K$ context–arm feature vectors in each round $t$ as follows: given a feature vector $x\in\mathbb{R}^d$, we define $x^{(1)} = [x, \mathbf{0}, \dots, \mathbf{0}],  \dots, x^{(K)} = [\mathbf{0}, \dots, \mathbf{0}, x] \in \mathbb{R}^{dK}$.
The agent receives a reward of 1 if it selects the correct class, and 0 otherwise. All other adjustments for the neural bandit experiments and the neural network training process follow the simulation setup. Details, including data preprocessing, are deferred to \Cref{sec:exp2}.

\textbf{Regret comparison.\:} \Cref{fig:1,fig:2} summarize the average cumulative regret for the five baseline algorithms (1--5) tested with the synthetic and real-world datasets, respectively. We observe that the algorithms using linear assumptions on the latent reward function $h(x)$, namely (2) and (3), exhibit the lowest performance, as the true function is nonlinear. Although algorithm (1) can handle nonlinear reward functions and achieves moderate performance, our proposed methods, especially (5), yield the best results by reducing the dependence on $\kappa$.

\section{Conclusion and Future Work}

In this paper, we study the \emph{neural logistic bandit} problem. We identify the unique challenges of this setting and propose a novel approach based on a new tail inequality for martingales. This inequality enables an analysis that is both variance- and data-adaptive, yielding improved regret bounds for neural logistic bandits. We introduce two algorithms: NeuralLog-UCB-1 that achieves a regret bound of \(\widetilde{\mathcal{O}}(\widetilde{d}\sqrt{\kappa T})\) and NeuralLog-UCB-2 that attains a tighter bound of \(\widetilde{\mathcal{O}}(\widetilde{d}\sqrt{T/\kappa^*})\) by leveraging the neural network–estimated variance. Our experimental results validate these theoretical findings and demonstrate that our methods outperform the existing approaches.

One potential direction for future work is to improve the dependence on the norm of the unknown parameter $S$. Although recent frameworks due to \cite{sawarni2024generalized,lee2024a} have removed the dependence on $S$ from the leading term, they require an additional training step or impose additional constraints. Such requirements are undesirable when trying to integrate neural bandit frameworks. Hence, it is a promising future research direction to eliminate the dependence on $S$ without additional computations. Additional future directions are deferred to Section \ref{sec:ff}.

\section*{Acknowledgements}

We thank the review team for their careful review and valuable feedback. This work was supported by the National Research Foundation of Korea (NRF) grant (No. RS-2024-00350703) and the Institute of Information \& communications Technology Planning \& evaluation (IITP) grants (No. IITP-2026-RS-2024-00437268) and (No. RS-2021-II211343, Artificial Intelligence Graduate School Program (Seoul National University)) funded by the Korea government (MSIT).

\section*{Impact Statement}

This paper presents work whose goal is to advance the field of Machine
Learning. There are many potential societal consequences of our work, none
which we feel must be specifically highlighted here.

\bibliographystyle{icml2026}
\bibliography{example_paper}

@article{saito2020open,
  title={Open bandit dataset and pipeline: Towards realistic and reproducible off-policy evaluation},
  author={Saito, Yuta and Aihara, Shunsuke and Matsutani, Megumi and Narita, Yusuke},
  journal={arXiv preprint arXiv:2008.07146},
  year={2020}
}

@inproceedings{zenati2022efficient,
  title={Efficient kernelized ucb for contextual bandits},
  author={Zenati, Houssam and Bietti, Alberto and Diemert, Eustache and Mairal, Julien and Martin, Matthieu and Gaillard, Pierre},
  booktitle={International Conference on Artificial Intelligence and Statistics},
  pages={5689--5720},
  year={2022},
  organization={PMLR}
}

@article{kim2022improved,
  title={Improved regret analysis for variance-adaptive linear bandits and horizon-free linear mixture mdps},
  author={Kim, Yeoneung and Yang, Insoon and Jun, Kwang-Sung},
  journal={Advances in Neural Information Processing Systems},
  volume={35},
  pages={1060--1072},
  year={2022}
}

@inproceedings{gales2022norm,
  title={Norm-agnostic linear bandits},
  author={Gales, Spencer B and Sethuraman, Sunder and Jun, Kwang-Sung},
  booktitle={International Conference on Artificial Intelligence and Statistics},
  pages={73--91},
  year={2022},
  organization={PMLR}
}

@inproceedings{
ban2022eenet,
title={{EE}-Net: Exploitation-Exploration Neural Networks in Contextual Bandits},
author={Yikun Ban and Yuchen Yan and Arindam Banerjee and Jingrui He},
booktitle={International Conference on Learning Representations},
year={2022},
url={https://openreview.net/forum?id=X_ch3VrNSRg}
}

@misc{DuaGraff2019,
  author       = {Dua, Dheeru and Graff, Casey},
  title        = {UCI Machine Learning Repository},
  year         = {2019},
  howpublished = {\url{https://archive.ics.uci.edu/ml}},
  note         = {Accessed: 2025-03-21}
}

@article{lecun1998gradient,
  title={Gradient-based learning applied to document recognition},
  author={LeCun, Yann and Bottou, L{\'e}on and Bengio, Yoshua and Haffner, Patrick},
  journal={Proceedings of the IEEE},
  volume={86},
  number={11},
  pages={2278--2324},
  year={1998},
  publisher={Ieee}
}

@article{lee2025improved,
  title={Improved Online Confidence Bounds for Multinomial Logistic Bandits},
  author={Lee, Joongkyu and Oh, Min-hwan},
  journal={arXiv preprint arXiv:2502.10020},
  year={2025}
}

@inproceedings{dong2019performance,
  title={On the performance of thompson sampling on logistic bandits},
  author={Dong, Shi and Ma, Tengyu and Van Roy, Benjamin},
  booktitle={Conference on Learning Theory},
  pages={1158--1160},
  year={2019},
  organization={PMLR}
}

@inproceedings{kim2023double,
  title={Double doubly robust thompson sampling for generalized linear contextual bandits},
  author={Kim, Wonyoung and Lee, Kyungbok and Paik, Myunghee Cho},
  booktitle={Proceedings of the AAAI Conference on Artificial Intelligence},
  volume={37},
  number={7},
  pages={8300--8307},
  year={2023}
}

@inproceedings{
sawarni2024generalized,
title={Generalized Linear Bandits with Limited Adaptivity},
author={Ayush Sawarni and Nirjhar Das and Siddharth Barman and Gaurav Sinha},
booktitle={The Thirty-eighth Annual Conference on Neural Information Processing Systems},
year={2024},
url={https://openreview.net/forum?id=FTPDBQuT4G}
}

@inproceedings{
jia2022learning,
title={Learning Neural Contextual Bandits through Perturbed Rewards},
author={Yiling Jia and Weitong ZHANG and Dongruo Zhou and Quanquan Gu and Hongning Wang},
booktitle={International Conference on Learning Representations},
year={2022},
url={https://openreview.net/forum?id=7inCJ3MhXt3}
}

@inproceedings{
xu2022neural,
title={Neural Contextual Bandits with Deep Representation and Shallow Exploration},
author={Pan Xu and Zheng Wen and Handong Zhao and Quanquan Gu},
booktitle={International Conference on Learning Representations},
year={2022},
url={https://openreview.net/forum?id=xnYACQquaGV}
}

@inproceedings{kveton2020randomized,
  title={Randomized exploration in generalized linear bandits},
  author={Kveton, Branislav and Zaheer, Manzil and Szepesvari, Csaba and Li, Lihong and Ghavamzadeh, Mohammad and Boutilier, Craig},
  booktitle={International Conference on Artificial Intelligence and Statistics},
  pages={2066--2076},
  year={2020},
  organization={PMLR}
}

@article{zahavy2019deep,
  title={Deep neural linear bandits: Overcoming catastrophic forgetting through likelihood matching},
  author={Zahavy, Tom and Mannor, Shie},
  journal={arXiv preprint arXiv:1901.08612},
  year={2019}
}

@inproceedings{lee2024improved,
  title={Improved regret bounds of (multinomial) logistic bandits via regret-to-confidence-set conversion},
  author={Lee, Junghyun and Yun, Se-Young and Jun, Kwang-Sung},
  booktitle={International Conference on Artificial Intelligence and Statistics},
  pages={4474--4482},
  year={2024},
  organization={PMLR}
}

@inproceedings{
verma2025neural,
title={Neural Dueling Bandits: Preference-Based Optimization with Human Feedback},
author={Arun Verma and Zhongxiang Dai and Xiaoqiang Lin and Patrick Jaillet and Bryan Kian Hsiang Low},
booktitle={The Thirteenth International Conference on Learning Representations},
year={2025},
url={https://openreview.net/forum?id=VELhv9BBfn}
}

@inproceedings{faury2020improved,
  title={Improved optimistic algorithms for logistic bandits},
  author={Faury, Louis and Abeille, Marc and Calauz{\`e}nes, Cl{\'e}ment and Fercoq, Olivier},
  booktitle={International Conference on Machine Learning},
  pages={3052--3060},
  year={2020},
  organization={PMLR}
}

@inproceedings{kassraie2022neural,
  title={Neural contextual bandits without regret},
  author={Kassraie, Parnian and Krause, Andreas},
  booktitle={International Conference on Artificial Intelligence and Statistics},
  pages={240--278},
  year={2022},
  organization={PMLR}
}

@inproceedings{zhou2020neural,
  title={Neural contextual bandits with ucb-based exploration},
  author={Zhou, Dongruo and Li, Lihong and Gu, Quanquan},
  booktitle={International Conference on Machine Learning},
  pages={11492--11502},
  year={2020},
  organization={PMLR}
}

@article{abbasi2011improved,
  title={Improved algorithms for linear stochastic bandits},
  author={Abbasi-Yadkori, Yasin and P{\'a}l, D{\'a}vid and Szepesv{\'a}ri, Csaba},
  journal={Advances in neural information processing systems},
  volume={24},
  year={2011}
}

@inproceedings{li2017provably,
  title={Provably optimal algorithms for generalized linear contextual bandits},
  author={Li, Lihong and Lu, Yu and Zhou, Dengyong},
  booktitle={International Conference on Machine Learning},
  pages={2071--2080},
  year={2017},
  organization={PMLR}
}

@inproceedings{jun2021improved,
  title={Improved confidence bounds for the linear logistic model and applications to bandits},
  author={Jun, Kwang-Sung and Jain, Lalit and Mason, Blake and Nassif, Houssam},
  booktitle={International Conference on Machine Learning},
  pages={5148--5157},
  year={2021},
  organization={PMLR}
}

@inproceedings{
zhang2021neural,
title={Neural Thompson Sampling},
author={Weitong Zhang and Dongruo Zhou and Lihong Li and Quanquan Gu},
booktitle={International Conference on Learning Representations},
year={2021},
url={https://openreview.net/forum?id=tkAtoZkcUnm}
}

@article{zhang2024online,
  title={Online (multinomial) logistic bandit: Improved regret and constant computation cost},
  author={Zhang, Yu-Jie and Sugiyama, Masashi},
  journal={Advances in Neural Information Processing Systems},
  volume={36},
  year={2024}
}

@article{jacot2018neural,
  title={Neural tangent kernel: Convergence and generalization in neural networks},
  author={Jacot, Arthur and Gabriel, Franck and Hongler, Cl{\'e}ment},
  journal={Advances in neural information processing systems},
  volume={31},
  year={2018}
}

@inproceedings{
lee2024nearly,
title={Nearly Minimax Optimal Regret for Multinomial Logistic Bandit},
author={Lee, Joongkyu and Oh, Min-hwan},
booktitle={The Thirty-eighth Annual Conference on Neural Information Processing Systems},
year={2024},
url={https://openreview.net/forum?id=Q4NWfStqVf}
}

@article{filippi2010parametric,
  title={Parametric bandits: The generalized linear case},
  author={Filippi, Sarah and Cappe, Olivier and Garivier, Aur{\'e}lien and Szepesv{\'a}ri, Csaba},
  journal={Advances in neural information processing systems},
  volume={23},
  year={2010}
}

@article{freedman1975tail,
  title={On tail probabilities for martingales},
  author={Freedman, David A},
  journal={the Annals of Probability},
  pages={100--118},
  year={1975},
  publisher={JSTOR}
}

@inproceedings{abeille2021instance,
  title={Instance-wise minimax-optimal algorithms for logistic bandits},
  author={Abeille, Marc and Faury, Louis and Calauz{\`e}nes, Cl{\'e}ment},
  booktitle={International Conference on Artificial Intelligence and Statistics},
  pages={3691--3699},
  year={2021},
  organization={PMLR}
}

@inproceedings{faury2022jointly,
  title={Jointly efficient and optimal algorithms for logistic bandits},
  author={Faury, Louis and Abeille, Marc and Jun, Kwang-Sung and Calauz{\`e}nes, Cl{\'e}ment},
  booktitle={International Conference on Artificial Intelligence and Statistics},
  pages={546--580},
  year={2022},
  organization={PMLR}
}

@inproceedings{zhou2021nearly,
  title={Nearly minimax optimal reinforcement learning for linear mixture markov decision processes},
  author={Zhou, Dongruo and Gu, Quanquan and Szepesvari, Csaba},
  booktitle={Conference on Learning Theory},
  pages={4532--4576},
  year={2021},
  organization={PMLR}
}

@inproceedings{li2010contextual,
  title={A contextual-bandit approach to personalized news article recommendation},
  author={Li, Lihong and Chu, Wei and Langford, John and Schapire, Robert E},
  booktitle={Proceedings of the 19th international conference on World wide web},
  pages={661--670},
  year={2010}
}

@inproceedings{
riquelme2018deep,
title={Deep Bayesian Bandits Showdown:  An Empirical Comparison of Bayesian Deep Networks for Thompson Sampling},
author={Carlos Riquelme and George Tucker and Jasper Snoek},
booktitle={International Conference on Learning Representations},
year={2018},
url={https://openreview.net/forum?id=SyYe6k-CW},
}

@inproceedings{valko2013finite,
author = {Valko, Michal and Korda, Nathan and Munos, R\'{e}mi and Flaounas, Ilias and Cristianini, Nello},
title = {Finite-time analysis of kernelised contextual bandits},
year = {2013},
publisher = {AUAI Press},
address = {Arlington, Virginia, USA},
abstract = {We tackle the problem of online reward maximisation over a large finite set of actions described by their contexts. We focus on the case when the number of actions is too big to sample all of them even once. However we assume that we have access to the similarities between actions' contexts and that the expected reward is an arbitrary linear function of the contexts' images in the related reproducing kernel Hilbert space (RKHS). We propose KernelUCB, a kernelised UCB algorithm, and give a cumulative regret bound through a frequentist analysis. For contextual bandits, the related algorithm GP-UCB turns out to be a special case of our algorithm, and our finite-time analysis improves the regret bound of GP-UCB for the agnostic case, both in the terms of the kernel-dependent quantity and the RKHS norm of the reward function. Moreover, for the linear kernel, our regret bound matches the lower bound for contextual linear bandits.},
booktitle = {Proceedings of the Twenty-Ninth Conference on Uncertainty in Artificial Intelligence},
pages = {654–663},
numpages = {10},
location = {Bellevue, WA},
series = {UAI'13}
}

@article{srinivas2009gaussian,
  title={Gaussian process optimization in the bandit setting: No regret and experimental design},
  author={Srinivas, Niranjan and Krause, Andreas and Kakade, Sham M and Seeger, Matthias},
  journal={arXiv preprint arXiv:0912.3995},
  year={2009}
}

@inproceedings{
lee2024a,
title={A Unified Confidence Sequence for Generalized Linear Models, with Applications to Bandits},
author={Junghyun Lee and Se-Young Yun and Kwang-Sung Jun},
booktitle={The Thirty-eighth Annual Conference on Neural Information Processing Systems},
year={2024},
url={https://openreview.net/forum?id=MDdOQayWTA}
}

\newpage
\appendix
\onecolumn

\section{Deferred Experiments from \Cref{sec:exp}} \label{sec:exp2}

Here we introduce the deferred details and experiments from \cref{sec:exp}. All experiments were conducted on a server equipped with an Intel Xeon Gold 6248R 3.00GHz CPU (32 cores), 512GB of RAM, and 4 GeForce RTX 4090 GPUs.

\textbf{Details of UCB.\:} We define the UCB value as
$\bm{\mu}(x;\theta_{t-1}) + \bm{\sigma}(x;\nu,\{x_i,\theta_{i-1}\}_{i=1}^{t-1},\lambda,S,\kappa)$.
For the exploration bonus $\bm{\sigma}$, we match the orders of
$\lambda$, $S$, $\kappa$, and the effective dimension $\widetilde{d}$ for each algorithm and then multiply by the exploration parameter \(\nu\).
Specifically, the effective dimension is defined as follows:
for algorithm (1), we use
$\log\det(\sum \frac{1}{\kappa}g(x;\theta)g(x;\theta)^\top + \mathbf{I})$;
for algorithms (2) and (3), we use
$\log\det(\sum R x x^\top + \mathbf{I})$; and
for algorithms (4) and (5) we use
$\log\det(\sum R g(x;\theta)g(x;\theta)^\top + \mathbf{I})$.

Although algorithms (2) and (3) require an additional step (e.g., nonconvex projection) to
ensure that $\theta_t$ remains in the desired set,
empirical observations from \cite{faury2020improved,faury2022jointly} indicate that
$\theta_t$ almost always satisfies this condition.
Consequently, we streamline all baseline algorithms into two steps:
(i) choose the action with the highest UCB and
(ii) update the parameters via gradient descent.

\textbf{Preprocessing for real‑world datasets.\:}
For consistency with the synthetic environment,
we rescale each component of every feature vector $x\in\mathbb{R}^d$ to the range $[-1,1]$
by applying a normalization of $2\frac{[x]_j - \min(x)}{\max(x) - \min(x)}-1$ for all $j\in[d]$.
In the \texttt{mnist} dataset, we resize each $28\times 28$ image to $7\times 7$,
flatten it, and treat the result as a 196‑dimensional feature vector.
The \texttt{mushroom} dataset provides 22 categorical features.
We assign each character a random value in $[-1,1]$ for normalization and
set the label to 1 for edible ('e') and 0 for poisonous ('p') mushrooms.
The \texttt{shuttle} dataset consists of 7 numerical features,
to which we apply the same min–max normalization as used for \texttt{mnist}.

\begin{figure}[h]
  \centering
  \begin{subfigure}[b]{0.3333\textwidth}
    \centering
    \includegraphics[width=\linewidth]{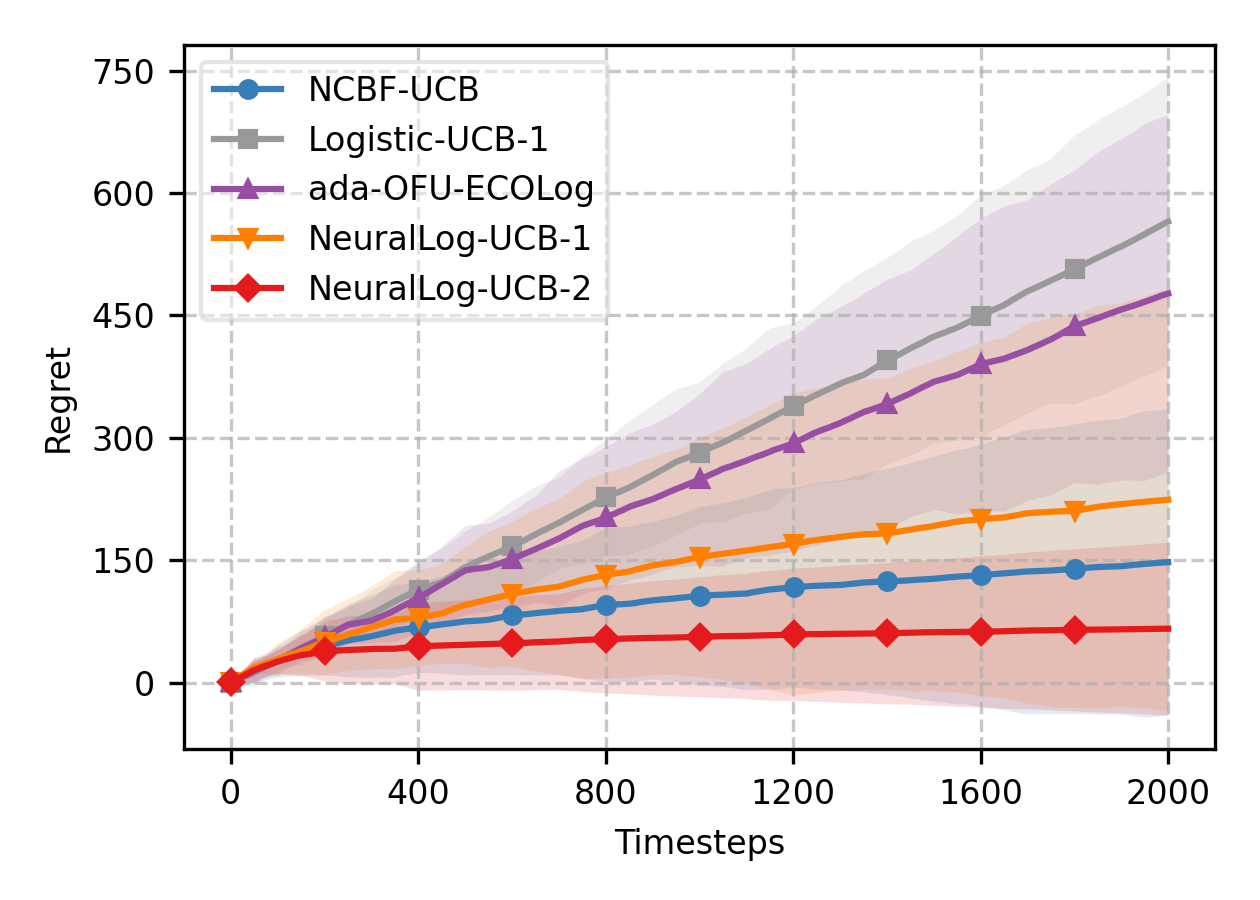} 
    \vspace*{-5.5mm}
    \caption{$h_1(x)$,\ \ low $\widetilde{d}$}
    \label{fig:3-1}
  \end{subfigure}
  \begin{subfigure}[b]{0.3333\textwidth}
    \centering
    \includegraphics[width=\linewidth]{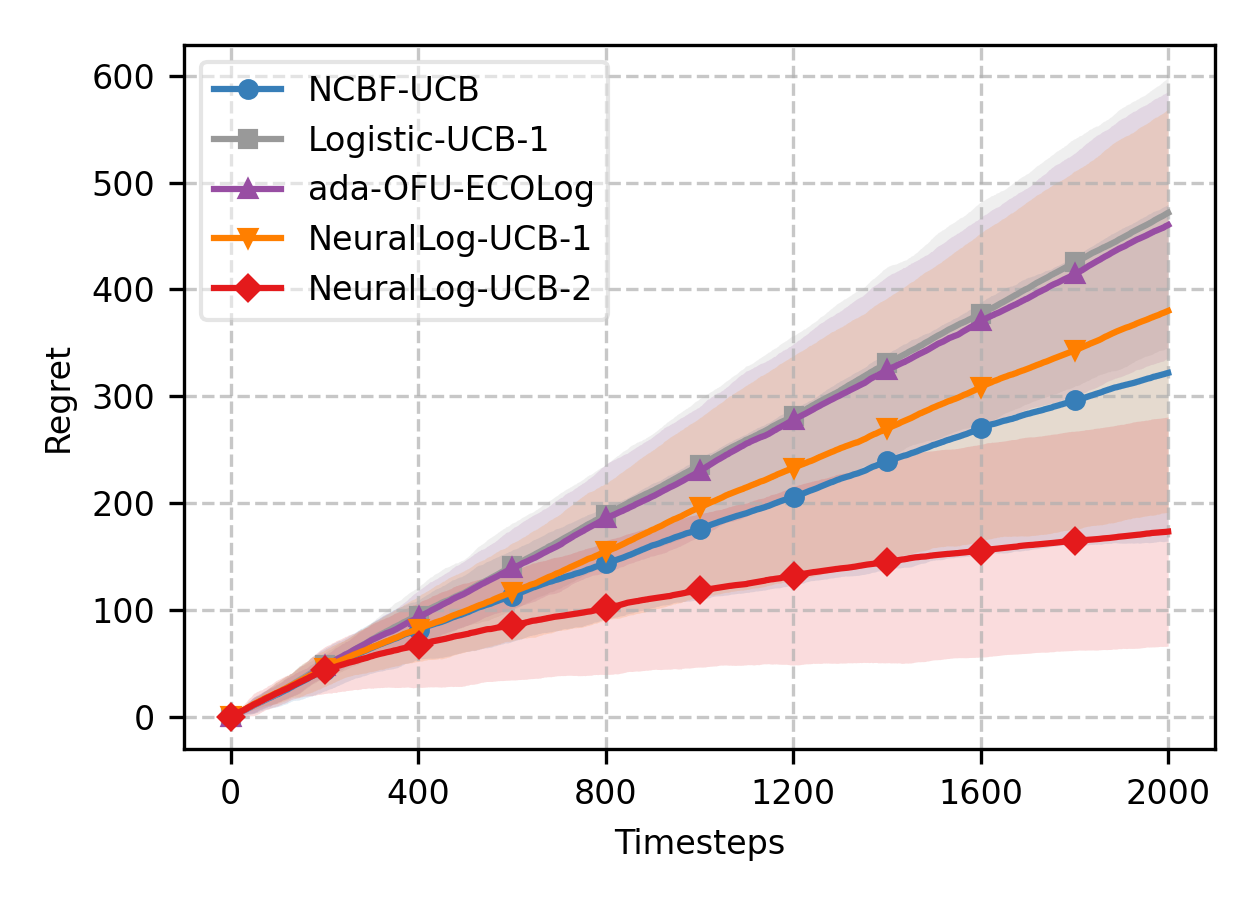} 
    \vspace*{-5.5mm}
    \caption{$h_1(x)$,\ \ middle $\widetilde{d}$}
    \label{fig:3-2}
  \end{subfigure}
  \begin{subfigure}[b]{0.3333\textwidth}
    \centering
    \includegraphics[width=\linewidth]{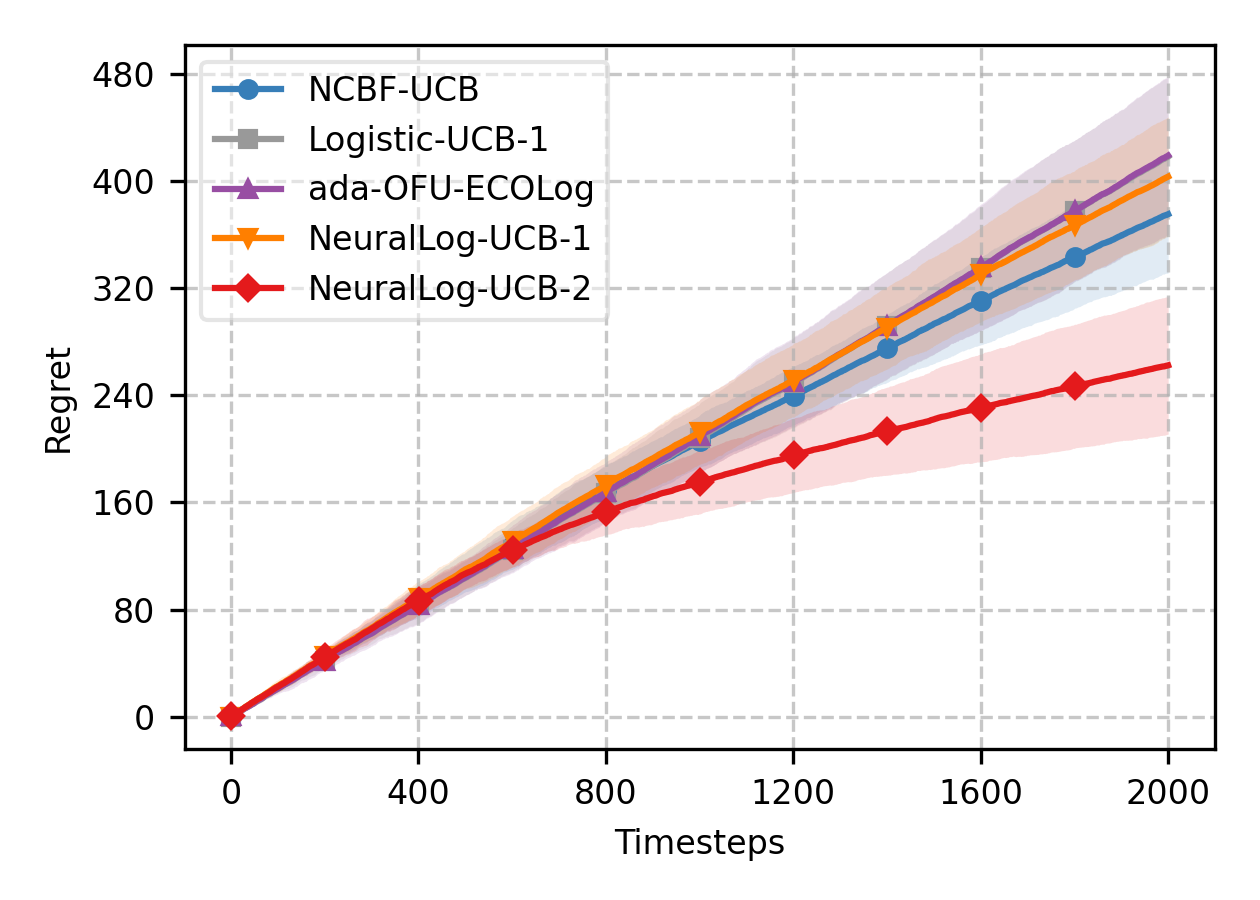} 
    \vspace*{-5.5mm}
    \caption{$h_1(x)$,\ \ high $\widetilde{d}$}
    \label{fig:3-3}
  \end{subfigure}
 \vspace{1em}
  \begin{subfigure}[b]{0.3333\textwidth}
    \centering
    \includegraphics[width=\linewidth]{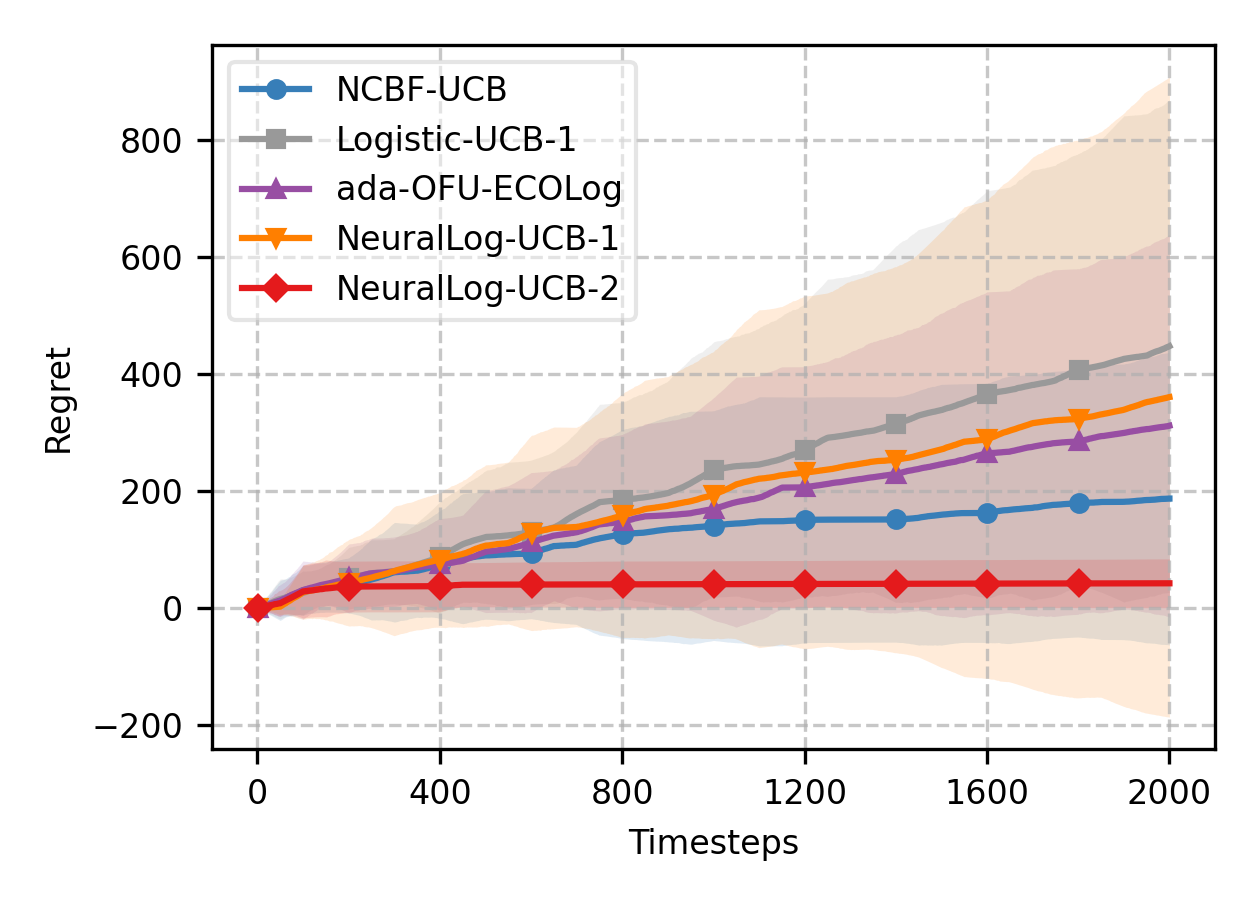} 
    \vspace*{-5.5mm}
    \caption{$h_2(x)$,\ \ low $\widetilde{d}$}
    \label{fig:3-4}
  \end{subfigure}
  \begin{subfigure}[b]{0.3333\textwidth}
    \centering
    \includegraphics[width=\linewidth]{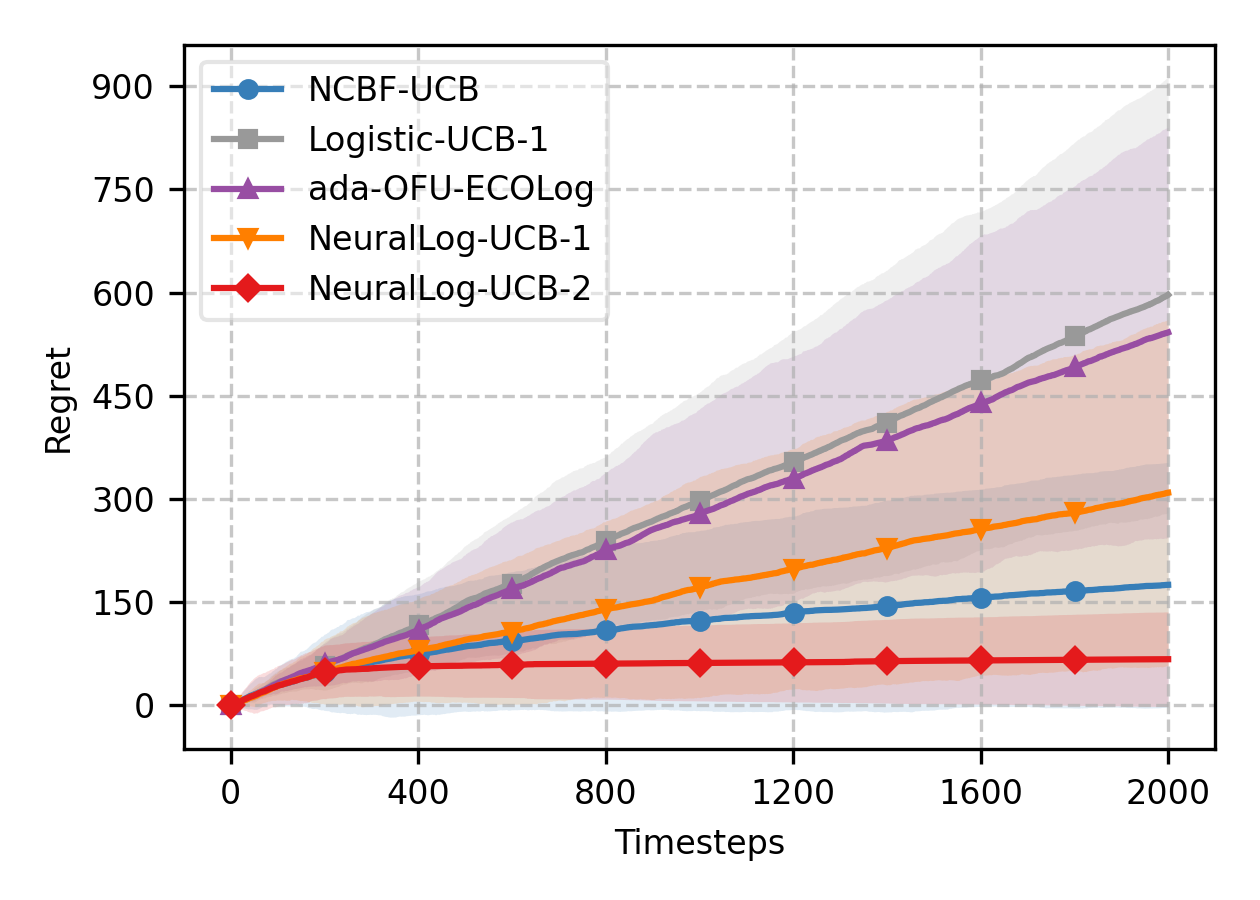} 
    \vspace*{-5.5mm}
    \caption{$h_2(x)$,\ \ middle $\widetilde{d}$}
    \label{fig:3-5}
  \end{subfigure}
  \begin{subfigure}[b]{0.3333\textwidth}
    \centering
    \includegraphics[width=\linewidth]{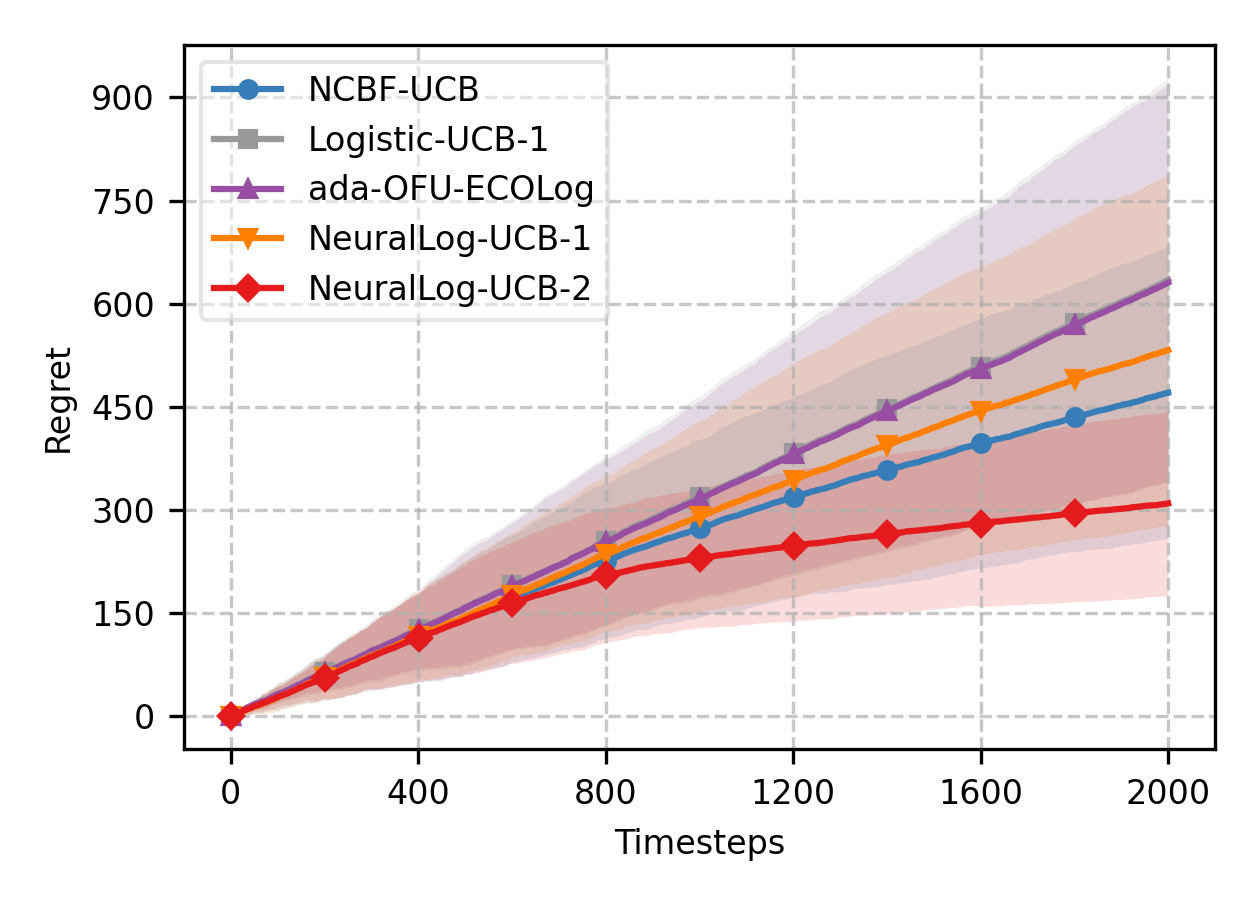} 
    \vspace*{-5.5mm}
    \caption{$h_2(x)$,\ \ high $\widetilde{d}$}
    \label{fig:3-6}
  \end{subfigure}
 \vspace{1em}
  \begin{subfigure}[b]{0.3333\textwidth}
    \centering
    \includegraphics[width=\linewidth]{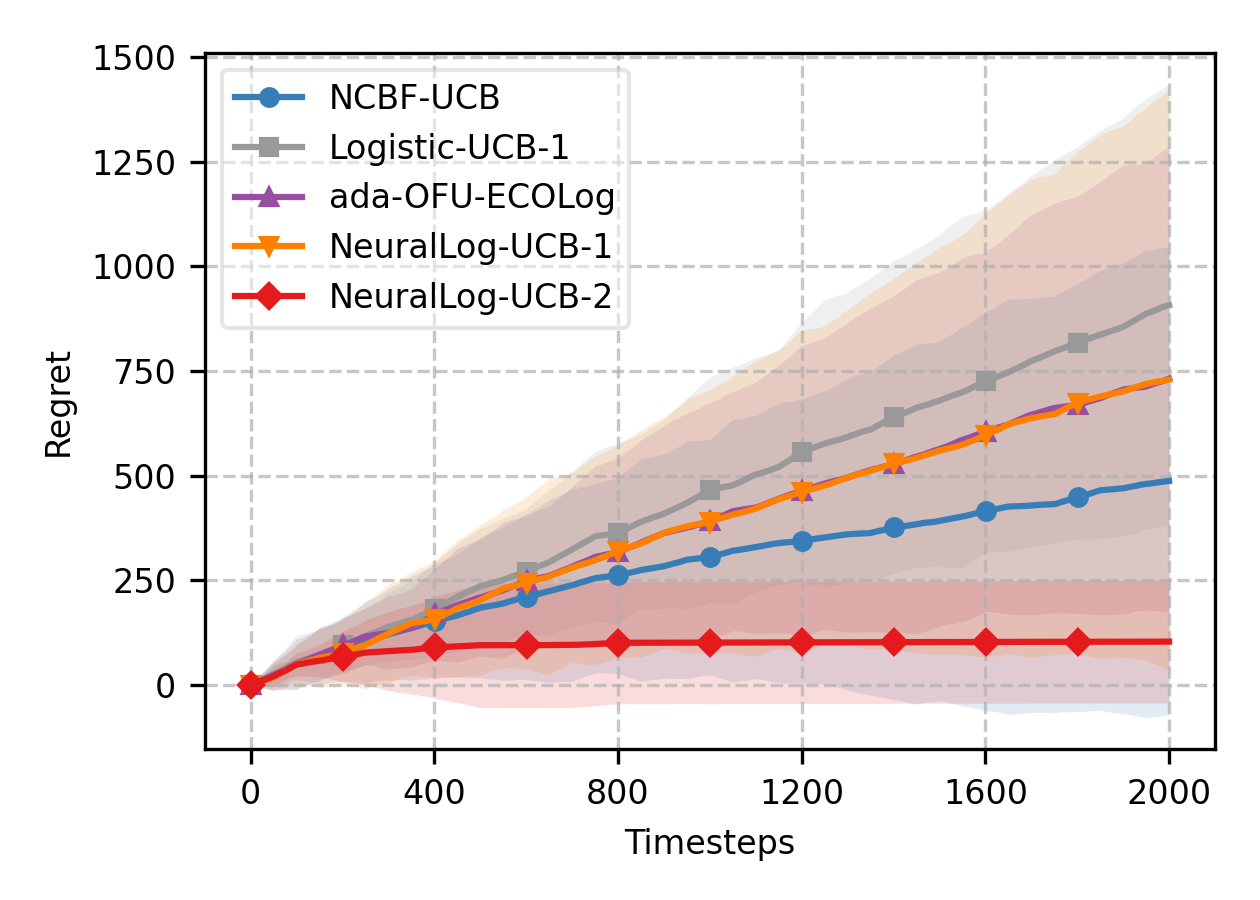} 
    \vspace*{-5.5mm}
    \caption{$h_3(x)$,\ \ low $\widetilde{d}$}
    \label{fig:3-7}
  \end{subfigure}
  \begin{subfigure}[b]{0.3333\textwidth}
    \centering
    \includegraphics[width=\linewidth]{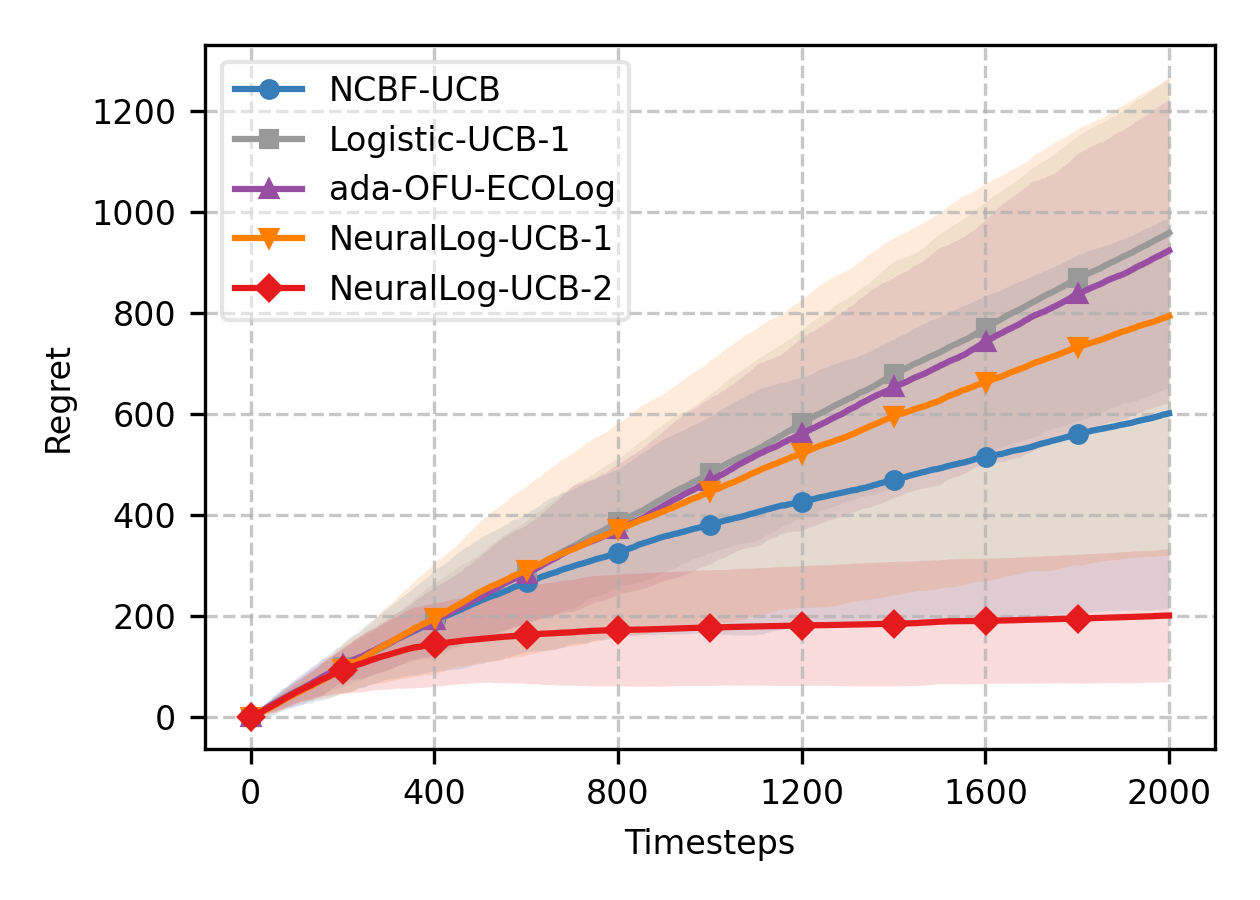} 
    \vspace*{-5.5mm}
    \caption{$h_3(x)$,\ \ middle $\widetilde{d}$}
    \label{fig:3-8}
  \end{subfigure}
  \begin{subfigure}[b]{0.3333\textwidth}
    \centering
    \includegraphics[width=\linewidth]{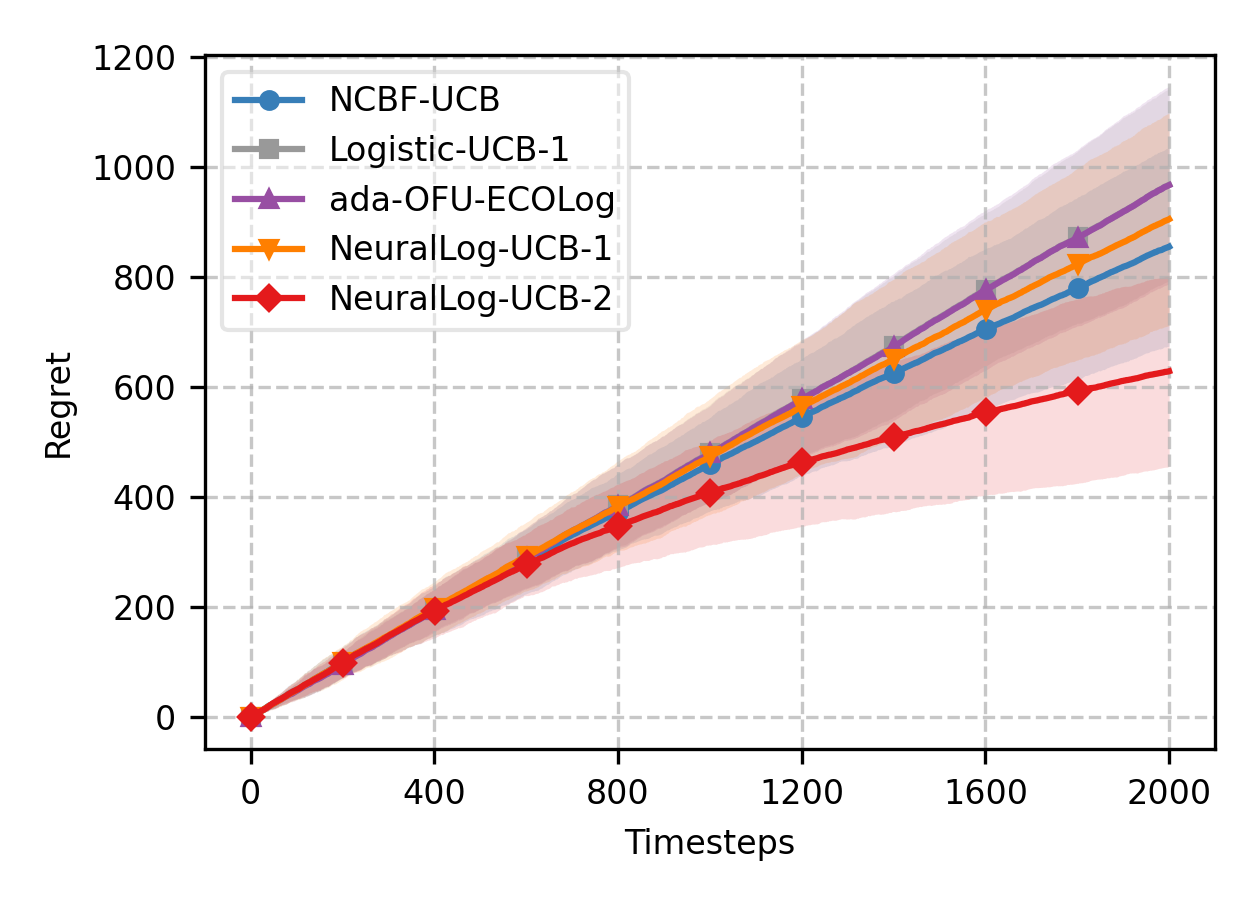} 
    \vspace*{-5.5mm}
    \caption{$h_3(x)$,\ \ high $\widetilde{d}$}
    \label{fig:3-9}
  \end{subfigure}
  \caption{Comparison of cumulative regret of baseline algorithms with varying effective dimension $\widetilde{d}$.}
  \label{fig:3}
\end{figure}

\textbf{Varying effective dimension $\widetilde{d}$.\:}
To evaluate the influence of $\widetilde{d}$ on data‑adaptive algorithms,
we compare cumulative regret across different values of $\widetilde{d}$.
We control $\widetilde{d}$ by limiting the total number of context–arm feature vectors during training.
Allowing redundant vectors reduces $\widetilde{d}$.
For a low effective dimension (\Cref{fig:3-1,fig:3-4,fig:3-7}),
we use only 10 feature vectors randomly placed across the training.
For a medium $\widetilde{d}$ (\Cref{fig:3-2,fig:3-5,fig:3-8}), we use 50 vectors.
For a high $\widetilde{d}$ (\Cref{fig:3-3,fig:3-6,fig:3-9}), we use 10000 distinct vectors.
Note that \Cref{fig:1,fig:2} use 2500 vectors. \Cref{fig:3} shows that our algorithm, especially algorithm (5), performs best across all those settings and adapts effectively to different environments of the effective dimensions.

\section{Related Work}

\textbf{Logistic bandits.\:}
\cite{filippi2010parametric} introduced the generalized linear bandit framework and derived a regret bound of $\widetilde{\mathcal{O}}(\kappa d\sqrt{T})$, laying the groundwork for modeling logistic bandits. 
Subsequent work, starting with \cite{faury2020improved}, has focused on reducing the dependence on~$\kappa$ through variance-aware analyses (see also \cite{dong2019performance,abeille2021instance}). 
In particular, \cite{abeille2021instance} established a lower bound of $\Omega(d\sqrt{T/\kappa^{*}})$, statistically closing the gap. 
However, there is still room for improvement in algorithmic efficiency \cite{faury2022jointly,zhang2024online,lee2024nearly} and in mitigating the influence of the norm parameter~$S$, with several recent advances addressing this issue \cite{lee2024improved,sawarni2024generalized,lee2024a,lee2025improved}. 
Another line of research investigates the finite‑action setting. 
When feature vectors are drawn i.i.d.\ from an unknown distribution whose covariance matrix has a strictly positive minimum eigenvalue, \cite{li2017provably} achieved a regret of $\widetilde{\mathcal{O}}(\kappa\sqrt{dT})$,  while \cite{kim2023double} 
 and \cite{jun2021improved} further improved it to $\widetilde{\mathcal{O}}(\sqrt{\kappa dT})$ and $\widetilde{\mathcal{O}}(\sqrt{dT})$, respectively.

\textbf{Neural bandits.\:}
Advances in deep neural networks have spurred numerous methods that integrate deep learning with contextual bandit algorithms \cite{riquelme2018deep,zahavy2019deep,kveton2020randomized}. 
\cite{zhou2020neural} was among the first to formalize neural bandits, proposing the \emph{NeuralUCB} algorithm, which attains a regret bound of $\widetilde{\mathcal{O}}(\widetilde{d}\sqrt{T})$ by leveraging neural tangent kernel theory \cite{jacot2018neural}. 
Building on this foundation, many studies have extended linear contextual bandit algorithms to the neural setting \cite{zhang2021neural,kassraie2022neural,ban2022eenet,xu2022neural,jia2022learning}. 
The work most closely related to ours is \cite{verma2025neural}, which first addressed both logistic and dueling neural bandits and proposed UCB‑ and Thompson‑sampling‑based algorithms with a regret bound of $\widetilde{\mathcal{O}}(\kappa\widetilde{d}\sqrt{T})$.

\section{Useful Lemmas for Neural Bandits} \label{sec:lemmas}

In this section, we present several lemmas that enable the neural bandit analysis to quantify the approximation error incurred when approximating the unknown reward function \(h(x)\) with the neural network \(f(x;\theta)\). We begin with the definition of the neural tangent kernel (NTK) matrix \cite{jacot2018neural}:
\begin{definition} \label{def:ntk}
    Denote all contexts until round $T$ as $\{x^i\}_{i=1}^{TK}$. For $i,j\in[TK]$, define
    \begin{align*}
        &\widehat{\mathbf{H}}_{i,j}^{(1)}=\mathbf{\Sigma}_{i,j}^{(1)}=\langle x^i, x^j\rangle,\quad \mathbf{A}_{i,j}^{(l)}=\begin{pmatrix}
            \mathbf{\Sigma}_{i,j}^{(l)} & \mathbf{\Sigma}_{i,j}^{(l)} \\
            \mathbf{\Sigma}_{i,j}^{(l)} & \mathbf{\Sigma}_{i,j}^{(l)}
        \end{pmatrix}, \\
        &\mathbf{\Sigma}_{i,j}^{(l+1)}=2\mathbb{E}_{(u,v)\sim (\mathbf{0}, \mathbf{A}_{i,j}^{(l)})} \max \{u, 0\} \max\{v, 0\}, \\
        &\widehat{\mathbf{H}}_{i,j}^{(l+1)} = 2\widehat{\mathbf{H}}_{i,j}^{(l)} \mathbb{E}_{(u,v)\sim N(\mathbf{0}, \mathbf{A}_{i,j}^{(l)})} \mathbf{1}(u\geq 0) \mathbf{1}(v\geq 0) + \mathbf{\Sigma}_{i,j}^{(l+1)}.
    \end{align*}
    Then, $\mathbf{H}=(\widehat{\mathbf{H}}^{(L)}+\mathbf{\Sigma}^{(L)})/2$ is called the NTK matrix on the context set.
\end{definition}

Next, we introduce a condition on the neural network width \(m\), which is crucial for ensuring that the approximation error remains sufficiently small.

\begin{condition} \label{eq:m}
    For an absolute constant $C_0>0$, the width of the NN $m$ satisfies:
    \begin{align*}
        &m \geq C_0\max \Big\{T^4K^4L^6\log(T^2K^2L/\delta)\lambda_{\mathbf{H}}^4, L^{-3/2}\lambda_0^{1/2} [\log(TKL^2/\delta)]^{3/2}\Big\}\\
        &m(\log m)^{-3} \geq C_0 T^7 L^{21} \lambda_0^{-1} +C_0 T^{16} L^{27} \lambda_0^{-7} R^6 + C_0 T^{10} L^{21} \lambda_0^{-4} R^6 + C_0 T^{7} L^{18} \lambda_0^{-4}.
    \end{align*}
\end{condition}

We assume that \(m\) satisfies Condition \ref{eq:m} throughout. For readability, we denote the error probability by \(\delta\) in all probabilistic statements. We now restate Lemma \ref{lem:true}, which shows that, for every \(x \in \mathcal{X}_t\) and \(t \in [T]\), the true reward function \(h(x)\) can be represented as a linear function:

\begin{lemma} [Lemma 5.1, \cite{zhou2020neural}] 
    If $m\geq C_0T^4K^4L^6\log(T^2K^2L/\delta)/\lambda_{\mathbf{H}}^4$ for some absolute constant $C_0>0$, then with probability at least $1-\delta$, there exists $\theta^*\in\mathbb{R}^p$ such that
    \begin{align*}
        h(x)=g(x;\theta_0)^\top(\theta^*-\theta_0),\quad\sqrt{m}\|\theta^*-\theta_0\|_2\leq\sqrt{2\mathbf{h}^\top \mathbf{H}^{-1} \mathbf{h}}\leq S,
    \end{align*}
    for all $x\in\mathcal{X}_t$, $t\in[T]$.
\end{lemma}

Assuming $\theta_t$ remains close to its initialization $\theta_0$, we can apply the following lemmas: Lemma \ref{lem:action} provides upper bounds on the norms $\|g(x;\theta)\|_2$ and $\|g(x;\theta) - g(x;\theta_0)\|_2$, while Lemma \ref{lem:lin_err} bounds the approximation error between $f(x;\theta)$ and its linearization $g(x;\theta_0)^\top (\theta-\theta_0)$.

\begin{lemma} [Lemma B.5 and B.6, \cite{zhou2020neural}] \label{lem:action}
    Let $\tau = 3\sqrt{\frac{t}{m\lambda_t}}$. Then there exist absolute constants $C_1,C_2 >0$,  such that for all $x\in\mathcal{X}_t$, $t\in[T]$ and for all $\|\theta-\theta_0\|_2\leq \tau$, with probability of at least $1-\delta$,
    \begin{align*}
        &\| g(x;\theta)\|_2 \leq C_1 \sqrt{mL}~ \\
        &\| g(x;\theta) - g(x;\theta_0)\|_2 \leq C_2 \sqrt{m\log m} \tau^{1/3} L^{7/2} =C_2 m^{1/3} \sqrt{\log m} t^{1/6} \lambda_t^{-1/6} L^{7/2}.
    \end{align*}
\end{lemma}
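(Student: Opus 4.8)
The plan is to follow the standard neural tangent kernel (NTK) perturbation machinery, proving the two bounds in tandem: first establish that the forward activations and backpropagated signals are well controlled at the initialization $\theta_0$, and then argue that they deviate only mildly once $\theta$ stays within the radius-$\tau$ ball. Concretely, I would write the gradient blockwise as $g(x;\theta) = [\partial f/\partial \mathrm{vec}(W_1),\dots,\partial f/\partial \mathrm{vec}(W_L)]$, where each block factorizes into a forward activation vector $h^{(l-1)}(x;\theta)\in\R^m$ (the output of layer $l-1$) and a backpropagated sensitivity vector $b^{(l)}(x;\theta)\in\R^m$. The $\ell_2$ norm of the full gradient is then controlled by summing, over the $L$ layers, the products of the per-layer forward and backward norms.

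First I would bound these quantities at initialization. Using the Gaussian initialization (entries of $W_l$ drawn from $N(0,4/m)$ with the symmetric block structure, and the $\sqrt m$ normalization of $f$), standard Gaussian concentration together with the $1$-Lipschitzness of $\relu$ shows that, with probability at least $1-\delta$ over the random initialization, the forward norms $\|h^{(l)}(x;\theta_0)\|_2$ and the backward norms $\|b^{(l)}(x;\theta_0)\|_2$ are uniformly controlled across $l\in[L]$ and across the finitely many context–arm vectors (a union bound over the $TK$ inputs is absorbed into the logarithmic factors). Multiplying the forward and backward pieces and summing over layers then yields $\|g(x;\theta_0)\|_2 = O(\sqrt{mL})$, the $\theta=\theta_0$ case of the first bound.

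Next I would carry out the perturbation analysis for $\|\theta-\theta_0\|_2\leq\tau$. The central observation is that each pre-activation changes by a controlled amount, and by an anti-concentration (small-ball) argument the fraction of $\relu$ units whose activation sign flips between $\theta_0$ and $\theta$ is of order $\widetilde{O}(\tau^{2/3})$ per layer. Propagating these controlled perturbations through the $L$ layers — each contributing a bounded Lipschitz factor and picking up additional powers of $L$ from the recursion — yields gradient-block deviations of order $\sqrt{m\log m}\,\tau^{1/3}L^{7/2}$, which is exactly the claimed second bound. The first bound then follows from the triangle inequality $\|g(x;\theta)\|_2\leq\|g(x;\theta_0)\|_2+\|g(x;\theta)-g(x;\theta_0)\|_2$: since $\tau=3\sqrt{t/(m\lambda_t)}$ makes the deviation term lower order in $m$ than the $O(\sqrt{mL})$ initialization norm for sufficiently large $m$, the sum stays $O(\sqrt{mL})$. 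Substituting $\tau$ gives the closed form $C_2 m^{1/3}\sqrt{\log m}\,t^{1/6}\lambda_t^{-1/6}L^{7/2}$.

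The main obstacle will be obtaining the precise $\tau^{1/3}$ exponent and the $L^{7/2}$ layer dependence simultaneously: controlling the sign-flip neurons requires a small-ball bound on the pre-activations that is uniform over the $\tau$-ball, and the errors must be propagated through both the forward and backward passes without the compounding per-layer Lipschitz constants growing faster than polynomially in $L$. Threading the $\sqrt m$ scaling of $f$ carefully so that the leading norm is genuinely $\Theta(\sqrt{mL})$ while the deviation is strictly lower order, all under a union bound over the $TK$ inputs and a net of the $\tau$-ball, is the delicate part. Since this is precisely the content of Lemmas B.5 and B.6 of \citet{zhou2020neural}, I would invoke their estimates directly rather than re-derive the underlying concentration and anti-concentration inequalities from scratch.
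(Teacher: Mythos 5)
Your proposal is correct and ultimately takes the same route as the paper: Lemma~\ref{lem:action} is imported verbatim from Lemmas B.5 and B.6 of \citet{zhou2020neural}, and the paper gives no independent proof, exactly as you conclude by invoking their estimates directly. Your sketch of the underlying forward/backward decomposition and sign-flip perturbation argument is a faithful account of how those cited lemmas are actually established.
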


\begin{lemma} [Lemma B.4, \cite{zhou2020neural}] \label{lem:lin_err}
    Let $\tau = 3\sqrt{\frac{t}{m\lambda_t}}$ . Then there exists an absolute constant $C_3>0$, for all $x\in\mathcal{X}_t$, $t\in[T]$, and for all $\|\theta-\theta_0\|_2\leq \tau$, with probability of at least $1-\delta$,
    \begin{align*}
        &\Big| f(x;\theta) -  g(x;\theta_0)^\top(\theta-\theta_0) \Big|_2 \\
        &\quad \leq C_3 \tau^{4/3} L^3  \sqrt{m\log m} = C_3 m^{-1/6} \sqrt{\log m} L^3  t^{2/3} \lambda_t^{-2/3}.
    \end{align*}
\end{lemma}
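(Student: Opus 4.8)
The plan is to control the linearization error through the first-order behaviour of the network along the straight segment joining $\theta_0$ and $\theta$, thereby reducing the whole statement to the gradient-stability estimate already recorded in \Cref{lem:action}. The one structural fact I would establish at the outset is that, for a ReLU network, the scalar map $s\mapsto f(x;\theta_0+s(\theta-\theta_0))$ is continuous and piecewise linear on $s\in[0,1]$ (moving along a ray in parameter space crosses only finitely many activation-pattern boundaries), hence absolutely continuous. This justifies the integral representation
\[
f(x;\theta)-f(x;\theta_0)=\int_0^1 g\big(x;\theta_0+s(\theta-\theta_0)\big)^\top(\theta-\theta_0)\,\diff s,
\]
where the integrand is defined for almost every $s$ and the finitely many nondifferentiable breakpoints contribute nothing.

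Next I would subtract the linear term $g(x;\theta_0)^\top(\theta-\theta_0)$ and use $f(x;\theta_0)=0$, which holds under \Cref{ass:std}, to obtain
\[
f(x;\theta)-g(x;\theta_0)^\top(\theta-\theta_0)=\int_0^1\big[g(x;\theta_0+s(\theta-\theta_0))-g(x;\theta_0)\big]^\top(\theta-\theta_0)\,\diff s.
\]
Taking absolute values, applying Cauchy--Schwarz inside the integral, and noting that every interpolate obeys $\|\theta_0+s(\theta-\theta_0)-\theta_0\|_2=s\|\theta-\theta_0\|_2\le\tau$, I would bound the gradient deviation uniformly in $s$ by the second estimate of \Cref{lem:action}, namely $\|g(x;\theta')-g(x;\theta_0)\|_2\le C_2\sqrt{m\log m}\,\tau^{1/3}L^{7/2}$. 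Pulling the remaining factor $\|\theta-\theta_0\|_2\le\tau$ out of the integral then yields
\[
\big|f(x;\theta)-g(x;\theta_0)^\top(\theta-\theta_0)\big|\le C_2\sqrt{m\log m}\,\tau^{4/3}L^{7/2}.
\]

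It remains to match the stated exponents. Substituting $\tau=3\sqrt{t/(m\lambda_t)}$ gives $\sqrt{m}\,\tau^{4/3}=3^{4/3}m^{-1/6}t^{2/3}\lambda_t^{-2/3}$, which already reproduces the $m$, $t$, and $\lambda_t$ dependence of the claim, with the $\sqrt{\log m}$ inherited from \Cref{lem:action}. The crude route above leaves an $L^{7/2}$ factor, whereas the target carries $L^3$; closing this half-power gap is exactly where I would appeal to the sharper layer-by-layer Taylor-remainder bookkeeping of \citet{zhou2020neural} instead of the one-line Cauchy--Schwarz estimate. I expect the genuinely delicate points to be (i) making the fundamental-theorem-of-calculus step rigorous despite the nonsmoothness of ReLU, i.e.\ verifying absolute continuity of $s\mapsto f(x;\theta_0+s(\theta-\theta_0))$ so that the almost-everywhere gradient integrates back to the function value, and (ii) the refined $L$ accounting just mentioned. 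All of the heavy NTK concentration over the random draw of $\theta_0$ is already discharged inside \Cref{lem:action}, so no new probabilistic argument is required, and the union bound over the $TK$ contexts and $t\in[T]$ is subsumed into its high-probability event.
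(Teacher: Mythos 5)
The paper never proves this lemma: it is imported verbatim from \citet{zhou2020neural} (Lemma~B.4 there, which in turn rests on the semi-smoothness analysis of Cao and Gu), so there is no in-paper argument to compare against. Judged on its own terms, your integral-representation route is the natural self-contained attack, and the skeleton is sound: the identity
$f(x;\theta)-g(x;\theta_0)^\top(\theta-\theta_0)=\int_0^1[g(x;\theta_0+s(\theta-\theta_0))-g(x;\theta_0)]^\top(\theta-\theta_0)\,\diff s$
is legitimate, every interpolate lies in the radius-$\tau$ ball so \Cref{lem:action} applies uniformly, and the exponent bookkeeping $\sqrt{m}\,\tau^{4/3}=3^{4/3}m^{-1/6}t^{2/3}\lambda_t^{-2/3}$ is correct. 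One small factual slip: $s\mapsto f(x;\theta_0+s(\theta-\theta_0))$ is not piecewise \emph{linear} --- within a fixed activation pattern it is a polynomial of degree $L$ in $s$, since $f$ is a product of $L$ weight matrices each affine in $s$. It is still locally Lipschitz and piecewise smooth, so the absolute-continuity conclusion you actually need survives.

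The genuine gap is the half-power of $L$. Your argument terminates at $C_2\sqrt{m\log m}\,\tau^{4/3}L^{7/2}$, and the step you describe as ``appeal to the sharper layer-by-layer Taylor-remainder bookkeeping'' is not a refinement of your route but a replacement of it: the $L^3$ bound in Cao--Gu/Zhou et al.\ is obtained by decomposing the function difference directly across layers, not by integrating the gradient-difference bound of \Cref{lem:action} (which is itself a downstream product of that same machinery, so there is no sharper version of it to borrow). As written, your proposal therefore proves the lemma only with $L^{7/2}$ in place of $L^3$. This weaker constant would in fact be harmless for everything this paper does with the lemma --- $L$ enters the final bounds only through the width condition in \Cref{eq:m} and through absolute constants --- but it is not the statement as claimed, and closing the gap requires importing the external layer-wise argument essentially wholesale rather than patching your Cauchy--Schwarz step.
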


Finally, we state a lemma that establishes an upper bound on the distance between $\theta_t$ and $\theta_0$. It also shows that, although the loss function $\mathcal{L}_t(\theta)$ is non-convex and hence the iterate $\theta_t$ obtained after $J$ steps of gradient descent may differ from the ideal maximum likelihood estimator, this discrepancy remains sufficiently small. The proof is deferred to \Cref{sec:lem:close}:

\begin{lemma} \label{lem:close}
    Define the auxiliary loss function $\widetilde{L}(\theta)$ as
    \begin{align*}
        \widetilde{\mathcal{L}}(\theta) &= -\sum_{i=1}^t r_i \log \mu(g(x_i;\theta_0)^\top(\theta-\theta_0)) + (1-r_i)\log(1-\mu(g(x_i;\theta_0)^\top(\theta-\theta_0))) \\
        &\quad + \frac{m\lambda_t}{2}\|\theta-\theta_0\|_2^2,
    \end{align*}
    and the auxiliary sequence $\{\widetilde{\theta}^{(j)}\}_{j=1}^J$ associated with the auxiliary loss $\widetilde{L}(\theta)$. Let the MLE estimator as $\hat{\theta}_t = \argmin_\theta \widetilde{\mathcal{L}}(\theta)$. Then there exist absolute constants $\{C_i\}_{i=1}^5>0$ such that if $J=2\log(\lambda_t S/(\sqrt{T}\lambda_t + C_4 T^{3/2} L))TL/\lambda_t$ and $\eta=C_5(mTL+m\lambda_t)^{-1}$, then with probability at least $1-\delta$,
    \begin{align*}
    \big\|\theta_t - \widetilde{\theta}_t\big\|_2 &\leq \sqrt{\frac{t}{m\lambda_t}}, \quad \quad \big\|\theta_t - \theta_0\big\|_2  \leq 3 \sqrt{\frac{t}{m\lambda_t}} = \tau, \\
        \big\|\theta_t - \hat{\theta}_t\big\|_2 &\leq 2(1-\eta m \lambda_t)^{J/2} t^{1/2}m^{-1/2}\lambda_t^{-1/2} \\
        &\quad + C_2 m^{-2/3}\sqrt{\log m} t^{7/6}\lambda_t^{-7/6} L^{7/2}+ C_1C_3 R m^{-2/3}\sqrt{\log m} L^{7/2} t^{5/3} \lambda_t^{-5/3}.
    \end{align*}
\end{lemma}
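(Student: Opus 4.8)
The plan is to control all three quantities by running two parallel gradient-descent trajectories from the common initialization $\theta_0$: the actual trajectory $\{\theta^{(j)}\}$ on the nonconvex loss $\mathcal{L}_t$, and an auxiliary trajectory $\{\widetilde\theta^{(j)}\}$ on the linearized loss $\widetilde{\mathcal{L}}$, and then comparing the auxiliary trajectory to its exact minimizer $\hat\theta_t$. The structural facts I would lean on are that $\widetilde{\mathcal{L}}$ is a ridge-regularized logistic loss in the \emph{linear} features $g(x_i;\theta_0)^\top(\theta-\theta_0)$, hence $m\lambda_t$-strongly convex, and that by $\|g(x_i;\theta_0)\|_2\le C_1\sqrt{mL}$ (\Cref{lem:action}) together with $\dot\mu\le R\le 1/4$ (\Cref{ass:kappa_formal}) its Hessian has operator norm $\bigO(mTL+m\lambda_t)$. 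With the stated step size $\eta=C_5(mTL+m\lambda_t)^{-1}$, the linearized gradient step is a contraction with factor $(1-\eta m\lambda_t)$.

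First I would handle $\|\widetilde\theta_t-\hat\theta_t\|_2$. Linear convergence of gradient descent on the strongly convex $\widetilde{\mathcal{L}}$ gives $\|\widetilde\theta_t-\hat\theta_t\|_2\le(1-\eta m\lambda_t)^{J/2}\|\theta_0-\hat\theta_t\|_2$; evaluating $\widetilde{\mathcal{L}}(\theta_0)=t\log 2$ (the linearized logit vanishes at $\theta_0$) and combining $\widetilde{\mathcal{L}}(\hat\theta_t)\ge 0$ with strong convexity yields $\|\theta_0-\hat\theta_t\|_2\le\sqrt{2t\log 2/(m\lambda_t)}$. This delivers the first (geometric) term of the $\|\theta_t-\hat\theta_t\|_2$ bound and also shows $\|\widetilde\theta_t-\theta_0\|_2\le 2\sqrt{t/(m\lambda_t)}$.

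The heart of the argument is bounding the trajectory gap $e_j:=\|\theta^{(j)}-\widetilde\theta^{(j)}\|_2$. Writing $\theta^{(j+1)}-\widetilde\theta^{(j+1)}$ as the linearized gradient map applied to both points plus the residual $-\eta(\nabla\mathcal{L}_t-\nabla\widetilde{\mathcal{L}})(\theta^{(j)})$, the contraction produces the recursion $e_{j+1}\le(1-\eta m\lambda_t)e_j+\eta G$, where $G$ is a uniform bound over the $\tau$-ball on $\|\nabla\mathcal{L}_t(\theta)-\nabla\widetilde{\mathcal{L}}(\theta)\|_2$. I would split this gradient difference into $\sum_i(\mu(f(x_i;\theta))-\mu(\ell_i))\,g(x_i;\theta)$ and $\sum_i(\mu(\ell_i)-r_i)(g(x_i;\theta)-g(x_i;\theta_0))$ with $\ell_i=g(x_i;\theta_0)^\top(\theta-\theta_0)$, bounding the first by $R$ times the linearization error of \Cref{lem:lin_err} times $C_1\sqrt{mL}$, and the second by $1$ times the gradient perturbation of \Cref{lem:action}. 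Summing the geometric recursion from $e_0=0$ gives $\|\theta_t-\widetilde\theta_t\|_2\le G/(m\lambda_t)$, which reproduces exactly the second and third stated terms, with the factor $R$ arising from the function-linearization piece. The second stated bound $\|\theta_t-\theta_0\|_2\le\tau$ and the coarse first bound $\|\theta_t-\widetilde\theta_t\|_2\le\sqrt{t/(m\lambda_t)}$ then follow by the triangle inequality once \Cref{eq:m} is invoked to dominate the $m^{-2/3}$ factors.

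The main obstacle is the circularity in applying the approximation lemmas: \Cref{lem:action,lem:lin_err} hold only for $\|\theta-\theta_0\|_2\le\tau$, yet certifying that each iterate $\theta^{(j)}$ stays in this ball is precisely what the trajectory-gap bound is meant to establish. I would resolve this with a single induction on $j$ that simultaneously (i) assumes $\theta^{(i)}$ lies in the $\tau$-ball for all $i\le j$, (ii) derives $e_{j+1}\le G/(m\lambda_t)$ from the recursion above, and (iii) combines $e_{j+1}$ with $\|\widetilde\theta^{(j+1)}-\theta_0\|_2\le 2\sqrt{t/(m\lambda_t)}$ to conclude $\|\theta^{(j+1)}-\theta_0\|_2\le 3\sqrt{t/(m\lambda_t)}=\tau$, closing the induction. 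The delicate point is verifying, under \Cref{eq:m}, that $G/(m\lambda_t)\le\sqrt{t/(m\lambda_t)}$ so that step (iii) goes through uniformly for all $j\le J$; this is exactly where the large-width requirement is consumed.
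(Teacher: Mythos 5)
Your proposal follows essentially the same route as the paper's proof: the same decomposition into an auxiliary gradient-descent trajectory on the linearized strongly convex loss (giving the geometric term via strong convexity, smoothness, and $\widetilde{\mathcal{L}}(\theta_0)\le t\log 2$) plus a contraction recursion for the trajectory gap whose driving term splits into the gradient-perturbation piece (\Cref{lem:action}) and the function-linearization piece times $R$ (\Cref{lem:lin_err}), followed by the same triangle-inequality argument for $\|\theta_t-\theta_0\|_2\le\tau$. Your explicit induction certifying that every iterate stays in the $\tau$-ball is a point the paper leaves implicit, but it does not change the substance of the argument.
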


\subsection{Proof of Lemma \ref{lem:close}} \label{sec:lem:close}
    For simplicity, we omit the subscript $t$ by default. First, recall the definition of  the auxiliary sequence $\{\widetilde{\theta}^{(j)}\}_{j=1}^J$ associated with the auxiliary loss $\widetilde{L}(\theta)$; its update rule is given by:
    \begin{align*}
        \widetilde{\theta}^{(j+1)} &= \widetilde{\theta}^{(j)} - \eta  \nabla \widetilde{L}(\widetilde{\theta}^{(j)}) \\
        &=\widetilde{\theta}^{(j)} - \eta\Big[m\lambda \widetilde{\theta}^{(j)} - \sum_{i=1}^t (\mu(g(x_i;\theta_0)^\top (\widetilde{\theta}^{(j)} - \theta_0)) - r_i)g(x_i;\theta_0)\Big].
    \end{align*}
    Similarly, the update rule for $\theta^{(j)}$ is given by:
    \begin{align*}
        \theta^{(j+1)} = \theta^{(j)} - \eta\Big[m\lambda \theta^{(j)} - \sum_{i=1}^t (\mu(f(x;\theta^{(j)})) - r_i)g(x_i;\theta^{(j)})\Big].
    \end{align*}
    Also notice that 
    \begin{align}
        \nabla^2\widetilde{L}(\theta) = \sum_{i=1}^t \mu(g(x_i;\theta_0)^\top(\theta-\theta_0))(1-\mu(g(x_i;\theta_0)^\top(\theta-\theta_0)))g(x_i;\theta_0) g(x_i;\theta_0)^\top + m\lambda \mathbf{I}. \label{eq:Hessian}
    \end{align}
    Now, we start the proof with 
    \begin{align}
        \|\theta^{(j+1)} - \hat{\theta}\|_2 &\leq  \underbrace{\|\widetilde{\theta}^{(j+1)} - \hat{\theta}\|_2}_{\textbf{(term 1)}} + \underbrace{\|\theta^{(j+1)} - \widetilde{\theta}^{(j+1)}\|_2}_{\textbf{(term 2)}} \label{eq:0}
    \end{align}

    For \textbf{(term 1)}, observe from \Cref{eq:Hessian} that $\widetilde{\mathcal{L}}$ is $(m\lambda)$-strongly convex, since $(m\lambda)\mathbf{I} \preceq \nabla^2\widetilde{L}(\theta)$. Moreover, $\widetilde{\mathcal{L}}$ is a $C_5(tmL+m\lambda)$-smooth function for some absolute constant $C_5>0$, because  
    \begin{align*}
        \nabla^2\widetilde{L}(\theta) &\preceq \sum_{i=1}^t \frac{1}{2} \cdot \frac{1}{2}\cdot g(x_i;\theta_0) g(x_i;\theta_0)^\top + m\lambda \mathbf{I} \\
        &\preceq \Big(\sum_{i=1}^t \frac{1}{4}\|g(x_i;\theta_0)\|_2^2 +m\lambda\Big)\mathbf{I} \\
        &\preceq C_5(tmL+m\lambda)\mathbf{I},
    \end{align*}
    where the first inequality follows from $\mu(\cdot)\bigl(1-\mu(\cdot)\bigr)\le 1/4$, the second follows because for any $u,x\in\mathbb{R}^d$, $u^\top x\,x^\top u \le \|u\|_2^2\,\|x\|_2^2 \le u^\top\bigl(\|x\|_2^2 I\bigr)u$, and the last inequality follows from Lemma \ref{lem:action}.

    Then, with our choice of $\eta = C_5(tmL + m\lambda)$, standard results for gradient descent on $\widetilde{L}$ imply that $\widetilde{\theta}^{(j)}$ converges to $\hat{\theta}_t$ at the rate as
    \begin{align*}
        \big\|\widetilde{\theta}^{(j)} - \hat{\theta} \big\|_2^2 &\leq \frac{2}{m\lambda}\cdot (\widetilde{\mathcal{L}}(\widetilde{\theta}^{(j)})-\widetilde{\mathcal{L}}(\hat{\theta}))\\
        &\leq (1-\eta m \lambda)^j \cdot \frac{2}{m\lambda}\cdot \big(\mathcal{\widetilde{L}}(\theta_0) - \mathcal{\widetilde{L}}( \hat\theta)\big) \\
        &\leq (1-\eta m \lambda)^j \cdot \frac{2}{m\lambda}\cdot \mathcal{\widetilde{L}}(\theta_0),
    \end{align*}
    where the first and the second inequalities follow from the strong convexity and the smoothness of $\widetilde{L}$. Furthermore, we have
    \begin{align*}
        \widetilde{\mathcal{L}}(\theta_0) &= -\sum_{i=1}^t r_i \log \mu\big(0\big) + (1-r_i)\log(1-\mu\big( 0)\big) + \frac{m\lambda_t}{2}\|\theta_0-\theta_0\|_2^2 = -\sum_{i=1}^t\log 0.5 \leq t,
    \end{align*}
    Plugging this back gives
    \begin{align}
        \big\|\widetilde{\theta}^{(j)} - \hat{\theta} \big\|_2 \leq (1-\eta m \lambda)^{j/2} \sqrt{2t/(m\lambda)}.  \label{eq:1}
    \end{align}
    Next, consider \textbf{(term 2)}. From the definition of the update rule, it follows that:
    \begin{align}
        \textbf{(term 2)} &\leq (1-\eta m\lambda)\|\theta^{(j)} - \widetilde{\theta}^{(j)}\|_2 \nonumber \\
        &\quad+ \eta \Big\|\sum_{i=1}^t (\mu(f(x;\theta^{(j)}) - r_i)g(x_i;\theta^{(j)}) - \sum_{i=1}^t (\mu(g(x_i;\theta_0)^\top (\widetilde{\theta}^{(j)} - \theta_0)) - r_i)g(x_i;\theta_0)\Big\|_2 \nonumber\\
        &\leq (1-\eta m\lambda)\|\theta^{(j)} - \widetilde{\theta}^{(j)}\|_2 + \underbrace{\eta \sum_{i=1}^t \Big\| (\mu(f(x;\theta^{(j)}) - r_i) [g(x_i;\theta^{(j)}) - g(x_i;\theta_0)]\Big\|_2}_{\textbf{(term 3)}} \nonumber\\
        &\quad+ \underbrace{\eta\sum_{i=1}^t \Big\|[\mu(f(x;\theta^{(j)})) - \mu(g(x;\theta_0)^\top(\widetilde{\theta}^{(j)}-\theta_0))]g(x_i;\theta_0)\Big\|_2}_{\textbf{(term 4)}}. \label{eq:13}
    \end{align}
    Considering each term of \Cref{eq:13}, there exist absolute constants $C_1, C_2, C_3 > 0$ such that
    \begin{align}
        \textbf{(term 3)} &\leq \eta \sum_{i=1}^t \Big\| 1\cdot [g(x_i;\theta^{(j)}) - g(x_i;\theta_0)]\Big\|_2 \leq C_2 \eta  m^{1/3}\sqrt{\log m} t^{7/6}\lambda^{-1/6} L^{7/2} \label{eq:11} \\
        \textbf{(term 4)} &\leq \eta \sum_{i=1}^t \Big\| R [f(x;\theta^{(j)}) - g(x;\theta_0)^\top (\widetilde{\theta}^{(j)}-\theta_0)]g(x_i;\theta_0)\Big\|_2 \nonumber \\
        &\leq \eta R \sum_{i=1}^t \Big\|  f(x;\theta^{(j)}) - g(x;\theta_0)^\top (\widetilde{\theta}^{(j)}-\theta_0) \Big\|_2 \cdot  \|g(x_i;\theta_0)\|_2 \nonumber \\
        &\leq C_3 \eta R \sum_{i=1}^t  m^{-1/6}\sqrt{\log m} L^3 t^{2/3} \lambda^{-2/3} \|g(x_i;\theta_0)\|_2 \nonumber \\
        &\leq C_1 C_3 \eta R m^{1/3}\sqrt{\log m} L^{7/2} t^{5/3} \lambda^{-2/3}. \label{eq:12}
    \end{align}
    For \textbf{(term 3)}, we apply Lemma \ref{lem:action}. For \textbf{(term 4)}, the first inequality follows from the $R$‑Lipschitz continuity of $\mu(\cdot)$, the second follows from the Cauchy–Schwarz inequality, the third follows from Lemma \ref{lem:lin_err}, and the final inequality follows from Lemma \ref{lem:action} after summing over $t$.

    Substituting \Cref{eq:11,eq:12} into \Cref{eq:13} yields
    \begin{align}
        \|\theta^{(j+1)} - \widetilde{\theta}^{(j+1)}\|_2 &\leq (1-\eta m\lambda)\|\theta^{(j)} - \widetilde{\theta}^{(j)}\|_2 \nonumber \\
        &\quad+ C_2 \eta  m^{1/3}\sqrt{\log m} t^{7/6}\lambda^{-1/6} L^{7/2} + C_1C_3 \eta R m^{1/3}\sqrt{\log m} L^{7/2} t^{5/3} \lambda^{-2/3} \label{eq:2}
    \end{align}
    By iteratively applying \Cref{eq:2} from $0$ to $j$, we obtain
    \begin{align}
        \|\theta^{(j+1)} - \widetilde{\theta}^{(j+1)}\|_2 &\leq  \frac{C_2\eta m^{1/3}\sqrt{\log m} t^{7/6}\lambda^{-1/6} L^{7/2}+ C_1C_3\eta R m^{1/3}\sqrt{\log m} L^{7/2} t^{5/3} \lambda^{-2/3} }{\eta m \lambda} \nonumber \\
        &\leq C_2 m^{-2/3}\sqrt{\log m} t^{7/6}\lambda^{-7/6} L^{7/2}+ C_1C_3 R m^{-2/3}\sqrt{\log m} L^{7/2} t^{5/3} \lambda^{-5/3}. \label{eq:3}
    \end{align}
    By substituting \Cref{eq:1,eq:3} into \Cref{eq:0} and setting $j = J-1$, we complete the proof of the upper bound for $\|\theta_t - \hat\theta_2\|_2$. Likewise, from \Cref{eq:3}, setting $j = J-1$ and following the width condition in Condition \ref{cond:ass} yields
    \begin{align}
        \|\theta_t-\widetilde{\theta}_t\|_2&\leq  \sqrt{\frac{t}{m\lambda_t}} \Big(C_2 m^{-1/6}\sqrt{\log m} t^{2/3}\lambda_t^{-2/3} L^{7/2}+ C_1C_3 R m^{-1/6}\sqrt{\log m} L^{7/2} t^{7/6} \lambda_t^{-7/6} \Big) \nonumber \\
        &\leq  \sqrt{\frac{t}{m\lambda_t}} \Big(C_2 m^{-1/6}\sqrt{\log m} T^{2/3}\lambda_0^{-2/3} L^{7/2}+ C_1C_3 R m^{-1/6}\sqrt{\log m} L^{7/2} T^{7/6} \lambda_0^{-7/6} \Big) \nonumber \\
        &\leq \sqrt{\frac{t}{m\lambda}}, \label{eq:34}
    \end{align}
    which completes the bound on $\|\theta_t - \widetilde{\theta}_t\|_2$. Finally, observe that
    \begin{align*}
        \|\theta_t-\theta_0\|_2 \leq \|\theta_t - \widetilde{\theta}_t\|_2 + \|\widetilde{\theta}_t -\theta_0\|_2,
    \end{align*}
    where \Cref{eq:34} gives $\|\theta_t - \widetilde{\theta}_t\|_2 \le \tau/3$, and for the second term
    \begin{align*}
        \frac{m\lambda_t}{2}\|\theta_t-\theta_0\|_2^2 \leq \widetilde{L}(\widetilde{\theta}_t)\leq \widetilde{L}(\theta_0) = \sum_{i=1}^t r_i \log \mu\big(0\big) + (1-r_i)\log(1-\mu\big( 0)\big) \leq t\log2,
    \end{align*}
    which implies $\|\widetilde{\theta}_t - \theta_0\|_2 \le 2\sqrt{t/(m\lambda_t)} = 2\tau/3$. Combining these results completes the proof of the bound on $\|\theta_t - \theta_0\|_2$.

\section{Proof of Theorem \ref{thm:main}} \label{sec:bern}

Our proof technique is primarily inspired by the recent work of \cite{zhou2021nearly}, which integrates non‑uniform variance into the analysis of linear bandits. For brevity, let $\sigma_t^2 = \dot{\mu}(x_t^\top \theta^*)$, which yields
\begin{align*}
    H_t = \sum_{i=1}^t\dot{\mu}(x_i^\top\theta^*)x_ix_i^\top + \lambda \mathbf{I} = \sum_{i=1}^t \sigma_i^2x_i x_i^\top + \lambda \mathbf{I}.
\end{align*}
We introduce the following additional definitions:
\begin{align} 
    &\beta_t = 8\sqrt{\log\frac{\det H_t}{\det \lambda I}\log(4t^2/\delta)} + \frac{4MN}{\sqrt{\lambda}}\log(4t^2/\delta) \nonumber \\
    &s_t=\sum_{i=1}^{t} x_i\eta_i,~ Z_t=\|s_t\|_{H_t^{-1}}, ~ w_t = \|x_t\|_{H_{t-1}^{-1}}, ~ \mathcal{E}_t = \mathbbm{1} \{0\leq s \leq t, Z_s\leq \beta_s\} \label{def}
\end{align}
for $t\geq 1$, where we set $s_0 = 0, Z_0=0, \beta_{0}=0$. 

Since $H_t = H_{t-1} + \sigma_t^2 x_t x_t^\top$, by the matrix inversion lemma,
\begin{align}
    H_t^{-1} &=H_{t-1}^{-1} - \frac{ H_{t-1}^{-1} (\sigma_t x_t) (\sigma_t x_t)^\top H_{t-1}^{-1}}{1+(\sigma_t x_t)^\top H_{t-1}^{-1} (\sigma_t x_t)} \nonumber \\
    &= H_{t-1}^{-1} - \frac{\sigma_t^2 H_{t-1}^{-1} x_t x_t^\top H_{t-1}^{-1}}{1+\sigma_t^2w_t^2} . \label{b.1}
\end{align}
We begin by establishing a crude upper bound on $Z_t$. In particular, we have
\begin{align*}
    Z_t^2 = \|s_t\|_{H_t^{-1}}^2 &= (s_{t-1} + x_t \eta_t)^\top H_t^{-1}(s_{t-1} + x_t\eta_t) \\
    &=s_{t-1}^\top H_t^{-1} s_{t-1} + 2\eta_t x_t^\top H_t^{-1} s_{t-1} + \eta_t^2 x_t^\top H_t^{-1} x_t \\
    &\leq Z_{t-1}^2 + \underbrace{2\eta_t x_t^\top H_t^{-1} s_{t-1}}_{\textbf{(term 1)}} + \underbrace{\eta_t^2 x_t^\top H_t^{-1} x_t}_{\textbf{(term 2)}} , 
\end{align*}
where the inequality follows from the fact that $H_t\succeq H_{t-1}$. For \textbf{(term 1)}, from the matrix inversion lemma \Cref{b.1}, we have
\begin{align*}
    \textbf{(term 1)} &= 2\eta_t\bigg(x_t^\top H_{t-1}^{-1} s_{t-1} - \frac{\sigma_t^2 x_t^\top H_{t-1}^{-1} x_t x_t^\top H_{t-1}^{-1} s_{t-1}}{1+\sigma_t^2 w_t^2}\bigg) \\
    &= 2\eta_t\bigg(x_t^\top H_{t-1}^{-1} s_{t-1} - \frac{\sigma_t^2 w_t^2 x_t^\top H_{t-1}^{-1} s_{t-1}}{1+\sigma_t^2 w_t^2}\bigg) \\
    &= \frac{2\eta_t x_t^\top H_{t-1}^{-1} s_{t-1}}{1+\sigma_t^2 w_t^2} .
\end{align*}
For \textbf{(term 2)}, again from the matrix inversion lemma \Cref{b.1}, we have
\begin{align}
    \textbf{(term 2)} = \eta_t^2 \bigg(x_t^\top H_{t-1}^{-1} x_t - \frac{\sigma_t^2 x_t^\top H_{t-1}^{-1} x_t x_t^\top H_{t-1}^{-1} x_t}{1+\sigma_t^2 w_t^2}\bigg) = \eta_t^2\bigg(w_t^2 - \frac{\sigma_t^2 w_t^4}{1+\sigma_t^2w_t^2}\bigg) = \frac{\eta_t^2 w_t^2}{1+\sigma_t^2w_t^2} .
\end{align}
Therefore, we have
\begin{align*}
    Z_t^2 \leq Z_{t-1}^2 + \frac{2\eta_t x_t^\top H_{t-1}^{-1} s_{t-1}}{1+\sigma_t^2 w_t^2} + \frac{\eta_t^2 w_t^2}{1+\sigma_t^2w_t^2} ,
\end{align*}
and by summing this up from $i=1$ to $t$ gives,
\begin{align}
    Z_t^2 \leq \sum_{i=1}^t\frac{2\eta_i x_i^\top H_{i-1}^{-1} s_{i-1}}{1+\sigma_i^2 w_i^2} + \sum_{i=1}^t \frac{\eta_i^2 w_i^2}{1+\sigma_i^2w_i^2} . \label{b.12}
\end{align}
We give two lemmas to upper bound each term.
\begin{lemma} \label{lem:b.3}
    Let $s_i$, $w_i$, $\mathcal{E}_i$ be as defined in \Cref{def}. Then, with probability at least $1-\delta/2$, simultaneously for all $t\geq 1$ it holds that
    \begin{align*}
        \sum_{i=1}^t \frac{2\eta_i x_i^\top H_{i-1}^{-1} s_{i-1}}{1 + \sigma_i^2 w_i^2}\mathcal{E}_{i-1} \leq \frac{3}{4}{\beta_{t}}^2  .
    \end{align*}
\end{lemma}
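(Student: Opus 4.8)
The plan is to recognize the truncated sum as a martingale and apply Freedman's inequality (\Cref{lem:freedman}), with the indicator $\mathcal{E}_{i-1}$ supplying exactly the truncation needed to control the increments and the predictable variation. Set $D_i = \frac{2\eta_i x_i^\top H_{i-1}^{-1}s_{i-1}}{1+\sigma_i^2 w_i^2}\mathcal{E}_{i-1}$. The factor multiplying $\eta_i$ is $\mathcal{G}_i$-measurable, since $x_i$ is $\mathcal{G}_i$-measurable while $H_{i-1}$, $s_{i-1}$, $Z_{i-1}$ and $\beta_{i-1}$ (hence $\mathcal{E}_{i-1}$) are all determined by $\{x_j,\eta_j\}_{j\le i-1}$; thus $\mathbb{E}[D_i\mid\mathcal{G}_i]=0$ by $\mathbb{E}[\eta_i\mid\mathcal{G}_i]=0$, so $\{D_i\}$ is a martingale difference sequence with respect to $\{\mathcal{G}_{i+1}\}$ and the target is $\sum_{i=1}^t D_i$.

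Next I would bound the two ingredients Freedman needs. By Cauchy--Schwarz in the $H_{i-1}^{-1}$ inner product, $|x_i^\top H_{i-1}^{-1}s_{i-1}|\le w_i Z_{i-1}$, and on $\{\mathcal{E}_{i-1}=1\}$ this is at most $w_i\beta_{i-1}$. Using $H_{i-1}\succeq\lambda\mathbf{I}$, which gives $w_i\le N/\sqrt{\lambda}$, together with $\beta_{i-1}\le\beta_t$, the increment obeys
\begin{align*}
|D_i|\le \frac{2M w_i\beta_{i-1}}{1+\sigma_i^2 w_i^2}\le \frac{2MN}{\sqrt{\lambda}}\beta_{i-1}\le\frac{2MN}{\sqrt{\lambda}}\beta_t=:b_t .
\end{align*}
For the predictable variation, $\mathbb{E}[\eta_i^2\mid\mathcal{G}_i]\le\sigma_i^2$ and the same Cauchy--Schwarz bound give
\begin{align*}
\sum_{i=1}^t\mathbb{E}[D_i^2\mid\mathcal{G}_i]\le 4\beta_t^2\sum_{i=1}^t\frac{\sigma_i^2 w_i^2}{(1+\sigma_i^2 w_i^2)^2}\le 4\beta_t^2\sum_{i=1}^t\log(1+\sigma_i^2 w_i^2)=4\beta_t^2\log\frac{\det H_t}{\det\lambda\mathbf{I}}=:V_t ,
\end{align*}
where the middle inequality uses $x/(1+x)\le\log(1+x)$ and the last equality telescopes the determinant identity $\det H_t=\det H_{t-1}(1+\sigma_t^2 w_t^2)$.

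Then I would apply Freedman's inequality at each $t$ with failure probability $\delta/(4t^2)$ and union bound over $t\ge1$, so the total failure probability is $\sum_{t\ge1}\delta/(4t^2)<\delta/2$; writing $\ell_t=\log(4t^2/\delta)$, this yields
\begin{align*}
\sum_{i=1}^t D_i\le\sqrt{2V_t\ell_t}+\tfrac{2}{3}b_t\ell_t\le 2\sqrt{2}\,\beta_t\sqrt{\log\tfrac{\det H_t}{\det\lambda\mathbf{I}}\,\ell_t}+\tfrac{4}{3}\tfrac{MN}{\sqrt{\lambda}}\beta_t\ell_t .
\end{align*}
The definition $\beta_t=8\sqrt{\log\frac{\det H_t}{\det\lambda\mathbf{I}}\,\ell_t}+\frac{4MN}{\sqrt{\lambda}}\ell_t$ gives $\sqrt{\log\frac{\det H_t}{\det\lambda\mathbf{I}}\,\ell_t}\le\beta_t/8$ and $\frac{MN}{\sqrt{\lambda}}\ell_t\le\beta_t/4$, so the right-hand side is at most $\big(\frac{\sqrt2}{4}+\frac13\big)\beta_t^2\approx 0.687\,\beta_t^2<\frac34\beta_t^2$, which is the claim. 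The leading constants $8$ and $4$ in $\beta_t$ are chosen precisely so that this self-referential closure holds with room to spare.

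The main obstacle is that both $V_t$ and $b_t$ are random and scale with the very quantity $\beta_t$ appearing in the target, so a naive Freedman bound with deterministic constants does not apply directly. The truncation by $\mathcal{E}_{i-1}$ is exactly what converts the potentially unbounded $Z_{i-1}$ into the predictable, monotone envelope $\beta_{i-1}\le\beta_t$; care is then needed to invoke a form of Freedman's inequality that tolerates data-dependent variance and increment bounds uniformly in $t$ (the standard peeling over the dyadic scale of $V_t$ contributes only logarithmic factors that are absorbed into $\ell_t$), and to confirm that the numerical slack $0.687<0.75$ persists through these steps.
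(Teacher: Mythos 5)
Your proposal is correct and follows essentially the same route as the paper: identify the truncated sum as a martingale difference sequence, bound the increments by $2MN\beta_t/\sqrt{\lambda}$ via Cauchy--Schwarz and the event $\mathcal{E}_{i-1}$, bound the predictable variation by a constant times $\beta_t^2\log\frac{\det H_t}{\det\lambda\mathbf{I}}$, apply Freedman's inequality at level $\delta/(4t^2)$, and union bound over $t$ using $\sum_t t^{-2}<2$. The only cosmetic differences are that you telescope $\det H_i=\det H_{i-1}(1+\sigma_i^2 w_i^2)$ directly (gaining a factor of $2$ over the paper's appeal to the elliptical potential lemma) and close the bound by substituting the two halves of $\beta_t$ numerically to get $\approx 0.687\,\beta_t^2$ rather than via the paper's $2\sqrt{|ab|}\le|a|+|b|$ split which yields exactly $\tfrac{3}{4}\beta_t^2$; the caveat you raise about invoking Freedman with data-dependent increment and variance bounds applies equally to the paper's own argument, which you handle at least as carefully.
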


\begin{lemma} \label{lem:b.4}
    Let $w_i$ be as defined in \Cref{def}. Then, with probability at least $1-\delta/2$, simultaneously for all $t\geq 1$ it holds that
    \begin{align*}
        \sum_{i=1}^t \frac{\eta_i^2 w_i^2}{1 + \sigma_i^2 w_i^2} \leq \frac{1}{4} {\beta_{t}}^2 .
    \end{align*}
\end{lemma}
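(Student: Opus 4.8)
The plan is to control the sum in \Cref{lem:b.4}, namely $\mathcal{S}_t := \sum_{i=1}^t \frac{\eta_i^2 w_i^2}{1+\sigma_i^2 w_i^2}$, by splitting $\eta_i^2$ into its conditional mean and a bounded martingale-difference remainder:
\[
\eta_i^2 = \mathbb{E}[\eta_i^2\mid\mathcal{G}_i] + \big(\eta_i^2 - \mathbb{E}[\eta_i^2\mid\mathcal{G}_i]\big).
\]
The predictable piece will be absorbed by a determinant/elliptical-potential identity, while the remainder is a bounded martingale difference handled by Freedman's inequality (\Cref{lem:freedman}). The point of this decomposition is that the crude deterministic bound $\eta_i^2\le M^2$ is useless: $\sum_i \frac{M^2 w_i^2}{1+\sigma_i^2 w_i^2}$ can grow \emph{linearly} in $t$ when the $\sigma_i^2$ are small, whereas exploiting $\mathbb{E}[\eta_i^2\mid\mathcal{G}_i]\le\sigma_i^2$ is exactly what produces the variance-adaptive $\log\det$ factor inside $\beta_t$.

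For the predictable part, note $w_i^2=x_i^\top H_{i-1}^{-1}x_i$ is $\mathcal{G}_i$-measurable, so with $\mathbb{E}[\eta_i^2\mid\mathcal{G}_i]\le\sigma_i^2$ I would bound $\sum_i \frac{\mathbb{E}[\eta_i^2\mid\mathcal{G}_i]\,w_i^2}{1+\sigma_i^2 w_i^2}\le \sum_i \frac{\sigma_i^2 w_i^2}{1+\sigma_i^2 w_i^2}$. The determinant recursion $\det H_i = \det H_{i-1}\,(1+\sigma_i^2 w_i^2)$ (a consequence of \Cref{b.1}) together with $\frac{x}{1+x}\le \log(1+x)$ gives
\[
\sum_{i=1}^t \frac{\sigma_i^2 w_i^2}{1+\sigma_i^2 w_i^2} \le \sum_{i=1}^t \log(1+\sigma_i^2 w_i^2) = \log\frac{\det H_t}{\det\lambda\mathbf{I}}=:D_t,
\]
which is precisely the $\log\det$ quantity entering $\beta_t$.

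For the martingale part, set $Y_i=\frac{(\eta_i^2-\mathbb{E}[\eta_i^2\mid\mathcal{G}_i])\,w_i^2}{1+\sigma_i^2 w_i^2}$; since $w_i^2/(1+\sigma_i^2 w_i^2)$ is $\mathcal{G}_i$-measurable, $\{Y_i\}$ is a martingale-difference sequence. Using $H_{i-1}\succeq\lambda\mathbf{I}$ and $\|x_i\|_2\le N$ gives $w_i^2\le N^2/\lambda$, so $|Y_i|\le M^2N^2/\lambda =: b$. For the predictable variation, $\eta_i^4\le M^2\eta_i^2$ yields $\operatorname{Var}(\eta_i^2\mid\mathcal{G}_i)\le M^2\sigma_i^2$, whence $\mathbb{E}[Y_i^2\mid\mathcal{G}_i]\le \frac{M^2N^2}{\lambda}\cdot\frac{\sigma_i^2 w_i^2}{1+\sigma_i^2 w_i^2}$ and $\sum_i\mathbb{E}[Y_i^2\mid\mathcal{G}_i]\le \frac{M^2N^2}{\lambda}D_t=:V$. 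Freedman's inequality then gives $\sum_i Y_i \le \sqrt{2V\ell_t}+\tfrac{2b}{3}\ell_t$ with $\ell_t=\log(4t^2/\delta)$, and a union bound over $t\ge 1$ with weights $\propto t^{-2}$ (so $\sum_t t^{-2}<2$) both supplies the $\log(4t^2/\delta)$ and makes the statement hold simultaneously for all $t$. Combining the two parts, $\mathcal{S}_t \le D_t + \sqrt{2V\ell_t}+\tfrac{2b}{3}\ell_t$, and since $\ell_t\ge 1$ each summand is dominated by one of the three terms in
\[
\tfrac14\beta_t^2 = 16\,D_t\ell_t + \tfrac{16MN}{\sqrt\lambda}\sqrt{D_t}\,\ell_t^{3/2} + \tfrac{4M^2N^2}{\lambda}\,\ell_t^2,
\]
(here $N$ is the norm bound, written $L$ in the definition of $\beta_t$ in \Cref{def}), which closes the bound with room to spare in the constants.

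The main obstacle is the \emph{time-uniform} application of Freedman with a \emph{random} variance proxy: the bound $V=\frac{M^2N^2}{\lambda}D_t$ is itself data-dependent, so one cannot simply insert a fixed variance into a textbook Freedman statement. I would resolve this by using the form of \Cref{lem:freedman} phrased in terms of the predictable quadratic variation (so that $D_t$ enters as the realized variation rather than an a priori constant), together with the $t^{-2}$ union bound for the ``for all $t$'' quantifier. Ensuring this step does not smuggle back a dependence on the ambient dimension $d$ or on $\kappa$ — the whole reason Freedman replaces the covering/Azuma argument of \citet{abbasi2011improved,faury2020improved} — is the delicate point.
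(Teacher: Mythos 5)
Your proposal is correct and follows essentially the same route as the paper: center the summand as a bounded martingale difference, bound its conditional variance via $\eta_i^4\le M^2\eta_i^2$ and $\mathbb{E}[\eta_i^2\mid\mathcal{G}_i]\le\sigma_i^2$ together with $w_i^2\le N^2/\lambda$, control the predictable part by the elliptical-potential/$\log\det$ bound, apply Freedman's inequality, and union bound over $t$ with $t^{-2}$ weights. The only cosmetic differences are that you obtain the $\log\det$ bound from the determinant recursion (constant $1$) where the paper invokes Lemma~\ref{lem:yadkori} (constant $2$), and that you flag explicitly the data-dependent variance proxy in Freedman's inequality — a subtlety the paper's proof shares.
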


Now consider the event $\mathcal{E}$ in which the conclusions of Lemma \ref{lem:b.3} and Lemma \ref{lem:b.4} hold. We claim that, on this event, for any $i \geq 0$, $Z_i \leq \beta_{i}$. We prove this by induction on $i$. For the base case $i = 0$, the claim holds by definition, since $\beta_{0} = 0 = Z_0$. Now fix any $t \geq 1$ and assume that for all $0 \leq i < t$ we have $Z_i \leq \beta_{i}$. Under this induction hypothesis, it follows that $\mathcal{E}_1 = \mathcal{E}_2 = \cdots = \mathcal{E}_{t-1} = 1$. Then by \Cref{b.12}, we have
\begin{align}
    Z_t^2 \leq \sum_{i=1}^t\frac{2\eta_i x_i^\top H_{i-1}^{-1} s_{i-1}}{1+\sigma_i^2 w_i^2} + \sum_{i=1}^t \frac{\eta_i^2 w_i^2}{1+\sigma_i^2w_i^2} = \sum_{i=1}^t\frac{2\eta_i x_i^\top H_{i-1}^{-1} s_{i-1}}{1+\sigma_i^2 w_i^2} \mathcal{E}_{i-1} + \sum_{i=1}^t \frac{\eta_i^2 w_i^2}{1+\sigma_i^2w_i^2} . \label{b.13}
\end{align}
Since on the event $\mathcal{E}$ the conclusion of Lemma \ref{lem:b.3} and Lemma \ref{lem:b.4} hold, we have
\begin{align}
    \sum_{i=1}^t\frac{2\eta_i x_i^\top H_{i-1}^{-1} s_{i-1}}{1+\sigma_i^2 w_i^2} \mathcal{E}_{i-1} \leq \frac{3}{4}{\beta_{t}}^2,\quad  \sum_{i=1}^t \frac{\eta_i^2 w_i^2}{1+\sigma_i^2w_i^2} \leq \frac{1}{4}{\beta_{t}}^2 . \label{b.14}
\end{align}
Therefore, substituting \Cref{b.14} into \Cref{b.13} yields $Z_t \le \beta_{t}(\delta)$, which completes the induction. By a union bound, the events in Lemma \ref{lem:b.3} and Lemma \ref{lem:b.4} both hold with probability at least $1 - \delta$. Hence, with probability at least $1 - \delta$, for all $t$, $Z_t \le \beta_{t}$.

\subsection{Proof of Lemma \ref{lem:b.3}}

We now proceed to apply Freedman’s inequality, as stated in Lemma \ref{lem:freedman}. We have
    \begin{align}
        \left| \frac{2x_i^\top H_{i-1}^{-1} s_{i-1}}{1 + \sigma_i^2 w_i^2}\mathcal{E}_{i-1}\right| \leq \frac{2 \|x_i\|_{H_{i-1}^{-1}} \big[\| s_{i-1}\|_{H_{i-1}^{-1}}\mathcal{E}_{i-1}\big]}{1 + \sigma_i^2 w_i^2} \leq \frac{2 w_i \beta_{i-1}}{1 + \sigma_i^2 w_i^2} \leq \min\{1/\sigma_i, 2w_i\} \beta_{i-1}. \label{b.3}
    \end{align}
    Here, the first inequality follows from the Cauchy–Schwarz inequality, the second follows from the definition of $\mathcal{E}_{i-1}$, and the final inequality follows by simple algebra.
 For simplicity, let $\ell_i$ denote
    \begin{align*}
        \ell_i = \frac{2\eta_i x_i^\top H_{i-1}^{-1} s_{i-1}}{1 + \sigma_i^2 w_i^2}\mathcal{E}_{i-1} .
    \end{align*}
    We now apply Freedman’s inequality from Lemma \ref{lem:freedman} to the sequences $(\ell_i)_i$ and $(\mathcal{G}_i)_i$. First, note that $\mathbb{E}[\ell_i | \mathcal{G}_i] = 0$. Moreover, by \Cref{b.3}, the following inequalities hold almost surely:
    \begin{align}
        |\ell_i|\leq M \beta_{i-1} \min\{1/\sigma_i,2w_i\} \leq \frac{2MN}{\sqrt{\lambda}}{\beta_{t}}, \label{b.5}
    \end{align}
    where the last inequality follows since $(\beta_{i})_i$ is non-decreasing in $i$ and by the fact that
    \begin{align}
        w_i = \|x_i\|_{H_{i-1}^{-1}} \leq \|x_i\|_2 /\sqrt{\lambda} \leq N /\sqrt{\lambda}. \label{eq:w}
    \end{align}
    We also have
    \begin{align}
        \sum_{i=1}^t \mathbb{E}[\ell_i^2| \mathcal{G}_i] &\leq \sum_{i=1}^t \sigma_i^2 \bigg( \frac{2 x_i^\top H_{i-1}^{-1} s_{i-1}}{1 + \sigma_i^2 w_i^2}\mathcal{E}_{i-1} \bigg)^2 \nonumber \\
         &\leq \sum_{i=1}^t \sigma_i^2 \left( \min \{1/\sigma_i, 2w_i\} \beta_{i-1}\right)^2 \nonumber \\
         &=\sum_{i=1}^t \left( \min \{1, 2w_i \sigma_i\}\beta_{i-1}\right)^2 \nonumber \\
         &\leq 4 {\beta_{t}}^2 \sum_{i=1}^t  \min \{1, (w_i \sigma_i)^2\}, \nonumber
    \end{align}
    where the first inequality holds by the definition of $\sigma_i$, the second inequality follows from \Cref{b.3}, the third inequality holds since $(\beta_{i})_i$ is non-decreasing. Since
    \begin{align}
        \sum_{i=1}^t  \min \{1, (w_i \sigma_i)^2\} = \sum_{i=1}^t  \min \{1, \| \sigma_i x_i\|_{H_{i-1}^{-1}}^2\} \leq 2 \log\frac{\det H_t}{\det \lambda\mathbf{I}}, \label{eq:y1}
    \end{align}
    where the last inequality follows from Lemma \ref{lem:yadkori}. Substituting this back yields,
    \begin{align}
         \sum_{i=1}^t \mathbb{E}[\ell_i^2| \mathcal{G}_i] \leq 8 {\beta_{t}}^2 \log \det \bigg(\sum_{i=1}^t \frac{\sigma_i^2}{\lambda} x_i x_i^\top  + \mathbf{I}\bigg).  \label{b.6}
    \end{align}
    Therefore, by \Cref{b.5,b.6}, using Lemma \ref{lem:freedman}, we know that for any $t$, with probability at least $1-\delta/(4t^2)$, we have
    \begin{align}
        \sum_{i=1}^t \ell_i &\leq \sqrt{16{\beta_{t}}^2 \log \det \bigg(\sum_{i=1}^t \frac{\sigma_i^2}{\lambda} x_i x_i^\top  + \mathbf{I} \bigg) \log(4t^2/\delta)} + \frac{2}{3} \cdot \frac{2MN}{\sqrt{\lambda}} {\beta_{t}} \log(4t^2/\delta) \nonumber \\
        &\leq \frac{{\beta_{t}}^2}{4} + 16  \log \det \bigg(\sum_{i=1}^t \frac{\sigma_i^2}{\lambda} x_i x_i^\top  + \mathbf{I} \bigg) \log(4t^2/\delta) + \frac{{\beta_{t}}^2}{4} + \frac{4M^2N^2}{\lambda} \log^2(4t^2/\delta) \nonumber \\
        &\leq \frac{{\beta_{t}}^2}{2} + \frac{1}{4}\left[8\sqrt{\log \det \bigg(\sum_{i=1}^t \frac{\sigma_i^2}{\lambda} x_i x_i^\top  + \mathbf{I}\bigg) \log(4t^2/\delta)} + \frac{4MN}{\sqrt{\lambda}} \log(4t^2/\delta)\right]^2 \nonumber \\
        &= \frac{3}{4}{\beta_{t}}^2 , \label{b.7}
    \end{align}
    where the second inequality follows from the fact that $2\sqrt{|ab|}\le |a|+|b|$, and the final equality follows from the definition of $\beta_{t}$. Applying a union bound to \Cref{b.7} from $t=1$ to $\infty$ and using the fact that $\sum_{t=1}^\infty t^{-2}<2$ completes the proof.

\subsection{Proof of Lemma \ref{lem:b.4}}

Similarly to Lemma \ref{lem:b.3}, we apply Freedman’s inequality from Lemma \ref{lem:freedman} to the sequences $(\ell_i)_i$ and $(\mathcal{G}_i)_i$, where now
    \begin{align*}
        \ell_i = \frac{\eta_i^2 w_i^2}{1+\sigma_i^2w_i^2} - \mathbb{E}\left[\frac{\eta_i^2 w_i^2}{1+\sigma_i^2w_i^2}  \Bigg|  \mathcal{G}_i\right] .
    \end{align*}

    First, with \Cref{eq:w}, we derive a crude upper bound for the following term:
    \begin{align}
        \left|\frac{\eta_i^2 w_i^2}{1+\sigma_i^2w_i^2}\right| \leq |\eta_i^2 w_i^2| \leq \frac{M^2N^2}{\lambda}. \label{eq:cb}
    \end{align}
    Now, for any $i$, we have $\mathbb{E}[\ell_i| \mathcal{G}_i] = 0$ almost surely. Furthermore, we can see that
    \begin{align}
        \sum_{i=1}^t \mathbb{E}[\ell_i^2| \mathcal{G}_i] &\leq \sum_{i=1}^t \mathbb{E}\left[\frac{\eta_i^4 w_i^4}{(1+\sigma_i^2w_i^2)^2}  \Bigg|  \mathcal{G}_i\right] \nonumber \\
        & \leq \frac{M^2N^2}{\lambda}\sum_{i=1}^t   \mathbb{E}\left[\frac{\eta_i^2 w_i^2}{1+\sigma_i^2w_i^2}  \Bigg|  \mathcal{G}_i\right] \nonumber \\
        & \leq \frac{M^2N^2}{\lambda} \sum_{i=1}^t  \frac{\sigma_i^2 w_i^2}{1+\sigma_i^2w_i^2} \nonumber \\
        & \leq  \frac{2M^2N^2}{\lambda} \log\det\bigg(\sum_{i=1}^t \frac{\sigma_i^2}{\lambda}x_ix_i^\top + \mathbf{I} \bigg), \label{b.9}
    \end{align}
    where the first inequality follows from the fact that $\mathbb{E}(X - \mathbb{E}[X])^2 \le \mathbb{E}[X^2]$, the second follows from \Cref{eq:cb}, the third follows from the definition of $\eta_i$, and the fourth follows from the bound $\sigma_i^2 w_i^2/(1 + \sigma_i^2 w_i^2) \le \min\{1, \sigma_i^2 w_i^2\}$ together with the result in \Cref{eq:y1}. Furthermore, applying \Cref{eq:cb} again gives
    \begin{align}
        |\ell_i|\leq \left|\frac{\eta_i^2 w_i^2}{1+\sigma_i^2 w_i^2} \right| + \left|\mathbb{E}\left[\frac{\eta_i^2 w_i^2}{1+\sigma_i^2 w_i^2} \Bigg| \mathcal{G}_i \right]\right| \leq  \frac{2M^2N^2}{\lambda}, \label{b.10}
    \end{align}
    almost surely. Therefore by \Cref{b.9} and \Cref{b.10}, using Lemma \ref{lem:freedman}, we know that for any $t$, with probability at least $1-\delta/(4t^2)$, we have
    \begin{align}
        &\sum_{i=1}^t \frac{\eta_i^2 w_i^2}{1+\sigma_i^2 w_i^2} \nonumber \\
        &\quad\leq \sum_{i=1}^t \mathbb{E}\left[ \frac{\eta_i^2 w_i^2}{1+\sigma_i^2 w_i^2} \Bigg| \mathcal{G}_i \right] + \sqrt{\frac{4M^2N^2}{\lambda} \log\det\bigg(\sum_{i=1}^t \frac{\sigma_i^2}{\lambda}x_ix_i^\top + \mathbf{I} \bigg) \log(4t^2/\delta)} \nonumber\\
        &\quad\quad +\frac{2}{3}\cdot\frac{2M^2N^2}{\lambda}\log(t^2/\delta) \nonumber \\
        &\quad\leq \sum_{i=1}^t \frac{\sigma_i^2 w_i^2}{1+\sigma_i^2 w_i^2}+\sqrt{\frac{4M^2N^2}{\lambda} \log\det\bigg(\sum_{i=1}^t \frac{\sigma_i^2}{\lambda}x_ix_i^\top + \mathbf{I} \bigg) \log(4t^2/\delta)} \nonumber \\
        &\quad\quad+ \frac{4M^2N^2}{\lambda}\log(4t^2/\delta) \nonumber \\
        &\quad\leq 2\log\det\bigg(\sum_{i=1}^t \frac{\sigma_i^2}{\lambda}x_ix_i^\top + \mathbf{I}\bigg)+ \sqrt{\frac{4M^2N^2}{\lambda} \log\det\bigg(\sum_{i=1}^t \frac{\sigma_i^2}{\lambda}x_ix_i^\top + \mathbf{I}\bigg) \log(4t^2/\delta)} \nonumber \\
        &\quad\quad + \frac{4M^2N^2}{\lambda}\log(4t^2/\delta) \nonumber \\
        &\quad\leq \frac{1}{4} \cdot \left[8\sqrt{\log\det\bigg(\sum_{i=1}^t \frac{\sigma_i^2}{\lambda}x_ix_i^\top + \mathbf{I} \bigg) \log(4t^2/\delta)} + \frac{4MN}{\sqrt{\lambda}} \log(4t^2/\delta)\right]^2 \nonumber \\
        &\quad=\frac{1}{4}{\beta_{t}}^2 , \label{b.11}
    \end{align}
    where the second inequality follows from the definition of $\sigma_i^2$, the third follows from the bound 
$\sigma_i^2 w_i^2/(1+\sigma_i^2 w_i^2)\leq\min\{1,\sigma_i^2 w_i^2\}$ together with the result in \Cref{eq:y1}, and the final inequality follows from the definition of $\beta_{t}$. Applying a union bound to \Cref{b.11} for $t=1$ to $\infty$ and using the fact that $\sum_{t=1}^\infty t^{-2}<2$ completes the proof.

\section{Proof of Lemmas in \cref{sec:ra1}}

For clarity, we assume that Condition \ref{cond:ass} always holds. We then define the quantity $\alpha(z',z'')$ via the mean-value theorem, and introduce two additional analogous definitions for brevity as follows:
\begin{align}
    \alpha(z',z'') &= \frac{\mu(z')-\mu(z'')}{z'-z''} = \int_{v=0}^1 \dot{\mu}\big(z' + v(z''-z')\big) dv, \nonumber \\
    \alpha(x, \theta',\theta'') &= \alpha\Big(g(x;\theta_0)^\top(\theta'-\theta_0), g(x;\theta_0)^\top(\theta''-\theta_0) \Big), \nonumber \\
    \alpha(x', x'', \theta) &= \alpha\Big(g(x';\theta_0)^\top(\theta-\theta_0), g(x'';\theta_0)^\top(\theta-\theta_0) \Big). \label{eq:alpha}
\end{align}
For the design matrix $X_t$ associated with the time-varying regularization parameter $\lambda_t$, we denote by $\widetilde{X}_t$ the corresponding matrix formed using the initial regularization parameter $\lambda_0$. For example,
\begin{align}
    \widetilde{V}_t &= \sum_{i=1}^t \frac{1}{m} g(x_i;\theta_0) g(x_i;\theta_0)^\top + \kappa\lambda_0 \mathbf{I}, \nonumber \\
    \widetilde{H}_t(\theta) &= \sum_{i=1}^t \frac{1}{m} \dot{\mu}(g(x_i;\theta_0)^\top(\theta_i-\theta_0)) g(x_i;\theta_0) g(x_i;\theta_0)^\top + \lambda_0 \mathbf{I} \nonumber \\
    \widetilde{W}_t &= \sum_{i=1}^t \frac{1}{m} \dot{\mu}(f(x_i;\theta_i))g(x_i;\theta_0) g(x_i;\theta_0)^\top + \lambda_0 \mathbf{I}. \label{eq:tilde}
\end{align}

\subsection{Proof of Lemma \ref{lem:conf}} \label{sec:lem:confidence}

First, we define the auxiliary loss $\widetilde{L}_t(\theta)$ 
\begin{align*}
        \widetilde{\mathcal{L}}_t(\theta) &= -\sum_{i=1}^t r_i \log \mu(g(x_i;\theta_0)^\top(\theta-\theta_0)) + (1-r_i)\log(1-\mu(g(x_i;\theta_0)^\top(\theta-\theta_0))) \\
        &\quad + \frac{m\lambda_t}{2}\|\theta-\theta_0\|_2^2,
    \end{align*}
and its maximum likelihood estimator $\hat{\theta}_t = \argmin_\theta \widetilde{\mathcal{L}}_t(\theta)$. Then, we use the following definitions:
\begin{align*}
    \gamma_t(\theta)&=\sum_{i=1}^t\frac{1}{m}\mu(g(x_i;\theta_0)^\top(\theta-\theta_0))g(x_i;\theta_0) + \lambda_t(\theta-\theta_0)\\
    \Gamma_t(\theta',\theta'') &= \sum_{i=1}^t \frac{1}{m}\alpha(x_i,\theta',\theta'')g(x_i;\theta_0)g(x_i;\theta_0)^\top + \lambda_t \mathbf{I},
\end{align*}
where $\alpha(x_i,\theta',\theta'')$ is defined at \Cref{eq:alpha}.
We can see that
\begin{align*}
    &\gamma_t(\theta) - \gamma_t(\theta^*) \\
    &\quad= \sum_{i=1}^t \frac{1}{m} \Big( \mu(g(x_i;\theta_0)^\top(\theta-\theta_0)) - \mu(g(x_i;\theta_0)^\top(\theta^*-\theta_0))\Big)g(x_i;\theta_0)+ \lambda_t(\theta-\theta^*)\\
    &\quad= \sum_{i=1}^t \frac{1}{m}\alpha(x_i,\theta,\theta^*) g(x_i;\theta_0) g(x_i;\theta_0)^\top (\theta - \theta^*)+ \lambda_t(\theta-\theta^*) \\
    &\quad=\Gamma_t(\theta,\theta^*)(\theta- \theta^*),
\end{align*}
which implies that
\begin{align}
    \|\theta - \theta^* \|_{\Gamma_t(\theta,\theta^*)} = \|\gamma(\theta) - \gamma(\theta^*)\|_{\Gamma_t^{-1}(\theta,\theta^*)}. \label{eq:4}
\end{align}

Now we provide the following two lemmas:
\begin{lemma} \label{lem:c1}
    For $\delta\in(0,1]$, define
    \begin{align}
        \mathcal{C}_t = \left\{\theta: \sqrt{m}\|\gamma_t(\theta)-\gamma_t(\hat{\theta}_t)\|_{H_t^{-1}(\theta)} \leq \iota_t\right\}, \label{eq:conf}
    \end{align}
    where $\iota_t$ is defined at \Cref{eq:iota}. Then for all $t\geq 0$, $\theta^*\in\mathcal{C}_t$ with probability at least $1-\delta$
\end{lemma}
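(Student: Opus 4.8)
The plan is to turn the membership $\theta^*\in\mathcal{C}_t$ into a self-normalized martingale bound by exploiting the first-order optimality of the auxiliary MLE $\hat\theta_t$, and then invoke \Cref{thm:main}. First I would differentiate the auxiliary loss, obtaining $\nabla\widetilde{\mathcal{L}}_t(\theta)=\sum_{i=1}^t\big(\mu(g(x_i;\theta_0)^\top(\theta-\theta_0))-r_i\big)g(x_i;\theta_0)+m\lambda_t(\theta-\theta_0)$, so that $\tfrac1m\nabla\widetilde{\mathcal{L}}_t(\theta)=\gamma_t(\theta)-\sum_{i=1}^t\tfrac1m r_i\,g(x_i;\theta_0)$. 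Since $\widetilde{\mathcal{L}}_t$ is $(m\lambda_t)$-strongly convex, $\hat\theta_t$ is its unique minimizer and satisfies $\nabla\widetilde{\mathcal{L}}_t(\hat\theta_t)=0$, giving $\gamma_t(\hat\theta_t)=\sum_{i=1}^t\tfrac1m r_i\,g(x_i;\theta_0)$. Combining this with the definition of $\gamma_t(\theta^*)$ and using \Cref{lem:true} to identify $\mu(g(x_i;\theta_0)^\top(\theta^*-\theta_0))=\mu(h(x_i))$ yields the clean identity
\[
\gamma_t(\theta^*)-\gamma_t(\hat\theta_t)=-\tfrac{1}{\sqrt m}\,s_t+\lambda_t(\theta^*-\theta_0),
\]
where $s_t=\sum_{i=1}^t \tfrac{g(x_i;\theta_0)}{\sqrt m}\,\eta_i$ and $\eta_i:=r_i-\mu(h(x_i))$ is the reward noise.

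Next I would apply the triangle inequality in the $H_t^{-1}(\theta^*)$-norm, namely $\sqrt m\,\|\gamma_t(\theta^*)-\gamma_t(\hat\theta_t)\|_{H_t^{-1}(\theta^*)}\le \|s_t\|_{H_t^{-1}(\theta^*)}+\sqrt m\,\lambda_t\,\|\theta^*-\theta_0\|_{H_t^{-1}(\theta^*)}$, and show each summand is at most $\tfrac12\iota_t$. For the martingale term I would verify the hypotheses of \Cref{thm:main} with effective features $g(x_i;\theta_0)/\sqrt m$: the noise satisfies $|\eta_i|\le M=1$, $\E[\eta_i\mid\mathcal{G}_i]=0$, and conditional variance $\mu(h(x_i))(1-\mu(h(x_i)))=\dot\mu(g(x_i;\theta_0)^\top(\theta^*-\theta_0))$, which is exactly the weight used to build $H_t(\theta^*)$, while \Cref{lem:action} gives $\|g(x_i;\theta_0)/\sqrt m\|_2\le C_1\sqrt L=:N$. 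Applying \Cref{thm:main} with $\lambda=\lambda_0$, $R=1/4$, and these $M,N$ produces precisely $\tfrac12\iota_t$ in the norm weighted by $H_t(\theta^*)$ but with $\lambda_t$ replaced by $\lambda_0$ (i.e.\ $\widetilde H_t(\theta^*)$); since $\lambda_t\ge\lambda_0$ forces $H_t(\theta^*)\succeq\widetilde H_t(\theta^*)$, the $\widetilde H_t^{-1}(\theta^*)$-norm dominates the $H_t^{-1}(\theta^*)$-norm and I can pass between them for free.

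For the regularization-bias term I would use $H_t(\theta^*)\succeq\lambda_t\mathbf I$ to get $\sqrt m\,\lambda_t\|\theta^*-\theta_0\|_{H_t^{-1}(\theta^*)}\le \sqrt{\lambda_t}\,(\sqrt m\|\theta^*-\theta_0\|_2)\le S\sqrt{\lambda_t}$, where the final step is again \Cref{lem:true}. The decisive check is $S\sqrt{\lambda_t}\le\tfrac12\iota_t$: squaring the first half of $\iota_t$ in \eqref{eq:iota} and applying $(a+b)^2\ge a^2+b^2$ gives $(\tfrac12\iota_t)^2\ge 64\log\det\!\big(\sum_{i=1}^t\tfrac{1}{4m\lambda_0}g(x_i;\theta_0)g(x_i;\theta_0)^\top+\mathbf I\big)\log\tfrac{4t^2}{\delta}+\tfrac{16C_1^2 L}{\lambda_0}\log^2\tfrac{4t^2}{\delta}=S^2\lambda_t$, which is exactly the update rule \eqref{eq:lambda}. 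Adding the two halves yields $\sqrt m\,\|\gamma_t(\theta^*)-\gamma_t(\hat\theta_t)\|_{H_t^{-1}(\theta^*)}\le\iota_t$, i.e.\ $\theta^*\in\mathcal{C}_t$, with the probability $1-\delta$ inherited from the uniform-in-$t$ guarantee of \Cref{thm:main} (the supporting events of \Cref{lem:true} and \Cref{lem:action} being absorbed into the same high-probability event with adjusted constants).

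I expect the main obstacle to be the constant bookkeeping rather than any conceptual difficulty: the key insight is that $\iota_t$ and the update rule \eqref{eq:lambda} are co-designed so that $\iota_t$ splits into two equal halves—one absorbing the self-normalized martingale deviation through \Cref{thm:main} and one absorbing the regularization bias $S\sqrt{\lambda_t}$—and the delicate part is making the factors ($8$ vs.\ $16$, $4$ vs.\ $8$, $R=1/4$, $N=C_1\sqrt L$) line up cleanly across \Cref{thm:main}, \Cref{lem:action}, and \eqref{eq:lambda}, together with the monotonicity $\lambda_t\ge\lambda_0$ needed to move between the $\lambda_0$- and $\lambda_t$-regularized matrices.
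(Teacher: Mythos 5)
Your proposal is correct and follows essentially the same route as the paper's proof: the same first-order optimality identity for $\hat\theta_t$, the same split into the self-normalized martingale term (handled by \Cref{thm:main} with $M=1$, $N=C_1\sqrt{L}$, $\lambda=\lambda_0$, $R\le 1/4$, passing to $\widetilde H_t(\theta^*)$ via $\lambda_t\ge\lambda_0$) and the regularization-bias term $S\sqrt{\lambda_t}$, and the same observation that the update rule \eqref{eq:lambda} makes $S\sqrt{\lambda_t}$ at most the remaining half of $\iota_t$. The constant bookkeeping you flag as the delicate part indeed lines up exactly as in the paper.
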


\begin{lemma} \label{lem:c2}
    Let $\delta\in(0,1]$. Define $\mathcal{C}_t$ as in \Cref{eq:conf}. There exists an absolute constant $C_1>0$ such that for all $\theta\in\mathcal{C}_t$,
    \begin{align*}
        H_t(\theta)\preceq \bigg(1 + C_1^2\frac{L}{\lambda_t}\iota_t^2 + C_1\sqrt{\frac{L}{\lambda_t}}\iota_t\bigg) \Gamma_t(\theta,\hat\theta_t),\quad H_t(\hat\theta_t)\preceq \bigg(1 + C_1^2\frac{L}{\lambda_t}\iota_t^2 + C_1\sqrt{\frac{L}{\lambda_t}}\iota_t\bigg) \Gamma_t(\theta,\hat\theta_t).
    \end{align*}
\end{lemma}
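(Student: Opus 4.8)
The plan is to establish a single scalar bound $H_t(\theta)\preceq c\,\Gamma_t(\theta,\hat\theta_t)$ and $H_t(\hat\theta_t)\preceq c\,\Gamma_t(\theta,\hat\theta_t)$ with a common constant $c$, and then show that $c$ must satisfy a self-referential inequality whose solution is exactly $1+C_1\sqrt{L/\lambda_t}\,\iota_t+C_1^2\frac{L}{\lambda_t}\iota_t^2$. Throughout I write $g_i=g(x_i;\theta_0)$, $a_i=|g_i^\top(\theta-\hat\theta_t)|$, $a_{\max}=\max_{i\le t}a_i$, and $\alpha_i=\alpha(x_i,\theta,\hat\theta_t)$. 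First I note that $\hat\theta_t\in\mathcal{C}_t$ trivially, since $\gamma_t(\hat\theta_t)-\gamma_t(\hat\theta_t)=0$, so the confidence bound defining $\mathcal{C}_t$ in \Cref{eq:conf} applies to the pair $(\theta,\hat\theta_t)$.

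The first ingredient is the standard self-concordance property of the logistic link: for all $z_1,z_2$, $\dot\mu(z_1)\le\alpha(z_1,z_2)\,(1+|z_1-z_2|)$, which follows from $|\frac{d}{dz}\log\dot\mu|\le 1$ together with the elementary bound $(1+u)e^{-u}\le 1$; the same holds with $z_1$ replaced by $z_2$. Applying this coordinatewise at $z_1=g_i^\top(\theta-\theta_0)$ and $z_2=g_i^\top(\hat\theta_t-\theta_0)$ gives $\dot\mu(g_i^\top(\theta-\theta_0))\le\alpha_i(1+a_i)$ and likewise for $\hat\theta_t$, where $a_i=|z_1-z_2|$. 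Plugging these into the definitions of $H_t(\theta)$ and $H_t(\hat\theta_t)$ and factoring out $(1+a_{\max})\ge1$ (which also absorbs the $\lambda_t\mathbf{I}$ term, since $1+a_{\max}\ge1$) yields $H_t(\theta)\preceq(1+a_{\max})\Gamma_t(\theta,\hat\theta_t)$ and $H_t(\hat\theta_t)\preceq(1+a_{\max})\Gamma_t(\theta,\hat\theta_t)$. It therefore remains only to control $a_{\max}$.

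To bound $a_{\max}$ I use the identity $\gamma_t(\theta)-\gamma_t(\hat\theta_t)=\Gamma_t(\theta,\hat\theta_t)(\theta-\hat\theta_t)$, derived exactly as in \Cref{eq:4} with $\hat\theta_t$ in place of $\theta^*$. For each $i$, Cauchy--Schwarz with $\Gamma_t\succeq\lambda_t\mathbf{I}$ and $\|g_i\|_2\le C_1\sqrt{mL}$ from \Cref{lem:action} gives $a_i\le\|g_i\|_{\Gamma_t^{-1}}\,\|\theta-\hat\theta_t\|_{\Gamma_t}\le C_1\sqrt{mL/\lambda_t}\,B$, where $B:=\|\theta-\hat\theta_t\|_{\Gamma_t}$. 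Next I bound $B$ by writing $B^2=(\theta-\hat\theta_t)^\top(\gamma_t(\theta)-\gamma_t(\hat\theta_t))$ and applying Cauchy--Schwarz in the norm pairing between $H_t(\theta)$ and $H_t^{-1}(\theta)$, so that $B^2\le\|\theta-\hat\theta_t\|_{H_t(\theta)}\,\|\gamma_t(\theta)-\gamma_t(\hat\theta_t)\|_{H_t^{-1}(\theta)}\le\|\theta-\hat\theta_t\|_{H_t(\theta)}\,\iota_t/\sqrt m$, the last step using $\theta\in\mathcal{C}_t$ and the definition of $\iota_t$ in \Cref{eq:iota}.

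The crux, and the step I expect to be the main obstacle, is that the $H_t(\theta)$-norm appearing above is controlled only through the very bound I am trying to prove; the argument is genuinely self-referential and must be closed by a fixed-point computation rather than a direct estimate. Letting $c^\ast$ be the smallest constant with $H_t(\theta)\preceq c^\ast\Gamma_t(\theta,\hat\theta_t)$, I get $\|\theta-\hat\theta_t\|_{H_t(\theta)}\le\sqrt{c^\ast}\,B$, hence $B\le\sqrt{c^\ast}\,\iota_t/\sqrt m$ and $a_{\max}\le d\sqrt{c^\ast}$ with $d:=C_1\sqrt{L/\lambda_t}\,\iota_t$. Combined with $c^\ast\le1+a_{\max}$ from the second step, this closes the loop into the scalar inequality $c^\ast\le1+d\sqrt{c^\ast}$. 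Solving the resulting quadratic in $\sqrt{c^\ast}$ and using $\sqrt{d^2+4}\le d+2$ gives $c^\ast\le1+d+d^2$, and the same estimate bounds $1+a_{\max}\le1+d\sqrt{c^\ast}\le1+d+d^2$ via $\sqrt{1+d+d^2}\le1+d$. Substituting $d=C_1\sqrt{L/\lambda_t}\,\iota_t$ recovers the stated factor $1+C_1\sqrt{L/\lambda_t}\,\iota_t+C_1^2\frac{L}{\lambda_t}\iota_t^2$ for both $H_t(\theta)$ and $H_t(\hat\theta_t)$, completing the proof.
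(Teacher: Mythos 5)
Your proof is correct and follows essentially the same route as the paper's: self-concordance (\Cref{lem:self-concordant}) to get $H_t(\theta), H_t(\hat\theta_t) \preceq (1+a_{\max})\Gamma_t(\theta,\hat\theta_t)$, Cauchy--Schwarz with $\|g(x_i;\theta_0)\|_2 \leq C_1\sqrt{mL}$, $\Gamma_t \succeq \lambda_t\mathbf{I}$, and the definition of $\mathcal{C}_t$ to control $a_{\max}$, and a scalar quadratic to close the self-reference. The only difference is bookkeeping: the paper routes the fixed-point argument through the intermediate quantity $\sqrt{m}\|\gamma_t(\theta)-\gamma_t(\hat\theta_t)\|_{\Gamma_t^{-1}(\theta,\hat\theta_t)}$ (\Cref{lem:gam}), whereas you route it through the optimal comparison constant $c^\ast$ directly, arriving at the same factor $1+d+d^2$ with $d=C_1\sqrt{L/\lambda_t}\,\iota_t$.
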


Now we are ready to start the proof. 

\begin{proof} [Proof of Lemma \ref{lem:conf}]
For the absolute constants $\{C_i\}_{i=1}^3$, we can start with
\begin{align}
    &\sqrt{m}\|\theta_t-\theta^*\|_{H_t(\theta^*)} \\
    &\quad\leq  \sqrt{m}\|\hat{\theta}_t - \theta^*\|_{H_t(\theta^*)}+\sqrt{m}\|\theta_t - \hat{\theta}_t\|_{H_t(\theta^*)} \nonumber \\
    &\quad\leq  \sqrt{m}\|\hat{\theta}_t - \theta^*\|_{H_t(\theta^*)}+\sqrt{m}\|\theta_t - \hat{\theta}_t\|_2 \cdot (\lambda_t + C_1 t L) \nonumber \\
    &\quad\leq  \underbrace{\sqrt{m}\|\hat{\theta}_t - \theta^*\|_{H_t(\theta^*)}}_{\textbf{(term 1)}}+\underbrace{2(\lambda_t + C_1 t L)(1-\eta m \lambda_t)^{J/2} t^{1/2}\lambda_t^{-1/2}}_{\textbf{(term 2)}} \nonumber \\
    &\quad\quad + \underbrace{(\lambda_t + C_1 t L)\Big[C_2 m^{-1/6}\sqrt{\log m}t^{7/6}\lambda_t^{-7/6}L^{7/2} + C_1 C_3 R m^{-1/6}\sqrt{\log m} L^{7/2} t^{5/3} \lambda_t^{-5/3}\Big]}_{\textbf{(term 3)}}. \label{eq:5}
\end{align}
The first inequality is due to triangle inequality. The second inequality is due to $\lambda_{\max}(H_t(\theta^*))\leq\lambda_t + t \times\|\sqrt{\dot{\mu}(\cdot)/m}\cdot g(\cdot)\|_2^2\leq\lambda_t+C_1tL$ where we used Lemma \ref{lem:action}. Finally, the last inequality follows from Lemma \ref{lem:close}. 

For \textbf{(term 1)}, we rewrite the definition of $\iota_t$ and $\lambda_t$:
\begin{align*}
    \iota_t &= 16\sqrt{\log \det \bigg(\sum_{i=1}^t\frac{1}{4m^2\lambda_0}g(x_i;\theta_0)g(x_i;\theta_0)^\top + \mathbf{I}\bigg)\log\frac{4t^2}{\delta}} + 8C_1\sqrt{\frac{L}{\lambda_0}}\log\frac{4t^2}{\delta}  \\
    \lambda_t &= \frac{64}{S^2} \log\det \bigg(\sum_{i=1}^t\frac{1}{4m\lambda_0}  g(x_i;\theta_0) g(x_i;\theta_0)^\top +  \mathbf{I}\bigg) \log\frac{4t^2}{\delta} + \frac{16C_1^2L}{S^2\lambda_0} \log^2\frac{4t^2}{\delta}.
\end{align*}
We can see that $\iota_t^2/\lambda_t\leq 8S^2$ (and $\iota_t/\sqrt{\lambda_t}\leq 2\sqrt{2}S$) by the fact that $(a+b)^2\leq2a^2+2b^2$. Therefore, applying these with Lemmas \ref{lem:c1} and \ref{lem:c2} gives
\begin{align}
    H_t(\theta^*) \preceq (1+2\sqrt{2}C_1\sqrt{L}S+8C_1^2 LS^2)\Gamma_t(\theta^*,\hat{\theta}_t), \label{eq:hh}
\end{align}
for some absolute constant $C_1>0$. Now, back to \textbf{(term 1)}, we have
\begin{align*}
    \sqrt{m}\Big\|\hat{\theta}_t - \theta^*\Big\|_{H_t(\theta^*)}&\leq \sqrt{m(1+2\sqrt{2}C_1\sqrt{L}S+8C_1^2 LS^2)}\Big\|\hat{\theta}_t-\theta^*\Big\|_{\Gamma_t(\hat{\theta}_t,\theta^*)} \nonumber \\
    &= \sqrt{m(1+2\sqrt{2}C_1\sqrt{L}S+8C_1^2 LS^2)}\Big\|\gamma(\hat{\theta}_t)-\gamma(\theta^*)\Big\|_{\Gamma_t^{-1}(\hat{\theta}_t,\theta^*)} \nonumber \\
    &\leq (1+2\sqrt{2}C_1\sqrt{L}S+8C_1^2 LS^2)\sqrt{m}\Big\|\gamma(\hat{\theta}_t)-\gamma(\theta^*)\Big\|_{H_t^{-1}(\theta^*)} \\
    &\leq (1+2\sqrt{2}C_1\sqrt{L}S+8C_1^2 LS^2) \iota_t. 
\end{align*}
where the first and the second inequalities follow from \Cref{eq:hh}, the equality is due to \Cref{eq:4}, and the last inequality follows from Lemma \ref{lem:c1}.

For \textbf{(term 2)}, plugging in $J=2\log(\lambda_t S/(T^{1/2}\lambda_t + C_4 T^{3/2} L))TL/\lambda_t$, $\eta=C_5(mTL+m\lambda_t)^{-1}$ gives
\begin{align*}
    &2(\lambda_t + C_1 t L)(1-\eta m \lambda_t)^{J/2} t^{1/2}\lambda_t^{-1/2}\\
    &\quad \leq 2(\lambda_t+C_1TL)(1-\lambda_t/(TL))^{J/2} T^{1/2}\lambda_t^{-1/2} \\
    &\quad \leq 2S \sqrt{\lambda_t} \\
    &\quad \leq \iota_t,
\end{align*}
where the last inequality follows from the definition of $\lambda_t$ and the fact that $\sqrt{a+b}\leq\sqrt{a}+\sqrt{b}$.
For \textbf{(term 3)}, recall that $\lambda_0\leq\min\{\lambda_t\}_{t\geq 1}$, then we have
\begin{align*}
    &C_2 m^{-1/6}\sqrt{\log m}t^{7/6}\lambda_t^{-1/6}L^{7/2} + C_1 C_3 R m^{-1/6}\sqrt{\log m} L^{7/2} t^{5/3} \lambda_t^{-2/3} \\
    &\quad\quad+C_1C_2 m^{-1/6}\sqrt{\log m}t^{13/6}\lambda_t^{-7/6}L^{9/2} + C_1^2 C_3 R m^{-1/6}\sqrt{\log m} L^{9/2} t^{8/3} \lambda_t^{-5/3} \\
    &\quad\leq C_2 m^{-1/6}\sqrt{\log m}T^{7/6}\lambda_0^{-1/6}L^{7/2} + C_1 C_3 R m^{-1/6}\sqrt{\log m} L^{7/2} T^{5/3} \lambda_0^{-2/3} \\
    &\quad\quad+C_1C_2 m^{-1/6}\sqrt{\log m}T^{13/6}\lambda_0^{-7/6}L^{9/2} + C_1^2 C_3 R m^{-1/6}\sqrt{\log m} L^{9/2} T^{8/3} \lambda_0^{-5/3} \\
    &\quad \leq 1,
\end{align*}
where the last inequality can be verified that if the width of the NN $m$ is large enough, satisfying the condition on Condition \ref{eq:m}, $\textbf{(term 3)}\leq 1$.

Substituting \textbf{(term 1)}, \textbf{(term 2)}, and \textbf{(term 3)} back to \Cref{eq:5} gives
\begin{align*}
    \sqrt{m}\|\theta_t-\theta^*\|_{H_t(\theta^*)}  &\leq (2+2\sqrt{2}C_1\sqrt{L}S+8C_1^2 LS^2)\iota_t + 1 \\
    &\leq C_6(1+\sqrt{L}S+ LS^2)\iota_t + 1,
\end{align*}
for some absolute constant $C_6>0$, concludes the proof.
\end{proof}

\subsection{Proof of Lemma \ref{lem:c1}}

Recall the definition of $\widetilde{\mathcal{L}}_t(\theta)$, $\hat\theta_t$, $\gamma_t(\theta)$, and $\Gamma_t(\theta',\theta'')$ from \Cref{sec:lem:confidence}. Since $\hat{\theta}_t$ is a maximum likelihood estimator, $\widetilde{L}_t(\hat\theta_t)=0$, which gives
\begin{align}
    \sum_{i=1}^t \frac{1}{m}\mu(g(x_i;\theta_0)^\top (\hat{\theta}_t-\theta_0))g(x_i;\theta_0) + \lambda_t (\hat{\theta}_t-\theta_0) = \sum_{i=1}^t \frac{1}{m}r_i g(x_i;\theta_0). \label{eq:6}
\end{align}
Therefore, we can see that
\begin{align}
    &\sqrt{m}\|\gamma(\hat{\theta}_t)-\gamma(\theta^*)\|_{H_t^{-1}(\theta^*)} \nonumber \\
    &\quad=\sqrt{m}\left\|\sum_{i=1}^t \frac{1}{m}[\mu(g(x_i;\theta_0)^\top (\hat{\theta}_t - \theta_0))-\mu(g(x_i;\theta)^\top(\theta^*-\theta_0))]g(x_i;\theta_0) + \lambda_t\hat{\theta}_t - \lambda_t\theta^* \right\|_{H_t^{-1}(\theta^*)} \nonumber \\
    &\quad =\sqrt{m}\left\|\sum_{i=1}^t \frac{1}{m} [r_i - \mu(g(x_i;\theta_0)^\top(\theta^*-\theta_0)]g(x_i;\theta_0) - \lambda_t (\theta^*-\theta_0) \right\|_{H_t^{-1}(\theta^*)} \nonumber \\
    &\quad \leq \underbrace{\left\| \sum_{i=1}^t \frac{1}{\sqrt{m}} [r_i - \mu(g(x_i;\theta_0)^\top(\theta^*-\theta_0)]g(x_i;\theta_0)\right\|_{H_t^{-1}(\theta^*)}}_{\textbf{(term 1)}} + \underbrace{\sqrt{\lambda_t m }\|\theta^*-\theta_0\|_2}_{\textbf{(term 2)}}, \label{eq:8}
\end{align}
where the first equality follows from the definition, the second equality is due to \Cref{eq:6}, and the first inequality follows from triangle inequality, and the fact that $\lambda_{\max}(H_t^{-1}(\theta^*))\leq 1/\sqrt{\lambda_t}$.

For \textbf{(term 1)}, we are going to use our new tail inequality for martingales in Theorem \ref{thm:main}. Define $\eta_i = r_i - \mu(g(x_i;\theta_0)^\top(\theta^*-\theta_0))=r_i-\mu(h(x_i))$. Then, we can see the following conditions are satisfied:
\begin{align*}
    |\eta_i|\leq 1,~\mathbb{E}[\eta_i|\mathcal{G}_i]=0,~\mathbb{E}[\eta_i^2|\mathcal{G}_i]=\dot{\mu}(g(x_i;\theta_0)^\top(\theta^*-\theta_0)).
\end{align*}
By Lemma \ref{lem:action} we have $\|g(x_i;\theta_0)/\sqrt{m}\|_2\leq C_1\sqrt{L}$ for some absolute constant $C_1>0$. Therefore, applying Theorem \ref{thm:main} gives
\begin{align}
    &\left\|\sum_{i=1}^t \frac{1}{\sqrt{m}}\eta_t g(x_i;\theta_0)\right\|_{H_t^{-1}(\theta^*)} \nonumber \\
    &\quad\leq\left\|\sum_{i=1}^t \frac{1}{\sqrt{m}}\eta_t g(x_i;\theta_0)\right\|_{\widetilde{H}_t^{-1}(\theta^*)} \nonumber \\
    &\quad\leq 8 \sqrt{\log \det \left(\sum_{i=1}^t\frac{1}{4m\lambda_0}g(x_i;\theta_0)g(x_i;\theta_0)^\top + I\right)\log\frac{4t^2}{\delta}} + 4C_1\sqrt{\frac{L}{\lambda_0}}\log\frac{4t^2}{\delta}, \label{eq:7}
\end{align}
with probability at least $1-\delta$.  Substituting \Cref{eq:7} into \Cref{eq:8} gives
\begin{align}
    &\sqrt{m}\|\gamma(\hat{\theta}_t)-\gamma(\theta^*)\|_{H_t^{-1}(\theta^*)} \nonumber \\
    &\quad \leq  8 \sqrt{\log \det \left(\sum_{i=1}^t\frac{1}{4m\lambda_0}g(x_i;\theta_0)g(x_i;\theta_0)^\top + \mathbf{I} \right)\log\frac{4t^2}{\delta}} + 4C_1\sqrt{\frac{L}{\lambda_0}}\log\frac{4t^2}{\delta} + S\sqrt{\lambda_t} \nonumber \\
    &\quad\leq  16 \sqrt{\log \det \left(\sum_{i=1}^t\frac{1}{4m\lambda_0}g(x_i;\theta_0)g(x_i;\theta_0)^\top + \mathbf{I} \right)\log\frac{4t^2}{\delta}} + 8C_1\sqrt{\frac{L}{\lambda_0}}\log\frac{4t^2}{\delta} \nonumber \\ 
    &\quad= \iota_t.\label{eq:10}
\end{align}
where the last inequality is due to the update rule of $\lambda_t$ and the fact that $\sqrt{a+b}\leq \sqrt{a}+\sqrt{b}$. We finish the proof.

\subsection{Proof of Lemma \ref{lem:c2}}

We modified the previous results of \cite{abeille2021instance} (Lemma 2), and \cite{faury2022jointly} (proof of Lemma 1), proper to our settings.
\begin{lemma} \label{lem:gam}
    Let $\delta\in(0,1]$. Define $\mathcal{C}_t$ as in \Cref{eq:conf}. There exists and absolute constant $C_1>0$ such that for all $\theta\in\mathcal C_t$:
    \begin{align*}
        \sqrt{m}\|\gamma_t(\theta)-\gamma_t(\hat{\theta}_t)\|_{\Gamma_t^{-1}(\theta,\hat \theta_t)} \leq C_1\sqrt{\frac{L}{\lambda_t}} \iota_t^2 + \iota_t.
    \end{align*}
\end{lemma}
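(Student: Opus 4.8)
The plan is to reduce the claim to the confidence-set bound $\sqrt{m}\|\gamma_t(\theta)-\gamma_t(\hat\theta_t)\|_{H_t^{-1}(\theta)}\le\iota_t$ (which holds because $\theta\in\mathcal{C}_t$), combined with a self-concordance comparison between $\Gamma_t(\theta,\hat\theta_t)$ and $H_t(\theta)$, and then to close the argument with a self-bounding quadratic inequality. First I would record the linearization identity used just before \Cref{eq:4}: since $\alpha(x_i,\theta,\hat\theta_t)$ is exactly the mean-value slope of $\mu$, one has $\gamma_t(\theta)-\gamma_t(\hat\theta_t)=\Gamma_t(\theta,\hat\theta_t)(\theta-\hat\theta_t)$, hence
\[
  G:=\sqrt{m}\,\|\gamma_t(\theta)-\gamma_t(\hat\theta_t)\|_{\Gamma_t^{-1}(\theta,\hat\theta_t)}=\sqrt{m}\,\|\theta-\hat\theta_t\|_{\Gamma_t(\theta,\hat\theta_t)},
\]
so the target is to control $G$.

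The heart of the argument is to exploit the self-concordance of the logistic link, $|\ddot\mu|\le\dot\mu$, which gives $\big|\tfrac{d}{dz}\log\dot\mu(z)\big|\le1$ and hence $\dot\mu(a)\ge\dot\mu(b)e^{-|a-b|}$. Inserting this into the integral defining $\alpha$ and using $\tfrac{1-e^{-u}}{u}\ge\tfrac{1}{1+u}$, with $z_i=g(x_i;\theta_0)^\top(\theta-\theta_0)$, yields the pointwise lower bound
\[
  \alpha(x_i,\theta,\hat\theta_t)\ \ge\ \frac{\dot\mu(z_i)}{1+|g(x_i;\theta_0)^\top(\theta-\hat\theta_t)|}\ \ge\ \frac{\dot\mu(z_i)}{1+D},\qquad D:=\max_{i\le t}|g(x_i;\theta_0)^\top(\theta-\hat\theta_t)|.
\]
Because the regularizer also obeys $\lambda_t\mathbf{I}\succeq\tfrac{1}{1+D}\lambda_t\mathbf{I}$, summing the rank-one terms gives the uniform matrix comparison $\Gamma_t(\theta,\hat\theta_t)\succeq\tfrac{1}{1+D}H_t(\theta)$, equivalently $\Gamma_t^{-1}(\theta,\hat\theta_t)\preceq(1+D)H_t^{-1}(\theta)$; this is exactly the inequality that also feeds \Cref{lem:c2}.

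Finally I would close the loop. The inverse comparison together with $\theta\in\mathcal{C}_t$ gives
\[
  G^2=m\,\|\gamma_t(\theta)-\gamma_t(\hat\theta_t)\|_{\Gamma_t^{-1}}^2\ \le\ (1+D)\,m\,\|\gamma_t(\theta)-\gamma_t(\hat\theta_t)\|_{H_t^{-1}}^2\ \le\ (1+D)\iota_t^2.
\]
To make this usable I would trade $D$ back for $G$: Cauchy--Schwarz in the $\Gamma_t$-geometry, $\Gamma_t\succeq\lambda_t\mathbf{I}$, and $\|g(x_i;\theta_0)\|_2\le C_1\sqrt{mL}$ from \Cref{lem:action} give
\[
  D\ \le\ \max_i\|g(x_i;\theta_0)\|_{\Gamma_t^{-1}}\,\|\theta-\hat\theta_t\|_{\Gamma_t}\ \le\ \frac{C_1\sqrt{mL}}{\sqrt{\lambda_t}}\cdot\frac{G}{\sqrt{m}}\ =\ C_1\sqrt{\tfrac{L}{\lambda_t}}\,G.
\]
Substituting produces the quadratic $G^2\le\iota_t^2+C_1\sqrt{L/\lambda_t}\,\iota_t^2\,G$, and solving it (using $\sqrt{a^2+b^2}\le a+b$) yields $G\le C_1\sqrt{L/\lambda_t}\,\iota_t^2+\iota_t$, which is the claim.

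The hard part is the self-concordance comparison step: I must turn the per-index lower bound on $\alpha(x_i,\theta,\hat\theta_t)$ into a single uniform matrix inequality without degrading the dependence, and I must verify that $D$ can genuinely be re-expressed through $G$ so the resulting inequality is self-bounding rather than circular. The subtle point is ensuring the $\lambda_t\mathbf{I}$ term inherits the same $(1+D)^{-1}$ factor so that the comparison holds for the full matrices $\Gamma_t$ and $H_t$, not merely their data-dependent parts; once that is in place, the quadratic solve is entirely routine.
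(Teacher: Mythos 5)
Your proposal is correct and follows essentially the same route as the paper: the self-concordance lower bound $\alpha(x_i,\theta,\hat\theta_t)\ge \dot\mu(z_i)/(1+|g(x_i;\theta_0)^\top(\theta-\hat\theta_t)|)$ giving $\Gamma_t\succeq(1+D)^{-1}H_t(\theta)$, the Cauchy--Schwarz step $D\le C_1\sqrt{L/\lambda_t}\,G$ via $\Gamma_t\succeq\lambda_t\mathbf{I}$ and \Cref{lem:action}, the confidence-set bound from $\theta\in\mathcal{C}_t$, and the self-bounding quadratic $x^2\le bx+c\Rightarrow x\le b+\sqrt{c}$. Your explicit attention to the $\lambda_t\mathbf{I}$ term inheriting the $(1+D)^{-1}$ factor is a detail the paper leaves implicit, but the argument is the same.
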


The proof is deferred to \Cref{sec:lem:gam}. Following the proof of Lemma \ref{lem:gam}, from \Cref{eq:26}, we have
\begin{align*}
    \Gamma_t(\theta,\hat\theta_t) &\geq \Big(1 + C_1 L^{1/2}\lambda_t^{-1/2} \cdot \sqrt{m} \|\gamma_t(\theta) - \gamma_t(\hat\theta_t)\|_{\Gamma_t^{-1}(\theta,\hat\theta_t)} \Big)^{-1} H_t(\theta) \\
    &\geq \left(1 + C_1^2\frac{L}{\lambda_t}\iota_t^2 + C_1\sqrt{\frac{L}{\lambda_t}}\iota_t \right)^{-1} H_t(\theta)
\end{align*}
where the last inequality follows from applying Lemma \ref{lem:gam} again. 

One can achieve the same result for $H_t(\hat\theta_t)$ in a similarly way by starting the proof of Lemma \ref{lem:gam} with
\begin{align*}
    \Gamma_t(\theta,\hat{\theta}_t) &= \sum_{i=1}^t \alpha(x_i,\theta,\hat{\theta}_t)g(x_i;\theta_0)g(x_i;\theta_0)^\top + \lambda_t \mathbf{I} \\
    &\geq \sum_{i=1}^t\big(1+|g(x_i;\theta_0)^\top(\theta-\hat{\theta}_t)|\big)^{-1} \dot{\mu}(g(x_i;\theta_0)^\top(\hat\theta_t-\theta_0))g(x_i;\theta_0)g(x_i;\theta_0)^\top + \lambda_t \mathbf{I} \\
    &\geq  \bigg(1 + C_1 \sqrt{\frac{L}{\lambda_t}} \cdot \sqrt{m} \|\gamma_t(\theta) - \gamma_t(\hat\theta_t)\|_{\Gamma_t^{-1}(\theta,\hat\theta_t)} \bigg)^{-1} H_t(\hat\theta_t) \\
    &\geq \bigg(1 + C_1^2\frac{L}{\lambda_t}\iota_t^2 + C_1\sqrt{\frac{L}{\lambda_t}}\iota_t \bigg)^{-1} H_t(\hat\theta_t),
\end{align*}
where the first inequality follows from Lemma \ref{lem:self-concordant}, the second inequality follows the same process of \Cref{eq:26}, and the last inequality follows from Lemma \ref{lem:gam}, finishing the proof.

\subsection{Proof of Lemma \ref{lem:gam}} \label{sec:lem:gam}

Recall the definition of $\Gamma_t$ and $\alpha(x,\theta',\theta'')$ from \Cref{eq:alpha}. We start with
\begin{align*}
    \Gamma_t(\theta,\hat{\theta}_t) &= \sum_{i=1}^t \alpha(x_i,\theta,\hat{\theta}_t)g(x_i;\theta_0)g(x_i;\theta_0)^\top + \lambda_t \mathbf{I} \\
    &\geq \sum_{i=1}^t\big(1+\underbrace{|g(x_i;\theta_0)^\top(\theta-\hat{\theta}_t)|}_{\textbf{(term 1)}}\big)^{-1} \dot{\mu}(g(x_i;\theta_0)^\top(\theta-\theta_0))g(x_i;\theta_0)g(x_i;\theta_0)^\top + \lambda_t \mathbf{I},
\end{align*}
where the inequality follows from Lemma \ref{lem:self-concordant}. For \textbf{(term 1)}, we have,
\begin{align*}
    \textbf{(term 1)} &\leq \|g(x_i;\theta_0)/\sqrt{m}\|_{\Gamma_t^{-1}(\theta,\hat \theta_t)}\cdot \sqrt{m} \|\theta-\hat\theta_t\|_{\Gamma_t(\theta,\hat\theta_t)} \\
    & \leq C_1 L^{1/2}\lambda_t^{-1/2} \cdot \sqrt{m} \|\theta-\hat\theta_t\|_{\Gamma_t(\theta,\hat\theta_t)} \\
    & \leq C_1 L^{1/2}\lambda_t^{-1/2} \cdot \sqrt{m} \|\gamma_t(\theta) - \gamma_t(\hat\theta_t)\|_{\Gamma_t^{-1}(\theta,\hat\theta_t)},
\end{align*}
where the first inequality follows from the Cauchy-Schwarz inequality, the second inequality follows from the fact that $\lambda_{\max}(\Gamma(\cdot)^{-1})\leq \lambda_t^{-1}$ and Lemma \ref{lem:action}, and the last inequality follows from \Cref{eq:4}. Substituting \textbf{(term 1)} back gives,
\begin{align}
    \Gamma_t(\theta,\hat\theta_t) &\geq \Big(1 + C_1 L^{1/2}\lambda_t^{-1/2} \cdot \sqrt{m} \|\gamma_t(\theta) - \gamma_t(\hat\theta_t)\|_{\Gamma_t^{-1}(\theta,\hat\theta_t)} \Big)^{-1} \nonumber \\
    &\quad \times\sum_{i=1}^t \dot{\mu}(g(x_i;\theta_0)^\top(\theta-\theta_0))g(x_i;\theta_0)g(x_i;\theta_0)^\top + \lambda_t \mathbf{I} \nonumber \\
    &\geq \Big(1 + C_1 L^{1/2}\lambda_t^{-1/2} \cdot \sqrt{m} \|\gamma_t(\theta) - \gamma_t(\hat\theta_t)\|_{\Gamma_t^{-1}(\theta,\hat\theta_t)} \Big)^{-1} \nonumber \\
    &\quad \times \Big( \sum_{i=1}^t \dot{\mu}(g(x_i;\theta_0)^\top(\theta-\theta_0))g(x_i;\theta_0)g(x_i;\theta_0)^\top + \lambda_t \mathbf{I}\Big) \nonumber \\
    &= \Big(1 + C_1 L^{1/2}\lambda_t^{-1/2} \cdot \sqrt{m} \|\gamma_t(\theta) - \gamma_t(\hat\theta_t)\|_{\Gamma_t^{-1}(\theta,\hat\theta_t)} \Big)^{-1} H_t(\theta) \label{eq:26}
\end{align}
Using this results, we can further obtain
\begin{align*}
    &\sqrt{m}\|\gamma_t(\theta)-\gamma_t(\hat{\theta}_t)\|_{\Gamma_t^{-1}(\theta,\hat\theta_t)}^2 \\
    &\quad \leq \Big(1 + C_1 L^{1/2}\lambda_t^{-1/2} \cdot \sqrt{m} \|\gamma_t(\theta) - \gamma_t(\hat\theta_t)\|_{\Gamma_t^{-1}(\theta,\hat\theta_t)} \Big) \cdot \sqrt{m}\|\gamma_t(\theta) - \gamma_t(\hat\theta_t)\|_{H_t^{-1}(\theta)}^2 \\
    &\quad \leq \iota_t^2 + C_1L^{1/2}\lambda_t^{-1/2}\iota_t^2 \cdot \sqrt{m} \|\gamma_t(\theta) - \gamma_t(\hat\theta_t)\|_{\Gamma_t^{-1}(\theta,\hat\theta_t)},
\end{align*}
where the last inequality follows from Lemma \ref{lem:c1}. We solve the polynomial inequality in $\sqrt{m}\|\gamma_t(\theta)-\gamma_t(\hat{\theta}_t)\|_{\Gamma_t^{-1}(\theta,\hat\theta_t)}$ using a fact that for $b,c>0$ and $x\in\mathbb{R}$, following implication holds: $x^2\leq bx+c \Longrightarrow x\leq b+ \sqrt{c}$, which finally gives
\begin{align*}
    \sqrt{m}\|\gamma_t(\theta)-\gamma_t(\hat{\theta}_t)\|_{\Gamma_t^{-1}(\theta,\hat\theta_t)}\leq C_1\sqrt{\frac{L}{\lambda_t}} \iota_t^2 + \iota_t
\end{align*}

\subsection{Proof of Lemma \ref{lem:inst_reg}}

First, we show that we can upper bound on the prediction error for all $x\in\mathcal{X}_t$, $t\in[T]$, which is the difference between the true reward $\mu(h(x))$ with our prediction with the neural network $\mu(f(x;\theta_t))$. For $x\in\mathcal{X}_{t+1}$ and the absolute constant $C_3>0$, the prediction error is defined as
\begin{align}
    &|\mu(h(x)) - \mu(f(x;\theta_t))| \nonumber \\
    &\quad \leq R [h(x) - f(x;\theta_t)] \nonumber \\
    &\quad = R[g(x;\theta_0)^\top(\theta^*-\theta_0) - f(x;\theta_t)] \nonumber \\
    &\quad \leq R[\underbrace{g(x;\theta_0)^\top(\theta^*-\theta_0) - g(x;\theta_0)^\top (\theta_t-\theta_0)}_{\textbf{(term 1)}} + C_3 m^{-1/6}\sqrt{\log m} L^3 t^{2/3} \lambda_t^{-2/3}], \label{eq:17}
\end{align}
where the first inequality is due to the fact that $\mu(\cdot)$ is $R$-Lipschitz function, the equality follows from Lemma \ref{lem:true}, and the last inequality follows from Lemma \ref{lem:lin_err}. For \textbf{(term 1)}, we have
\begin{align}
    \textbf{(term 1)} &= g(x;\theta_0)^\top (\theta^* - \theta_t) \nonumber \\
    &= \frac{1}{\sqrt{m}}g(x;\theta_0)^\top\cdot H^{-1/2}_t(\theta^*) \cdot H^{1/2}_t(\theta^*)\cdot\sqrt{m}(\theta^* - \theta_t) \nonumber \\
    &\leq \|g(x;\theta_0)/\sqrt{m}\|_{H^{-1}_t(\theta^*)} \cdot \sqrt{m}\|\theta^* - \theta_t\|_{H_t(\theta^*)} \nonumber\\
    &\leq \sqrt{\kappa}\|g(x;\theta_0)/\sqrt{m}\|_{V^{-1}_t} \cdot \sqrt{m}\|\theta^* - \theta_t\|_{H_t(\theta^*)} \nonumber \\
    &\leq \sqrt{\kappa} \|g(x;\theta_0)/\sqrt{m}\|_{V^{-1}_t} \cdot \big(C_6(1 + \sqrt{L}S + LS^2)\iota_t + 1\big) \label{eq:16},
\end{align}
where the first inequality follows from the Cauchy-Schwarz inequality, the second inequality is due to the Assumption \ref{ass:kappa_formal} that $\frac{1}{\kappa}V_t\preceq H_t^{-1}(\theta^*)$, and the last inequality follows from Lemma \ref{lem:conf}. Plugging \Cref{eq:16} into \Cref{eq:17} gives
\begin{align}
    &|\mu(h(x)) - \mu(f(x;\theta_t))| \nonumber \\
    &\quad \leq R\sqrt{\kappa}\big(C_6(1 + \sqrt{L}S + LS^2)\iota_t + 1\big) \|g(x;\theta_0)/\sqrt{m}\|_{V^{-1}_t}  + C_3R m^{-1/6}\sqrt{\log m} L^3 t^{2/3} \lambda_t^{-2/3} \nonumber \\ 
    &\quad\leq R\sqrt{\kappa}\big(C_6(1 + \sqrt{L}S + LS^2)\iota_t + 1\big) \|g(x;\theta_0)/\sqrt{m}\|_{V^{-1}_t}  + \epsilon_{3,t}, \label{eq:18}
\end{align}
where the second inequality follows from the fact that $\lambda_0\leq\min\{\lambda_t\}_{t\geq 1}$ and the definition of $\epsilon_{3,t}$.

\section{Proof of lemmas in \cref{sec:alg2}}

\subsection{Proof of Lemma \ref{lem:conf2}} \label{sec:lem:conf2}

Recall the definition of $\widetilde{L}_t(\theta)$, $\hat\theta_t$, $\gamma_t(\theta)$, $\Gamma_t(\theta',\theta'')$, $\iota_t$, and $\lambda_t$ from \Cref{sec:lem:confidence}. We also use
\begin{align*}
    W_t &=  \sum_{i=1}^t \frac{\dot{\mu}(f(x_i;\theta_i))}{m} g(x_i;\theta_0) g(x_i;\theta_0)^\top + \lambda_t \mathbf{I} \\
    H_t(\hat\theta_t) &= \sum_{i=1}^t \frac{\dot{\mu}(g(x_i;\theta_0)^\top(\hat\theta_t-\theta_0))}{m} g(x_i;\theta_0) g(x_i;\theta_0)^\top + \lambda_t \mathbf{I} \\
    Z_t & = \sum_{i=1} \frac{|\dot{\mu}(f(x_i;\theta_i)) - \dot{\mu}(g(x_i;\theta_0)^\top(\hat\theta_t-\theta_0))|}{m} g(x_i;\theta_0) g(x_i;\theta_0)^\top + \lambda_t \mathbf{I}
\end{align*}
By the definition of $Z_t$, for any $x\in\mathbb{R}^p$, we have
\begin{align*}
    \|x\|_{W_t} \leq \|x\|_{H_t(\hat\theta_t) + Z_t} \leq \|x\|_{H_t(\hat\theta_t)} + \|x\|_{Z_t}.
\end{align*}

Now with the above inequality, we can start with
\begin{align}
    \sqrt{m} \|\theta_t-\theta^*\|_{W_t} \leq \underbrace{\sqrt{m} \|\theta_t - \theta^*\|_{H_t(\hat\theta_t)}}_{\textbf{(term 1)}} + \underbrace{\sqrt{m}\|\theta_t-\theta^*\|_{Z_t}}_{\textbf{(term 2)}}. \label{eq:25}
\end{align}
For \textbf{(term 1)}, we directly follow the proof of Lemma \ref{lem:conf} in \Cref{sec:lem:confidence}. Therefore, for the absolute constants $\{C_i\}_{i=1}^3$, we have
\begin{align*}
    &\sqrt{m} \|\theta_t - \theta^*\|_{H_t(\hat\theta_t)} \\
    &\quad \leq \sqrt{m}\|\hat\theta_t-\theta^*\|_{H_t(\hat\theta_t)} + \sqrt{m} \|\theta_t-\hat\theta_t\|_{H_t(\hat\theta_t)} \\
    &\quad \leq  \underbrace{\sqrt{m}\|\hat{\theta}_t - \theta^*\|_{H_t(\hat\theta_t)}}_{\textbf{(term 3)}}+\underbrace{2(\lambda_t + C_1 t L)(1-\eta m \lambda_t)^{J/2} t^{1/2}\lambda_t^{-1/2}}_{\textbf{(term 4)}} \nonumber \\
    &\quad\quad + \underbrace{(\lambda_t + C_1 t L)\Big[C_2 m^{-1/6}\sqrt{\log m}t^{7/6}\lambda_t^{-7/6}L^{7/2} + C_1 C_3 R m^{-1/6}\sqrt{\log m} L^{7/2} t^{5/3} \lambda_t^{-5/3}\Big]}_{\textbf{(term 5)}}.
\end{align*}
Using the same argument as in \Cref{sec:lem:confidence}, we can see that
\begin{align*}
    \textbf{(term 4)} \leq 2S\sqrt{\lambda_t} \leq \iota_t,\quad \textbf{(term 5)} \leq 1/2.
\end{align*}
Note that the upper bound for \textbf{(term 5)} has been changed from $1$ to $1/2$ solely to unify the constant in the concentration inequalities of Lemma \ref{lem:conf} and Lemma \ref{lem:conf2}. For \textbf{(term 3)}, we have
\begin{align*}
    \sqrt{m}\Big\|\hat{\theta}_t - \theta^*\Big\|_{H_t(\hat\theta_t)}&\leq \sqrt{m(1+2\sqrt{2}C_1 \sqrt{L}S + 8C_1^2 L S^2)}\Big\|\hat{\theta}_t-\theta^*\Big\|_{\Gamma_t(\hat{\theta}_t,\theta^*)} \nonumber \\
    &= \sqrt{m(1+2\sqrt{2}C_1 \sqrt{L}S + 8C_1^2 L S^2)}\Big\|\gamma(\hat{\theta}_t)-\gamma(\theta^*)\Big\|_{\Gamma_t^{-1}(\hat{\theta}_t,\theta^*)} \nonumber \\
    &\leq (1+2\sqrt{2}C_1 \sqrt{L}S + 8C_1^2 L S^2)\sqrt{m}\Big\|\gamma(\hat{\theta}_t)-\gamma(\theta^*)\Big\|_{H_t^{-1}(\theta^*)} \\
    &\leq (1+2\sqrt{2}C_1 \sqrt{L}S + 8C_1^2 L S^2) \iota_t,
\end{align*}
where the first and the second inequalities follow from Lemma \ref{lem:c2}, the equality follows from \Cref{eq:hh}, and the last inequality follows from Lemma \ref{lem:c1}. Plugging \textbf{(term 3-5)} into \textbf{(term 1)} gives
\begin{align*}
    \textbf{(term 1)} \leq (2+C_1 \sqrt{L}S + C_1 L S^2)\iota_t + 1/2.
\end{align*}

Now, moving on to \textbf{(term 2)}, we have
\begin{align*}
    \sqrt{m} \| \theta_t - \theta^* \|_{Z_t} &\leq \underbrace{\sqrt{m} \|\theta_t-\theta^*\|_2}_{\textbf{(term 6)}} \times \underbrace{\lambda^{1/2}_{\max}(Z_t}_{\textbf{(term 7)}}).
\end{align*}

For \textbf{(term 7)}, we have
\begin{align*}
    \lambda_{\max}^{1/2}(Z_t) &= \lambda_{\max}^{1/2}\left( \sum_{i=1}^t\frac{|\dot{\mu}(f(x_i;\theta_i)) - \dot{\mu}(g(x_i;\theta_0)^\top(\hat{\theta}_t-\theta_0))|}{m}g(x_i;\theta_0)g(x_i;\theta_0)^\top\right) \\
    & \leq \lambda_{\max}^{1/2}\left( \sum_{i=1}^t\frac{C_3 R m^{-1/6} \sqrt{\log m}L^3t^{2/3}\lambda_t^{-2/3}}{m}g(x_i;\theta_0)g(x_i;\theta_0)^\top\right) \\
    &\leq C_3 R^{1/2} m^{-1/12} (\log m)^{1/4} t^{5/6} L^2 \lambda_t^{-1/3}.
\end{align*}
Here, the first inequality follows from the Lipschitz continuity of $\dot\mu$, the bounds $|\ddot\mu|\le\dot\mu\le R$, and Lemma \ref{lem:lin_err}, while the final inequality follows from $\lambda_{\max}(\sum_{i=1}^t x_ix_i^\top) \leq\sum_{i=1}^t\|x_i\|_2^2$ and used Lemma \ref{lem:action}. For \textbf{(term 6)} we have
\begin{align*}
    \sqrt{m}\|\theta_t-\theta^*\|_2  &\leq \sqrt{m}\|\theta_t-\theta_0\|_2 + \sqrt{m} \|\theta^* - \theta_0\|_2 \leq 3 t^{1/2} \lambda_t^{-1/2} + S,
\end{align*}
where the last inequality follows from Lemmas \ref{lem:close} and \ref{lem:true}. Plugging \textbf{(term 6-7)} back to \textbf{(term 2)} gives,
\begin{align*}
    \textbf{(term 2)}&\leq C_3 R^{1/2} m^{-1/12} (\log m)^{1/4} t^{4/3} L^2 \lambda_t^{-5/6} + C_3 S R^{1/2} m^{-1/12} (\log m)^{1/4} t^{5/6} L^2 \lambda_t^{-1/3} \\
    &\leq C_3 R^{1/2} m^{-1/12} (\log m)^{1/4} T^{4/3} L^2 \lambda_0^{-5/6} + C_3 S R^{1/2} m^{-1/12} (\log m)^{1/4} T^{5/6} L^2 \lambda_0^{-1/3} \\
    &\leq 1/2 + 2S \lambda_t^{1/2}\\
    &\leq 1/2 + \iota_t,
\end{align*}
where the third inequality is followed by the condition on $m$ in Condition \ref{eq:m},  and the last inequality is due to the update rule of $\lambda_t$. Finally, substituting \textbf{(term 1-2)} into \Cref{eq:25} gives
\begin{align*}
    \sqrt{m} \|\theta_t-\theta^*\|_{W_t} &\leq (3+2\sqrt{2}C_1 \sqrt{L}S + 8C_1^2 L S^2)\iota_t + 1 \\
    &\leq C_7(1+ \sqrt{L}S + L S^2)\iota_t + 1,
\end{align*}
for some absolute constant $C_7>0$, finishing the proof.

\section{Regret Analyses}

\subsection{Proof of \Cref{thm:regret}} \label{sec:regret1}

We start with a proposition for the per-round regret:
\begin{proposition} \label{prop:per_reg}
    Under Condition \ref{cond:ass}, for all $x\in\mathcal{X}_t$, $t\in[T]$, with probability at least $1-\delta$,
    \begin{align*}
        \mu(h(x_t^*))-\mu(h(x_t)) \leq 2 R\sqrt{\kappa}((CLS^2+2)\iota_{t-1} + 1) \|g(x_t;\theta_0)/\sqrt{m}\|_{V^{-1}_{t-1}} + 2\epsilon_{3,t-1}.
    \end{align*}
\end{proposition}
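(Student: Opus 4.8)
The plan is to read the per-round regret directly off \Cref{lem:inst_reg} together with the optimistic selection rule of \Cref{alg:1}, with no fresh concentration argument needed: the entire probabilistic content already lives in \Cref{lem:inst_reg}, which holds simultaneously for all $x\in\mathcal{X}_t$ and all $t\in[T]$ on a single event of probability at least $1-\delta$. I would carry out every step on that event, and abbreviate $\nu_{t-1}^{(1)}=C_6(1+\sqrt{L}S+LS^2)\iota_{t-1}+1$ as in \Cref{eq:nu_1}, with the understanding that \Cref{lem:inst_reg} is applied at round $t$ so that the relevant trained parameter is $\theta_{t-1}$ and the design matrix is $V_{t-1}$.

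The core is a two-sided use of the prediction-error bound wrapped around optimism. Applying \Cref{lem:inst_reg} to the optimal arm $x_t^*\in\mathcal{X}_t$ gives $\mu(h(x_t^*))\le \mathrm{UCB}_t(x_t^*)+\epsilon_{3,t-1}$, where $\mathrm{UCB}_t(x)=\mu(f(x;\theta_{t-1}))+R\sqrt{\kappa}\,\nu_{t-1}^{(1)}\|g(x;\theta_0)/\sqrt{m}\|_{V_{t-1}^{-1}}$ is exactly the index maximized in \Cref{alg:1} (note the bonus $\epsilon_{3,t-1}$ is extra, since it is not part of $\mathrm{UCB}_t$). Since $x_t$ maximizes $\mathrm{UCB}_t$ over $\mathcal{X}_t$ and $x_t^*\in\mathcal{X}_t$, optimism yields $\mathrm{UCB}_t(x_t^*)\le\mathrm{UCB}_t(x_t)$, hence $\mu(h(x_t^*))\le \mathrm{UCB}_t(x_t)+\epsilon_{3,t-1}$. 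A second application of \Cref{lem:inst_reg}, now at the played arm $x_t$ and using the other side of the absolute value, gives $\mu(f(x_t;\theta_{t-1}))\le\mu(h(x_t))+R\sqrt{\kappa}\,\nu_{t-1}^{(1)}\|g(x_t;\theta_0)/\sqrt{m}\|_{V_{t-1}^{-1}}+\epsilon_{3,t-1}$. Substituting this into the previous inequality leaves $\mu(h(x_t^*))-\mu(h(x_t))\le 2R\sqrt{\kappa}\,\nu_{t-1}^{(1)}\|g(x_t;\theta_0)/\sqrt{m}\|_{V_{t-1}^{-1}}+2\epsilon_{3,t-1}$, so the factor of $2$ appears on both the exploration bonus and the approximation-error term.

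The only remaining bookkeeping is cosmetic: to pass from the exact coefficient $\nu_{t-1}^{(1)}=C_6(1+\sqrt{L}S+LS^2)\iota_{t-1}+1$ to the stated $(CLS^2+2)\iota_{t-1}+1$, I would absorb the lower-order terms $1$ and $\sqrt{L}S$ into the leading $LS^2$ term. In the standing parameter regime (in particular $S\ge 1$ and $L\ge 2$, so $LS^2\ge 1$ and $\sqrt{L}S\le LS^2$) one has $1+\sqrt{L}S+LS^2\le 3LS^2$, and setting $C=3C_6$ gives $\nu_{t-1}^{(1)}\le(CLS^2+2)\iota_{t-1}+1$. I do not anticipate a genuine obstacle here, since the whole argument is a triangle-inequality sandwich around the optimistic rule; the single point requiring care is that both invocations of \Cref{lem:inst_reg} --- at $x_t^*$ and at $x_t$ --- are read off the same probability-$(1-\delta)$ event, so that no additional union bound over arms is incurred. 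This is guaranteed precisely because \Cref{lem:inst_reg} is stated uniformly over $x\in\mathcal{X}_t$.
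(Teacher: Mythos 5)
Your proposal is correct and follows essentially the same route as the paper: both apply \Cref{lem:inst_reg} once at $x_t^*$ to pass to the UCB index, invoke the optimistic selection rule to swap $x_t^*$ for $x_t$, and apply \Cref{lem:inst_reg} a second time at $x_t$ to return to $\mu(h(x_t))$, yielding the factor of $2$ on both the bonus and the approximation error. Your extra remarks on absorbing $1+\sqrt{L}S$ into the $LS^2$ term and on reading both invocations off the same probability-$(1-\delta)$ event only make explicit what the paper leaves implicit.
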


\begin{proof}
    We follow the standard procedure to upper bound the per-round regret with the prediction error under the optimistic rule. For all $t\in[T]$ we have
    \begin{align*}
        &\mu(h(x_t^*))-\mu(h(x_t)) \\
        &\quad \leq \mu(f(x_t^*;\theta_{t-1})) + R\sqrt{\kappa}\big(C_6(1 + \sqrt{L}S + LS^2)\iota_t + 1\big) \|g(x_t^*;\theta_0)/\sqrt{m}\|_{V^{-1}_{t-1}}  + \epsilon_{3,t-1}  -\mu(h(x_t)) \\
        &\quad \leq \mu(f(x_t;\theta_{t-1})) + R\sqrt{\kappa}\big(C_6(1 + \sqrt{L}S + LS^2)\iota_t + 1\big) \|g(x_t;\theta_0)/\sqrt{m}\|_{V^{-1}_{t-1}} + \epsilon_{3,t-1} -\mu(h(x_t)) \\
        &\quad \leq 2 R\sqrt{\kappa}((CLS^2+2)\iota_{t-1} + 1) \|g(x_t;\theta_0)/\sqrt{m}\|_{V^{-1}_{t-1}} + 2\epsilon_{3,t-1},
    \end{align*}
    where the first and the last inequalities follow from Lemma \ref{lem:inst_reg}, the second inequality comes from the optimistic rule of Algorithm \ref{alg:1}, finishing the proof.
\end{proof}

With Proposition \ref{prop:per_reg}, we have
\begin{align*}
    \mu(h(x_t^*)) - \mu(h(x_t)) &\leq \min\Big\{2R\sqrt{\kappa}\nu_{t-1}^{(1)} \|g(x_t;\theta_0)/\sqrt{m}\|_{V^{-1}_{t-1}} + 2\epsilon_{3,t-1}, 1\Big\} \nonumber \\
    &\leq \min\Big\{2R\sqrt{\kappa}\nu_{t-1}^{(1)} \|g(x_t;\theta_0)/\sqrt{m}\|_{V^{-1}_{t-1}} , 1\Big\}+ 2\epsilon_{3,t-1} \nonumber \\
    &\leq 2R\sqrt{\kappa}\nu_{t-1}^{(1)}\min\Big\{ \|g(x_t;\theta_0)/\sqrt{m}\|_{V^{-1}_{t-1}}, 1\Big\}  + 2\epsilon_{3,t-1}\\
    &\leq 2R\sqrt{\kappa}\nu_{T}^{(1)}\min\Big\{ \|g(x_t;\theta_0)/\sqrt{m}\|_{V^{-1}_{t-1}}, 1\Big\}  + 2\epsilon_{3,T}.
\end{align*}
Here, the first inequality follows from $0 \le |\mu(\cdot) - \mu(\cdot)| \le 1$, the second from the bound $\min\{a+b,1\}\leq\min\{a,1\}+b$ for $b>0$, the third from the facts that $2R\sqrt{\kappa} \geq 1$ and $\nu_{t}^{(1)}\geq1$ for all $t$, thereby using $\min\{ab,1\}\leq a\min\{b,1\}$ if $a\geq 1$, and the last inequality follows from the fact that both $\nu_t$ and $\epsilon_{3,t}$ are monotonically non-decreasing in $t$.

Now, we can proceed as
\begin{align*}
    \text{Regret}(T) &= \sum_{t=1}^T \mu(h(x_t^*)) - \mu(h(x_t)) \\
    &\leq  2R\sqrt{\kappa}\nu_{T}^{(1)}\sum_{t=1}^T\min\Big\{ \|g(x_t;\theta_0)/\sqrt{m}\|_{V^{-1}_{t-1}}, 1\Big\}  + 2T\epsilon_{3,T},
\end{align*}
where we can see that by the condition of $m$ in Condition \ref{eq:m},
\begin{align*}
    T\epsilon_{3,T} = C_3R m^{-1/6}\sqrt{\log m} L^3 T^{5/3} \lambda_0^{-2/3}  \leq 1,
\end{align*}
plugging this back gives,
\begin{align*}
    \text{Regret}(T) &\leq 2R\sqrt{\kappa}\nu_{T}^{(1)}\sum_{t=1}^T\min\Big\{ \|g(x_t;\theta_0)/\sqrt{m}\|_{V^{-1}_{t-1}}, 1\Big\}  +1 \\
    &\leq 2R\sqrt{\kappa}\nu_{T}^{(1)} \sqrt{T\sum_{i=1}^T \min\Big\{ \|g(x_t;\theta_0)/\sqrt{m}\|^2_{\widetilde{V}^{-1}_{t-1}}, 1\Big\}} + 1 \\
    &\leq 2R\sqrt{\kappa}\nu_{T}^{(1)} \sqrt{2T \log \det \left(\sum_{t=1}^T \frac{1}{\kappa m\lambda_0} g(x_t;\theta_0)g(x_t;\theta_0)^\top + \mathbf{I}\right)} + 1 \\
    &\leq 2R\sqrt{\kappa}\nu_{T}^{(1)} \sqrt{2T \widetilde{d}} + 1 ,
\end{align*}
where the second inequality follows from the Cauchy–Schwarz inequality and the relation $V_{t-1}\succeq \widetilde{V}_{t-1}$, the third follows from Lemma \ref{lem:yadkori}, and the final inequality follows from the definition of $\widetilde{d}$. Notice that 
\begin{align*}
    \iota_T &=16 \sqrt{\log \det \left(\sum_{t=1}^T\frac{1}{4m\lambda_0}g(x_t;\theta_0)g(x_t;\theta_0)^\top + \mathbf{I} \right)\log\frac{4T^2}{\delta}} + 8C_1\sqrt{\frac{L}{\lambda_0}}\log\frac{4T^2}{\delta}\\
    &\leq 16\sqrt{\widetilde{d}\log(4T^2/\delta)} + \sqrt{4C_1(2L)^{1/2}S\log^{-1}(4/\delta)} \log (4T^2/\delta),
\end{align*}
where the last inequality follows from the definition of $\widetilde{d}$ and the initialization rule of $\lambda_0$, which gives $\nu_{T}^{(1)} = \widetilde{\mathcal{O}}(S^2\sqrt{\widetilde{d}}+S^{2.5})$. Finally, plugging $\nu_{T}^{(1)}$ in gives,
\begin{align*}
    \text{Regret}(T) = \widetilde{\mathcal{O}}\Big( S^2\widetilde{d}\sqrt{\kappa T} + S^{2.5} \sqrt{\kappa\widetilde{d} T} \Big),
\end{align*}
finishing the proof.

\subsection{Proof of \Cref{thm:regret2}} \label{sec:regret2}

First, for each \(t\in\mathbb{N}\), define the set of timesteps
\begin{align}
    \mathcal{T}_1(t) & = \Big\{t'\in[t]: \left|f(x_{t'};\theta_{t'}) - g(x_{t'};\theta_0)^\top(\theta^*-\theta_0))\right|\geq 1\Big\}. \label{eq:event1}
\end{align}
This set contains exactly those timesteps where \(\theta_{t'}\) lies outside the parameter set (when \(\|\theta_{t'}-\theta_0\|_2 > S\)).  Based on this, we form a pruned design matrix by removing the corresponding feature vectors while preserving their original order.  In particular, for the regularized covariance matrix \(V_t\), we obtain
\begin{align*}
    \underline{V}_t = \sum_{i=1}^t \frac{1}{m}\mathbbm{1}\{i\notin \mathcal{T}_1\} g(x_i;\theta_0) g(x_i;\theta_0)^\top + \lambda_t \mathbf{I} = \sum_{i=1}^{t-|\mathcal{T}_1(t)|} \frac{1}{m}g(x_{\tau(i)};\theta_0)g(x_{\tau(i)};\theta_0)^\top + \lambda_t \mathbf{I}.
\end{align*}
Here, \(\tau: \{1,\dots,t-|\mathcal{T}_1(t)|\}\to\{1,\dots,t\}\) maps each \(j\) to the \(j\)-th smallest element of \([t]\setminus\mathcal{T}_1(t)\).  Similarly, we define \(H_t(\theta)\) and \(W_t\) as:
\begin{align*}
    \underline{H}_t(\theta) &=\sum_{i=1}^{t-|\mathcal{T}_1(t)|} \frac{\dot\mu(g(x_{\tau(i)};\theta_0)^\top(\theta-\theta_0))}{m}g(x_{\tau(i)};\theta_0)g(x_{\tau(i)};\theta_0)^\top + \lambda_t \mathbf{I},\\
    \underline{W}_t &=\sum_{i=1}^{t-|\mathcal{T}_1(t)|} \frac{\dot\mu(f(x_{\tau(i)};\theta_{\tau(i)}))}{m}g(x_{\tau(i)};\theta_0)g(x_{\tau(i)};\theta_0)^\top + \lambda_t \mathbf{I}.
\end{align*}
Same way as before, we will denote $\widetilde{\underline{V}}_t$, $\widetilde{\underline{H}}_t(\theta)$, $\widetilde{\underline{W}}_t$ as the design matrix where the regularization parameter $\lambda_t$ is replaced to $\lambda_0$.


Using our new design matrices and the self‐concordant property of the logistic function (see Lemma \ref{lem:self-concordant}; cf.\ Lemma 9 of \cite{faury2020improved}, Lemma 7 of \cite{abeille2021instance}, and Lemma 5 of \cite{jun2021improved}), we can show that the true‐variance design matrix $\underline{H}(\theta^*)$ is bounded by the empirical‐variance design matrix $\underline{W}_t$.

\begin{proposition} \label{prop:3}
We have $3 \underline{H}_{t}(\theta^*)\succeq \underline{W}_{t} \succeq \frac{1}{3} \underline{H}_{t}(\theta^*)$.
\end{proposition}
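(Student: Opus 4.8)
The plan is to reduce the Loewner-order comparison $\frac{1}{3}\underline{H}_t(\theta^*) \preceq \underline{W}_t \preceq 3\underline{H}_t(\theta^*)$ to a pointwise comparison of the scalar variance weights appearing in the two matrices, exploiting the fact that the pruning by $\mathcal{T}_1(t)$ retains exactly the indices on which the relevant arguments of $\dot\mu$ are close. Concretely, the $i$-th summand of $\underline{H}_t(\theta^*)$ carries the weight $\dot\mu(g(x_{\tau(i)};\theta_0)^\top(\theta^*-\theta_0))$, whereas the corresponding summand of $\underline{W}_t$ carries $\dot\mu(f(x_{\tau(i)};\theta_{\tau(i)}))$; both multiply the same positive semidefinite rank-one matrix $\frac{1}{m}g(x_{\tau(i)};\theta_0)g(x_{\tau(i)};\theta_0)^\top$.

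First I would use \Cref{lem:true} to rewrite $g(x_{\tau(i)};\theta_0)^\top(\theta^*-\theta_0) = h(x_{\tau(i)})$, so that the weight in $\underline{H}_t(\theta^*)$ is simply $\dot\mu(h(x_{\tau(i)}))$. By the definition of $\mathcal{T}_1(t)$ in \Cref{eq:event1}, every retained index $\tau(i)\notin\mathcal{T}_1(t)$ satisfies
\[
    \big|f(x_{\tau(i)};\theta_{\tau(i)}) - g(x_{\tau(i)};\theta_0)^\top(\theta^*-\theta_0)\big| < 1,
\]
i.e. the two arguments of $\dot\mu$ differ by strictly less than one. This is precisely the property the pruning is designed to enforce: the excluded indices are those where $\theta_{\tau(i)}$ has drifted outside $\Theta$ and the gap could be large.

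Next I would invoke the self-concordance of the logistic link (\Cref{lem:self-concordant}), whose key consequence is the multiplicative stability of $\dot\mu$, namely $\dot\mu(z_1) \le \dot\mu(z_2)\exp(|z_1 - z_2|)$ for all $z_1,z_2$. Applying this to the two arguments above, using that their gap is below $1$ together with $e < 3$, yields for every retained index the two-sided scalar bound
\[
    \frac{1}{3}\,\dot\mu(h(x_{\tau(i)})) \;\le\; \dot\mu(f(x_{\tau(i)};\theta_{\tau(i)})) \;\le\; 3\,\dot\mu(h(x_{\tau(i)})).
\]
Finally I would lift these scalar inequalities to the matrix level: since $A\succeq 0$ and $0\le c_1\le c_2$ imply $c_1 A \preceq c_2 A$, each rank-one term obeys the same ordering, and combining this with the trivial regularizer bounds $\frac{1}{3}\lambda_t\mathbf{I} \preceq \lambda_t\mathbf{I} \preceq 3\lambda_t\mathbf{I}$ and summing over the retained indices gives both $\underline{W}_t \succeq \frac{1}{3}\underline{H}_t(\theta^*)$ and $3\underline{H}_t(\theta^*) \succeq \underline{W}_t$ at once.

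The argument is essentially routine once the self-concordance ratio bound is in hand; the only point that genuinely matters is that the constant $3$ is dictated by the cutoff $1$ in the definition of $\mathcal{T}_1(t)$, through $e^1 < 3$. The indices in $\mathcal{T}_1(t)$, on which the variance ratio could leave $[1/3,3]$, are exactly the ones removed here and are instead controlled separately by bounding their cardinality by $\widetilde{\mathcal{O}}(\kappa\,\widetilde{d}^2)$ in the regret analysis of \Cref{thm:regret2}.
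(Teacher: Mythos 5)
Your proposal is correct and follows essentially the same route as the paper: both arguments use the self-concordance of $\dot\mu$ to obtain the multiplicative stability $\dot\mu(z_1)\in[e^{-|z_1-z_2|},e^{|z_1-z_2|}]\,\dot\mu(z_2)$, note that the pruning by $\mathcal{T}_1$ caps the gap between $f(x_{\tau(i)};\theta_{\tau(i)})$ and $h(x_{\tau(i)})$ at $1$, and then compare the two design matrices term by term. The only cosmetic difference is that the paper derives the ratio bound by combining the two $\alpha$-bounds of \Cref{lem:self-concordant} and then uses $\exp(d)\le 2d+1\le 3$ for $d\le 1$, whereas you invoke the Gr\"onwall-type consequence directly and use $e<3$; both yield the same constant.
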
 

Next, we define three additional sets of timesteps derived from $\mathcal{T}_1$:
\begin{align}
    \mathcal{T}_2 &= \Big\{t\in[T-|\mathcal{T}_1(T)|] : \big|g(x_{\tau(t)};\theta_0)^\top(\widetilde{\theta}_{\tau(t)-1}-\theta^*)\big| \geq 1 \Big\}, \nonumber \\
    \mathcal{T}_3 &= \Big\{t\in[T-|\mathcal{T}_1(T)|]: \Big\|g(x_{\tau(t)};\theta_0)/\sqrt{m}\Big\|_{\underline{\widetilde{V}}_{{\tau(t-1)}}^{-1}} \geq 1 \Big\}, \nonumber \\
    \mathcal{T}_4 &= \Big\{t\in[T-|\mathcal{T}_1(T)|]: \Big\|\sqrt{\dot{\mu}(f(x_{\tau(t)};\theta_{\tau(t)}))}g(x_{\tau(t)};\theta_0)/\sqrt{m}\Big\|_{\underline{\widetilde{W}}_{{\tau(t-1)}}^{-1}} \geq 1 \Big\}. \label{eq:event23}
\end{align}

We define $\mathcal{T}_2$ to measure the distance between $g(x_{\tau(t)};\theta_0)^\top\widetilde{\theta}_{\tau(t)-1}$ and $h(x_{\tau(t)})$ and control the estimation error of the neural network. We introduce $\mathcal{T}_3,\mathcal{T}_4$ to control the value of $\|g(x_{\tau(t)};\theta_0)/\sqrt{m}\|_{\underline{\widetilde{V}}_{{\tau(t-1)}}^{-1}}$ and $\|\sqrt{\dot{\mu}(f(x_{\tau(t)};\theta_{\tau(t)}))}g(x_{\tau(t)};\theta_0)/\sqrt{m}\|_{\underline{\widetilde{W}}_{{\tau(t-1)}}^{-1}}$ in order to apply the elliptical potential lemma (Lemma \ref{lem:yadkori}).

Next, we introduce two propositions to bound the cardinality of $\mathcal{T}_1(T)$, $\mathcal{T}_2$, $\mathcal{T}_3$ and $\mathcal{T}_4$:

\begin{proposition} \label{prop:t1t2}
    We have $|\mathcal{T}_1(T)|\leq 4 \kappa \widetilde{d} {\nu_T^{(1)}}^2 + 1$ and $|\mathcal{T}_2|\leq 24 \kappa \widetilde{d}  {\nu_T^{(2)}}^2$, where $\nu_t^{(1)}$ and $\nu_t^{(2)}$ are defined at \Cref{eq:nu_1,eq:nu_2}, respectively.
\end{proposition}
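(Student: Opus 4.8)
The plan is to prove both cardinality bounds with the same \emph{counting-through-the-elliptical-potential} template: show that every ``bad'' round forces its feature vector to have a large norm under the relevant inverse design matrix, cap the total of these squared norms by $\widetilde{\mathcal{O}}(\widetilde d)$ via \Cref{lem:yadkori}, and divide. The only quantities that differ between the two bounds are the confidence radius ($\nu_T^{(1)}$ versus $\nu_T^{(2)}$) and the design matrix ($V$ versus $W$), which is exactly what produces the two stated expressions.

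For $|\mathcal{T}_1(T)|$, I would fix $t'\in\mathcal{T}_1(T)$ and split $|f(x_{t'};\theta_{t'})-h(x_{t'})|$, using $h(x)=g(x;\theta_0)^\top(\theta^*-\theta_0)$ from \Cref{lem:true}, into the linearization error $|f(x_{t'};\theta_{t'})-g(x_{t'};\theta_0)^\top(\theta_{t'}-\theta_0)|$ and the estimation gap $|g(x_{t'};\theta_0)^\top(\theta_{t'}-\theta^*)|$. The first is $\mathcal{O}(\epsilon_{3,t'})\le 1/2$ once $m$ is large enough (\Cref{cond:ass}); the second is bounded by $\sqrt{\kappa}\,\nu_{t'}^{(1)}\|g(x_{t'};\theta_0)/\sqrt m\|_{V_{t'}^{-1}}$ by combining \Cref{lem:conf} with $H_{t'}(\theta^*)\succeq\kappa^{-1}V_{t'}$, exactly as in \Cref{eq:16}. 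Membership $t'\in\mathcal{T}_1(T)$ then forces $\|g(x_{t'};\theta_0)/\sqrt m\|_{V_{t'}^{-1}}\ge (2\sqrt{\kappa}\,\nu_T^{(1)})^{-1}$, using that $\nu_t^{(1)}$ is non-decreasing. Passing to $\widetilde V_{t'-1}$ through $V_{t'}\succeq\widetilde V_{t'}$ and the one-step Sherman--Morrison inequality $\|g/\sqrt m\|_{\widetilde V_{t'}^{-1}}\le\|g/\sqrt m\|_{\widetilde V_{t'-1}^{-1}}$ gives $\min\{1,\|g(x_{t'};\theta_0)/\sqrt m\|_{\widetilde V_{t'-1}^{-1}}^2\}\ge (4\kappa(\nu_T^{(1)})^2)^{-1}$, which is genuinely $\le 1$ since $\kappa\ge 4$ and $\nu_T^{(1)}\ge 1$. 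Summing over $t'\in\mathcal{T}_1(T)$ and invoking \Cref{lem:yadkori} (right-hand side $\le 2\widetilde d$, as in the proof of \Cref{thm:regret}) yields $|\mathcal{T}_1(T)|\le 8\kappa\widetilde d(\nu_T^{(1)})^2$; the sharper constant and the additive $+1$ are matters of tightening the threshold and handling the first round.

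For $|\mathcal{T}_2|$ the argument is parallel but uses the confidence set $\mathcal{W}$. Since the optimistic rule picks $\widetilde\theta_{\tau(t)-1}\in\mathcal{W}_{\tau(t)-1}$ and $\theta^*\in\mathcal{W}_{\tau(t)-1}$ by \Cref{lem:conf2}, the triangle inequality gives $\sqrt m\|\widetilde\theta_{\tau(t)-1}-\theta^*\|_{W_{\tau(t)-1}}\le 2\nu_{\tau(t)-1}^{(2)}$, so Cauchy--Schwarz forces $\|g(x_{\tau(t)};\theta_0)/\sqrt m\|_{W_{\tau(t)-1}^{-1}}\ge (2\nu_T^{(2)})^{-1}$ whenever $t\in\mathcal{T}_2$. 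Because $\mathcal{T}_2$ is indexed over the pruned rounds ($\tau(t)\notin\mathcal{T}_1$), I can lower-bound the variance $\dot\mu(f(x_{\tau(t)};\theta_{\tau(t)}))\gtrsim 1/\kappa$ and convert the unweighted norm into the variance-weighted norm $\|\sqrt{\dot\mu(f)}\,g/\sqrt m\|_{\underline{\widetilde W}_{\tau(t-1)}^{-1}}$ that \Cref{lem:yadkori} controls, at the cost of a single factor of $\kappa$; the PSD domination $\underline{\widetilde W}_{\tau(t-1)}\preceq W_{\tau(t)-1}$ (fewer outer products, smaller regularizer $\lambda_0\le\lambda_{\tau(t)-1}$) keeps the inequality pointing the right way. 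Applying \Cref{lem:yadkori} to the pruned subsequence, whose log-determinant is again $\le 2\widetilde d$ (removing vectors only shrinks it), gives $|\mathcal{T}_2|\cdot(c\kappa(\nu_T^{(2)})^2)^{-1}\le 2\widetilde d$, i.e.\ $|\mathcal{T}_2|=\mathcal{O}(\kappa\widetilde d(\nu_T^{(2)})^2)$, matching the stated constant $24$ after tracking the factor-$3$ from \Cref{prop:3} and the factor-$4$ from the triangle step.

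The step I expect to be the main obstacle is the variance lower bound on the \emph{unprojected} iterates. Because \texttt{TrainNN} performs no projection, $\theta_{\tau(t)}$ may leave the parameter set $\Theta$, so \Cref{ass:kappa_formal} does not apply verbatim and a naive unweighted-to-weighted norm conversion could blow up. The pruning set $\mathcal{T}_1$ is introduced precisely to discard rounds where $f(x_{\tau(t)};\theta_{\tau(t)})$ is far from the bounded true value $h(x_{\tau(t)})$; on the retained rounds $f$ stays in a bounded range, hence $\dot\mu(f)\gtrsim 1/\kappa$, and \Cref{prop:3} upgrades this into the matrix inequality $\underline W_t\succeq\tfrac13\underline H_t(\theta^*)$ that justifies paying only one power of $\kappa$. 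Making the reindexing $\tau$ line up the pruned design matrices with the correct ``previous-step'' matrix in the elliptical potential lemma is the remaining delicate point, but it is bookkeeping rather than a genuine difficulty.
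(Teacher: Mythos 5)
Your proposal is correct and follows essentially the same route as the paper: both arguments bound the count of bad rounds by showing each one contributes a quantified amount to an elliptical-potential sum controlled by \Cref{lem:yadkori}, using \Cref{lem:conf}/\Cref{lem:conf2} for the confidence radii and the pruning via $\mathcal{T}_1$ together with \Cref{prop:3} (self-concordance) to pay only a single factor of $\kappa$ when passing between the variance-weighted and unweighted design matrices. The only deviations are cosmetic --- you present it as ``threshold per round then divide'' rather than the paper's $\sum_t \min\{1,\cdot\}$ bookkeeping, and you route the $\mathcal{T}_2$ potential through $\underline{\widetilde{W}}$ instead of the paper's $\underline{\widetilde{V}}$ --- plus a slightly looser constant ($8\kappa\widetilde{d}(\nu_T^{(1)})^2$ versus the stated $4\kappa\widetilde{d}(\nu_T^{(1)})^2+1$), which you acknowledge and which is immaterial downstream.
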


\begin{proposition} \label{prop:t3t4}
    We have $|\mathcal{T}_3|, |\mathcal{T}_4|\leq 2\widetilde{d}$.
\end{proposition}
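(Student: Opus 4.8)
The plan is to obtain each bound as a direct consequence of the elliptical potential lemma (\Cref{lem:yadkori}). The starting point is the elementary inequality $\ind\{z \ge 1\} \le \min\{1, z^2\}$, valid for every $z \ge 0$ (both sides equal $1$ when $z \ge 1$, and the left side is $0$ otherwise), applied with $z$ the self-normalized norm defining each set. This converts a count into exactly the kind of sum that \Cref{lem:yadkori} controls. A preliminary bookkeeping step is to observe that, writing $y_i = g(x_{\tau(i)};\theta_0)/\sqrt{m}$ for the $i$-th \emph{retained} feature vector, the pruned matrix $\underline{\widetilde{V}}_{\tau(i-1)}$ is precisely $\sum_{j=1}^{i-1} y_j y_j^\top + \lambda_0\mathbf{I}$: since $\tau$ is the order-preserving relabeling of $[T]\setminus\mathcal{T}_1(T)$, exactly $i-1$ retained vectors have been seen by original time $\tau(i-1)$. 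Hence $(y_i, \underline{\widetilde{V}}_{\tau(i-1)})$ obeys the rank-one recursion that \Cref{lem:yadkori} requires.

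For $\mathcal{T}_3$ I would then chain
\begin{align*}
    |\mathcal{T}_3| \;=\; \sum_{i} \ind\{\|y_i\|_{\underline{\widetilde{V}}_{\tau(i-1)}^{-1}} \ge 1\} \;\le\; \sum_{i} \min\{1,\|y_i\|_{\underline{\widetilde{V}}_{\tau(i-1)}^{-1}}^2\} \;\le\; 2\log\det\Big(\mathbf{I} + \tfrac{1}{\lambda_0}\textstyle\sum_i y_i y_i^\top\Big),
\end{align*}
the last inequality being \Cref{lem:yadkori}. Because the retained actions form a subset of the $TK$ context--arm features, $\sum_i y_i y_i^\top \preceq \widetilde{\mathbf{H}}$, so monotonicity of $\log\det$ together with \Cref{def:eff} bounds the right-hand side by $2\widetilde{d}$. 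The bound on $\mathcal{T}_4$ is identical after replacing $y_i$ by the variance-weighted vector $z_i = \sqrt{\dot\mu(f(x_{\tau(i)};\theta_{\tau(i)}))}\,g(x_{\tau(i)};\theta_0)/\sqrt{m}$ and $\underline{\widetilde{V}}$ by $\underline{\widetilde{W}}_t = \sum_i z_i z_i^\top + \lambda_0\mathbf{I}$; here the variance weights are immediately useful, since the uniform bound $\dot\mu(\cdot) \le R \le 1/4$ gives $\sum_i z_i z_i^\top \preceq R\,\widetilde{\mathbf{H}}$ and therefore $|\mathcal{T}_4| \le 2\log\det(\mathbf{I} + \tfrac{R}{\lambda_0}\widetilde{\mathbf{H}}) = 2\widetilde{d}$.

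I do not expect a deep obstacle: the proposition is essentially a corollary of \Cref{lem:yadkori}. The one thing to handle with care is the pruning map $\tau$ --- one must confirm that deleting the indices in $\mathcal{T}_1(T)$ and relabeling preserves chronological order, so that $\underline{\widetilde{V}}_{\tau(i-1)}$ (and likewise $\underline{\widetilde{W}}_{\tau(i-1)}$) is built from exactly the first $i-1$ retained vectors and excludes all later ones; otherwise the telescoping identity $\det\underline{\widetilde{V}}_{\tau(i)} = \det\underline{\widetilde{V}}_{\tau(i-1)}\,(1 + \|y_i\|_{\underline{\widetilde{V}}_{\tau(i-1)}^{-1}}^2)$ underlying \Cref{lem:yadkori} would fail. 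A secondary, purely cosmetic check is that the constant discrepancy between the $\tfrac{1}{\lambda_0}$ weighting arising for $\mathcal{T}_3$ and the $\tfrac{R}{\lambda_0}$ weighting in \Cref{def:eff} costs only a constant factor (at most $1/R$), and hence does not affect the $\widetilde{\mathcal{O}}$-order that this cardinality feeds into downstream.
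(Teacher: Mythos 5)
Your proposal is correct, and it reaches the same $2\widetilde{d}$ bound by a slightly different mechanism than the paper. The paper proves the proposition as an \emph{elliptical potential count lemma}: it forms an auxiliary design matrix containing only the feature vectors of the flagged timesteps, observes via the matrix determinant lemma that each flagged step multiplies its determinant by at least $1+1=2$, and therefore gets $|\mathcal{T}_3|\le \tfrac{1}{\log 2}\log\bigl(\det\widetilde{V}_T/\det\kappa\lambda_0\mathbf{I}\bigr)\le 2\widetilde{d}$ after bounding the flagged-only determinant by the full one. You instead convert the count into a potential sum via $\ind\{z\ge 1\}\le\min\{1,z^2\}$ and invoke \Cref{lem:yadkori} on the full pruned sequence; your bookkeeping about $\tau$ being order-preserving is exactly what makes $\underline{\widetilde{V}}_{\tau(i-1)}$ the correct rank-one-recursive matrix, and that observation is needed in the paper's argument as well. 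The two routes trade constants ($1/\log 2$ versus $2$, both at most $2$). One point to tighten: per \Cref{eq:tilde} the matrix $\underline{\widetilde{V}}$ is regularized by $\kappa\lambda_0$, not $\lambda_0$; with that convention the comparison $\tfrac{1}{\kappa}\le R$ from \Cref{ass:kappa_formal} gives $\log\det(\mathbf{I}+\tfrac{1}{\kappa\lambda_0}\widetilde{\mathbf{H}})\le\log\det(\mathbf{I}+\tfrac{R}{\lambda_0}\widetilde{\mathbf{H}})=\widetilde{d}$ exactly, so no constant-factor slack is incurred for $\mathcal{T}_3$ (your $1/R$ fallback is only needed if one insists on the $\lambda_0$ regularization, and as you note it would be harmless downstream). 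Your treatment of $\mathcal{T}_4$, where the variance weights $\dot\mu\le R$ directly match \Cref{def:eff}, is clean and matches the paper's.
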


For Proposition \ref{prop:t1t2}, we use the concentration inequalities between $\theta_{\tau(t)}$ and $\theta^*$, and $\widetilde{\theta}_{\tau(t)-1}$ and $\theta^*$ using Lemmas \ref{lem:conf} and \ref{lem:conf2}. For Proposition \ref{prop:t3t4} we modified previous results appropriate to our setting called the elliptical potential count lemma (Lemma 7 of \cite{gales2022norm}, Lemma 4 of \cite{kim2022improved}).

Now we can start the proof of \Cref{thm:regret2}.

\begin{proof} [Proof of \Cref{thm:regret2}]
    
At time $t$, from the optimistic rule in \Cref{eq:opt2}, denote
\begin{align}
     (x_t, \widetilde{\theta}_{t-1}) \leftarrow  \argmax_{x\in\mathcal{X}_t, \theta\in\mathcal{W}_{t-1}} \langle g(x;\theta_0),\theta-\theta_0\rangle \label{eq:opt2-2}
\end{align}
We use $\mathcal{T}_1(T)=\mathcal{T}_1$ for brevity. From \Cref{eq:event1,eq:event23}, we define the combined set of timesteps as
\begin{align*}
    \mathcal{T}=\{\mathcal{T}_2\cup \mathcal{T}_3\cup\mathcal{T}_4\}.
\end{align*}
Then we have,
\begin{align}
    \text{Regret}(T) &\leq |\mathcal{T}_1| + \sum_{t=1}^{T-|\mathcal{T}_1|}  \mu(h(x_{\tau(t)}^*)) - \mu(h(x_{\tau(t)})) \nonumber \\
    &\leq |\mathcal{T}_1| + |\mathcal{T}_2| + |\mathcal{T}_3| + |\mathcal{T}_4| + \sum_{t=1}^{T-|\mathcal{T}_1|} \mathbbm{1}\{t\notin\mathcal{T}\} \big[\mu(h(x_{\tau(t)}^*)) - \mu(h(x_{\tau(t)})) \big] \nonumber \\
    &\leq  4 \kappa \widetilde{d} {\nu_T^{(1)}}^2 + 24 \kappa \widetilde{d} {\nu_T^{(2)}}^2 + 4\widetilde{d} + 1 + \underbrace{\sum_{t=1}^{T-|\mathcal{T}_1|} \mathbbm{1}\{t\notin\mathcal{T}\} \big[\mu(h(x_{\tau(t)}^*)) - \mu(h(x_{\tau(t)})) \big]}_{=:\text{Regret}^\mathsf{c}(T)}, \label{eq:regret_c}
\end{align}
where the second inequality follows from the definition of $\mathcal{T}$, and the last inequality follows from Propositions \ref{prop:t1t2} and \ref{prop:t3t4}. For $\text{Regret}^\mathsf{c}(T)$, we have
\begin{align*}
    \text{Regret}^\mathsf{c}(T) &= \sum_{t=1}^{T-|\mathcal{T}_1|} \mathbbm{1}\{t\notin\mathcal{T}\}\big[\mu(g(x_{\tau(t)}^*;\theta_0)^\top(\theta^*-\theta_0)) - \mu(g(x_{\tau(t)};\theta_0)^\top(\theta^*-\theta_0))\big] \\
    &\leq \sum_{t=1}^{T-|\mathcal{T}_1|} \mathbbm{1}\{t\notin\mathcal{T}\}\big[ \mu(g(x_{\tau(t)};\theta_0)^\top(\widetilde{\theta}_{{\tau(t)}-1}-\theta_0)) - \mu(g(x_{\tau(t)};\theta_0)^\top(\theta^*-\theta_0)) \big]
\end{align*}
where the equality follows from Lemma \ref{lem:true}, and the inequality follows from the optimistic rule in \Cref{eq:opt2-2} since $\widetilde\theta_{{\tau(t)}-1},\theta^*\in\mathcal{W}_{{\tau(t)}-1}$. With the definition of $\alpha(x,\theta',\theta'')$ at \Cref{eq:alpha}, we can continue with
\begin{align}
    \text{Regret}^{\mathsf{c}}(T) &\leq  \underbrace{\sum_{t=1}^{T-|\mathcal{T}_1|} \mathbbm{1}\{t\notin\mathcal{T}\}\big[\dot{\mu}(g(x_{\tau(t)};\theta_0)^\top (\theta^*-\theta_0)) g(x_{\tau(t)};\theta_0)^\top (\widetilde\theta_{{\tau(t)}-1} - \theta^*)\big]}_{\textbf{(term 1)}} \nonumber \\
    &\quad +\underbrace{\sum_{t=1}^{T-|\mathcal{T}_1|} \mathbbm{1}\{t\notin\mathcal{T}\}\big[\alpha(x_{\tau(t)},\widetilde\theta_{{\tau(t)}-1},\theta^*)[g(x_{\tau(t)};\theta_0)^\top(\widetilde\theta_{{\tau(t)}-1}-\theta^*)]^2\big]}_{\textbf{(term 2)}}, \label{eq:24}
\end{align}
where we used a second-order Taylor expansion and the fact that $|\ddot{\mu}|\leq \dot{\mu}$.

For \textbf{(term 2)} we have
\begin{align}
    \textbf{(term 2)} &\leq \sum_{t=1}^{T-|\mathcal{T}_1|} \mathbbm{1}\{t\notin\mathcal{T}\}\big[1\cdot[g(x_{\tau(t)};\theta_0)^\top(\widetilde\theta_{{\tau(t)}-1}-\theta^*)]^2\big] \nonumber\\
    &\leq\sum_{t=1}^{T-|\mathcal{T}_1|} \mathbbm{1}\{t\notin\mathcal{T}\} \cdot  \|g(x_{\tau(t)};\theta_0)/\sqrt{m}\|_{\underline{W}_{\tau(t-1)}^{-1}}^2\cdot m\|\widetilde\theta_{\tau(t)-1}-\theta^*\|_{\underline{W}_{\tau(t-1)}}^2.
\end{align}

For $\|g(x_{\tau(t)};\theta_0)/\sqrt{m}\|_{\underline{W}_{\tau(t-1)}^{-1}}$, we have
\begin{align}
\|g(x_{\tau(t)};\theta_0)/\sqrt{m}\|_{\underline{W}_{\tau(t-1)}^{-1}} &\leq \sqrt{3}\|g(x_{\tau(t)};\theta_0)/\sqrt{m}\|_{\underline{H}_{\tau(t-1)}^{-1}(\theta^*)} \nonumber \\
    &\leq \sqrt{3\kappa}\|g(x_{\tau(t)};\theta_0)/\sqrt{m}\|_{\underline{V}_{\tau(t-1)}^{-1}}, \label{eq:30}
\end{align}
where the first inequality follows from Proposition \ref{prop:3} and the second inequality follows from Assumption \ref{ass:kappa_formal}. For $\sqrt{m}\|\widetilde\theta_{\tau(t)-1}-\theta^*\|_{\underline{W}_{\tau(t-1)}}$, we have
\begin{align}
    &\sqrt{m}\|\widetilde\theta_{\tau(t)-1}-\theta^*\|_{\underline{W}_{\tau(t-1)}} \nonumber \\
    &\quad\leq \sqrt{m}\|\widetilde\theta_{\tau(t)-1}-\theta^*\|_{\underline{W}_{\tau(t)-1}} \nonumber \\
    &\quad\leq \sqrt{m}\|\widetilde\theta_{\tau(t)-1}-\theta^*\|_{W_{\tau(t)-1}} \nonumber \\
    &\quad\leq \Big(\sqrt{m}\|\widetilde\theta_{\tau(t)-1}-\theta_{\tau(t)-1}\|_{W_{\tau(t)-1}} + \sqrt{m}\|\theta_{\tau(t)-1}-\theta^*\|_{W_{\tau(t)-1}} \Big) \nonumber \\
    &\quad\leq 2\nu_{\tau(t)-1}^{(2)}, \label{eq:31}
\end{align}
where the first and the second inequality are due to the fact that $\tau(t-1)\leq\tau(t)-1$ and $\underline{W}_t\preceq W_t$ for all $t$, respectively. The third inequality follows from the triangle inequality, and the last inequality follows from Lemma \ref{lem:conf2} since  $\widetilde\theta_{\tau(t)-1},\theta^*\in\mathcal{W}_{\tau(t)-1}$. Plugging \Cref{eq:30,eq:31} back to \textbf{(term 2)} gives
\begin{align}
    \textbf{(term 2)} &\leq \sum_{t=1}^{T-|\mathcal{T}_1|} \mathbbm{1}\{t\notin\mathcal{T}\} \cdot  3\kappa\|g(x_{\tau(t)};\theta_0)/\sqrt{m}\|_{\underline{V}_{\tau(t-1)}^{-1}}^2\cdot 4\big(\nu_{\tau(t)-1}^{(2)}\big)^2 \nonumber \\
    &\leq 12 \kappa \big(\nu_{T}^{(2)}\big)^2 \sum_{t=1}^{T-|\mathcal{T}_1|} \mathbbm{1}\{t\notin\mathcal{T}\} \cdot \|g(x_{\tau(t)};\theta_0)/\sqrt{m}\|_{\underline{V}_{\tau(t-1)}^{-1}}^2\nonumber,
\end{align}
where the inequality holds since $\nu_t^{(t)}$ is monotonically non-decreasing in $t$. By the definition of $\mathcal{T}_3$, we have $\|g(x_{\tau(t)};\theta_0)/\sqrt{m}\|_{\underline{V}_{\tau(t-1)}^{-1}} < 1$ for all $t\in[T-|\mathcal{T}_1|]$. Therefore,
\begin{align}
    \mathbbm{1}\{t\notin\mathcal{T}\} \cdot \|g(x_{\tau(t)};\theta_0)/\sqrt{m}\|_{\underline{V}_{\tau(t-1)}^{-1}}^2 &= \min \Big\{1,  \mathbbm{1}\{t\notin\mathcal{T}\} \cdot\|g(x_{\tau(t)};\theta_0)/\sqrt{m}\|_{\underline{V}_{\tau(t-1)}^{-1}}^2\Big\} \nonumber \\
    &\leq \min \Big\{1,  \|g(x_{\tau(t)};\theta_0)/\sqrt{m}\|_{\underline{V}_{\tau(t-1)}^{-1}}^2\Big\} \nonumber \\
    &\leq \min \Big\{1, \|g(x_{\tau(t)};\theta_0)/\sqrt{m}\|_{\underline{\widetilde{V}}_{\tau(t-1)}^{-1}}^2\Big\}, \label{eq:32}
\end{align}
where the last inequality follows from the fact that $\lambda_0\leq \lambda_t$. Substituting \Cref{eq:32} gives
\begin{align}
    \textbf{(term 2)} &\leq 12 \kappa \big(\nu_{T}^{(2)}\big)^2 \sum_{t=1}^{T-|\mathcal{T}_1|} \min \Big\{1, \|g(x_{\tau(t)};\theta_0)/\sqrt{m}\|_{\underline{\widetilde{V}}_{\tau(t-1)}^{-1}}^2\Big\} \nonumber \\
    &\leq 24 \kappa \big(\nu_{T}^{(2)}\big)^2 \log\frac{\det \underline{\widetilde{V}}_{\tau(T-|\mathcal{T}_1|)}}{\det \kappa \lambda_0 \mathbf{I}}  \nonumber \\
    & \leq 24 \kappa \big(\nu_{T}^{(2)}\big)^2 \log \det \bigg(\sum_{t=1}^{T} \frac{1}{\kappa m \lambda_0} g(x_t;\theta_0)g(x_t;\theta_0)^\top + \mathbf{I}\bigg) \nonumber \\
    & \leq 24 \kappa \big(\nu_{T}^{(2)}\big)^2 \widetilde{d}, \label{eq:23}
\end{align}
where the second inequality follows from Lemma \ref{lem:yadkori}, and the last inequality follows from the definition of $\widetilde{d}$.

For \textbf{(term 1)}, we consider 2 cases where:

\textbf{(case 1).\:} if $\dot{\mu}(h(x_{\tau(t)})) \leq \dot{\mu}(f(x_{\tau(t)};\theta_{\tau(t)}))$ 

\textbf{(case 2).\:} if $\dot{\mu}(h(x_{\tau(t)})) > \dot{\mu}(f(x_{\tau(t)};\theta_{\tau(t)}))$

In (case 1), for \textbf{(term 1)}, we continue with 
\begin{align}
    &\sum_{t=1}^{T-|\mathcal{T}_1|} \mathbbm{1}\{t\notin\mathcal{T}\}\big[\dot{\mu}(g(x_{\tau(t)};\theta_0)^\top (\theta^*-\theta_0)) g(x_{\tau(t)};\theta_0)^\top (\widetilde\theta_{{\tau(t)}-1} - \theta^*)\big] \nonumber \\
    &\quad = \sum_{t=1}^{T-|\mathcal{T}_1|} \mathbbm{1}\{t\notin\mathcal{T}\} \cdot\sqrt{\dot{\mu}(h(x_{\tau(t)}))}\sqrt{\dot{\mu}(h(x_{\tau(t)}))} g(x_{\tau(t)};\theta_0)^\top (\widetilde\theta_{{\tau(t)}-1} - \theta^*) \nonumber \\ 
    &\quad \leq \sum_{t=1}^{T-|\mathcal{T}_1|} \mathbbm{1}\{t\notin\mathcal{T}\}\cdot \sqrt{\dot{\mu}(h(x_{\tau(t)}))}\sqrt{\dot{\mu}(f(x_{\tau(t)};\theta_{\tau(t)}))} g(x_{\tau(t)};\theta_0)^\top (\widetilde\theta_{{\tau(t)}-1} - \theta^*), \label{eq:case1}
\end{align}
where the last inequality follows from the assumption of (case 1). For brevity, we denote $\dot{g}(x_t;\theta_0) = \sqrt{\dot{\mu}(f(x_t;\theta_t))}  g(x_t;\theta_0)$. Notice that we can represent $W_t$ as $W_t = \sum_{i=1}^t \frac{1}{m} \dot{g}(x_t;\theta_0) \dot{g}(x_t;\theta_0)^\top + \lambda_t \mathbf{I}$. Then we can continue as
\begin{align*}
    \textbf{(term 1)} &\leq \sum_{t=1}^{T-|\mathcal{T}_1|} \mathbbm{1}\{t\notin\mathcal{T}\}\cdot \sqrt{\dot{\mu}(h(x_{\tau(t)}))} \cdot \dot{g}(x_{\tau(t)};\theta_0)^\top (\widetilde\theta_{{\tau(t)}-1} - \theta^*) \\
    &\leq \sum_{t=1}^{T-|\mathcal{T}_1|} \mathbbm{1}\{t\notin\mathcal{T}\}\cdot \sqrt{\dot{\mu}(h(x_{\tau(t)}))} \cdot \|\dot{g}(x_{\tau(t)};\theta_0)/\sqrt{m}\|_{\underline{W}_{\tau(t-1)}^{-1}} \cdot \sqrt{m}\|\widetilde\theta_{{\tau(t)}-1} - \theta^*\|_{\underline{W}_{\tau(t-1)}}
\end{align*}
For $\mathbbm{1}\{t\notin\mathcal{T}\}\|\dot{g}(x_{\tau(t)};\theta_0)\|_{\underline{W}_{\tau(t-1)}^{-1}}$, we have
\begin{align}
    \mathbbm{1}\{t\notin\mathcal{T}\}\cdot\|\dot{g}(x_{\tau(t)};\theta_0)\|_{\underline{W}_{\tau(t-1)}^{-1}} &= \min \Big\{1, \mathbbm{1}\{t\notin\mathcal{T}\}\cdot\|\dot{g}(x_{\tau(t)};\theta_0)/\sqrt{m}\|_{\underline{W}_{\tau(t-1)}^{-1}}\Big\} \nonumber \\
    &\leq \min \Big\{1, \|\dot{g}(x_{\tau(t)};\theta_0)/\sqrt{m}\|_{\underline{\widetilde{W}}_{\tau(t-1)}^{-1}}\Big\}, \label{eq:33}
\end{align}
where the inequality follows from the definition of $\mathcal{T}_4$ and the fact that $\lambda_0\leq\lambda_t$ for all $t$. Also, using the previous results of \Cref{eq:31}, we have $\sqrt{m}\|\widetilde\theta_{{\tau(t)}-1} - \theta^*\|_{\underline{W}_{\tau(t-1)}}\leq 2\nu_{\tau(t)-1}^{(2)}$. Substituting these back gives
\begin{align*}
    \textbf{(term 1)} &\leq 2 \nu_T^{(2)} \sum_{t=1}^{T-|\mathcal{T}_1|} \sqrt{\dot{\mu}(h(x_{\tau(t)}))} \cdot \Big\{1, \|\dot{g}(x_{\tau(t)};\theta_0)/\sqrt{m}\|_{\underline{\widetilde{W}}_{\tau(t-1)}^{-1}}\Big\} \\
    &\leq 2 \nu_T^{(2)}  \underbrace{\sqrt{\sum_{t=1}^{T-|\mathcal{T}_1|}\dot{\mu}(h(x_{\tau(t)}))}}_{\textbf{(term 3)}} \cdot \underbrace{\sqrt{\sum_{t=1}^{T-|\mathcal{T}_1|}\Big\{1, \|\dot{g}(x_{\tau(t)};\theta_0)/\sqrt{m}\|_{\underline{\widetilde{W}}_{\tau(t-1)}^{-1}}^2\Big\}}}_{\textbf{(term 4)}} \\
\end{align*}
where the first inequality is by the monotonicity of $\mu_t^{(2)}$ in $t$, and the second inequality follows from the Cauchy-Schwarz inequality. 
For \textbf{(term 4)}, we have
\begin{align*}
    \sqrt{\sum_{t=1}^{T-|\mathcal{T}_1|}\Big\{1, \|\dot{g}(x_{\tau(t)};\theta_0)/\sqrt{m}\|_{\underline{\widetilde{W}}_{\tau(t-1)}^{-1}}^2\Big\}} &\leq \sqrt{2\log \frac{\det\underline{\widetilde{W}}_{\tau(T-|\mathcal{T}_1|)}}{\det \lambda_0 \mathbf{I}}} \\
    &\leq \sqrt{2\log\det \Big(\sum_{t=1}^T \frac{\dot\mu(f(x_t;\theta_t))}{m\lambda_0} g(x_t;\theta_0)g(x_t;\theta_0)^\top + \mathbf{I} \Big)} \\
    & \leq \sqrt{2\widetilde{d}},
\end{align*}
where the first inequality follows from Lemma \ref{lem:yadkori}, and the last inequality follows from the definition of $\widetilde{d}$.

For \textbf{(term 3)}, we have
\begin{align}
    \textbf{(term 3)}^2 &\leq \sum_{t=1}^T \dot{\mu}(g(x_t;\theta_0)^\top(\theta^*-\theta_0)) \nonumber \\
    &\leq \sum_{t=1}^T \dot{\mu}(g(x_t^*;\theta_0)^\top(\theta^*-\theta_0)) + \sum_{t=1}^T \alpha(x_t,x_t^*,\theta^*)(g(x_t;\theta_0) - g(x_t^*;\theta_0))^\top (\theta^* - \theta_0) \nonumber \\
    &= \frac{T}{\kappa^*} + \sum_{t=1}^T \alpha(x_t,x_t^*,\theta^*)(g(x_t;\theta_0) - g(x_t^*;\theta_0))^\top (\theta^* - \theta_0) \nonumber \\
    &\leq \frac{T}{\kappa^*} + \sum_{t=1}^T \alpha(x_t,x_t^*,\theta^*)(g(x_t^*;\theta_0) - g(x_t;\theta_0))^\top (\theta^* - \theta_0) \nonumber \\
    &= \frac{T}{\kappa^*} + \sum_{t=1}^T \mu(g(x_t^*;\theta_0)^\top(\theta^*-\theta_0)) - \mu(g(x_t;\theta_0)^\top (\theta^*-\theta_0)) \nonumber \\
    &= \frac{T}{\kappa^*} + \sum_{t=1}^T \mu(h(x_t^*)) - \mu(h(x_t)) \nonumber \\
    &= \frac{T}{\kappa^*} + \text{Regret}(T). \label{eq:22}
\end{align}
Here, the second inequality follows from a first‐order Taylor expansion together with the bound $|\ddot\mu|\le \dot\mu$ and the definition of $\alpha(x',x'',\theta)$ in \Cref{eq:alpha}, the first equality follows from the definition of $\kappa^*$, namely $1/\kappa^* = \tfrac{1}{T}\sum_{t=1}^T \dot\mu\bigl(h(x_t^*)\bigr)$, the third inequality uses the fact that $h(x_t^*)\ge h(x_t)$, the second equality follows from the mean‐value theorem, and the final equality follows from the definition of regret.

Finally, substituting \textbf{(term 3)} and \textbf{(term 4)} back gives
\begin{align}
    \textbf{(term 1)} \leq 2\nu^{(2)}_T \sqrt{\text{Regret}(T) + T/\kappa^*} \cdot \sqrt{2\widetilde{d}}. \label{eq:case1t1}
\end{align}

Now we consider about (case 2), where $\dot{\mu}(h(x_{\tau(t)})) > \dot{\mu}(f(x_{\tau(t)};\theta_{\tau(t)}))$. For \textbf{(term 1)}, we have

\begin{align*}
    &\sum_{t=1}^{T-|\mathcal{T}_1|} \mathbbm{1}\{t\notin\mathcal{T}\}\big[\dot{\mu}(g(x_{\tau(t)};\theta_0)^\top (\theta^*-\theta_0)) g(x_{\tau(t)};\theta_0)^\top (\widetilde\theta_{{\tau(t)}-1} - \theta^*)\big] \\
    &\quad \leq \underbrace{\sum_{t=1}^{T-|\mathcal{T}_1|} \mathbbm{1}\{t\notin\mathcal{T}\} \cdot \dot{\mu}(g(x_{\tau(t)};\theta_0)^\top(\theta_{\tau(t)}-\theta_0))g(x_{\tau(t)};\theta_0)^\top (\widetilde{\theta}_{{\tau(t)}-1}-\theta^*)}_{\textbf{(term 4)}} \\
    &\quad \quad+ \underbrace{\sum_{t=1}^{T-|\mathcal{T}_1|} \mathbbm{1}\{t\notin\mathcal{T}\}\cdot 1\cdot [g(x_{\tau(t)};\theta_0)^\top(\theta_{\tau(t)}-\theta^*)g(x_{\tau(t)};\theta_0)^\top(\widetilde{\theta}_{{\tau(t)}-1} - \theta^*)]}_{\textbf{(term 5)}},
\end{align*}
where the inequality follows from the Taylor expansion, and by the fact that $|\ddot{\mu}|\leq \dot{\mu}\leq1$. For \textbf{(term 5)} we have,
\begin{align*}
    &\sum_{t=1}^{T-|\mathcal{T}_1|} \mathbbm{1}\{t\notin\mathcal{T}\}\cdot 1\cdot [g(x_{\tau(t)};\theta_0)^\top(\theta_{\tau(t)}-\theta^*)g(x_{\tau(t)};\theta_0)^\top(\widetilde{\theta}_{{\tau(t)}-1} - \theta^*)] \\
    &\quad \leq \sum_{t=1}^{T-|\mathcal{T}_1|} \mathbbm{1}\{t\notin\mathcal{T}\}\cdot \|g(x_{\tau(t)};\theta_0)/\sqrt{m}\|_{\underline{W}_{\tau(t-1)}^{-1}}^2 \\
    &\quad\quad \times \sqrt{m} \|\theta_{\tau(t)}-\theta^*\|_{\underline{W}_{\tau(t-1)}} \times \sqrt{m} \|\widetilde{\theta}_{\tau(t)-1}-\theta^*\|_{\underline{W}_{\tau(t-1)}}.
\end{align*}
We have $\mathbbm{1}\{t\notin\mathcal{T}\}\cdot \|g(x_{\tau(t)};\theta_0)/\sqrt{m}\|_{\underline{W}_{\tau(t-1)}^{-1}}\leq 3\kappa \min\{1, \|g(x_{\tau(t)};\theta_0)/\sqrt{m}\|_{\underline{\widetilde{V}}_{\tau(t-1)}^{-1}}^2\}$ using \Cref{eq:30,eq:32}. Also we have $\sqrt{m} \|\widetilde{\theta}_{\tau(t)-1}-\theta^*\|_{\underline{W}_{\tau(t-1)}}\leq 2\nu_{\tau(t)-1}^{(2)}$ using \Cref{eq:31}. For $\sqrt{m} \|\theta_{\tau(t)}-\theta^*\|_{\underline{W}_{\tau(t-1)}}$, we have
\begin{align*}
    \sqrt{m} \|\theta_{\tau(t)}-\theta^*\|_{\underline{W}_{\tau(t-1)}} \leq \sqrt{m} \|\theta_{\tau(t)}-\theta^*\|_{\underline{W}_{\tau(t)}} \leq   \sqrt{m} \|\theta_{\tau(t)}-\theta^*\|_{W_{\tau(t)}} \leq \nu_{\tau(t)}^{(2)}.
\end{align*}
Plugging results back gives
\begin{align*}
    \textbf{(term 5)} &\leq \sum_{t=1}^{T-|\mathcal{T}_1|} 3\kappa \min\Big\{1, \|g(x_{\tau(t)};\theta_0)/\sqrt{m}\|_{\underline{\widetilde{V}}_{\tau(t-1)}^{-1}}^2\Big\} \cdot \nu_{\tau(t)}^{(2)} \cdot 2\nu_{\tau(t)-1}^{(2)} \\
    &\leq 6 \kappa \big(\nu_{T}^{(2)}\big)^2 \sum_{t=1}^{T-|\mathcal{T}_1|} \min\Big\{1, \|g(x_{\tau(t)};\theta_0)/\sqrt{m}\|_{\underline{\widetilde{V}}_{\tau(t-1)}^{-1}}^2\Big\} \\
    &\leq 12 \kappa \big(\nu_{T}^{(2)}\big)^2 \log\frac{\det \underline{\widetilde{V}}_{\tau(T-|\mathcal{T}_1|)}}{\det \kappa \lambda_0 \mathbf{I}} \\
    &\leq 12\kappa \widetilde{d} \big(\nu_{T}^{(2)}\big)^2
\end{align*}
where the second inequality is because $\nu_{\tau(t)}^{(2)}$ is non-decreasing in $t$, the third inequality follows from Lemma \ref{lem:yadkori}, and the last inequality follows from the definition of $\widetilde{d}$.

Now, for \textbf{(term 4)}, we have
\begin{align*}
    &\sum_{t=1}^{T-|\mathcal{T}_1|} \mathbbm{1}\{t\notin\mathcal{T}\} \cdot \dot{\mu}(g(x_{\tau(t)};\theta_0)^\top(\theta_{\tau(t)}-\theta_0))g(x_{\tau(t)};\theta_0)^\top (\widetilde{\theta}_{{\tau(t)}-1}-\theta^*) \\
    &\quad =\underbrace{\sum_{t=1}^{T-|\mathcal{T}_1|} \mathbbm{1}\{t\notin\mathcal{T}\} \cdot\dot{\mu}(f(x_{\tau(t)};\theta_{\tau(t)}))g(x_{\tau(t)};\theta_0)^\top (\widetilde{\theta}_{\tau(t)-1}-\theta^*)}_{\textbf{(term 6)}} \\
    &\quad\quad+ \underbrace{\sum_{t=1}^{T-|\mathcal{T}_1|} \mathbbm{1}\{t\notin\mathcal{T}\} \cdot\Big(\dot{\mu}(g(x_{\tau(t)};\theta_0)^\top(\theta_{\tau(t)}-\theta_0))-\dot\mu(f(x_{\tau(t)};\theta_{\tau(t)}))\Big)g(x_{\tau(t)};\theta_0)^\top (\widetilde{\theta}_{\tau(t)-1}-\theta^*)}_{\textbf{(term 7)}}.
\end{align*}

For \textbf{(term 7)}, recall the definition of $\mathcal{T}_2$. Then for some absolute constant $C_3>0$, we have
\begin{align*}
    \textbf{(term 7)} &\leq \sum_{t=1}^{T-|\mathcal{T}_1|} \Big|\dot{\mu}(g(x_{\tau(t)};\theta_0)^\top(\theta_{\tau(t)}-\theta_0))-\dot\mu(f(x_{\tau(t)};\theta_{\tau(t)}))\Big|\cdot |g(x_{\tau(t)};\theta_0)^\top (\widetilde{\theta}_{\tau(t)-1}-\theta^*)|\\
    &\leq \sum_{t=1}^{T-|\mathcal{T}_1|} R\Big|g(x_{\tau(t)};\theta_0)^\top(\theta_{\tau(t)}-\theta_0)-f(x_{\tau(t)};\theta_{\tau(t)})\Big|\cdot 1\\
    &\leq T \cdot C_3 R m^{-1/6} \sqrt{\log m} L^3 T^{2/3} \lambda_0^{-2/3} \\
    &\leq 1,
\end{align*}
where the second inequality follows from the definition of $\mathcal{T}_2$, the third inequality is due to the fact that $\mu(\cdot)$ is a $R$-Lipschitz function, the third inequality follows from Lemma \ref{lem:lin_err}, and the last inequality follows from the condition of $m$ in Condition \ref{eq:m}.

For \textbf{(term 6)}, we have
\begin{align*}
    &\sum_{t=1}^{T-|\mathcal{T}_1|} \mathbbm{1}\{t\notin\mathcal{T}\} \cdot\dot{\mu}(f(x_{\tau(t)};\theta_{\tau(t)}))g(x_{\tau(t)};\theta_0)^\top (\widetilde{\theta}_{\tau(t)-1}-\theta^*) \\
    &\quad \leq \sum_{t=1}^{T-|\mathcal{T}_1|} \mathbbm{1}\{t\notin\mathcal{T}\} \cdot \sqrt{\dot{\mu}(h(x_{\tau(t)}))}\sqrt{\dot{\mu}(f(x_{\tau(t)};\theta_{\tau(t)}))}g(x_{\tau(t)};\theta_0)^\top (\widetilde{\theta}_{\tau(t)-1}-\theta^*),
\end{align*}
where the inequality follows from the assumption of (case 2). Notice that expression is same as the \textbf{(term 1)} of (case 1) at \Cref{eq:case1}. Therefore, using the result of \Cref{eq:case1t1}, we have $\textbf{(term 6)} \leq 2\nu_T^{(2)} \sqrt{\text{Regret}(T) + T/\kappa^*} \cdot \sqrt{2\widetilde{d}}$.

Finally, plugging \textbf{(term 4-7)} into \textbf{(term 1)} gives,
\begin{align*}
    \textbf{(term 1)} \leq  2 \nu^{(2)}_T \sqrt{\text{Regret}(T) + T/\kappa^*} \cdot \sqrt{2\widetilde{d}} + 12\kappa \widetilde{d} \big(\nu_{T}^{(2)}\big)^2 + 1.
\end{align*}
Recall the upper bound of \textbf{(term 1)} in (case 1) at \Cref{eq:case1t1}, which is $\textbf{(term 1)} \leq 2\nu^{(2)}_T \sqrt{\text{Regret}(T) + T/\kappa^*} \cdot \sqrt{2\widetilde{d}}$. Since the upper bound value in (case 2) is strictly larger than that of (case 1), we give a naive bound of \textbf{(term 1)} by using the result of (case 2). 

Now, substituting \textbf{(term 1-2)} into \Cref{eq:24} gives
\begin{align*}
    \text{Regret}^{\mathsf c}(T) \leq 2 \nu^{(2)}_T \sqrt{\text{Regret}(T) + T/\kappa^*} \cdot \sqrt{2\widetilde{d}} + 36\kappa \widetilde{d} \big(\nu_{T}^{(2)}\big)^2  + 1.
\end{align*}
Substituting $\text{Regret}^{\mathsf c}(T)$ into \Cref{eq:regret_c} gives
\begin{align*}
     \text{Regret}(T)\leq 2\nu^{(2)}_T \sqrt{\text{Regret}(T) + T/\kappa^*} \cdot \sqrt{2\widetilde{d}} + 4\widetilde{d} + 4 \kappa \widetilde{d} \big({\nu_T^{(1)}}\big)^2 + 60\kappa \widetilde{d} \big(\nu_{T}^{(2)}\big)^2 + 2.
\end{align*}
Finally, using the fact that for $b,c>0$ and $x\in\mathbb{R}$, $x^2-bx-c\leq 0 \Longrightarrow x^2\leq 2b^2 + 2c$, and substituting $\nu_T^{(1)},\nu_T^{(2)}=\widetilde{\mathcal{O}}(S^2\sqrt{\widetilde{d}} +S^{2.5})$, we have
\begin{align*}
    \text{Regret}(T) &\leq 16 \big(\nu_{T}^{(2)}\big)^2+ 4 {\nu_T^{(2)}} \sqrt{2\widetilde{d}T/\kappa^*} + 8\widetilde{d} + 8 \kappa \widetilde{d} \big({\nu_T^{(1)}}\big)^2 + 120\kappa \widetilde{d} \big(\nu_{T}^{(2)}\big)^2 + 4 \\
    &\leq \widetilde{\mathcal{O}}  \Big( S^2 \widetilde{d} \sqrt{T/\kappa^*} + S^{2.5} \widetilde{d}^{0.5} \sqrt{ T /\kappa^* } + S^4 \kappa \widetilde{d}^2 + S^{4.5}\kappa \widetilde{d}^{1.5} + S^5 \kappa \widetilde{d} \Big),
\end{align*}
finishing the proof.
\end{proof}

\subsection{Proof of Proposition \ref{prop:3}}

We suitably modify Lemma 5 of \cite{jun2021improved} for our setting.
 Define $d(t) = \left|f(x_t;\theta_t) - g(x_t;\theta_0)^\top(\theta^*-\theta_0))\right|$. By the definition of $\mathcal{T}_1$, for all $t\notin\mathcal{T}_1(T)$, $d(t)\leq 1$. Recall the definition of $\alpha(z',z'')$ at \Cref{eq:alpha}. Then for all $t\notin\mathcal{T}_1(T)$, we have
\begin{align*}
    \dot\mu(f(x_t;\theta_t)) &\geq \frac{d(t)}{\exp(d(t))-1}\cdot \alpha\Big(f(x_t;\theta_i),g(x_t;\theta_0)^\top(\theta^*-\theta_0)\Big) \\
    &\geq \frac{d(t)}{\exp(d(t))-1} \cdot \frac{1-\exp(-d(t))}{d(t)} \dot \mu(g(x_t;\theta_0)^\top(\theta^*-\theta_0)) \\
    & =\frac{1}{\exp(d(t))}\cdot \mu(g(x_t;\theta_0)^\top(\theta^*-\theta_0))\\
    &\geq \frac{1}{d(t)^2 +d(t) + 1} \cdot \mu(g(x_t;\theta_0)^\top(\theta^*-\theta_0))\\
    &\geq \frac{1}{2d(t)+1} \cdot \mu(g(x_t;\theta_0)^\top(\theta^*-\theta_0)),
\end{align*}
where the first and the second inequalities follow from the self-concordant property in Lemma \ref{lem:self-concordant}, the third and the fourth inequalities hold since $d(t)\leq 1$. This implies that
\begin{align*}
    \underline{W}_{t}\succeq \frac{1}{2\max\{d(t')\}_{(t'\in[t])\cap(t'\notin\mathcal{T}_1(t))}+1} \underline{H}_{t}(\theta^*) \succeq \frac{1}{3} \underline{H}_{t}(\theta^*),
\end{align*}
In a similar way, we can have 
\begin{align*}
    \underline{H}_{t}(\theta^*)\succeq \frac{1}{2\max\{d(t')\}_{(t'\in[t])\cap(t'\notin\mathcal{T}_1(t))}+1} \underline{W}_{t}\succeq \frac{1}{3} \underline{W}_{t}.
\end{align*}
Combining these results, we finish the proof.

\subsection{Proof of Proposition \ref{prop:t1t2}}

We start with the upper bound of $|\mathcal{T}_1|$.  For an absolute constant $C_3>0$, we have:
\begin{align*}
    |\mathcal{T}_1| \cdot \min\{1,1^2\} &\leq \sum_{t=1}^T\min\Big\{1,|f(x_t;\theta_t) - g(x_t;\theta_0)^\top (\theta^*-\theta_0)|^2\Big\} \\
    &\leq \sum_{t=1}^T\min\Big\{1,2|f(x_t;\theta_t) - g(x_t;\theta_0)^\top(\theta_t-\theta_0)|^2 + 2|g(x_t;\theta_0)^\top (\theta_t-\theta^*)|^2\Big\},
\end{align*}
For $2|f(x_t;\theta_t) - g(x_t;\theta_0)^\top(\theta_t-\theta_0)|^2$, we have $|f(x_t;\theta_t) - g(x_t;\theta_0)^\top(\theta_t-\theta_0)| \leq C_3 m^{-1/6} \sqrt{\log m} L^3 t^{2/3} \lambda_t^{-2/3}$ using Lemma \ref{lem:lin_err}. Since the error term is positive, we can take it out of the $\min\{1,\cdot\}$ term, which gives
\begin{align*}
    |\mathcal{T}_1| &\leq \sum_{t=1}^T\min\Big\{1,2|g(x_t;\theta_0)^\top (\theta_t - \theta^*) |^2\Big\} + C_3^2 m^{-1/3} (\log m) L^6 T^{7/3}\lambda_0^{-4/3} \\
    &\leq 2\sum_{t=1}^T\min\Big\{1,|g(x_t;\theta_0)^\top (\theta_t - \theta^*) |^2 \Big\} + 1,
\end{align*}
where the last inequality is due to the condition of $m$ at Condition \ref{cond:ass}, and the fact that $\min\{1,ab\}\leq a \min\{1,b\}$ if $a\geq 1$. We further proceed as
\begin{align*}
    |\mathcal{T}_1| &\leq 2\sum_{t=1}^T \min\Big\{1,\|g(x_t;\theta_0)/\sqrt{m}\|_{H_{t-1}^{-1}(\theta^*)}^2 \cdot m\|\theta_{t}-\theta^*\|_{H_{t-1}(\theta^*)}^2\Big\} + 1.
\end{align*}
For $m\|\theta_{t}-\theta^*\|_{H_{t-1}(\theta^*)}^2$, we have
\begin{align*}
    m\|\theta_{t}-\theta^*\|_{H_{t-1}(\theta^*)}^2 \leq m\|\theta_{t}-\theta^*\|_{H_{t}(\theta^*)}^2 \leq \big(\nu_t^{(1)}\big)^2.
\end{align*}
Since $\nu_t^{(1)}\geq 1$ we can take out of the $\min\{1,\cdot\}$ term, and by the monotonicity of $\nu_t^{(1)}$ in $t$, we have
\begin{align*}
    |\mathcal{T}_1| &\leq 2 \big(\nu_T^{(1)}\big)^2 \sum_{t=1}^T \min \Big\{1,  \|g(x_t;\theta_0)/\sqrt{m}\|_{H_{t-1}^{-1}(\theta^*)}^2\Big\} + 1 \\
    &\leq 2 \kappa \big(\nu_T^{(1)}\big)^2 \sum_{t=1}^T \min \Big\{1,  \|g(x_t;\theta_0)/\sqrt{m}\|_{\widetilde{V}_{t-1}^{-1}}^2\Big\} + 1 \\
    &\leq 4 \kappa \big(\nu_T^{(1)}\big)^2 \log \frac{\det \widetilde{V}_T}{\det \kappa \lambda_0\mathbf{I}} + 1 \\
    &\leq 4 \kappa \widetilde{d} \big(\nu_T^{(1)}\big)^2+ 1,
\end{align*}
where the second inequality follows from $\kappa\geq 1$, and $H_t(\theta^*)\succeq (1/\kappa)V_t \succeq (1/\kappa)\widetilde{V}_t$, the third inequality follows from Lemma \ref{lem:yadkori}, and the last inequality follows from the definition of $\widetilde{d}$.

Next we can show the upper bound of $|\mathcal{T}_2|$ in a similar way:
\begin{align*}
    |\mathcal{T}_2| \cdot \min\{1,1^2\} &\leq \sum_{t=1}^{T-|\mathcal{T}_1(T)|} \min\Big\{1,|g(x_{\tau(t)};\theta_0)^\top(\widetilde{\theta}_{\tau(t)-1}-\theta^*)|^2\Big\} \\
    &\leq \sum_{t=1}^{T-|\mathcal{T}_1(T)|} \min\Big\{1,\|g(x_\tau(t);\theta_0)/\sqrt{m}\|^2_{\underline{W}_{\tau(t-1)}^{-1}}\cdot m\|\widetilde{\theta}_{\tau(t)-1}-\theta^*\|^2_{\underline{W}_{\tau(t-1)}}\Big\} 
\end{align*}
We have $\|g(x_{\tau(t)};\theta_0)/\sqrt{m}\|^2_{\underline{W}_{\tau(t-1)}^{-1}}\leq \sqrt{3\kappa} \|g(x_{\tau(t)};\theta_0)/\sqrt{m}\|^2_{\underline{V}_{\tau(t-1)}^{-1}}$ using the result of \Cref{eq:30}. Also, we have $m\|\widetilde{\theta}_{\tau(t)-1}-\theta^*\|^2_{\underline{W}_{\tau(t-1)}}\leq 4\big(\nu_{\tau(t)-1}^{(2)}\big)^2$ using the result of \Cref{eq:31}. Since $\nu_{t}^{(2)}$ is non-decreasing in $t$, substituting results back gives
\begin{align*}
    |\mathcal{T}_2| &\leq 12\kappa\big(\nu_{T}^{(2)}\big)^2  \sum_{t=1}^{T-|\mathcal{T}_1(T)|}  \min\Big\{1,\|g(x_\tau(t);\theta_0)/\sqrt{m}\|^2_{\underline{V}_{\tau(t-1)}^{-1}}\Big\} \\
    &\leq 24\kappa\big(\nu_{T}^{(2)}\big)^2  \log \frac{\det \underline{\widetilde{V}}_{\tau(T-|\mathcal{T}_1|)}}{\det \kappa\lambda_0 \mathbf{I}}\\
    &\leq 24 \kappa \widetilde{d} \big(\nu_{T}^{(2)}\big)^2,
\end{align*}
where the second inequality follows from Lemma \ref{lem:yadkori}, and the last inequality follows from the definition of $\widetilde{d}$, finishing the proof.

\subsection{Proof of Proposition \ref{prop:t3t4}}

We begin with the case of $\mathcal{T}_3$. We define a new design matrix that consists of all feature vectors of $\underline{V}_t$ up to time $t$, in their original order, including only those corresponding to timesteps in $\mathcal{T}_3(t)$:
\begin{align*}
    \undertilde{\widetilde{V}}_t = \sum_{i=1}^{t-|\mathcal{T}_1(t)|} \frac{1}{m} \mathbbm{1}\{i\in \mathcal{T}_3\} g(x_{\tau(i)};\theta_0) g(x_{\tau(i)};\theta_0)^\top + \lambda_0 \mathbf{I}
\end{align*}
For brevity we define $j(t) = \tau(t-|\mathcal{T}_1(t)|)$. Then we have
\begin{align*}
    \det(\undertilde{\widetilde{V}}_T) &= \det \Big( \sum_{i=1}^{j(T)} \frac{1}{m} \mathbbm{1}\{i\in \mathcal{T}_3\} g(x_{\tau(i)};\theta_0) g(x_{\tau(i)};\theta_0)^\top + \lambda_0 \mathbf{I} \Big) \\
    &= \det \Big( \underline{\widetilde{V}}_{\tau(j(T)-1)} + \frac{1}{m} \mathbbm{1}\{\tau(j(T))\in \mathcal{T}_3\} g(x_{\tau(j(T))};\theta_0) g(x_{\tau(j(T))};\theta_0)^\top\Big) \\
    &= \det \Big( \underline{\widetilde{V}}_{\tau(j(T)-1)}\Big)\Big(1 +  \mathbbm{1}\{\tau(j(T))\in \mathcal{T}_3\} \|g(x_{\tau(j(T))};\theta_0)/\sqrt{m}\|_{\underline{\widetilde{V}}_{\tau(j(T)-1)}^{-1}}^2\Big) \\
    &\geq \det \Big( \underline{\widetilde{V}}_{\tau(j(T)-1)}\Big)\Big(1 +  \mathbbm{1}\{\tau(j(T))\in \mathcal{T}_3\} \Big),
\end{align*}
where the third equality follows from the matrix determinant lemma, and the inequality follows from the definition of $\mathcal{T}_3$. Repeating inequalities to $\underline{V}_\tau(0)$ gives
\begin{align*}
    \det(\undertilde{\widetilde{V}}_T) \geq \det \Big( \underline{\widetilde{V}}_{\tau(0)}\Big) \cdot \Big(1 +  \mathbbm{1}\{\tau(j(T))\in \mathcal{T}_3\} \Big)^{T-|\mathcal{T}_1(T)|} = \det(\kappa \lambda_0 \mathbf{I}) \cdot (1+1)^{|\mathcal{T}_3|}.
\end{align*}
Therefore, we can rewrite as
\begin{align*}
    |\mathcal{T}_3| &\leq \frac{1}{\log 2} \cdot \log \frac{\det \undertilde{\widetilde{V}}_T}{\det \kappa \lambda_0 \mathbf{I}} \leq \frac{1}{\log 2} \cdot \log \frac{\det \widetilde{V}_T}{\det \kappa \lambda_0 \mathbf{I}} \leq 2 \widetilde{d},
\end{align*}
where the last inequality follows from the definition of $\widetilde{d}$. We can prove $|\mathcal{T}_4|\leq 2\widetilde{d}$ in a similar way, starting by defining $\undertilde{\widetilde{W}}_t = \sum_{i=1}^{t-|\mathcal{T}_1(t)|} \frac{\dot{\mu}(f(x_{\tau(i)};\theta_0)}{m} \mathbbm{1}\{i\in \mathcal{T}_4\} g(x_{\tau(i)};\theta_0) g(x_{\tau(i)};\theta_0)^\top + \lambda_0 \mathbf{I}$ and following the above process.


\section{Auxiliary Lemmas}

\begin{lemma} [\cite{freedman1975tail}] \label{lem:freedman}
    Let $M,v>0$ be fixed constants. Let $\{x_i\}_{i=1}^n$ be a stochastic process, $\{\mathcal{G}_i\}_i$ be a filtration so that for all $i\in[n]$, $x_i$ is $\mathcal{G}_i$-measurable, while almost surely $\mathbb{E}[x_i| \mathcal{G}_{i-1}]=0$, $|x_i|\leq M$ and
    \begin{align*}
        \sum_{i=1}^n \mathbb{E}[x_i^2 | \mathcal{G}_{i-1}] \leq v .
    \end{align*}
    Then, for any $\delta >0$, with probability at least $1-\delta$,
    \begin{align*}
        \sum_{i=1}^n x_i \leq  \sqrt{2v \log(1/\delta)} + 2/3 \cdot M \log(1/\delta) .
    \end{align*}
\end{lemma}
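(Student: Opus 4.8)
The plan is to prove the bound by the classical exponential-supermartingale method underlying Freedman's inequality. Writing $S_n = \sum_{i=1}^n x_i$, $\sigma_i^2 = \mathbb{E}[x_i^2 \mid \mathcal{G}_{i-1}]$, and $V_n = \sum_{i=1}^n \sigma_i^2$, the single ingredient I would establish first is a Bernstein-type control of the conditional moment generating function: for every fixed $\lambda \in (0, 3/M)$ and every $i$,
\begin{align*}
\mathbb{E}[e^{\lambda x_i} \mid \mathcal{G}_{i-1}] \le \exp\!\left(\frac{\lambda^2 \sigma_i^2 / 2}{1 - \lambda M / 3}\right) \quad \text{almost surely.}
\end{align*}
I would obtain this by expanding $e^{\lambda x_i}$ as a power series, using $\mathbb{E}[x_i \mid \mathcal{G}_{i-1}] = 0$ to eliminate the linear term and $|x_i| \le M$ to bound the higher conditional moments by $|\mathbb{E}[x_i^k \mid \mathcal{G}_{i-1}]| \le M^{k-2}\sigma_i^2$, then summing with the elementary estimate $e^u - 1 - u = \sum_{k \ge 2} u^k/k! \le \frac{u^2/2}{1 - u/3}$ (valid for $0 \le u < 3$, since $k! \ge 2 \cdot 3^{k-2}$) at $u = \lambda M$, and finally using $1 + x \le e^x$.

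With this in hand, I would form the process $W_n = \exp\!\big(\lambda S_n - \tfrac{\lambda^2/2}{1 - \lambda M/3} V_n\big)$ and check that it is a supermartingale: since $V_n$ is $\mathcal{G}_{n-1}$-measurable, the conditional expectation $\mathbb{E}[W_n \mid \mathcal{G}_{n-1}]$ factors as $W_{n-1}$ times $\mathbb{E}[e^{\lambda x_n} \mid \mathcal{G}_{n-1}]\,\exp(-\tfrac{\lambda^2/2}{1-\lambda M/3}\sigma_n^2)$, which the moment generating function bound shows is at most $W_{n-1}$. As $W_0 = 1$, this gives $\mathbb{E}[W_n] \le 1$, and Markov's inequality yields $\mathbb{P}(W_n \ge 1/\delta) \le \delta$. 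On the complementary event, rearranging and invoking the deterministic hypothesis $V_n \le v$ gives, for every admissible $\lambda$,
\begin{align*}
S_n \le \frac{\lambda v / 2}{1 - \lambda M / 3} + \frac{\log(1/\delta)}{\lambda}
\end{align*}
with probability at least $1 - \delta$.

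To extract the stated additive form, I would note that optimizing the displayed bound over $\lambda \in (0, 3/M)$ is equivalent to the classical Bernstein tail $\mathbb{P}(S_n \ge t) \le \exp\!\big(-\tfrac{t^2/2}{v + Mt/3}\big)$; setting the exponent equal to $\log(1/\delta)$ then amounts to solving the quadratic $t^2 - \tfrac{2M}{3}\log(1/\delta)\, t - 2v\log(1/\delta) = 0$ for its positive root, which I would bound using $\sqrt{a + b} \le \sqrt{a} + \sqrt{b}$ to conclude $t \le \sqrt{2v\log(1/\delta)} + \tfrac{2M}{3}\log(1/\delta)$. The only delicate point I anticipate is the moment generating function estimate itself — pinning down the constant $\tfrac{1}{1 - \lambda M/3}$ and verifying the elementary series inequality on $[0,3)$ — since this is precisely what produces the Bernstein (rather than purely sub-Gaussian) tail and the $\tfrac{2}{3}M$ coefficient in the bound; the supermartingale and quadratic-inversion steps are then routine.
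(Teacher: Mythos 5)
The paper does not prove this lemma at all: it is stated in the auxiliary-lemmas appendix with a bare citation to \citet{freedman1975tail}, so there is no in-paper argument to compare against. Your proof is the standard and correct derivation of the martingale Bernstein (Freedman) inequality: the conditional MGF bound $\mathbb{E}[e^{\lambda x_i}\mid\mathcal{G}_{i-1}]\le\exp\bigl(\tfrac{\lambda^2\sigma_i^2/2}{1-\lambda M/3}\bigr)$ via the series estimate $k!\ge 2\cdot 3^{k-2}$, the exponential supermartingale $W_n$ (valid because $V_n$ is predictable), Markov's inequality, and inversion of the resulting quadratic with $\sqrt{a+b}\le\sqrt{a}+\sqrt{b}$ all check out and reproduce exactly the constants $\sqrt{2v\log(1/\delta)}+\tfrac{2}{3}M\log(1/\delta)$. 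One incidental remark: the lemma as printed in the paper is typographically garbled (the conclusion omits the quantity being bounded; it should read $\sum_{i=1}^n x_i\le\sqrt{2v\log(1/\delta)}+\tfrac{2}{3}M\log(1/\delta)$), and your proof implicitly supplies the correct statement by bounding $S_n=\sum_{i=1}^n x_i$.
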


\begin{lemma} [Lemma 11 \cite{abbasi2011improved}] \label{lem:yadkori}
    For any $\lambda > 0$ and sequence $\{x_t\}_{t=1}^T \in \mathbb{R}^d$, define $Z_t = \lambda \mathbf{I} + \sum_{i=1}^t x_i x_i^\top$. Then, provided that $\|x_t\|_2 \leq L$ holds for all $t\in[T]$, we have
    \begin{align*}
        \sum_{t=1}^T \min \{1, \|x_t\|_{Z_{t-1}^{-1}}^2\} \leq 2 \log \frac{\det Z_T}{\det \lambda \mathbf{I}} \leq 2d\log \frac{d\lambda + TL^2}{d\lambda}
\end{align*} 
\end{lemma}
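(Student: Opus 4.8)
The plan is to treat the two inequalities separately, both by purely deterministic (non-probabilistic) arguments. For the left inequality, the key observation is that the log-determinant of $Z_t$ increases by exactly the quantity being summed. Since $Z_t = Z_{t-1} + x_t x_t^\top$ is a rank-one update, the matrix determinant lemma gives $\det Z_t = \det Z_{t-1}\,(1 + \|x_t\|_{Z_{t-1}^{-1}}^2)$. Telescoping this from $t = 1$ to $T$ with $Z_0 = \lambda \mathbf{I}$ then produces
\begin{align*}
    \log\frac{\det Z_T}{\det \lambda \mathbf{I}} = \sum_{t=1}^T \log\big(1 + \|x_t\|_{Z_{t-1}^{-1}}^2\big).
\end{align*}

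The remaining step is to replace each summand $\min\{1, \|x_t\|_{Z_{t-1}^{-1}}^2\}$ on the left-hand side by $2\log(1 + \|x_t\|_{Z_{t-1}^{-1}}^2)$. This reduces to the scalar inequality $\min\{1, y\} \leq 2\log(1+y)$ for all $y \geq 0$, which I would verify by splitting into the range $y \in [0,1]$ (where $2\log(1+y) - y$ vanishes at the origin and is increasing, since its derivative $(1-y)/(1+y)$ is nonnegative) and $y > 1$ (where $2\log(1+y) > 2\log 2 > 1$). Applying this term-by-term and summing yields the first bound.

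For the right inequality I would pass to the eigenvalues $\lambda_1, \dots, \lambda_d$ of $Z_T$ and apply the AM--GM inequality: $\det Z_T = \prod_{i=1}^d \lambda_i \leq (\tr Z_T / d)^d$. Bounding the trace via $\tr Z_T = d\lambda + \sum_{t=1}^T \|x_t\|_2^2 \leq d\lambda + TL^2$ and using $\det(\lambda \mathbf{I}) = \lambda^d$, taking logarithms gives $\log(\det Z_T / \det \lambda \mathbf{I}) \leq d\log\frac{d\lambda + TL^2}{d\lambda}$, which chains with the first bound to finish the argument.

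I do not anticipate a genuine obstacle here, since the entire proof is elementary and deterministic; the only points demanding care are the correct application of the rank-one determinant identity (ensuring the $Z_{t-1}^{-1}$-weighted norm appears with the right sign) and the verification of the scalar inequality at the boundary case $y = 1$, where the constant $2$ is exactly what makes $\min\{1,y\} \leq 2\log(1+y)$ hold.
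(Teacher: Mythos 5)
Your proof is correct and is the standard argument for this lemma: the paper itself gives no proof, citing it directly from \citet{abbasi2011improved}, and your rank-one determinant telescoping plus the scalar bound $\min\{1,y\}\leq 2\log(1+y)$ and the AM--GM trace bound is exactly the argument in that reference. All steps check out, including the derivative verification of the scalar inequality on $[0,1]$ and the case $y>1$ via $2\log 2>1$.
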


\begin{lemma} [Lemma 7 \cite{abeille2021instance}]\label{lem:self-concordant}
    For any $z',z''\in\mathbb{R}$, we have,
    \begin{align*}
        \dot{\mu}(z') \frac{1-\exp(1-|z'-z''|)}{|z'-z''|} \leq \int_{0}^1 \dot\mu(z'+v(z''-v'))dv \leq \dot{\mu}(z') \frac{\exp(|z'-z''|)-1}{|z'-z''|},
    \end{align*}
    Also, we have,
    \begin{align*}
        \int_{0}^1 \dot\mu(z'+v(z''-v'))dv \geq \frac{\dot \mu(z')}{1+|z'-z''|},\quad\int_{0}^1 \dot\mu(z'+v(z''-v'))dv \geq \frac{\dot \mu(z'')}{1+|z'-z''|}.
    \end{align*}
\end{lemma}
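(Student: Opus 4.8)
The plan is to exploit the \emph{self-concordance} of the logistic link, which manifests as the pointwise bound $|\ddot\mu(z)|\le\dot\mu(z)$ for every $z\in\mathbb{R}$. I would verify this first: since $\dot\mu(z)=\mu(z)(1-\mu(z))$ and $\ddot\mu(z)=\dot\mu(z)(1-2\mu(z))$ with $\mu(z)\in(0,1)$, we have $|1-2\mu(z)|\le 1$, hence $|\ddot\mu(z)|\le\dot\mu(z)$. Equivalently $\big|\tfrac{d}{dz}\log\dot\mu(z)\big|=|\ddot\mu(z)|/\dot\mu(z)\le 1$, so $z\mapsto\log\dot\mu(z)$ is $1$-Lipschitz. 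Writing $\Delta:=|z''-z'|$ and applying this Lipschitz bound between $z'$ and the point $z'+v(z''-z')$, which lies at distance $v\Delta$ from $z'$, yields for every $v\in[0,1]$
\[
\dot\mu(z')\,e^{-v\Delta}\;\le\;\dot\mu\big(z'+v(z''-z')\big)\;\le\;\dot\mu(z')\,e^{v\Delta}.
\]

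Next I would integrate these pointwise inequalities over $v\in[0,1]$, using the elementary integrals $\int_0^1 e^{v\Delta}\,dv=(e^{\Delta}-1)/\Delta$ and $\int_0^1 e^{-v\Delta}\,dv=(1-e^{-\Delta})/\Delta$, to obtain
\[
\dot\mu(z')\,\frac{1-e^{-\Delta}}{\Delta}\;\le\;\int_0^1\dot\mu\big(z'+v(z''-z')\big)\,dv\;\le\;\dot\mu(z')\,\frac{e^{\Delta}-1}{\Delta},
\]
which is the first two-sided bound of the lemma. I note that the left factor should read $1-e^{-\Delta}$; the exponent $1-|z'-z''|$ printed in the statement appears to be a typographical error, since that expression is negative for $\Delta<1$ and would make the lower bound vacuous.

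To obtain the cleaner bound $\int_0^1\dot\mu\,dv\ge\dot\mu(z')/(1+\Delta)$, I would combine the lower bound just derived with the elementary inequality $e^{\Delta}\ge 1+\Delta$. Rearranging $e^{\Delta}\ge 1+\Delta$ gives $(1+\Delta)(1-e^{-\Delta})\ge\Delta$, i.e. $\tfrac{1-e^{-\Delta}}{\Delta}\ge\tfrac{1}{1+\Delta}$, so the integral dominates $\dot\mu(z')/(1+\Delta)$. For the symmetric bound with $\dot\mu(z'')$ in the numerator, I would use that the integral is invariant under the substitution $v\mapsto 1-v$, which swaps the roles of the endpoints:
\[
\int_0^1\dot\mu\big(z'+v(z''-z')\big)\,dv=\int_0^1\dot\mu\big(z''+u(z'-z'')\big)\,du.
\]
Applying the preceding bound to the right-hand side, now anchored at $z''$, immediately gives $\int_0^1\dot\mu\,dv\ge\dot\mu(z'')/(1+\Delta)$.

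The entire argument is elementary once the log-Lipschitz property $|\ddot\mu|\le\dot\mu$ is in hand, so there is no substantive obstacle; the only points requiring care are silently correcting the $1-e^{-\Delta}$ typo in the stated lower factor and bookkeeping the substitution $v\mapsto 1-v$ so that the anchor point shifts cleanly from $z'$ to $z''$.
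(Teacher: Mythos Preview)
Your argument is correct and is exactly the standard proof of this self-concordance lemma: deduce the $1$-Lipschitz property of $\log\dot\mu$ from $|\ddot\mu|\le\dot\mu$, integrate the resulting pointwise exponential bounds, and then use $e^{\Delta}\ge 1+\Delta$ together with the symmetry $v\mapsto 1-v$ for the two simplified lower bounds. The paper does not give its own proof of this lemma; it is quoted verbatim as Lemma~7 of \citet{abeille2021instance}, so there is no alternative argument to compare against. Your observation that the stated lower factor should be $1-\exp(-|z'-z''|)$ rather than $1-\exp(1-|z'-z''|)$ is correct; the printed version is indeed a typo (and likewise the integrand should read $z''-z'$ in place of $z''-v'$).
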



\section{Thompson Sampling-based Variants} \label{sec:ts}

In this section, we introduce the Thompson sampling-based variants of Algorithm \ref{alg:1}, which we call NeuralLog-TS-1. The proof for the regret of Neural-TS-1 can be obtained by exactly following the proof of Theorem 3.5 of \citet{zhang2021neural}. To reuse the result of previous work, we match the notations by using the following definitions:
\begin{align*}
    \sigma_{t}(x)^2 &:= \kappa \lambda_{t} \| g(x;\theta_0)/\sqrt{m}\|_{V_{t}^{-1}}^2 \\
    \nu_{t} &:= \nu_{t}^{(1)}\lambda_{t}^{-1/2} = C_6 \lambda_{t}^{-1/2} (1+\sqrt{L} S + L S^2) \iota_t   + \lambda_{t}^{-1/2} \\
    c_{t} &:= \nu_T(1 + \sqrt{2\log(Kt^2)})
\end{align*}
and denote $\mathcal F_{t}$ as a filtration containing the history of observations up to iteration $t$. Also define
the set of saturated points as
\begin{align}
    \mathcal S_t = \{x\in\mathcal X_t:~\Delta_t(x) > c_{t-1} \sigma_{t-1}(x) + 2 \epsilon_{t-1}'\}, \label{eq:saturated}
\end{align}
where $\Delta_t(x) = h(x_t^*) - h(x)$ and $\epsilon_{t}' = R^{-1} \epsilon_{3,t}$. Note that $x_t^*\notin \mathcal S_t$.

In round $t$, for each $x\in\mathcal{X}_t$, we sample a latent reward $\widetilde r_t(x)$ from the normal distribution 
\begin{align*}
    \forall x\in\mathcal X_t, \quad \widetilde r_t(x) \sim \mathcal{N}(f(x;\theta_{t-1}), \nu_T^2 \sigma_{t-1}^2(x)),
\end{align*}
and choose an arm following
\begin{align*}
    x_t = \argmax_{x\in\mathcal{X}_t} \widetilde r_t(x).
\end{align*}

Now we introduce two good events: First, define the event $\mathcal{E}_1(t)$ when the following inequality holds for all $x\in\mathcal{X}_t$:
\begin{align}
    |h(x) - f(x;\theta_{t-1})| \leq \nu_T \sigma_{t-1}(x) + \eps_{t-1}' . \label{eq:ev1}
\end{align}
Then, by the direct result of Lemma \ref{lem:inst_reg}, $\mathbb{P}(\mathcal{E}_1(t)) \geq 1-\delta$. Next, define the event $\mathcal{E}_2(t)$ when the following inequality holds for all $x\in\mathcal{X}_t$:
\begin{align}
    |\widetilde r_t(x) - f(x;\theta_{t-1})| \leq \nu_T\sqrt{2\log(Kt^2)}\sigma_{t-1}(x). \label{eq:ev2}
\end{align}
Since $\widetilde r_t(x)$ is sampled from $\mathcal{N}(f(x;\theta_{t-1}), \nu_T^2 \sigma_{t-1}^2(x))$, we can use the concentration inequality on Gaussian distributions to obtain $\mathbb{P}(\mathcal E_2(t)|\mathcal F_{t-1}) \geq 1- 1/t^2$ for any possible filtration $\mathcal F_{t-1}$.

Next, recall the definition of the set of saturated points in Equation (\ref{eq:saturated}). We reuse the result of Lemma 4.5 of \citet{zhang2021neural} as follows
\begin{align}
    &\mathbb P(x_t \in\mathcal X_t \setminus \mathcal S_t \mid \mathcal F_{t-1}, \mathcal E_1(t)) \geq (4e\sqrt{\pi})^{-1} - 1/t^2.  \label{eq:usa}
\end{align}
We skip the proof as the same argument can be found in Section B.4 of \citet{zhang2021neural}. Instead, we give a high-level intuition. By construction, saturated arms are those whose posterior mean reward is significantly worse than that of the optimal arm. Under the good events $\mathcal E_1(t)$ and $\mathcal E_2(t)$, this gap is reflected both in their true means and in their posterior samples, so with high probability a saturated arm cannot catch up to the optimal arm in terms of the sampled reward.

On the other hand, the posterior for the optimal arm enjoys an anti-concentration property, which is, with constant probability, its sample exceeds its mean by a suitable margin. This is where the factor $(4e\sqrt{\pi})^{-1}$ comes from. Combining these facts, with constant probability the sampled reward of the optimal arm is larger than the samples of all saturated arms, so the arm selected by Thompson sampling must be unsaturated. The $1/t^2$ term accounts for the small probability that one of the good events $\mathcal E_1(t)$ or $\mathcal E_2(t)$ fails.

Now, with the previous results in place, we derive an upper bound on the expected instantaneous regret. Define $d_t = h(x_t^*) - h(x_t)$. Again, we reuse the result of Lemma 4.6 of \citet{zhang2021neural} as follows:
\begin{align*}
    \mathbb E[d_t \mid \mathcal F_{t-1}, \mathcal E_1(t)]
    \leq 44e\sqrt{\pi}\, C_1 c_t \sqrt{L}\,
        \mathbb E\bigl[\min\{1, \sigma_t(x_t)\} \mid \mathcal F_{t-1}, \mathcal E_1(t)\bigr]
        + 4\epsilon_{t-1}' + 2/t^2,
\end{align*}
where $C_1>0$ is the same absolute constant that appears in Lemma~\ref{lem:action}. By Equation (\ref{eq:usa}), Neural-TS-1 selects an unsaturated arm with constant probability, so in expectation the posterior standard deviation of the played arm is comparable to that of the best unsaturated arm. Under the good events, the posterior means stay close to the true means and saturated arms have very small gaps, which allows us to bound the instantaneous regret $d_t$ by a constant multiple of $\min\{1,\sigma_t(x_t)\}$ plus the approximation terms $4\epsilon_{t-1}' + 2/t^2$. Taking the conditional expectation and using a global control on the posterior variances over time then yields the stated bound.

Now we are ready to start the proof for the regret. Define a stochastic process $(Y_t)_{t=0}^T$ where
\begin{align*}
    \bar d_t &= d_t \1 \{\mathcal E_1(t)\} \\
    X_t &= \bar d_t - 44e\sqrt{\pi} C_1 c_t \sqrt{L} \min\{1, \sigma_t(x_t)\} - 4\epsilon_{t-1}' - 2/t^2 \\
    Y_t &= \sum_{i=1}^t X_i, \quad Y_0 =0
\end{align*}
We can see that $(Y_t)$ is a supermartingale with respect to $\mathcal F_t$ since $\mathbb E[Y_t-Y_{t-1}\mid\mathcal F_{t-1}] = \mathbb E[X_t\mid \mathcal F_{t-1}] \leq 0$. Now we prepare to apply the Azuma-Hoeffding inequality for a supermartingale:
\begin{lemma} [Azuma-Hoeffding inequality for supermartingale] \label{lem:azs}
    If a supermartingale $Y_t$, corresponding to a filtration $\mathcal F_t$ satisfies $|Y_t - Y_{t-1}|\leq B_t$, then for any $\delta \in (0,1)$, with probability at least $1-\delta$,
    \begin{align*}
        Y_t - Y_0 \leq \sqrt{2 \log(1/\delta) \sum_{i=1}^t B_i^2}.
    \end{align*}
\end{lemma}
To derive an upper bound on $|Y_t-Y_{t-1}|$, we have
\begin{align*}
    |Y_t-Y_{t-1}| &= |X_t| \leq |\bar d_t| + 44e\sqrt{\pi} C_1 c_t  \sqrt{L} \min\{1, \sigma_t(x_t)\} + 4\epsilon_{t-1}' + 2/t^2\\
    &\leq 4 + 44e\sqrt{\pi} C_1^2 c_t  L + 4 \epsilon_{t-1}'.
\end{align*}
where the last inequality follows from Lemma \ref{lem:action}, and $1/t^2\leq 1$. Now, applying Lemma \ref{lem:azs} with $B_t = 4 + 44e\sqrt{\pi}C_1^2 c_t L + 4\epsilon_{t-1}'$ to $(Y_t)$, with probability at least $1-\delta$, we have
\begin{align}
    \sum_{t=1}^T \bar d_t &\leq \underbrace{\sum_{t=1}^T 44e\sqrt{\pi} C_1 c_t  \sqrt{L} \min\{1, \sigma_t(x_t)\}}_{\textbf{(term 1)}} + \underbrace{\sum_{t=1}^T 4 \epsilon_{t-1}' + \sum_{t=1}^T 2/t^2}_{\textbf{(term 2)}} \nonumber \\
    &\quad + \underbrace{\sqrt{2\log(1/\delta) \sum_{t=1}^T \left(4 + 44e\sqrt{\pi} C_1^2 c_t L + 4 \epsilon_{t-1}'\right)^2}}_{\textbf{(term 3)}}. \label{eq:ts1}
\end{align}
For \textbf{(term 1)}, applying Cauchy-Schwarz inequality,
\begin{align*}
    \textbf{(term 1)} &\leq 44 e \sqrt{\pi} C_1 \nu_T^{(1)}(1+\sqrt{2\log(KT^2)})\sqrt{\kappa L} \sqrt{T \sum_{t=1}^T \min\{1,  \|g(x_t;\theta_0)/\sqrt{m}\|_{V_{t-1}^{-1}}^2\}} \\
    & = \widetilde{\mathcal O} \left(S^2 \widetilde d \sqrt{\kappa T} + S^{2.5} \sqrt{\kappa \widetilde d T}\right)
\end{align*}
For \textbf{(term 2)}, by the condition of $m$ in Condition \ref{cond:ass}, $\sum_{t=1}^T 4 \eps_{t-1}' \leq 1$, and $\sum_{t=1}^T 2/t^2 \leq \pi^2/3$. For \textbf{(term 3)}, since $\nu_T^{(1)} = \widetilde{\mathcal{O}}(S^2\sqrt{\widetilde{d}} + S^{2.5})$, and $\lambda_0^{-1/2}=\mathcal O (S^{0.5})$
\begin{align*}
    \textbf{(term 3)} &\leq (4 + 44 e \sqrt{\pi} C_1^2 \nu_T^{(1)}\lambda_0^{-1/2}(1+\sqrt{2\log(KT^2)}) + 4\epsilon_T')\sqrt{2 \log(1/\delta) T} \\
    &= \widetilde{\mathcal O}\left( S^{2.5} \sqrt{\widetilde d T} + S^{3}\sqrt{T}\right)
\end{align*}
Combining results, we have
\begin{align*}
    \sum_{t=1}^T \widetilde d_t \leq \widetilde{\mathcal O}\left( S^2 \widetilde d \sqrt{\kappa T} + S^{2.5} \sqrt{\kappa \widetilde d T} + S^{3}\sqrt{T} \right)
\end{align*}
with probability at least $1-\delta$. Notice that $\text{Regret}(T) \leq \sum_{t=1}^T R |h(x_t^*) - h(x_t)|$. Therefore $R \sum_{t=1}^T \widetilde d_t$ upper bounds the regret with probability at least $1-\delta$. Finally, replacing $\delta$ by $\delta/2$ for both cases and applying the union bound finishes the proof.

\begin{remark}[Discussion on Thompson sampling-based variants of NeuralLog-UCB-2] \label{remark:ts1}
    In analogy with the Thompson sampling extension of NeuralLog-UCB-1, one can also consider a Thompson sampling-based variant of NeuralLog-UCB-2 as follows. Define
    $\sigma_t'(x)^2 := \lambda_t\|g(x;\theta_0)/\sqrt{m}\|_{W_{t-1}^{-1}}$ and
    $\nu_t' := \nu_{t-1}^{(2)}\lambda_t^{-1/2}$, and for all $x\in\mathcal X_t$ sample
    $\widetilde r_t'(x) \sim\mathcal N(g(x;\theta_0)^\top(\theta_{t-1} - \theta_0) , \nu_T'^2 \sigma_{t-1}'^2(x))$,
    then choose $x_t = \argmax_{x\in\mathcal X_t} \widetilde r_t'(x)$.
    However, our current regret analysis for Thompson sampling-based algorithms proceeds by defining a stochastic process associated with the per-round regret and then applying a concentration inequality for this process to obtain an upper bound on the per-round regret. In order to fully exploit $W_t$ from Algorithm 2 within this framework, a much more delicate analysis of the second-order Taylor expansion of the per-round regret would be required.
    
    More concretely, if we proceed the analysis in a naive way and consider the regret bound obtained for such a NeuralLog-TS-2 algorithm, then, denoting by \textbf{(term 1')} the counterpart of \textbf{(term 1)} in Equation (\ref{eq:ts1}), and focusing only on the dependence on $\kappa$, we obtain
    \begin{align*}
        \textbf{(term 1')} 
        \lesssim \sum_{t=1}^T  \min\{1,\|g(x;\theta_0)/\sqrt{m}\|_{W_{t-1}^{-1}}\}
        \lesssim \sqrt{\kappa T \sum_{t=1}^T  \min\{1,\|g(x;\theta_0)/\sqrt{m}\|_{V_{t-1}^{-1}}^2\}} \,,
    \end{align*}
    where we see that the additional $\sqrt{\kappa}$ factor is reintroduced. Treating this issue within our current proof technique therefore appears to be a non-trivial problem, and we leave a sharper analysis of such Thompson sampling-based variants of NeuralLog-UCB-2 for future work.
\end{remark}

As we have seen in Remark \ref{remark:ts1}, although NeuralLog-TS-2 does not attain a regret bound with the same dependence on $\kappa$ as NeuralLog-UCB-2, the algorithm itself is well defined, just like NeuralLog-TS-1. In Section \ref{sec:add_exp}, we present additional experiments including these two algorithms and demonstrate their practical performance.

\section{Additional Experiments} \label{sec:add_exp}

\begin{figure}[htbp]
  \centering
  \begin{subfigure}[b]{0.3333\textwidth}
    \centering
    \includegraphics[width=\linewidth]{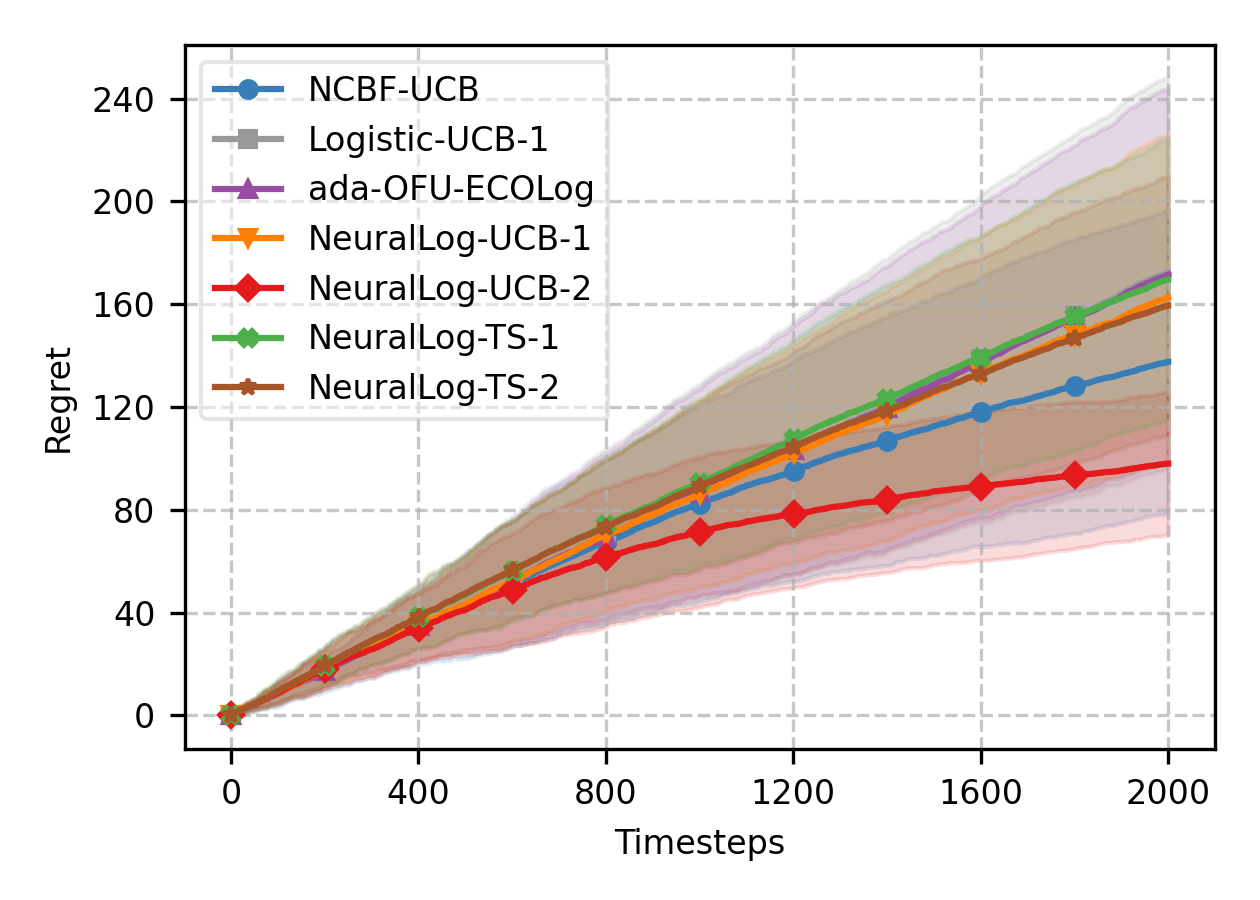} 
    \vspace*{-5.5mm}
    \caption{$h_4(x)= 10(x^\top \theta)^2$}
    \label{fig:4-1}
  \end{subfigure}
  \begin{subfigure}[b]{0.3333\textwidth}
    \centering
    \includegraphics[width=\linewidth]{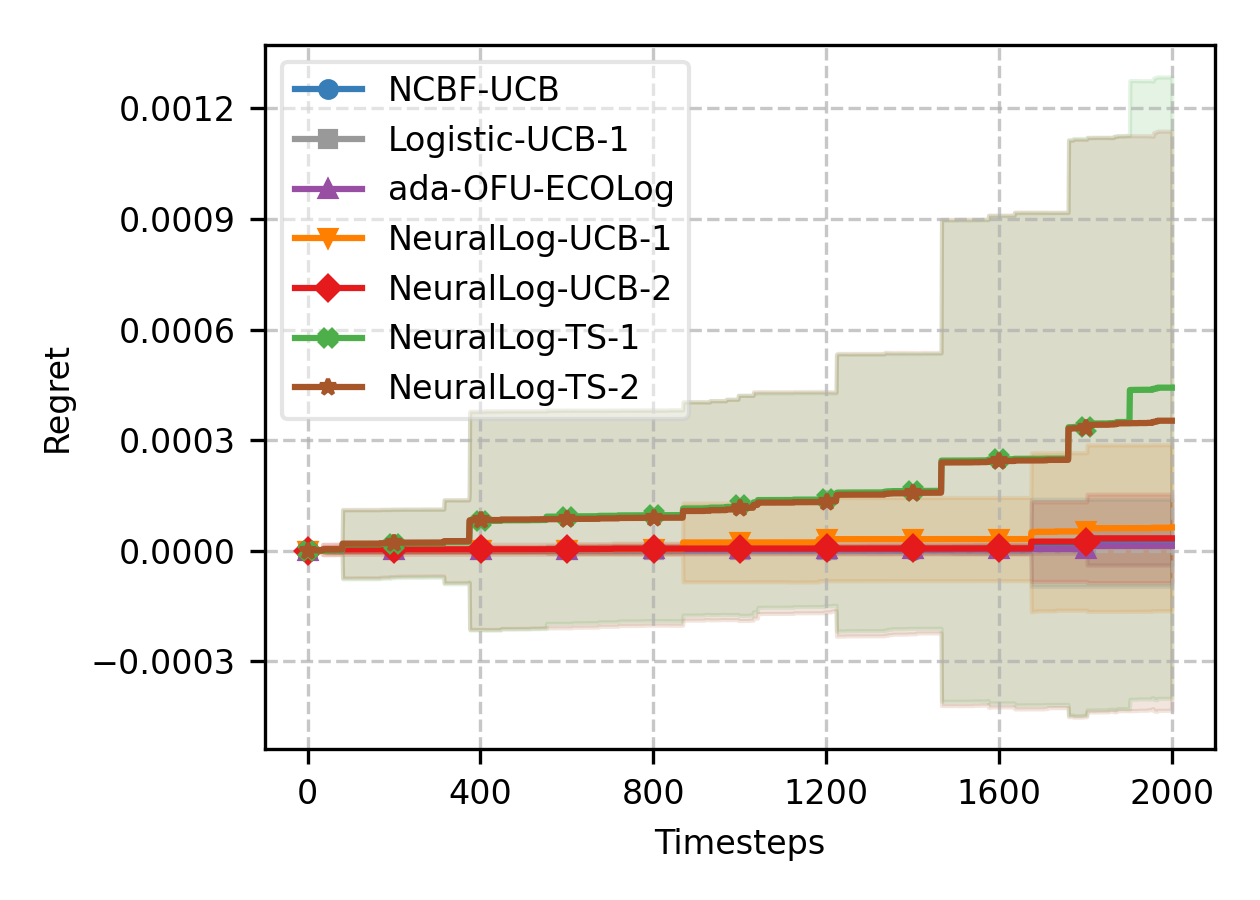} 
    \vspace*{-5.5mm}
    \caption{$h_5(x) = x^\top \Theta^\top \Theta x$}
    \label{fig:4-2}
  \end{subfigure}
  \begin{subfigure}[b]{0.3333\textwidth}
    \centering
    \includegraphics[width=\linewidth]{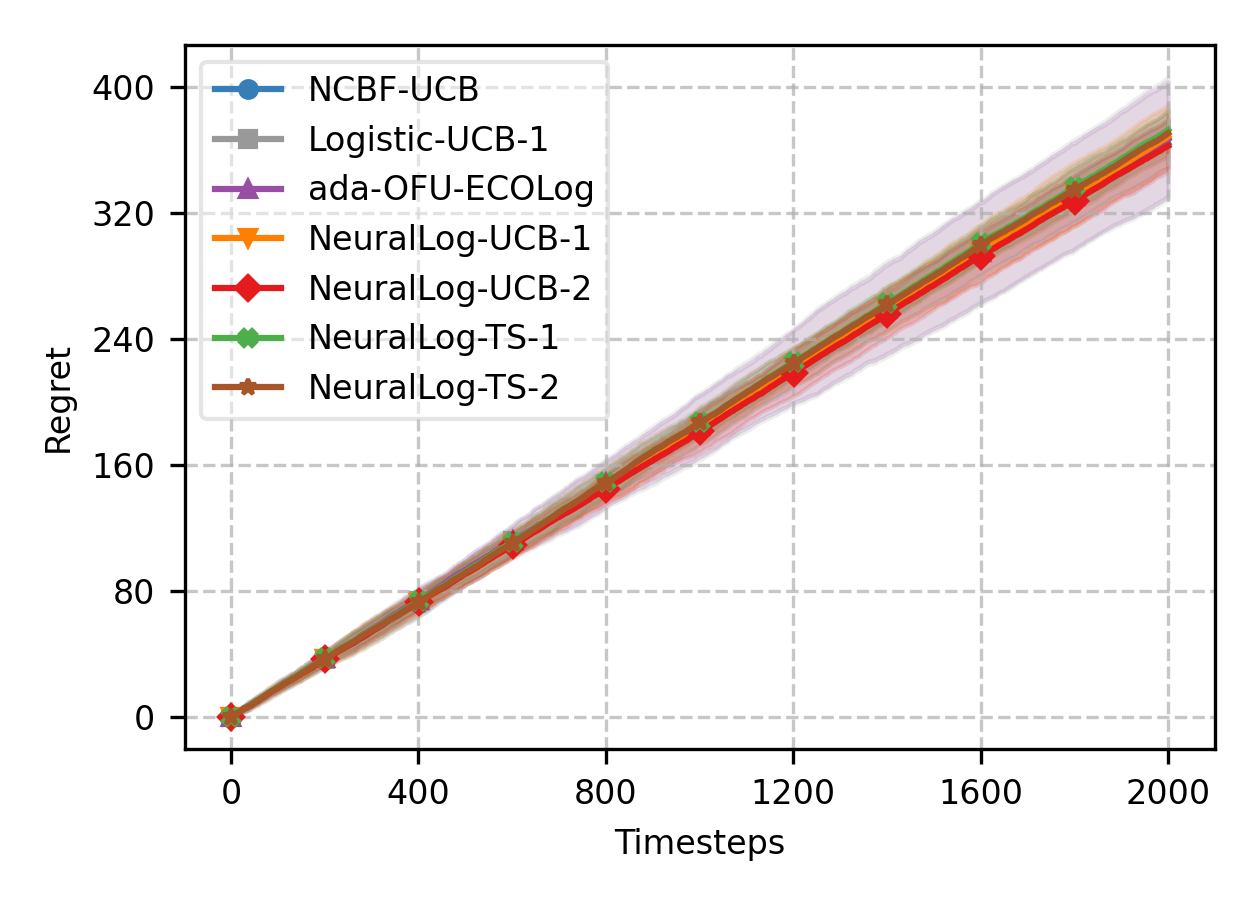} 
    \vspace*{-5.5mm}
    \caption{$h_6(x) = \cos (3 x^\top \theta)$}
    \label{fig:4-3}
  \end{subfigure}
  \caption{Comparison of cumulative regret of baseline algorithms for nonlinear reward functions.}
  \label{fig:4}
\end{figure}

\begin{figure}[htbp]
  \centering
  \begin{subfigure}[b]{0.3333\textwidth}
    \centering
    \includegraphics[width=\linewidth]{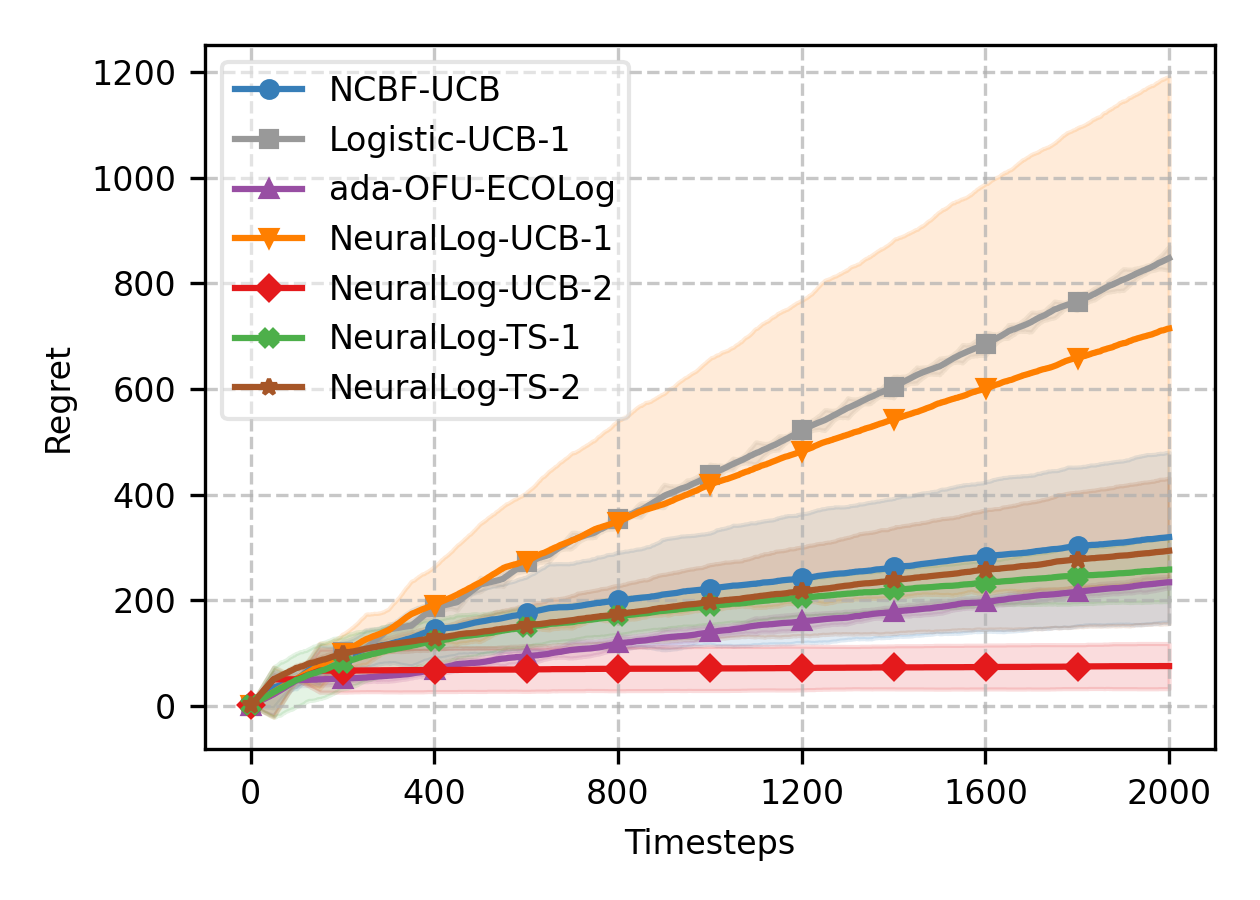} 
    \vspace*{-5.5mm}
    \caption{\texttt{magic}}
    \label{fig:5-1}
  \end{subfigure}
  \begin{subfigure}[b]{0.3333\textwidth}
    \centering
    \includegraphics[width=\linewidth]{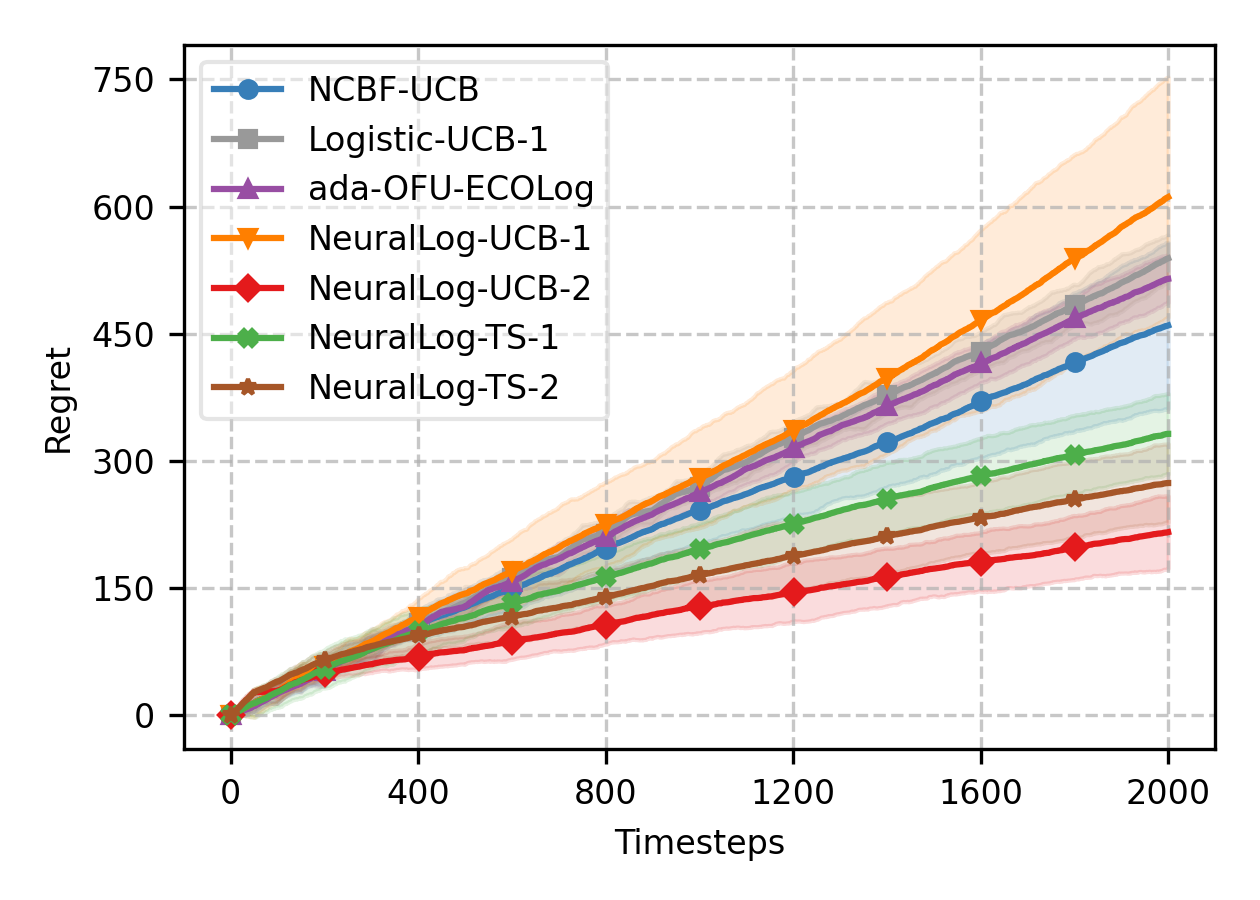} 
    \vspace*{-5.5mm}
    \caption{\texttt{banknote}}
    \label{fig:5-2}
  \end{subfigure}
  \begin{subfigure}[b]{0.3333\textwidth}
    \centering
    \includegraphics[width=\linewidth]{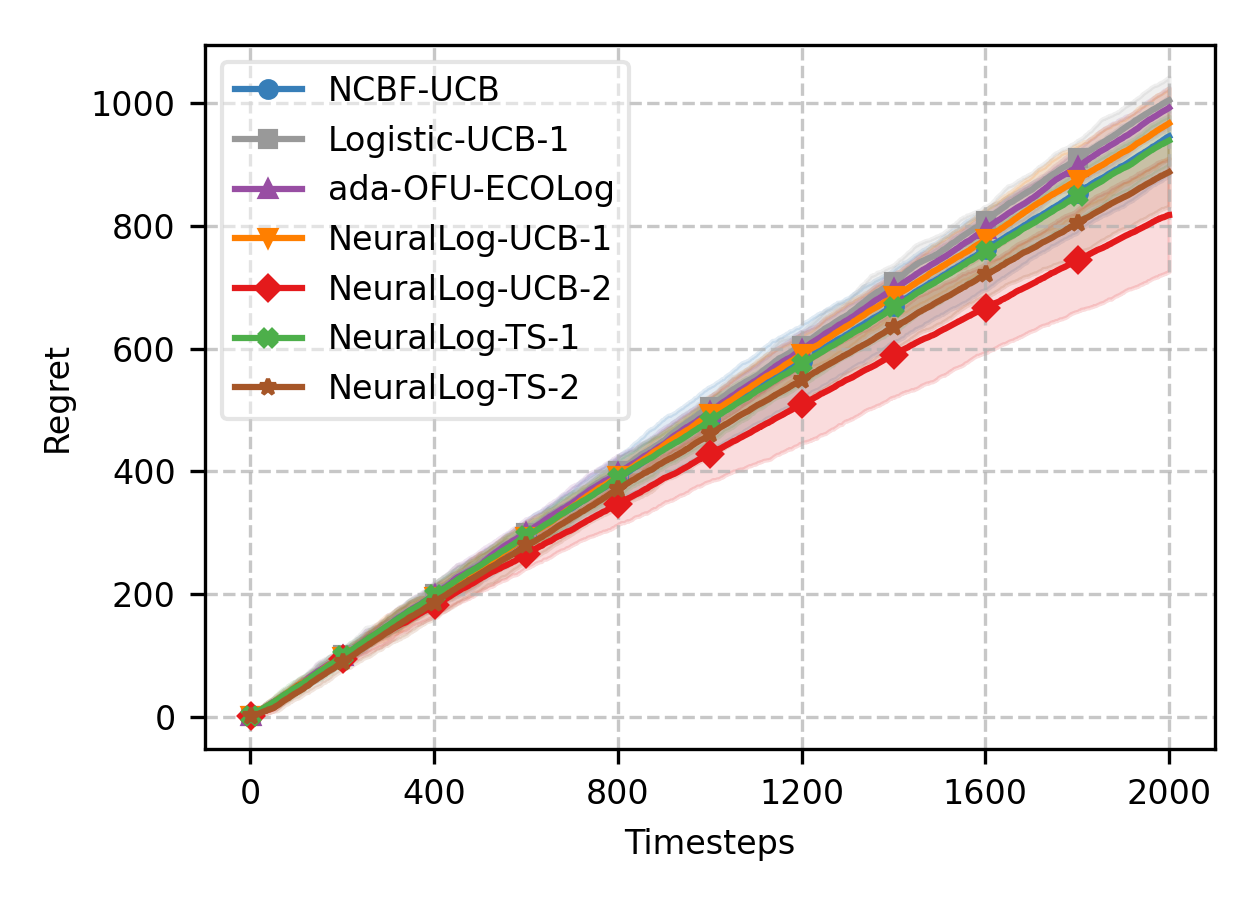} 
    \vspace*{-5.5mm}
    \caption{\texttt{phoneme}}
    \label{fig:5-3}
  \end{subfigure}
  
  \caption{Comparison of cumulative regret of baseline algorithms for real-world dataset.}
  \label{fig:5}
\end{figure}

\begin{figure}[htbp]
  \centering
  \begin{subfigure}[b]{0.3333\textwidth}
    \centering
    \includegraphics[width=\linewidth]{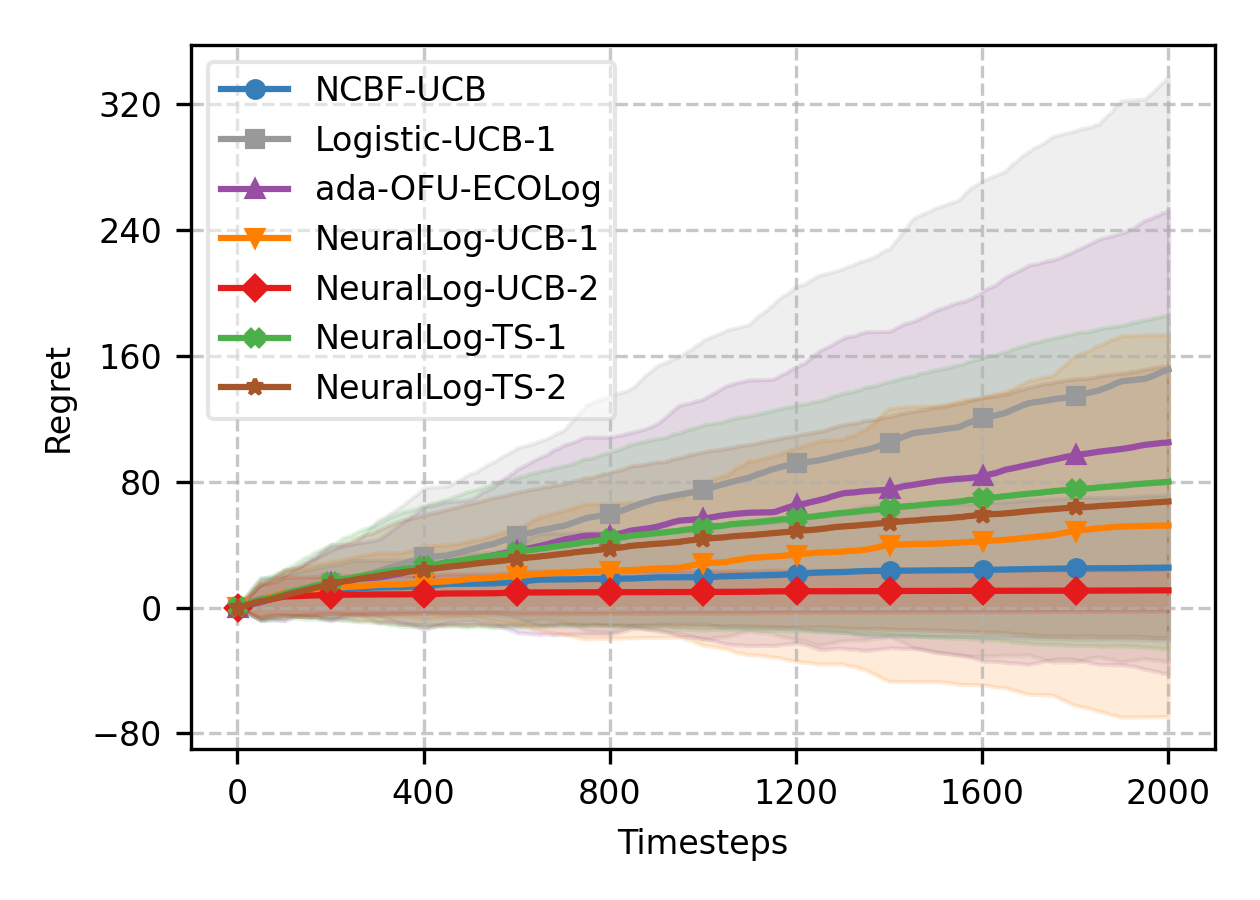} 
    \vspace*{-5.5mm}
    \caption{$h_4(x)$,\ \ low $\widetilde{d}$}
    \label{fig:6-1}
  \end{subfigure}
  \begin{subfigure}[b]{0.3333\textwidth}
    \centering
    \includegraphics[width=\linewidth]{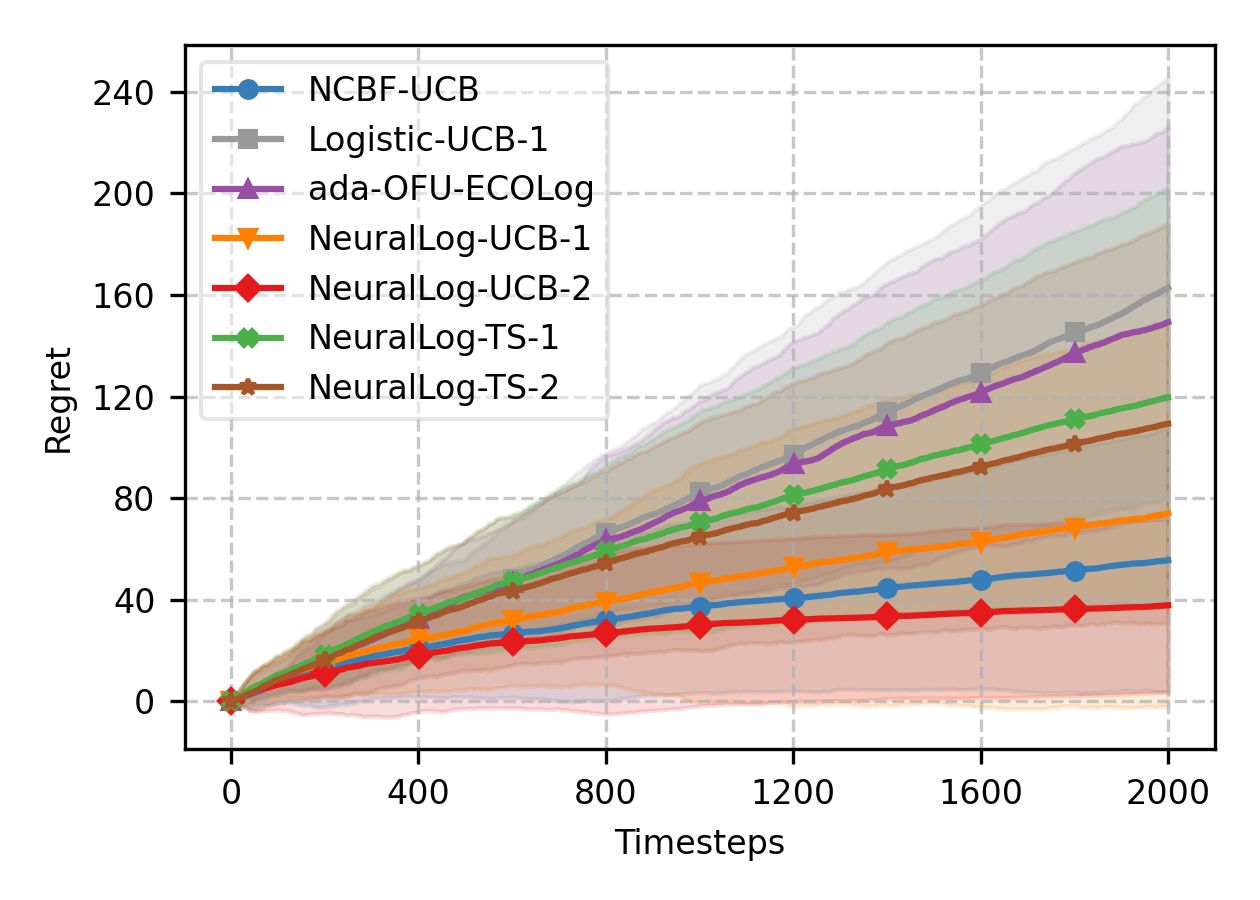} 
    \vspace*{-5.5mm}
    \caption{$h_4(x)$,\ \ middle $\widetilde{d}$}
    \label{fig:6-2}
  \end{subfigure}
  \begin{subfigure}[b]{0.3333\textwidth}
    \centering
    \includegraphics[width=\linewidth]{figs/subfigure_h1_2000.png} 
    \vspace*{-5.5mm}
    \caption{$h_4(x)$,\ \ high $\widetilde{d}$}
    \label{fig:6-3}
  \end{subfigure}
 \vspace{1em}
  \begin{subfigure}[b]{0.3333\textwidth}
    \centering
    \includegraphics[width=\linewidth]{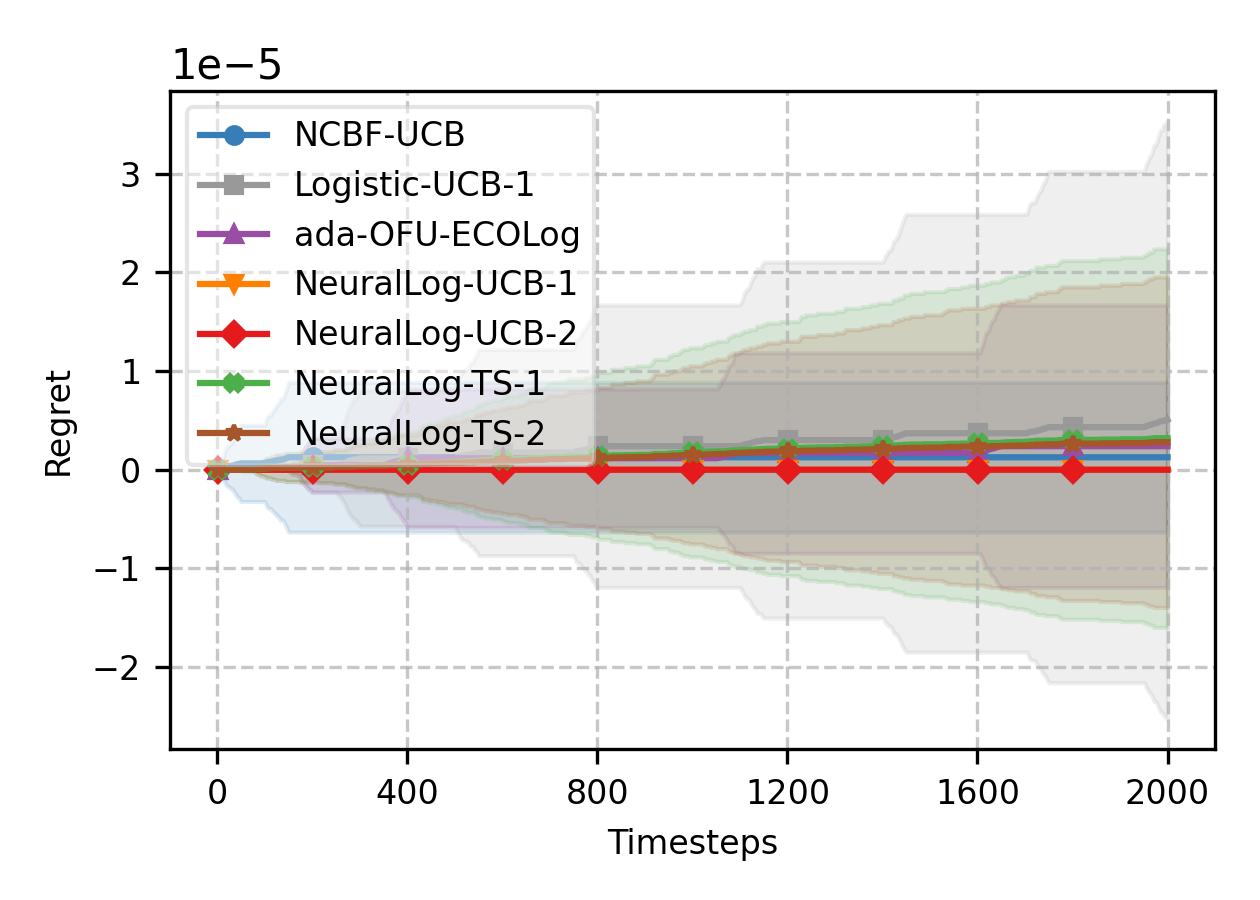} 
    \vspace*{-5.5mm}
    \caption{$h_5(x)$,\ \ low $\widetilde{d}$}
    \label{fig:6-4}
  \end{subfigure}
  \begin{subfigure}[b]{0.3333\textwidth}
    \centering
    \includegraphics[width=\linewidth]{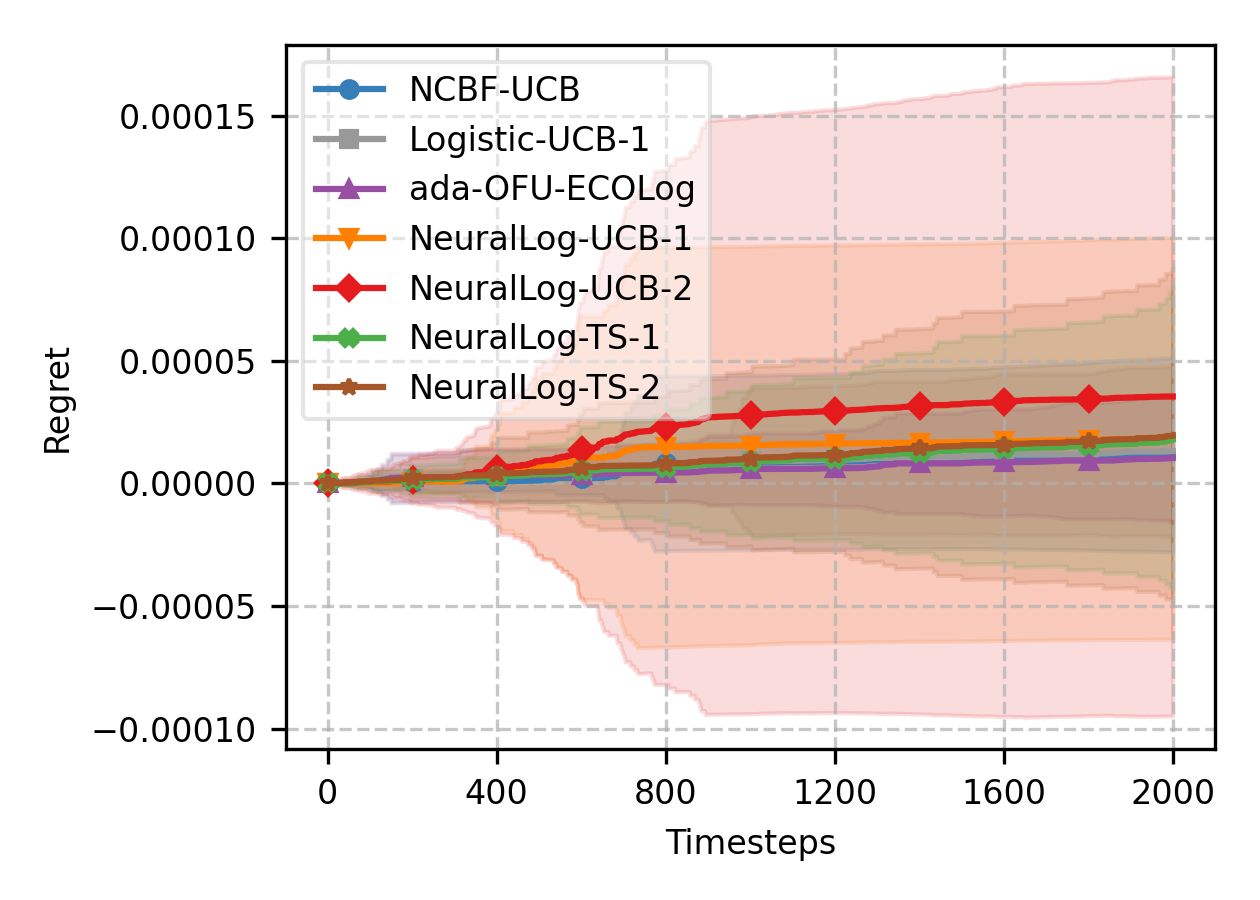} 
    \vspace*{-5.5mm}
    \caption{$h_5(x)$,\ \ middle $\widetilde{d}$}
    \label{fig:6-5}
  \end{subfigure}
  \begin{subfigure}[b]{0.3333\textwidth}
    \centering
    \includegraphics[width=\linewidth]{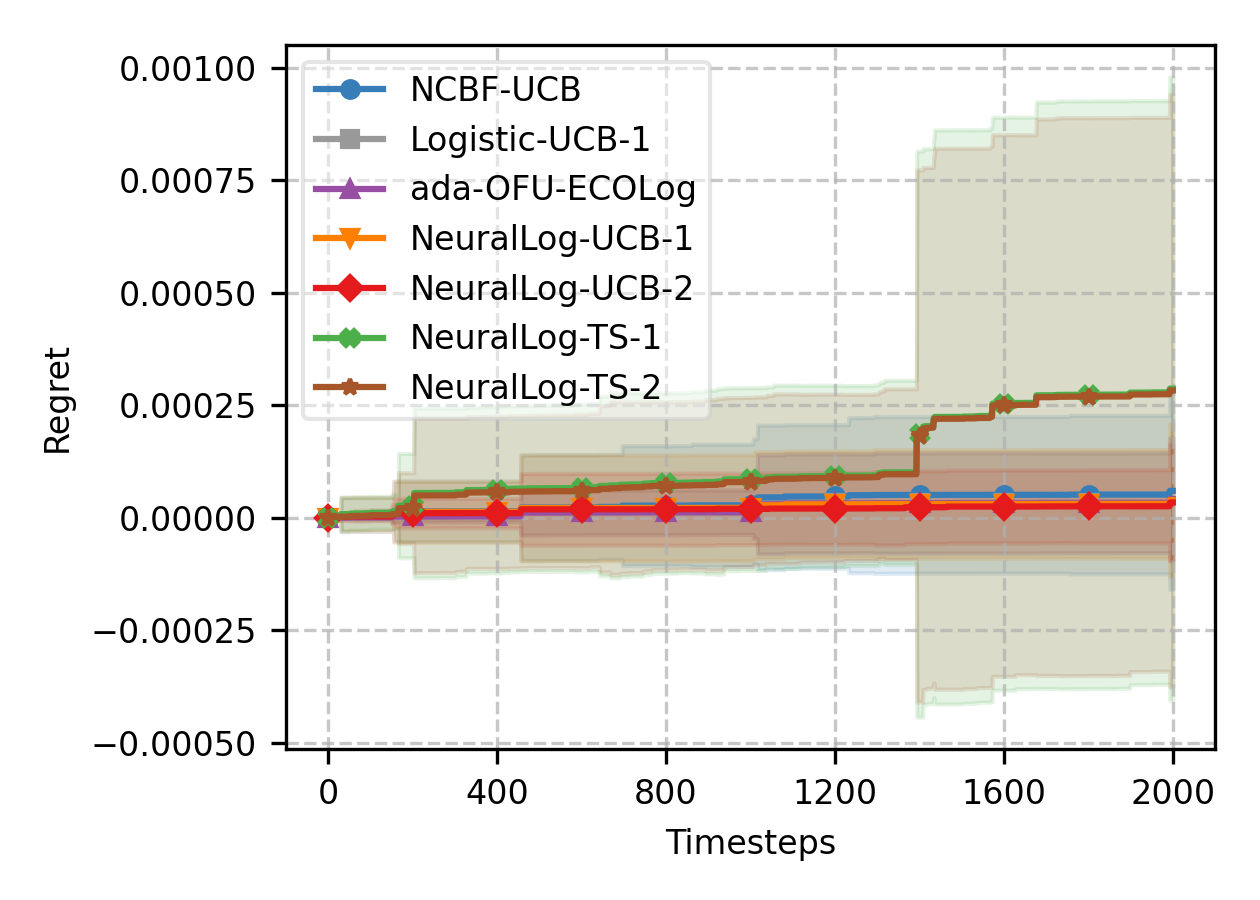} 
    \vspace*{-5.5mm}
    \caption{$h_5(x)$,\ \ high $\widetilde{d}$}
    \label{fig:6-6}
  \end{subfigure}
 \vspace{1em}
  \begin{subfigure}[b]{0.3333\textwidth}
    \centering
    \includegraphics[width=\linewidth]{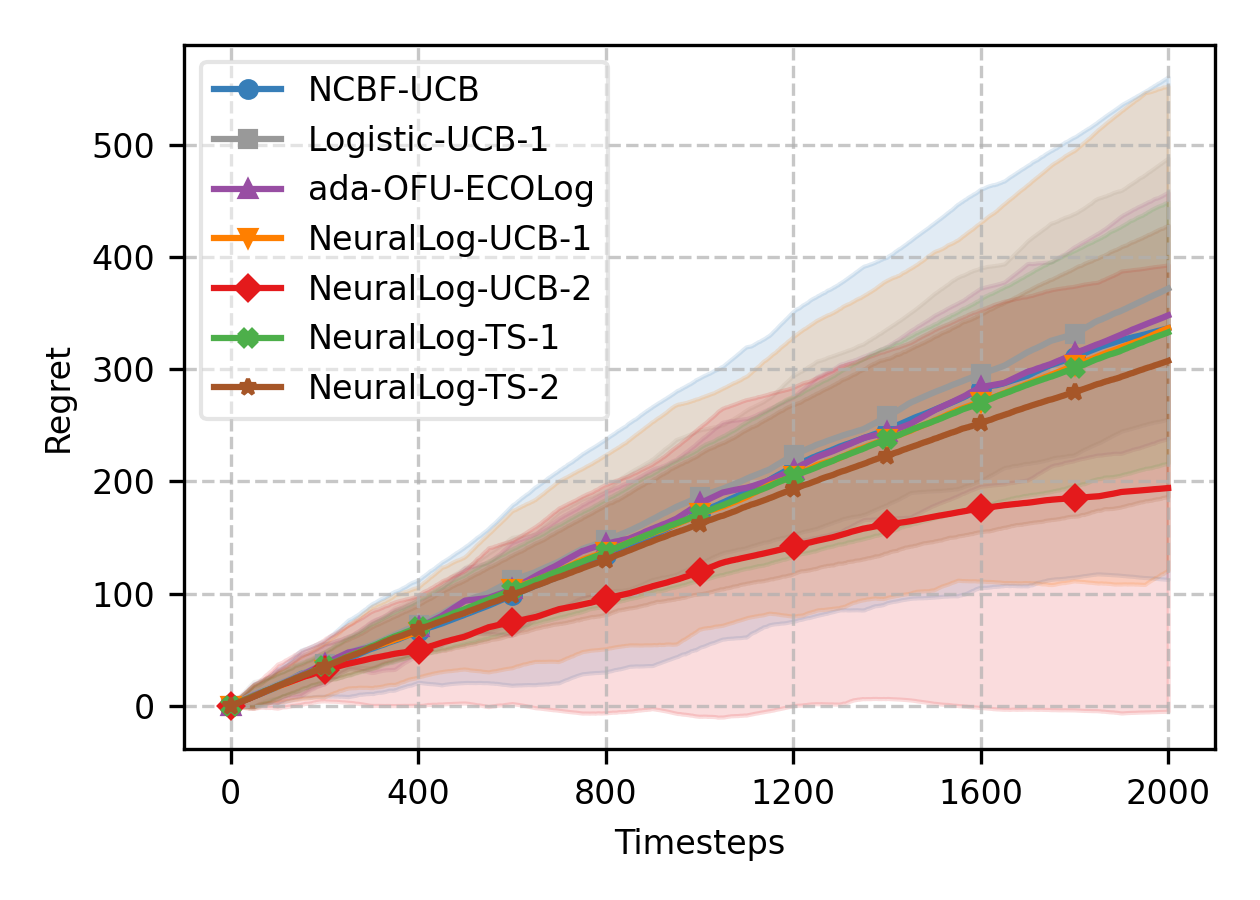} 
    \vspace*{-5.5mm}
    \caption{$h_6(x)$,\ \ low $\widetilde{d}$}
    \label{fig:6-7}
  \end{subfigure}
  \begin{subfigure}[b]{0.3333\textwidth}
    \centering
    \includegraphics[width=\linewidth]{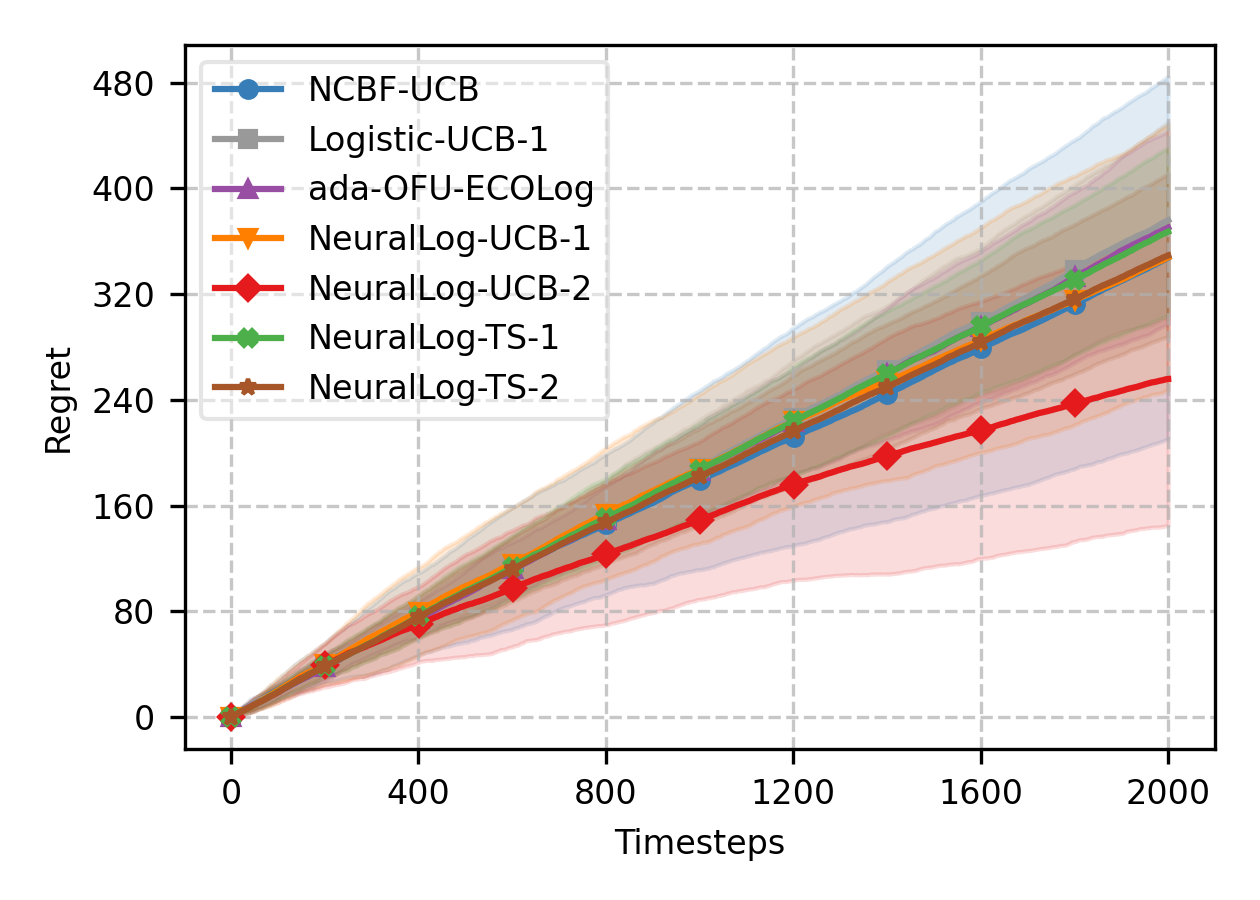} 
    \vspace*{-5.5mm}
    \caption{$h_6(x)$,\ \ middle $\widetilde{d}$}
    \label{fig:6-8}
  \end{subfigure}
  \begin{subfigure}[b]{0.3333\textwidth}
    \centering
    \includegraphics[width=\linewidth]{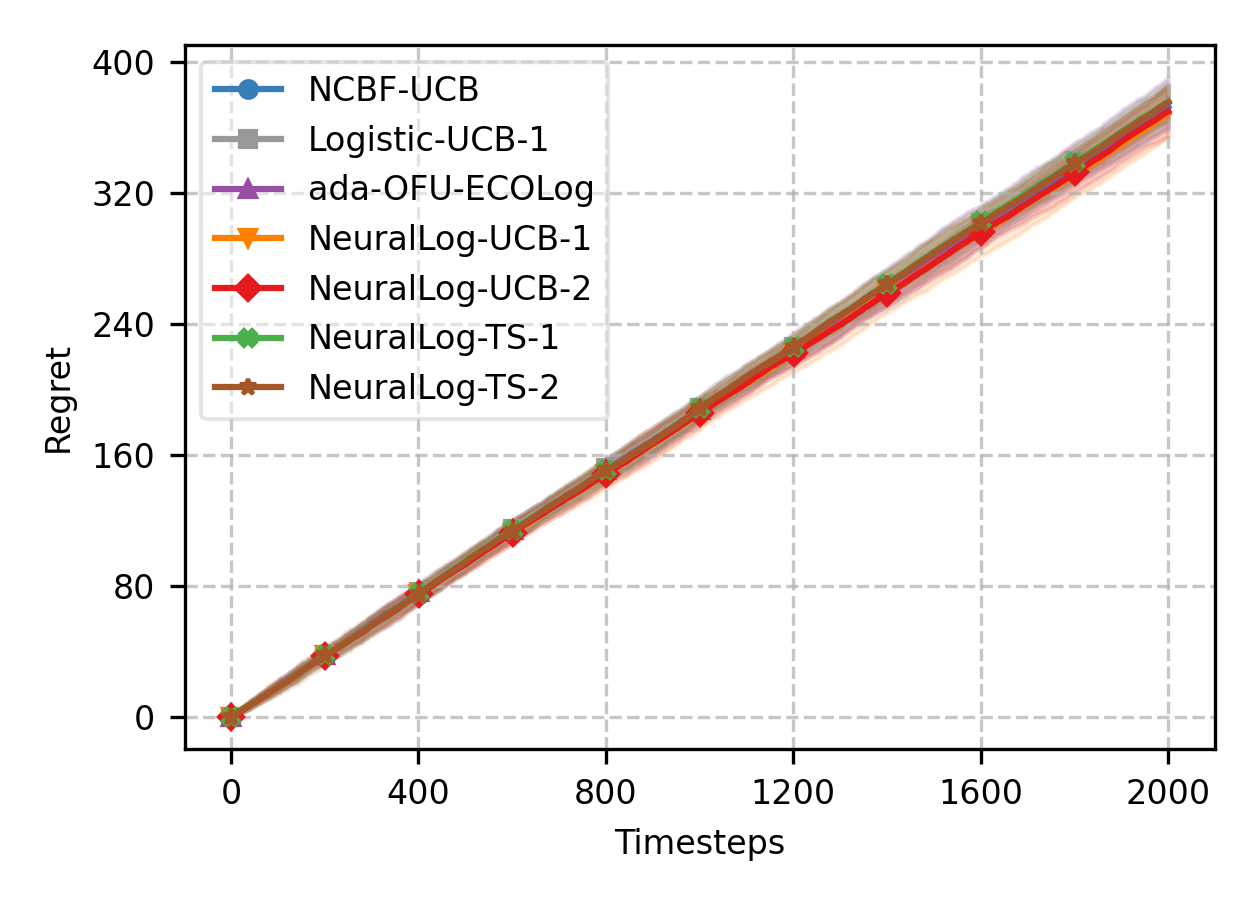} 
    \vspace*{-5.5mm}
    \caption{$h_6(x)$,\ \ high $\widetilde{d}$}
    \label{fig:6-9}
  \end{subfigure}
  \caption{Comparison of cumulative regret of baseline algorithms with varying effective dimension $\widetilde{d}$.}
  \label{fig:6}
\end{figure}

We compare five baseline algorithms with our algorithms including the Thompson sampling-based variants introduced in Section \ref{sec:ts}. Where NeuralLog-TS-1 and NeuralLog-TS-2 both choose the arm with best sampled reward where
\begin{align*}
    &\text{For NeuralLog-TS-1},\quad \widetilde r_t(x) \sim \mathcal{N}(f(x;\theta_{t-1}), \nu_T^2 \sigma_{t-1}^2(x)), \\
    &\text{For NeuralLog-TS-2},\quad\widetilde r_t'(x) \sim\mathcal N(g(x;\theta_0)^\top(\theta_{t-1} - \theta_0) , \nu_T'^2 \sigma_{t-1}'^2(x)).
\end{align*}
We include the synthetic latent reward functions which are also used in \citet{zhou2020neural}: $h_4(x)= 10(x^\top \theta)^2$, $h_5(x) = x^\top \Theta^\top \Theta x$, $h_6(x) = \cos (3 x^\top \theta)$. All other experimental parameters and details follow the same as described in Section \ref{sec:exp}.

Next, we include 3 more $K$-class classification tasks from \cite{DuaGraff2019}: We reuse the same min–max normalization to $[-1,1]$ as described in Section \ref{sec:exp2}. In the \texttt{magic} dataset (MAGIC Gamma Telescope), we convert all features to real-valued variables, impute any missing entries with $0$,
and then map the original class labels to a binary label by setting $y=1$ for
gamma (`g') events and $y=0$ for hadron (`h') events. In the \texttt{banknote} dataset (UCI Banknote Authentication),
we use the four real-valued attributes provided in the repository and keep the
original binary labels $y\in\{0,1\}$. For the \texttt{phoneme} dataset (Connectionist Bench (Nettalk Corpus)), we treat any categorical fields as numeric by casting them to an appropriate
numeric type, and replace missing values with $0$. 

Figures \ref{fig:4} and \ref{fig:5} summarize the average cumulative regret of the five baseline algorithms together with our two Thompson sampling-based variants. Consistent with the results already observed in Figures \ref{fig:1} and \ref{fig:2}, our NeuralLog-UCB-2 algorithm steadily achieves the best performance across the considered settings. Moreover, the two Thompson sampling-based variants also exhibit competitive performance compared to the baselines.
 Figure \ref{fig:6} demonstrates the influence of $\widetilde d$ on data-adaptive algorithms by comparing cumulative regret across different values of $\widetilde d$.

\subsection{Open Bandit Dataset Replay Experiments}

We further evaluate the algorithms on the Open Bandit Dataset (OBD) \cite{saito2020open}, a large-scale public logged bandit dataset collected from a fashion e-commerce platform. This experiment complements the classification-based real-world experiments in \Cref{sec:exp,sec:exp2}. While the classification-to-bandit protocol gives access to the reward of every action in each round, logged bandit data provide a more realistic feature distribution and feedback structure. At the same time, as in most real-world bandit evaluations, the assumptions used in the theory are not guaranteed to hold exactly and the problem-dependent quantities such as \(S\) and \(\kappa\) cannot be identified tightly from the data. We therefore use OBD as a practical robustness test under several choices of the theory-motivated parameters.

\begin{figure}[htbp]
  \centering
  \includegraphics[width=0.95\linewidth]{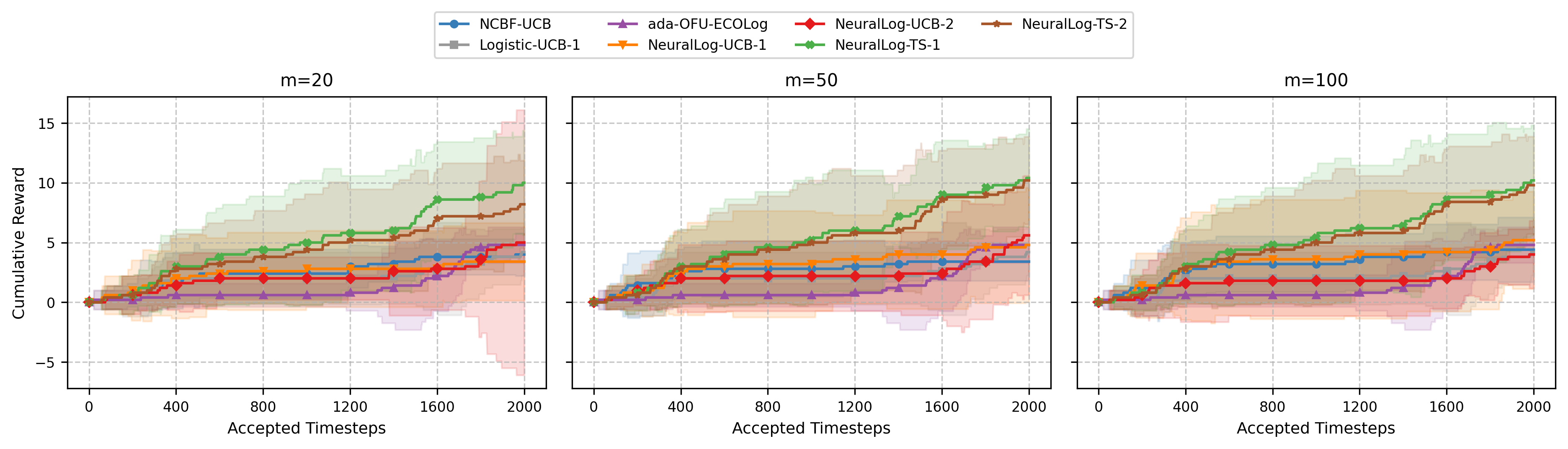}
  \caption{Open Bandit Dataset replay evaluation with varying neural network width \(m\), while fixing \(\kappa=50\) and \(S=1\).}
  \label{fig:obd_m}
\end{figure}

\begin{figure}[htbp]
  \centering
  \includegraphics[width=0.95\linewidth]{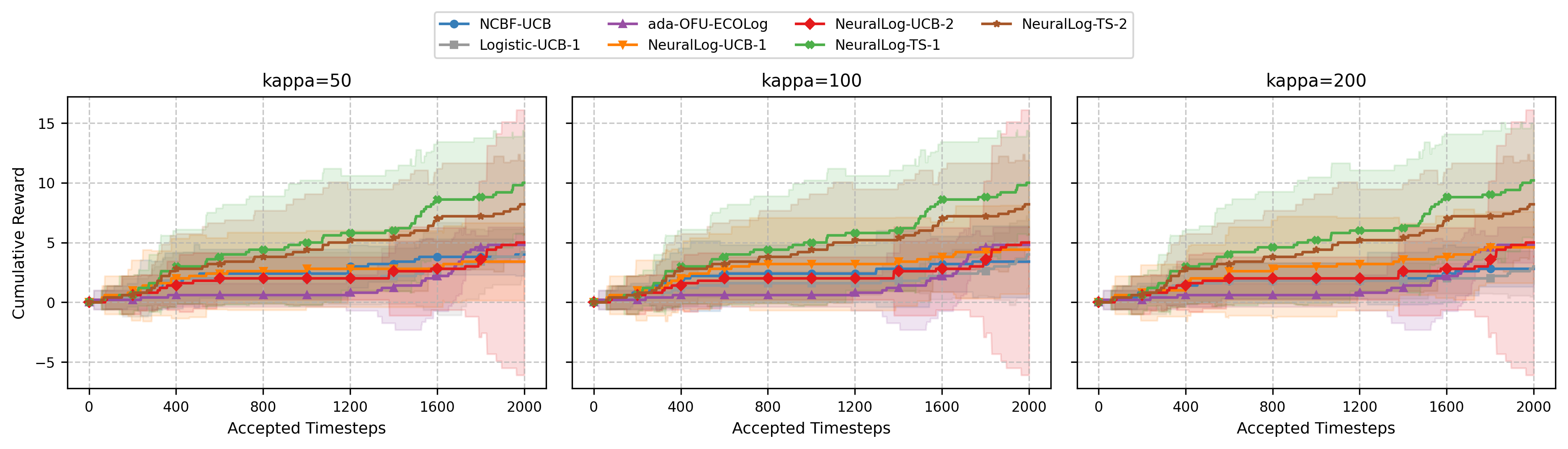}
  \caption{Open Bandit Dataset replay evaluation with varying \(\kappa\), while fixing \(m=20\) and \(S=1\).}
  \label{fig:obd_kappa}
\end{figure}

\begin{figure}[htbp]
  \centering
  \includegraphics[width=0.95\linewidth]{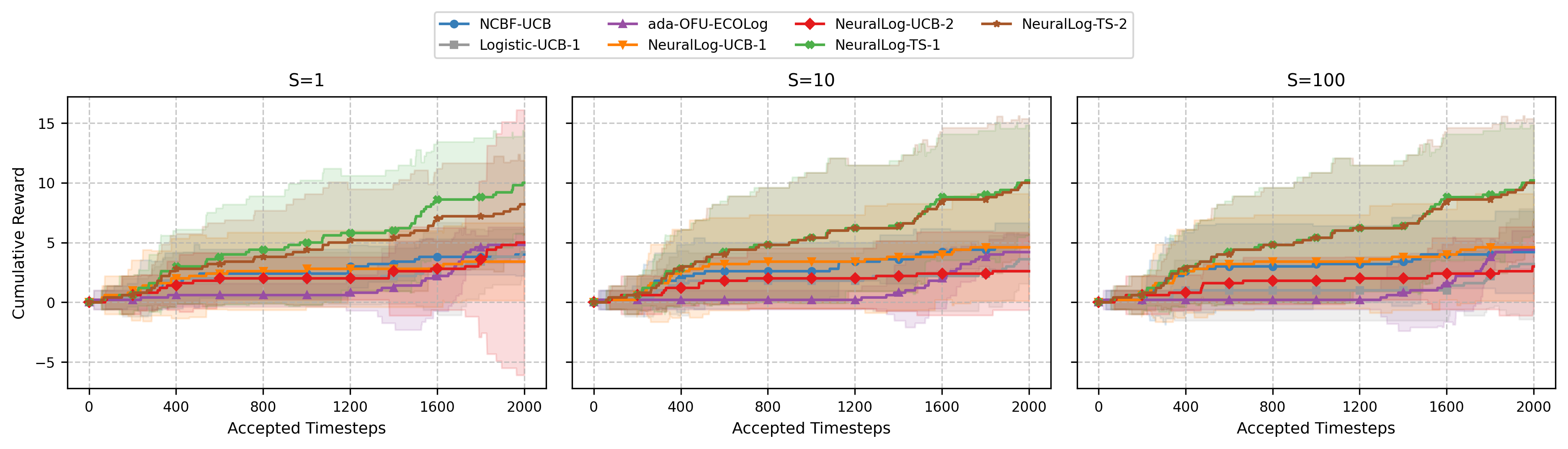}
  \caption{Open Bandit Dataset replay evaluation with varying \(S\), while fixing \(m=20\) and \(\kappa=50\).}
  \label{fig:obd_S}
\end{figure}

\textbf{Open Bandit Dataset setup.\:} We use the \texttt{random/all} split of OBD and run replay evaluation. In each logged event, the agent receives a reduced action set consisting of the logged action and four alternative actions chosen by the highest user-item affinity scores, yielding a \(K=5\)-armed problem. Feedback is used only when the action selected by the agent matches the logged action; otherwise, the event is skipped. Thus the horizontal axis in \Cref{fig:obd_m,fig:obd_kappa,fig:obd_S} denotes the number of accepted logged events. Since counterfactual rewards for unchosen actions are not available in OBD, we report cumulative reward and final click-through rate (CTR), rather than cumulative regret.

For the context-arm feature vector, we combine compressed user features, item features from \texttt{item\_context.csv}, the display position, the action-specific user-item affinity score, and an interaction feature between user features and the affinity score. The categorical user and item features are one-hot encoded and compressed using truncated SVD, with user embedding dimension 16 and item embedding dimension 8. The affinity scores are log-transformed, scaled to \([-1,1]\), and then used both as an action-specific scalar feature and as part of the interaction feature. We use \(T=2000\) accepted timesteps, batch size 50, and 5 repeated runs. The neural network is updated every 50 accepted events using 100 gradient descent steps with learning rate 0.01. For all algorithms, we use the same exploration and regularization parameters as in the synthetic and classification-based real-world experiments, except for the displayed choices of \(m\), \(S\), and \(\kappa\).

\begin{table}[htbp]
\centering
\caption{Final CTR at \(T=2000\) on the Open Bandit Dataset replay experiment. Higher is better. We boldface the top three methods in each row.}
\label{tab:obd_ctr}
\resizebox{\textwidth}{!}{%
\begin{tabular}{llccccccc}
\toprule
Parameter & Value & NCBF-UCB & Logistic-UCB-1 & ada-OFU-ECOLog & NeuralLog-UCB-1 & NeuralLog-UCB-2 & NeuralLog-TS-1 & NeuralLog-TS-2 \\
\midrule
\multirow{3}{*}{\(m\)} 
& 20  & 0.0020 & 0.0021 & 0.0024 & 0.0017 & \textbf{0.0025} & \textbf{0.0050} & \textbf{0.0041} \\
& 50  & 0.0017 & 0.0021 & 0.0024 & 0.0024 & \textbf{0.0028} & \textbf{0.0052} & \textbf{0.0051} \\
& 100 & 0.0022 & 0.0021 & 0.0024 & \textbf{0.0026} & 0.0020 & \textbf{0.0051} & \textbf{0.0049} \\
\midrule
\multirow{3}{*}{\(\kappa\)}
& 50  & 0.0020 & 0.0021 & 0.0024 & 0.0017 & \textbf{0.0025} & \textbf{0.0050} & \textbf{0.0041} \\
& 100 & 0.0017 & 0.0020 & 0.0024 & 0.0022 & \textbf{0.0025} & \textbf{0.0050} & \textbf{0.0041} \\
& 200 & 0.0014 & 0.0015 & 0.0024 & 0.0023 & \textbf{0.0025} & \textbf{0.0051} & \textbf{0.0041} \\
\midrule
\multirow{3}{*}{\(S\)}
& 1   & 0.0020 & 0.0021 & 0.0024 & 0.0017 & \textbf{0.0025} & \textbf{0.0050} & \textbf{0.0041} \\
& 10  & 0.0023 & 0.0018 & 0.0021 & \textbf{0.0023} & 0.0013 & \textbf{0.0051} & \textbf{0.0050} \\
& 100 & 0.0021 & 0.0016 & 0.0022 & \textbf{0.0023} & 0.0015 & \textbf{0.0051} & \textbf{0.0050} \\
\bottomrule
\end{tabular}%
}
\end{table}

\textbf{Results.\:} \Cref{fig:obd_m,fig:obd_kappa,fig:obd_S} and \Cref{tab:obd_ctr} summarize the OBD replay results. Across the considered values of \(m\), the Thompson sampling-based variants, NeuralLog-TS-1 and NeuralLog-TS-2, consistently achieve the highest final CTR, and one of NeuralLog-UCB-1 or NeuralLog-UCB-2 is also among the top three methods. This indicates that the proposed neural logistic bandit algorithms remain competitive even when the network width is much smaller than the sufficient width required by the NTK-based theory. The same qualitative behavior is observed when varying \(\kappa\): NeuralLog-TS-1, NeuralLog-TS-2, and NeuralLog-UCB-2 are the top three methods for all tested values. Finally, when varying \(S\), the two Thompson sampling-based variants remain the strongest performers, while one of the UCB-based variants remains competitive. These results support the empirical robustness of our methods on real logged bandit data, where the model assumptions and the exact values of the theory-motivated parameters are necessarily misspecified.

\section{Additional Future Directions} \label{sec:ff}


Although we successfully remove the direct dependence on $p$ from the regret bound, a direct dependence on $p$ reappears when we examine the per-round computational complexity. This is problematic in neural bandit settings where $p$ scales with the horizon $T$, making the resulting algorithm computationally inefficient.

Let us briefly analyze the computational complexity of our algorithms. Since Algorithms \ref{alg:1} and \ref{alg:2} have the same order of complexity, we focus on Algorithm 1. For action selection, we must compute $f(x;\theta_{t-1})$ for $K$ actions, which costs $\mathcal{O}(p)$ per action, and the quantity $\|g(x;\theta_0)/\sqrt{m}\|_{V_{t-1}^{-1}}$, which costs $\mathcal{O}(p^2)$ per action. Hence, the action-selection step has complexity $\mathcal{O}(Kp^2)$. The updates of the parameters $\lambda_t$, $\iota$, and $\nu_t^{(1)}$ cost $\mathcal{O}(p^2)$ by their definitions. For neural network training, at round $t$ we apply gradient steps over the full dataset of size $t$, which costs $\mathcal{O}(tp)$ per gradient step. Performing $J_t$ iterations therefore costs $\mathcal{O}(J_t t p)$, where $J_t = \widetilde{\mathcal{O}}(TL/\lambda_t)$. Finally, updating the design matrix $V_t$ costs $\mathcal{O}(p^2)$. Altogether, the per-round computational complexity is $\mathcal{O}(J_t t p + Kp^2 + p^2)$. Moreover, \citet{verma2025neural} can be seen to have essentially the same computational complexity, as their algorithm and training pipeline are close to ours.

In contrast, in the classical logistic bandit literature the algorithms operate directly in the feature space of dimension $d$, which is typically much smaller than $p$. For example, \citet{filippi2010parametric,faury2020improved} obtain overall complexity on the order of $\mathcal{O}(d^2K + d^2T)$, and there has been significant recent progress on designing computationally efficient algorithms for logistic bandits: \citet{abeille2021instance} achieve $\mathcal{O}(d^2KT)$, and \citet{faury2022jointly} even propose an algorithm with complexity $\widetilde{\mathcal{O}}(d^2K)$. However, these favorable guarantees rely crucially on the strong assumption that the latent reward model is linear in the feature representation. From the perspective of practical applications, it is therefore important to develop neural bandit algorithms that retain the modeling flexibility of neural networks while achieving comparable computational efficiency, which we view as an important direction for future work.

As one illustrative example, in light of the connection between NTK-based neural bandits and kernelized bandits, one could consider importing techniques such as Nyström approximation as in \citet{zenati2022efficient} to reduce the effective computational cost in the neural bandit setting. Another approach is to adapt the method proposed in \citet{xu2022neural} where an NTK-based neural bandit formulation is also used, but the neural network is trained so that its output is not the reward itself, but instead a new $d$-dimensional feature vector. The problem is then reduced to solving a linear bandit in this learned feature space with respect to an unknown parameter $\theta^*$. This strategy can substantially reduce computational complexity and is attractive from an applied viewpoint, but it requires an additional Lipschitz-type assumption on the neural network on the theoretical side.

\end{document}